\newtheorem{theorem}{Theorem}[section]
\newtheorem{proposition}{Proposition}
\newtheorem{lemma}[theorem]{Lemma}
\title{Rectified-CFG++ for Flow Based Models}
\author{
Shreshth Saini\quad
Shashank Gupta\quad
Alan C. Bovik\\[4pt]
The University of Texas at Austin\quad \\
{\tt\small \{saini.2,shashank.gupta\}@utexas.edu}
}
\begin{document}

  \renewcommand\twocolumn[1][]{#1}%
  \maketitle
  \vspace{-6mm}
  \begin{center}
    \captionsetup{type=figure, skip=1pt} 
    \centering

    \begin{subfigure}[b]{\textwidth}
      \centering
      \includegraphics[width=\textwidth]{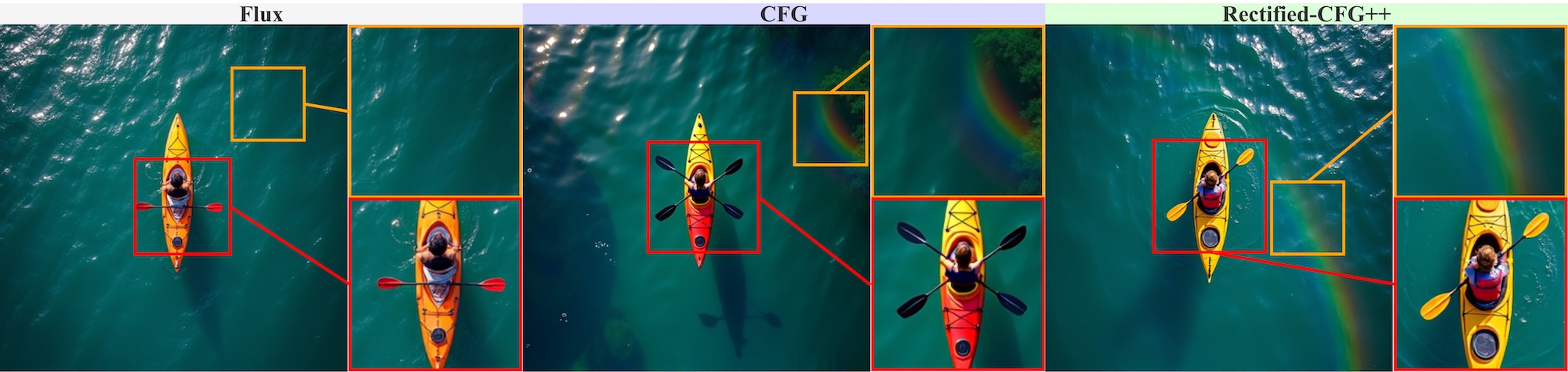}
      \vspace{-3.5mm}
      \label{fig1:subfig1}
    \end{subfigure}

    \begin{subfigure}[b]{\textwidth}
      \centering
      \includegraphics[width=\textwidth]{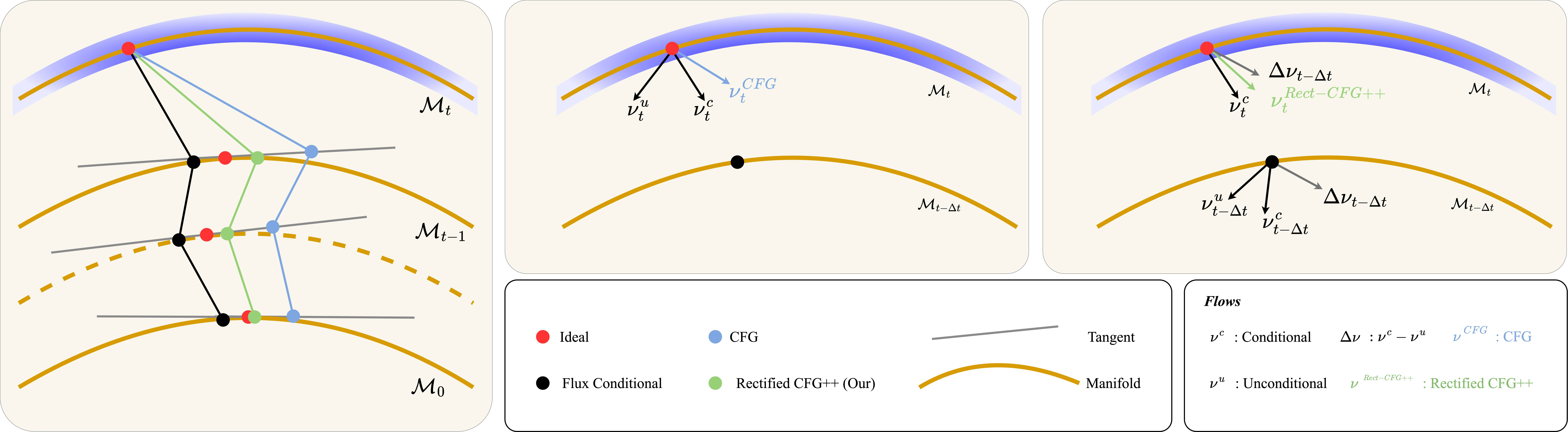}
      \vspace{-1mm}
      \label{fig1:subfig2}
    \end{subfigure}
    
    \caption{
    \textbf{Top:} Visual outputs from Flux, w/ standard CFG, and w/ Rectified-CFG++ for \textbf{Prompt:} \emph{Kayak in the water, optical color, aerial view, rainbow.} While CFG amplifies detail, it introduces artifacts such as oversaturation and structural distortion. Rectified-CFG++ produces semantically faithful results with improved alignment and texture realism. 
    \textbf{Bottom:} A conceptual manifold view of sampling dynamics. 
    (\emph{Left}) Conditional and unconditional flows diverge across latent manifolds $\mathcal{M}_t$. 
    (\emph{Middle}) CFG combines them by \emph{extrapolation}, forcing the trajectory outside $\mathcal{M}_t$ (blue path).
    (\emph{Right}) Rectified-CFG\texttt{++} first steps along the conditional field then applies a scheduled interpolation towards the unconditional field, keeping the iterate inside the manifold family (green path) and thus avoiding artifacts while improving prompt alignment.
    }
    \label{fig:fig-1-compare}
  \end{center}

\maketitle

\begin{abstract}
\vspace{-5pt}
Classifier‑free guidance (CFG) is the workhorse for steering large diffusion models toward text‑conditioned targets, yet its naïve application to rectified flow (RF) based models provokes severe off–manifold drift, yielding visual artifacts, text misalignment, and brittle behaviour. We present Rectified-CFG++, an adaptive predictor–corrector guidance that couples the deterministic efficiency of rectified flows with a geometry‑aware conditioning rule. Each inference step first executes a conditional RF update that anchors the sample near the learned transport path, then applies a weighted conditional correction that interpolates between conditional and unconditional velocity fields. We prove that the resulting velocity field is marginally consistent and that its trajectories remain within a bounded tubular neighbourhood of the data manifold, ensuring stability across a wide range of guidance strengths. Extensive experiments on large‑scale text‑to‑image models (Flux, Stable Diffusion 3/3.5, Lumina) show that Rectified-CFG++ consistently outperforms standard CFG on benchmark datasets such as MS‑COCO, LAION‑Aesthetic, and T2I‑CompBench. Project page: \url{https://rectified-cfgpp.github.io/}.


\end{abstract}

\begin{wrapfigure}[19]{r}{0.45\textwidth}
\centering
\scriptsize
\renewcommand{\arraystretch}{0.5}
\setlength{\tabcolsep}{0.1pt}

\begin{tabular}{ccc}
\cellcolor{gray!15}\textbf{w/o CFG} &
\cellcolor{blue!15}\textbf{w/ CFG} &
\cellcolor{green!15}\textbf{w/ Rect. CFG++} \\
\includegraphics[width=0.32\linewidth]{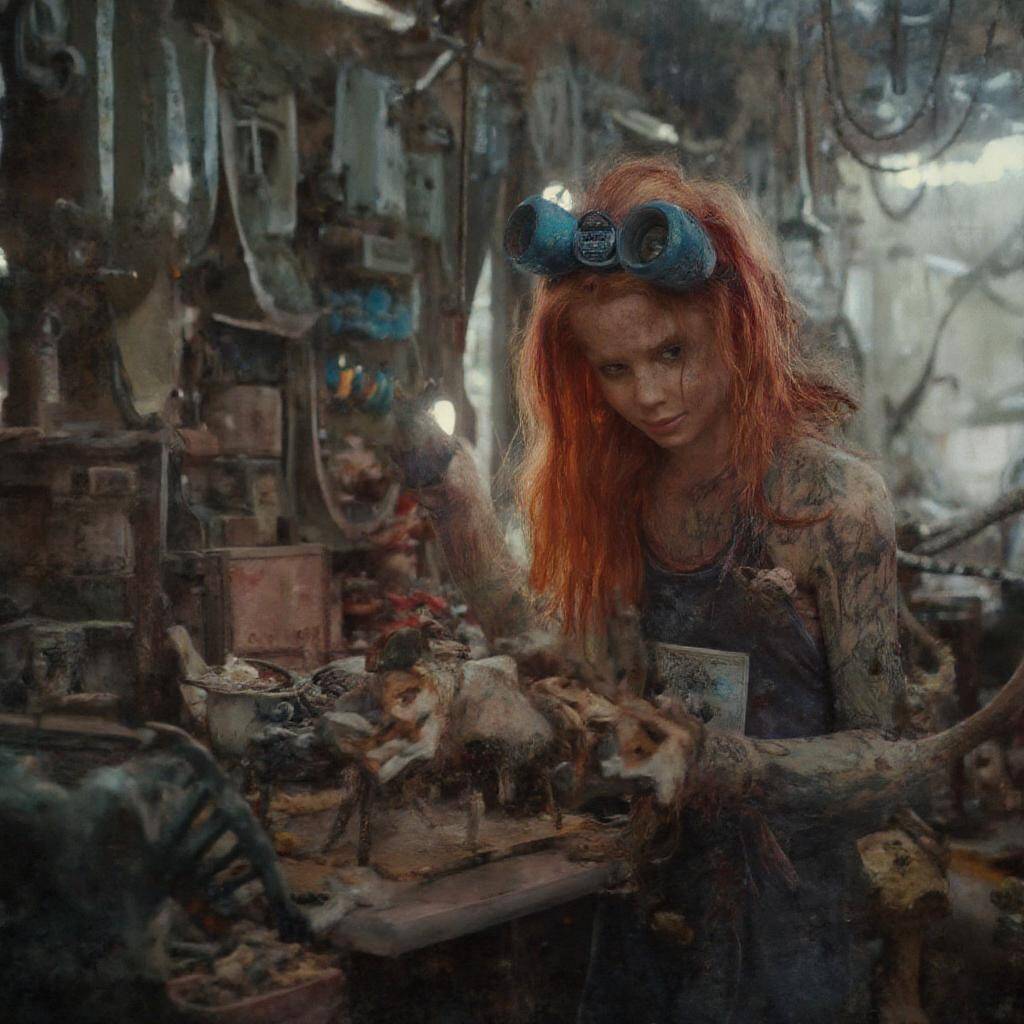} &
\includegraphics[width=0.32\linewidth]{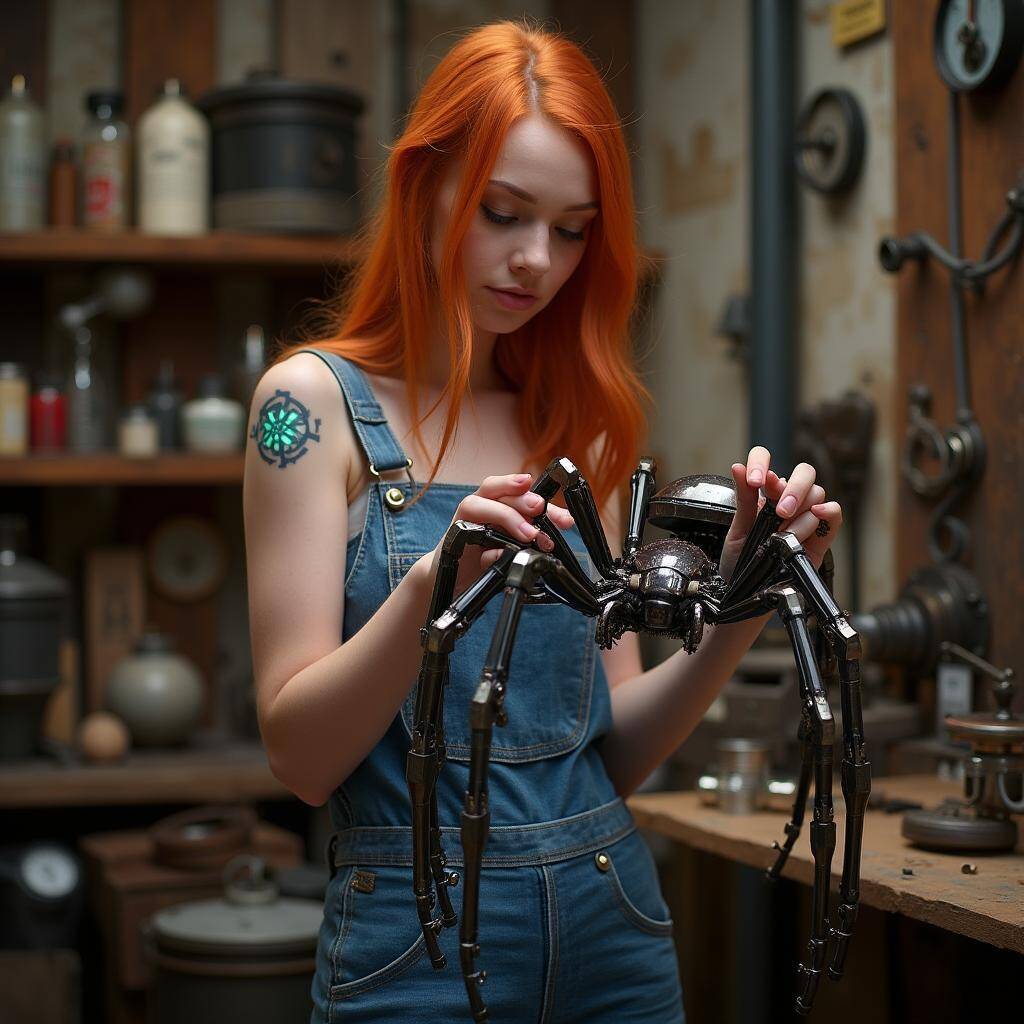} &
\includegraphics[width=0.32\linewidth]{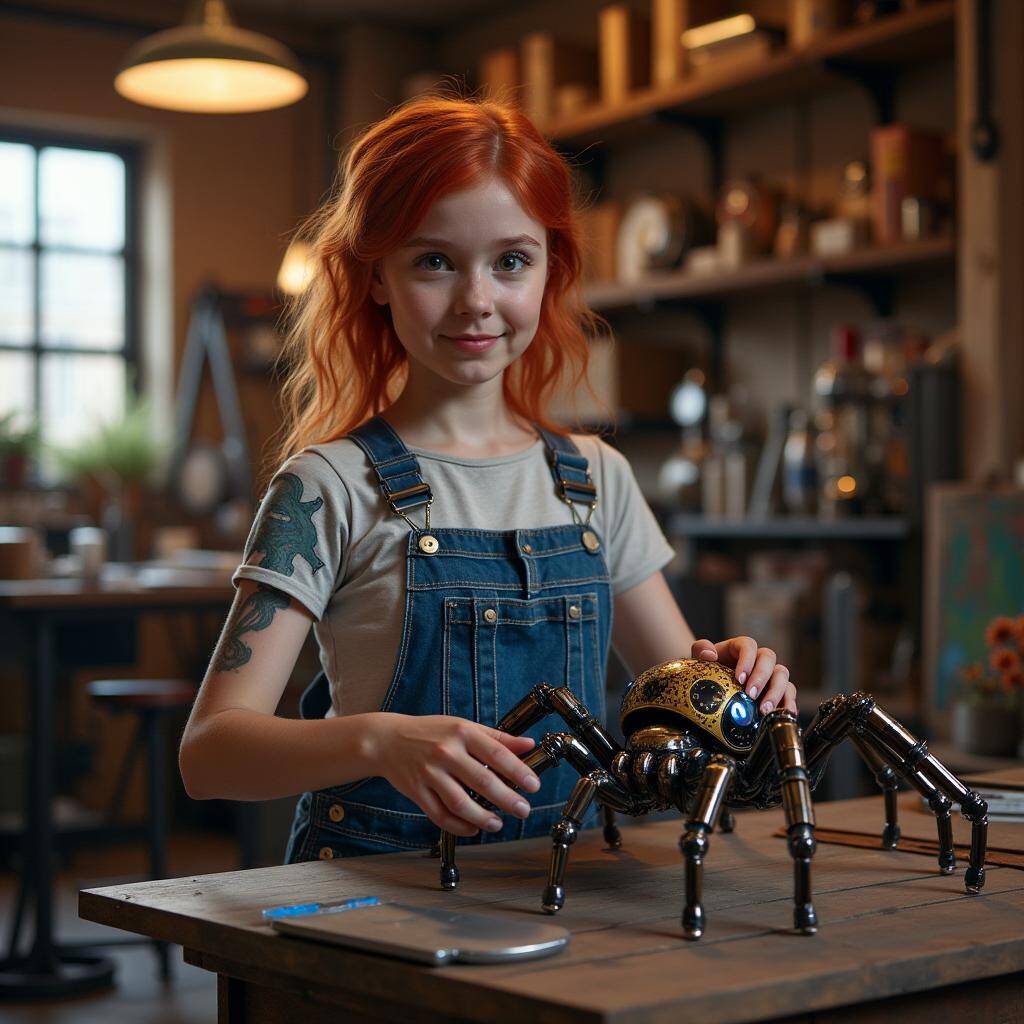} \\
\end{tabular}
\textbf{Prompt:} \emph{Inside a steampunk workshop, a young cute redhead...}\\ 
\vspace{1pt}

\begin{tabular}{ccc}
\includegraphics[width=0.32\linewidth]{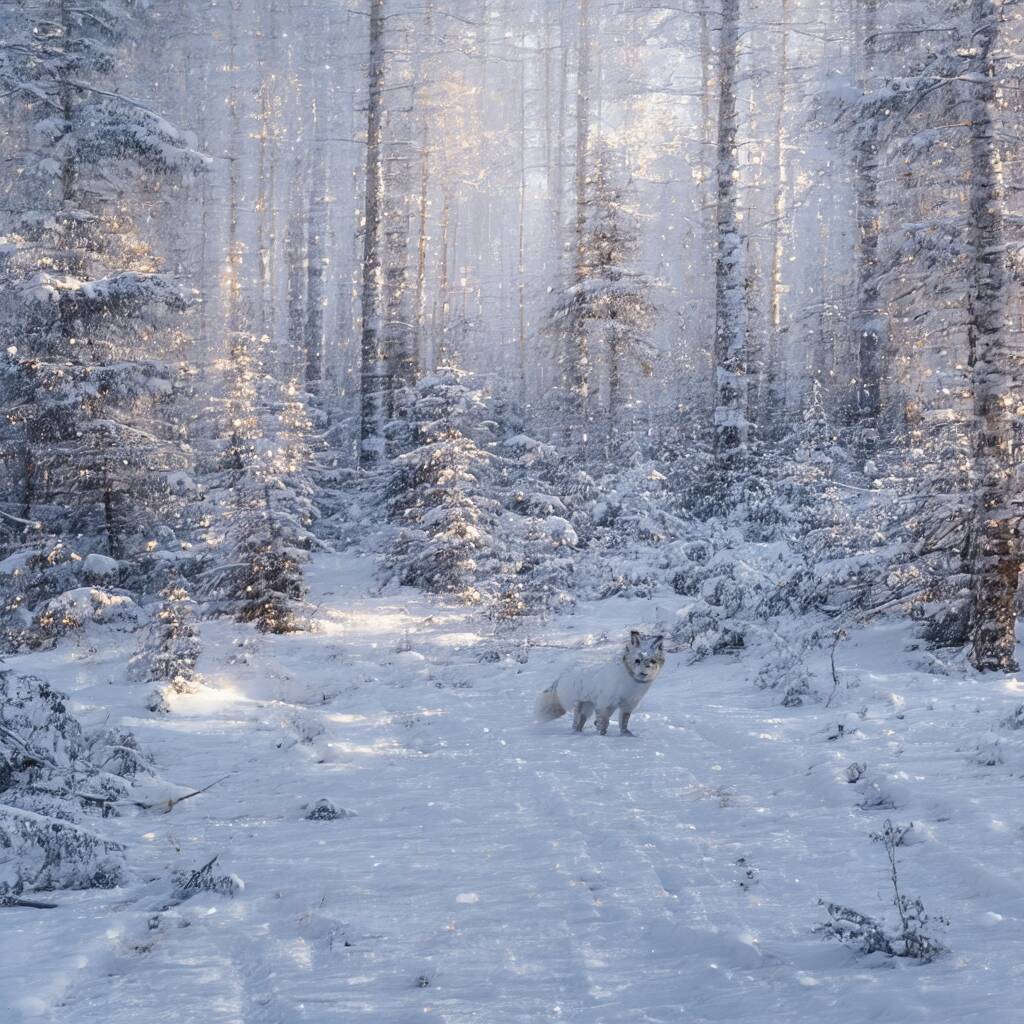} &
\includegraphics[width=0.32\linewidth]{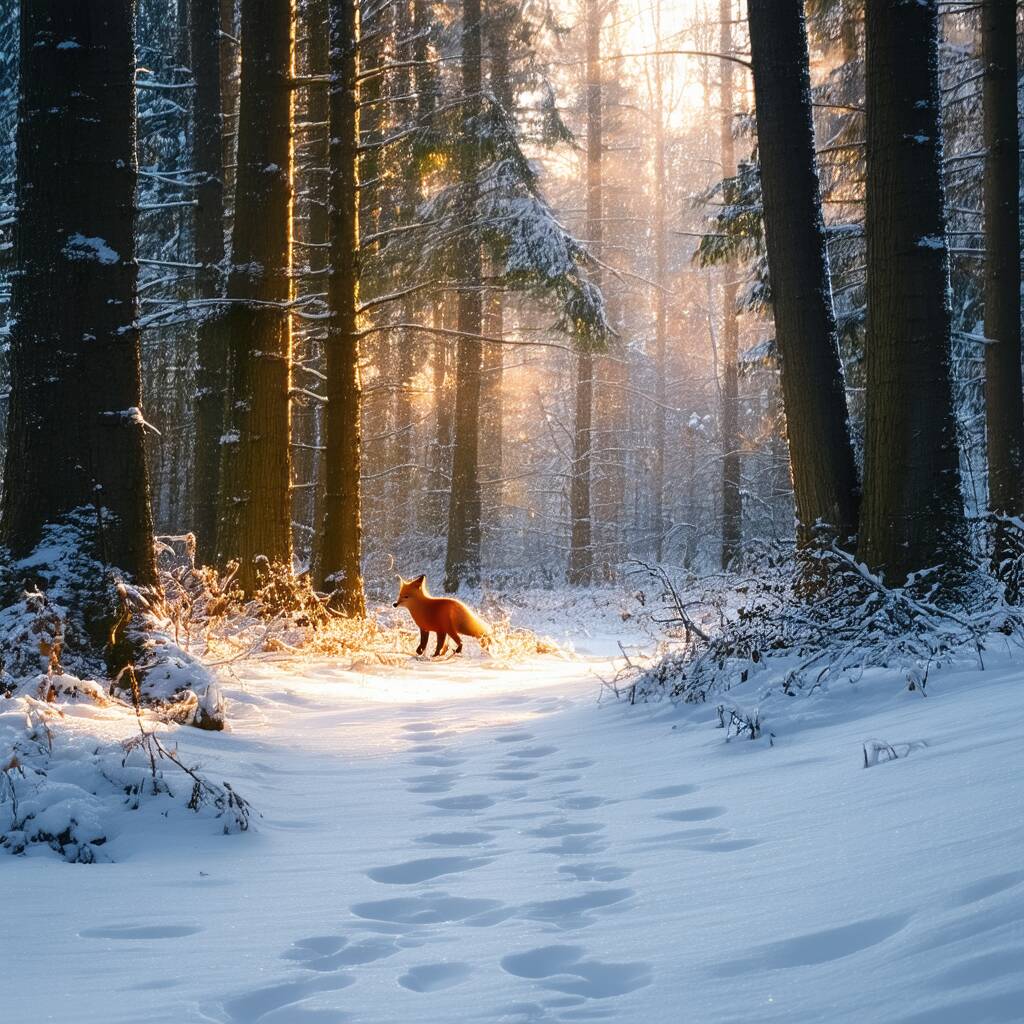} &
\includegraphics[width=0.32\linewidth]{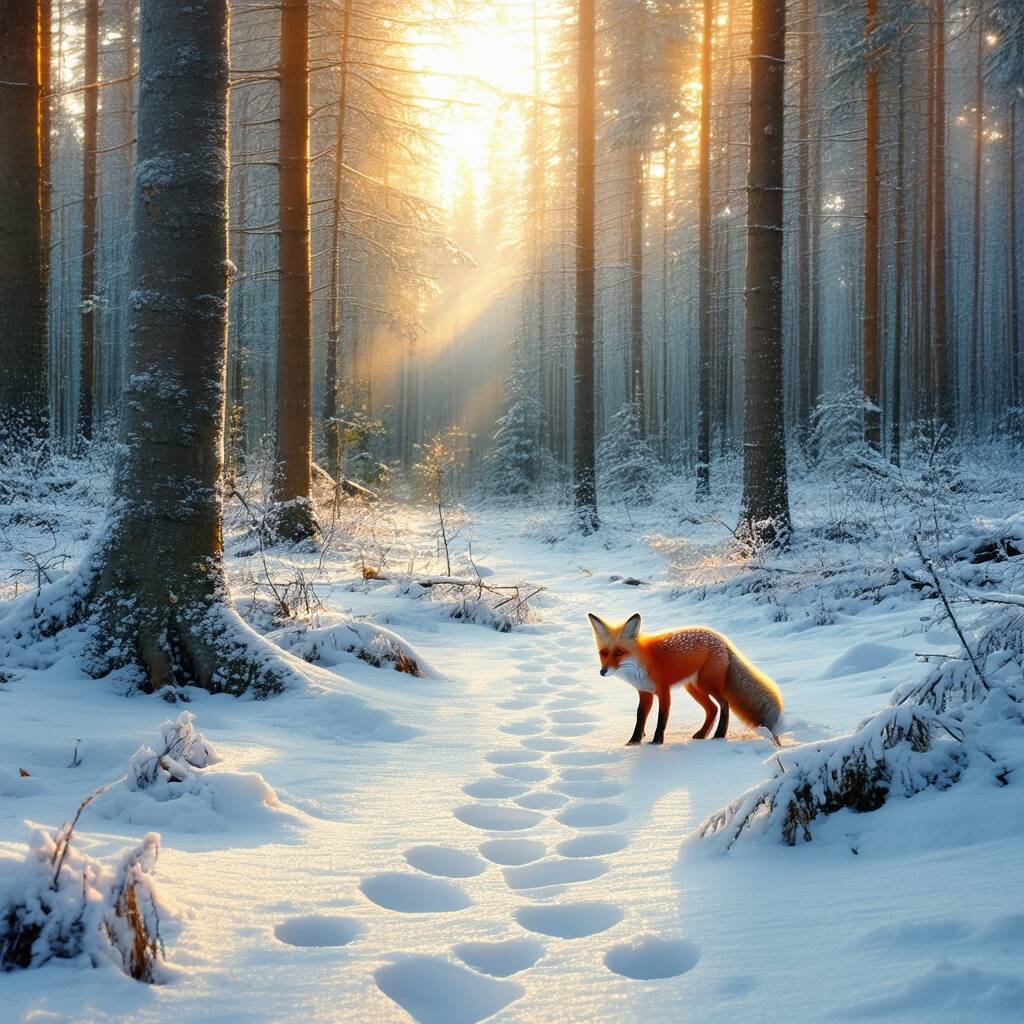} \\
\end{tabular}
\textbf{Prompt:} \emph{A dense winter forest with snow-covered branches, ...}\\ 

\caption{
\textbf{Effect of guidance on flow-based models.}
(\emph{Left}) Unguided samples lack structure; (Middle) naive CFG introduces semantic drift and artifacts. (\emph{Right}) Rectified CFG++ yields detailed and well-aligned outputs. 
}

\label{fig:motivation_guidance_comparison}

\end{wrapfigure}


\section{Introduction}
\label{sec:intro}
\vspace{-7pt}
Generative models have seen dramatic advances diffusion‐based methods now achieve state‐of‐the‐art image synthesis by learning to reverse a stochastic or deterministic noise process via SDEs/ODEs~\cite{sohl2015deep,ho2020ddpm,dhariwal2021diffusion,saharia2022photorealistic,song2020score,chen2018neural}, combined with scalable architectures~\cite{ramesh2022hierarchical,rombach2022high} and fast samplers~\cite{song2021ddim,zheng2023fast} to far outperform earlier GAN approaches~\cite{brock2018large}. More recently, rectified flow models~\cite{liu2023flow,lipman2023flow} dispense with stochasticity by learning deterministic vector fields reducing generation to an ODE solve yielding stable training and faster sampling than diffusion~\cite{grathwohl2018ffjord}, and large-scale flow systems like SD3~\cite{sd32024} and Flux~\cite{flux2024} outperform diffusion‐quality images using a fewer function evaluations.

An essential advancement in diffusion models is classifier-free guidance (CFG)~\cite{ho2022classifier}, which drastically enhances conditional generation quality and enables precise alignment of generated samples with textual prompts. CFG linearly extrapolates the unconditional score toward the conditional score to sharpen adherence to the prompt, at the expense of potential instability and generation artifacts. Although CFG is simple and effective for stochastic diffusions, its extrapolative nature is problematic in deterministic flows~\cite{chung2024cfg++}: the trajectory is pulled off the learned manifold, producing color blow‑outs, warped geometry, and hyperparameter sensitivity (Fig.~\ref{fig:motivation_guidance_comparison}), thus limiting practical applicability. Subsequent variants—dynamic thresholding~\citep{saharia2022photorealistic}, Characteristic guidance~\citep{zheng2023characteristic}, CFG++~\citep{chung2024cfg++}, and APG~\citep{sadat2024eliminating} have tried to alleviate these effects, in diffusion models, yet a principled, flow‑specific solution remains missing.

To address these limitations, we introduce Rectified-CFG\texttt{++}, a guidance scheme tailored for rectified‑flow models. Our key insight is that the geometric structure of RF sampling favors interpolation, which synergistically combines the stable and deterministic generative trajectories of rectified flow models with the powerful conditional generation capabilities of classifier-free guidance. At every step, Recitified-CFG++ (i)~follows the conditional RF field to keep the sample on the transport path, then (ii)~applies a scheduled interpolation towards the conditional and unconditional field on previously obtained conditional samples. The resulting predictor–corrector integrator (Sec.~\ref{sec:method}) preserves marginal consistency, maintains on‑manifold trajectories thereby effectively eliminating off-manifold artifacts, and requires no extra networks or optimization. Moreover, we provide a theoretical foundation for Rectified-CFG++, and show that it ensures the stability of generated samples on the underlying data manifold. We explain the geometric interpretation of Rectified-CFG++, and demonstrate how it maintains trajectories within the manifold, thereby preventing the detrimental deviations common to CFG sampling. Extensive experiments on four large text‑to‑image RF backbones—Flux~\cite{flux2024}, Stable‑Diffusion\,3/3.5~\citep{sd32024}, and Lumina‑Next\cite{lumina-2}—show that Rectified\,CFG\texttt{++} consistently outperforms vanilla CFG~\cite{ho2022classifier} across FID~\cite{fid}, CLIP-Score~\cite{radford2021learning,hessel2021clipscore,ilharco2021openclip}, ImageReward~\cite{xu2023imagereward}, Aesthethic Score~\cite{schuhmann2022laionAesthetics}, and HPS-v2~\cite{wu2023human}, while reducing artifacts such as oversaturation and typographic failure (Sec.~\ref{sec:experiments}). We also conduct a subjective study. Qualitative comparisons (Figs.~\ref{fig:motivation_guidance_comparison} and~\ref{fig:2-intermediate_cfg_vs_ours_fox}) reveal smoother intermediate states and sharply improved text alignment.  
Our contributions are summarized as follows:

\begin{itemize}[leftmargin=*]
\setlength\itemsep{0pt}
    \item We propose \textbf{Rectified-CFG\texttt{++}}, a novel predictor–corrector sampler that uses time‑scheduled interpolation between conditional and unconditional velocity fields. Our method is parameter‑free beyond the guidance scale.
    \item We provide a detailed theoretical justification including rigorous proofs, and a geometric interpretation that our sampler preserves manifold consistency and superior conditioning efficacy. 
    \item Using diverse datasets and comparison against leading models, we demonstrate that Rectified-CFG\texttt{++} yields better prompt alignment and visual quality than CFG, while mitigating its characteristic artifacts in flow‑based models.
\end{itemize}

\begin{figure}[t]
\centering
\scriptsize
\renewcommand{\arraystretch}{0.15}
\setlength{\tabcolsep}{1pt}
\setlength{\belowcaptionskip}{-10pt}
    \begin{tabular}{cc}
        \cellcolor{blue!15}\raisebox{0.2\height}{\rotatebox{90}{\parbox{1.39cm}{\centering\textbf{CFG}}}} &
        \includegraphics[width=0.98\textwidth]{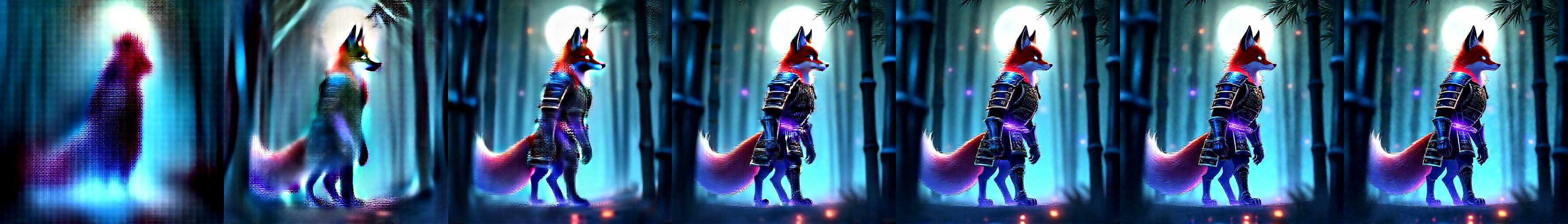} \\
        \cellcolor{green!15}\raisebox{0.2\height}{\rotatebox{90}{\parbox{1.39cm}{\centering\textbf{Rect.-CFG++}}}} &
        \includegraphics[width=0.98\textwidth]{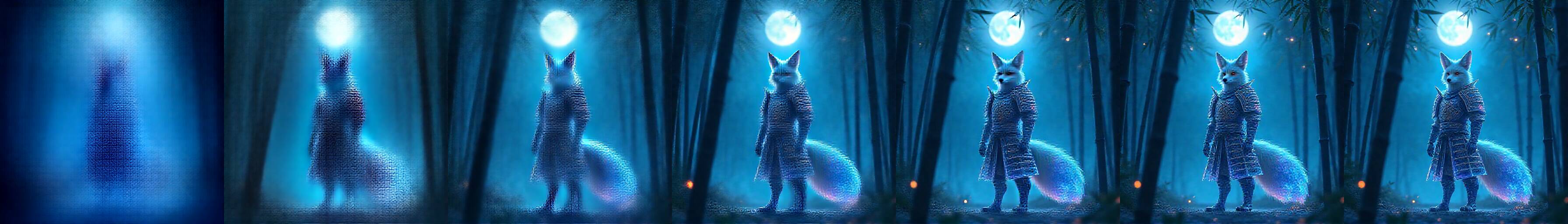} \\
    \end{tabular}
    \textbf{Prompt:} \emph{A lone anthropomorphic fox in crystalline samurai armor, standing still in a bamboo grove made of glass, glowing runes etched into ...} \\

\caption{\textbf{Comparison of intermediate denoising steps of CFG and Rectified-CFG++.}
Visual progression of decoded latents across 7 sampling steps, starting from \(t{=}1000\) (top left) to \(t{=}0\) (top right). While CFG led to artifacts and structural instability early on, Rectified CFG++ maintained on-manifold transitions and preserved fine textures throughout.}
\label{fig:2-intermediate_cfg_vs_ours_fox}
\end{figure}

By bridging the gap between flow‑matching ODEs and modern guidance techniques, Rectified-CFG\texttt{++} unlocks high‑fidelity, \emph{manifold‑aware} conditional generation with the efficiency benefits of rectified flows.


\section{Preliminaries}
\label{sec:prelim}
\vspace{-7pt}
We review (i) conditional flow–matching (CFM) for generative ODEs and (ii) classifier–free guidance (CFG) as typically used with diffusion/flow models.  Throughout, $x\!\sim\!p_0$ denotes a data sample, $z\!\sim\!\mathcal N(0,I)$ a Gaussian prior, and $t\!\in\![0,1]$ is a time index.


\textbf{Flow matching models:}
CFM \cite{lipman2023flow,liu2023flow} learns a velocity field $\,v_\theta:\mathbb{R}^d\times[0,1]\times\mathcal Y\!\to\!\mathbb R^d\,$ that transports latent states from the prior $p_1$ to the data distribution $p_0$, \emph{conditioned} on an input $y\!\in\!\mathcal Y$ (e.g.\ a text prompt): 
\begin{equation}
  \tfrac{{\rm d}}{{\rm d}t}\,x_t=v_\theta(x_t,t,y),\quad 
  x_1=z,\;z\!\sim\!p_1. 
  \label{eq:ode}
\end{equation}
A convenient probability path is the \emph{linear} mixture $p_t=(1-t)p_0+t\,p_1$; drawing $(x_0,x_1)\!\sim\!(p_0,p_1)$ yields a closed-form \emph{target} velocity $u_t(x_t|x_0)=x_1-x_0$.  Training minimises the conditional flow-matching loss~\cite{lipman2023flow}:
\begin{equation}
  \mathcal L_{\mathrm{CFM}}(\theta)=
  \mathbb E_{t,x_0,x_1}\!
  \bigl\|v_\theta(x_t,t,y)-u_t(x_t|x_0)\bigr\|_2^2,
  \label{eq:cfm}
\end{equation}

\noindent where $x_t=(1-t)x_0+tx_1$. At inference we numerically integrate \eqref{eq:ode} deterministically, typically with an ordinary differential equation (ODE) Solver~\cite{song2020score,lu2022dpm}. The marginal velocity~\cite{lipman2023flow} required can be written as: 
\begin{equation}
u_t(x_t) = \mathbb{E}_{x_0 \sim p_0}\left[u_t(x_t|x_0)\right].
\end{equation}

\textbf{Classifier free guidance for flows:}
CFG~\cite{ho2022classifier} steers generation towards the condition~$y$ by combining the conditional and unconditional velocity fields of a single network trained with randomized null conditions $y=\varnothing$:
\begin{equation}
  \hat{v}_{\rm \omega}(x_t,t,y;\omega) = (1-\omega)\,v_\theta(x_t,t,\varnothing) +\omega\,v_\theta(x_t,t,y), 
  \label{eq:cfg_flow}
\end{equation}
where $\omega\!\ge\!1$ is the guidance scale that controls text-alignment strength. In \eqref{eq:cfg_flow}, $\omega$ extrapolates guidance along $\Delta{v^{\theta}_t} = v_\theta(x_t,t,y)-v_\theta(x_t,t,\varnothing)$, which often sends trajectories off the learned data manifold, producing oversaturated or distorted images \cite{chung2024cfg++}.

\textbf{Notation:}
For brevity we write
$
v^{c}_{t}:=v_\theta(x_t,t,y),\qquad ;
v^{u}_{t}:=v_\theta(x_t,t,\varnothing),\qquad ;
\Delta v^{\theta}_{t}:=v^{c}_t-v^{u}_t .
$

Standard CFG updates $x_t$ via the ODE step as \colorbox{blue!15}{$x_{t-\Delta t}$} $=x_t+\Delta t \left( v^{u}_t+\omega\Delta v^{\theta}_t \right)$, which is an affine extrapolation in $\Delta v^{\theta}_{t}$. 
While flow models offer deterministic, fast sampling, naively plugging \eqref{eq:cfg_flow} into the ODE solver inherits the same off-manifold drift observed in diffusion models~\cite{chung2024cfg++,sadat2024eliminating}, which can lead to divergence because the flow field is integrated without stochastic regularization effect of introduced noise in diffusion SDEs. These limitations motivate our Rectified-CFG++ strategy introduced in Sec.~\ref{sec:method}, which replaces the extrapolation term $\omega\Delta v^{\theta}_t$ with time-scheduled interpolation that preserves the geometry of the learned transport path.


\begin{wrapfigure}{R}{0.5\textwidth}
\begin{minipage}{0.5\textwidth}

\begin{algorithm}[H]
\caption{Rectified-CFG++}
\label{alg:rect-cfg}
\scriptsize
\begin{algorithmic}[1]
\Require 
Velocity network $v_\theta(\!\cdot,t,y)$; text prompt $y$; $\Delta t$; $\alpha(t)=\lambda_{\max}(1-t)^{\gamma}$ with $\lambda_{\max}\!>\!0,\gamma\!\ge\!0$, $\epsilon \sim \mathcal{N}(0,\sigma^2 I)$.

\State $x_{0}\sim \mathcal N(\mathbf0,\mathbf I)$ \Comment{prior sample $p_{1}$}
\For{$t=T$ \textbf{to} $1$}
    \State \colorbox{blue!15}{$v^{c}_{t}$}$\gets v_\theta(x_{t},\,t,\,y)$ \Comment{Conditional flow}
    \State $\tilde{x}_{t-\frac{\Delta t}{2}}\gets x_{t}+\Delta t\,v^{c}_{t}/2$  \Comment{\colorbox{gray!15}{Predictor}}
    \State $\tilde{x}_{t-\frac{\Delta t}{2}}\gets \tilde{x}_{t-\frac{\Delta t}{2}} + \epsilon$  \Comment{Optionally}
    \State $v^{c}_{t-\frac{\Delta t}{2}}\gets v_\theta(\tilde{x}_{t-\frac{\Delta t}{2}},\,t-\Delta t/2,\,y)$
    \State $v^{u}_{t-\frac{\Delta t}{2}}\gets v_\theta(x_{t-\frac{\Delta t}{2}},\,t-\Delta t/2,\,\varnothing)$
    \State \colorbox{green!15}{$\hat v_{\lambda t}$} $\gets$ \colorbox{blue!15}{$ v^{c}_{t}$} $+ \alpha(t)\bigl(v^{c}_{t-\frac{\Delta t}{2}}-v^{u}_{t-\frac{\Delta t}{2}}\bigr)$ \Comment{\colorbox{green!15}{Corrector}}
    \State $\hat{x}_{t-1}\gets$ ODEUpdate$(x_{t}, \colorbox{green!15}{$\hat v_{\lambda t}$}, t)$  \Comment{ODE Update Step}
\EndFor

\State \Return $x_0$ \Comment{Generated Sample}
\end{algorithmic}
\end{algorithm}

\end{minipage}
\end{wrapfigure}

\newpage
\section{Method}
\label{sec:method}
\vspace{-7pt}

In the context of ODE integration, especially when the underlying vector field corresponds to transport along potentially curved manifolds, applying Eq.~\eqref{eq:cfg_flow} can lead to significant deviations from the true conditional paths learned by the model~\cite{chung2024cfg++,cfg-zero,sadat2024eliminating}. This often results in visual artifacts like oversaturation, semantic drift, and structural inconsistencies (see Fig.~\ref{fig:motivation_guidance_comparison} and Fig.~\ref{fig:2-intermediate_cfg_vs_ours_fox}). To overcome these limitations, we propose Rectified-CFG++, which is detailed in Algorithm~\ref{alg:rect-cfg}. Our approach replaces the unstable extrapolation of CFG with an adaptive predictor-corrector that leverages the geometry of the learned conditional flow, while incorporating guidance in a controlled manner.

\vspace{-3pt}
\subsection{Rectified-CFG++}
\label{sec:rectified_cfg_pp_sampler}
\vspace{-7pt}

The Rectified-CFG++ guidance modifies the velocity used within each step of a numerical ODE solver. Instead of directly using the CFG velocity Eq.~\eqref{eq:cfg_flow}, it constructs an effective velocity \colorbox{green!15}{$\hat{v}_{\lambda t}$} using information from both the current state $x_t$ and a predicted future state, within time interval [$t$, $t-\Delta t/2$].

\textbf{Conditional Predictor:} Specifically, we use the conditional velocity \colorbox{blue!15}{$v^c_t$} as the predictor step. This is crucial because our goal is to generate a sample following the condition $y$. Using \colorbox{blue!15}{$v^c_t$} immediately steers the prediction towards the target subspace manifold $\mathcal{M}_t$. Using $v^u_t$ or a CFG-mixed velocity here could introduce instability early in the step~\cite{sadat2024eliminating,cfg-zero}.
\begin{equation}
        \tilde{x}_{t-\Delta t/2} \gets  x_t  + \Delta t/2\bigl(v^{c}_{t}\bigr).
        \label{eq:intermediate_update}
    \end{equation}

Geometrically (Fig.~\ref{fig:fig-1-compare}(Middle)), the intermediate conditional update brings the sample along the manifold. This avoids going off-manifold early on in sampling, see Fig.~\ref{fig:2-intermediate_cfg_vs_ours_fox}.

\textbf{Correction via Guidance Difference:} Instead of averaging derivatives~\cite{butcher2021differential}, following~\cite{ho2022classifier,sadat2024eliminating,recfg} we compute the difference between conditional and unconditional velocities as in CFG~\cite{ho2022classifier}, $\Delta v^{\theta}$, but at the intermediate predicted point. This term specifically isolates the signal related to the condition $y$ in the vicinity of where the trajectory is heading. Evaluating it at $\tilde{x}_{t-\Delta t/2}$ provides more relevant guidance correction as compared to using $\Delta v^{\theta}_t$, especially if the vector field is rapidly changing speed or direction:
\begin{equation}
        v^c_{t-\Delta t/2} \gets  v_{\theta}(\tilde{x}_{t-\Delta t/2},t-\Delta t/2,y) 
        \label{eq:cond_velocity_intermediate}
\end{equation}
\vspace{-15pt}
\begin{equation}
        v^u_{t-\Delta t/2} \gets  v_{\theta}(\tilde{x}_{t-\Delta t/2},t-\Delta t/2,\varnothing). 
        \label{eq:uncond_velocity_intermediate}
\end{equation}

\textbf{Interpolative Update:} The final effective velocity \colorbox{green!15}{$\hat{v}_{\lambda t}$} anchors the update firmly to the current conditional direction \colorbox{blue!15}{$v^c_t$} and adds a correction based on the predicted guidance need, scaled by a weight term. This avoids using the unstable $v^u_t$ as a base and replaces extrapolation with an adaptive correction based on intermediate prediction: 
\begin{equation}
        \hat v_{\lambda t} \gets  v^{c}_{t} + \alpha(t)\bigl(v^{c}_{t-\frac{\Delta t}{2}}-v^{u}_{t-\frac{\Delta t}{2}}\bigr)
        \label{eq:rect_cfg_pp_velocity}
\end{equation}

This structure aims to maintain proximity to the learned conditional flow path, while incorporating guidance information ($\Delta v^{\theta}_{t-\Delta t/2}$) evaluated at a more relevant intermediate point, thereby enhancing stability and fidelity as compared to direct CFG~\cite{ho2022classifier} extrapolation.

\vspace{-3pt}
\subsection{Theoretical Analysis}
\label{sec:theory}
\vspace{-7pt}
Next we provide theoretical justification of the improved stability of Rectified-CFG++. Let $\psi_t(x_1|y)$ denote the true trajectory under the ideal conditional velocity $v_{\theta}(x_t, t, y)$, generating the manifold $\mathcal{M}_t = \{ \psi_t(x_1|y) | x_1 \sim p_1 \}$. In the following, we say that the function $f$ is Lipschitz continuous on $\mathbb{R}$ if $|f(a)-f(b)|\leq L|a-b|$, $\forall a,b \in \mathbb{R}$, where L is a Lipschitz constant. 

\textbf{Assumptions:}
\noindent \textbf{(A1)} $v_\theta(x, t, y)$ and $v_\theta(x, t, \varnothing)$ are Lipschitz continuous in $x$ with constant $L$, and uniformly in continous $t$ and $y$. 
\noindent \textbf{(A2)} The guidance direction magnitude is bounded: $\|\Delta v^{\theta}_t(x)\| \leq B$ for all ($x, t, y$) $\in \mathbb{R}^3$, for some $B\in \mathbb{R}$. 
\noindent \textbf{(A3)} The schedule $\alpha(t)$ is bounded and integrable. 
\noindent \textbf{(A4)} The conditional velocity magnitude is bounded: $\|v^c_t(x)\| \leq V_{\max}$ for all ($x, t, y$) $\in \mathbb{R}^3$, for some $V_{max}\in \mathbb{R}$.

We begin by analyzing how the guidance term evaluated at an intermediate point relates to the guidance term at the current point ($t$).
\begin{lemma}[Stability of Predicted Guidance Direction]
\label{lemma:guidance_stability}
Under assumptions (A1) and (A4), the guidance direction $\Delta v^{\theta}_{t-\Delta t/2}$ computed at the predicted state $\tilde{x}_{t-\Delta t/2}$ differs from the guidance direction $\Delta v^{\theta}_t(x_t)$ at the current state by an amount proportional to the step size $\Delta t$:
$$ \| \Delta v^{\theta}_{t-\Delta t/2} - \Delta v^{\theta}_t(x_t) \| \leq  L V_{\max} \Delta t. $$
\end{lemma}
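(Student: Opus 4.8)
The plan is to reduce the claim to a purely spatial Lipschitz estimate: the two guidance directions are evaluated at points that differ only by the short Euler predictor step $\tfrac{\Delta t}{2}v^{c}_{t}$, whose length is controlled by (A4), so Lipschitz continuity of $v_\theta$ in $x$ (A1) transfers that control to each velocity field and hence to their difference.

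Concretely, I would proceed in three short steps. \emph{Step 1 (displacement bound).} Unwind the predictor from Eq.~\eqref{eq:intermediate_update}, $\tilde{x}_{t-\Delta t/2} = x_t + \tfrac{\Delta t}{2}v^{c}_{t}$, so that $\|\tilde{x}_{t-\Delta t/2} - x_t\| = \tfrac{\Delta t}{2}\|v^{c}_{t}\| \le \tfrac{\Delta t}{2}V_{\max}$ by (A4). \emph{Step 2 (triangle split).} Using $\Delta v^{\theta} = v^{c} - v^{u}$, bound
\[
\|\Delta v^{\theta}_{t-\Delta t/2} - \Delta v^{\theta}_{t}(x_t)\| \le \|v^{c}_{t-\Delta t/2} - v^{c}_{t}(x_t)\| + \|v^{u}_{t-\Delta t/2} - v^{u}_{t}(x_t)\|.
\]
\emph{Step 3 (Lipschitz).} Apply (A1) to each summand; since $v_\theta(\cdot,\cdot,y)$ and $v_\theta(\cdot,\cdot,\varnothing)$ are $L$-Lipschitz in $x$ uniformly in the remaining arguments, each summand is at most $L\|\tilde{x}_{t-\Delta t/2} - x_t\| \le \tfrac{L\Delta t}{2}V_{\max}$, and the two halves add to $L V_{\max}\Delta t$, which is exactly the asserted bound.

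The one point that needs care — and what I would flag as the main obstacle — is the mismatch in the time stamp: $\Delta v^{\theta}_{t-\Delta t/2}$ is taken at time $t-\Delta t/2$ while $\Delta v^{\theta}_{t}(x_t)$ carries time $t$. I would handle this by observing that (A1) supplies the Lipschitz constant $L$ uniformly over the continuous time index, so the estimate in Step 3 is applied at the common time $t-\Delta t/2$ (only the spatial argument moves), and any residual explicit time-dependence of $v_\theta$ between $t$ and $t-\Delta t/2$ contributes a further $O(\Delta t)$ term of the same order that is absorbed into the constant. Equivalently, one reads the lemma as comparing the corrector quantity $v^{c}_{t-\Delta t/2}-v^{u}_{t-\Delta t/2}$ against $v_\theta(x_t,t-\Delta t/2,y)-v_\theta(x_t,t-\Delta t/2,\varnothing)$, which is the purely spatial comparison and makes the bound $L V_{\max}\Delta t$ tight. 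Beyond this bookkeeping — the factor $\tfrac12$ from the half-step predictor and the factor $2$ from the two-term split cancel to leave the clean constant $L V_{\max}$ — the argument is just the triangle inequality composed with the displacement bound, so no substantive difficulty remains.
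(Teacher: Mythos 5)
Your proposal matches the paper's proof essentially step-for-step: bound the predictor displacement $\|\tilde{x}_{t-\Delta t/2}-x_t\|$ by $\tfrac{\Delta t}{2}V_{\max}$ using (A4), split $\|\Delta v^\theta_{t-\Delta t/2}-\Delta v^\theta_t\|$ by the triangle inequality into the conditional and unconditional pieces, and apply the $L$-Lipschitz bound of (A1) to each, giving $2L\cdot\tfrac{\Delta t}{2}V_{\max}=LV_{\max}\Delta t$. The time-stamp mismatch you flag is a genuine imprecision in the paper — its own proof silently writes $v^c_t(\tilde{x})$ and $v^c_t(x_t)$, freezing the time index so that only the spatial argument moves — but it is a presentational gloss rather than a substantive gap, and your resolution (compare at the common time and invoke uniformity of $L$ over $t$) is exactly the reading the authors intend.
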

\vspace{-15pt}
\begin{proof}
See Appendix~\ref{app:proof_lemma_guidance_stability}.
\end{proof}
\vspace{-10pt}
This lemma suggests that for sufficiently small step sizes, the guidance direction computed at the predicted point $\tilde{x}_{t-\Delta t/2}$ is close to the direction at the current point $x_t$, thereby ensuring the correction term is relevant. Next, we quantify the deviation introduced by the guidance correction in a single step, as compared to following the pure conditional flow.

\begin{proposition}[Bounded Single-Step Perturbation]
\label{prop:bounded_perturbation_single_step}
Let $\hat{x}_{t-1}$ be the result of a single Rectified-CFG++ step from $x_t$. Let $\tilde{x}_{t-1}$ be the result of a pure conditional Euler step. Under assumption (A2), the deviation is:
$$ \| \hat{x}_{t-1} - \tilde{x}_{t-1} \| \leq \alpha(t) B \Delta t. $$
\end{proposition}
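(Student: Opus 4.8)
The plan is to observe that the Rectified-CFG++ velocity $\hat v_{\lambda t}$ differs from the pure conditional velocity $v^c_t$ only by the single additive term $\alpha(t)\,\Delta v^{\theta}_{t-\Delta t/2}$ (cf.\ Eq.~\eqref{eq:rect_cfg_pp_velocity}), so the two one-step updates differ by exactly that term scaled by the step size; assumption (A2) then closes the bound. There is no genuine analytic difficulty — the estimate is exact up to one application of (A2).

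First I would make the integrator explicit. Taking $\mathrm{ODEUpdate}(x_t,v,t)=x_t+\Delta t\,v$ (the Euler step used to integrate \eqref{eq:ode} in Algorithm~\ref{alg:rect-cfg}), the Rectified-CFG++ iterate is
$$\hat x_{t-1}=x_t+\Delta t\,\hat v_{\lambda t}=x_t+\Delta t\bigl(v^c_t+\alpha(t)\,\Delta v^{\theta}_{t-\Delta t/2}\bigr),$$
whereas the pure conditional Euler step is $\tilde x_{t-1}=x_t+\Delta t\,v^c_t$. Subtracting, the $x_t$ and $\Delta t\,v^c_t$ contributions cancel identically, leaving
$$\hat x_{t-1}-\tilde x_{t-1}=\alpha(t)\,\Delta t\,\Delta v^{\theta}_{t-\Delta t/2}.$$
Taking norms and using $\alpha(t)\ge 0$, $\Delta t>0$ gives $\|\hat x_{t-1}-\tilde x_{t-1}\|=\alpha(t)\,\Delta t\,\|\Delta v^{\theta}_{t-\Delta t/2}\|$, and assumption (A2) bounds $\|\Delta v^{\theta}_{t-\Delta t/2}\|\le B$, yielding the claim.

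The only step requiring a word of care — and the closest thing to an "obstacle" — is that (A2) must be invoked at the predicted state $\tilde x_{t-\Delta t/2}$ and time $t-\Delta t/2$, not at the current $(x_t,t)$; this is legitimate precisely because the bound $B$ in (A2) is assumed to hold uniformly over all $x\in\mathbb R^d$, $t\in[0,1]$, and $y$. For completeness I would also remark that if $\mathrm{ODEUpdate}$ were a higher-order solver re-evaluating $v_\theta$ internally, the difference would acquire an extra $O(\Delta t^2)$ term controlled by (A1) and (A4); but for the Euler update stated in the algorithm the bound $\alpha(t)B\,\Delta t$ is exact (equality holds whenever $\|\Delta v^{\theta}_{t-\Delta t/2}\|=B$).
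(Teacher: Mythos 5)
Your proof is correct and follows the paper's argument essentially verbatim: instantiate the Euler update, subtract the two one-step iterates so that $x_t$ and $\Delta t\,v^c_t$ cancel, and apply (A2) to the remaining term $\alpha(t)\,\Delta t\,\Delta v^{\theta}_{t-\Delta t/2}$. Your additional observation that (A2) must be invoked at the predicted state $\tilde x_{t-\Delta t/2}$ rather than at $(x_t,t)$, which is legitimate because the bound is assumed to hold uniformly, is a careful point that the paper leaves implicit.
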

\vspace{-15pt}
\begin{proof}
See Appendix~\ref{app:proof_prop_bounded_perturbation}.
\end{proof}
\vspace{-10pt}
This proposition implies that the per-step deviation from the conditional path is directly controlled by the weight $\alpha(t)$ and the bound $B$ imposed on the guidance field magnitude, scaled by the step size $\Delta t$. Thus, the Rectified-CFG++ trajectory stays within a bounded tubular neighborhood of the ideal manifold $\mathcal{M}_t$. The size of this neighborhood is controlled by the guidance strength $\alpha(t)$ and by the guidance field bound $B$. This analysis shows that, unlike standard CFG whose extrapolative nature can lead to divergence, the trajectories of Rectified-CFG++ are anchored to \colorbox{blue!15}{$v^c_t$}. Applying a controlled correction based on $\Delta v^{\theta}_{t-\Delta t/2}$ with a guidance weight $\alpha(t)$ ensures that the trajectory remains boundedly close to the target conditional flow path. This mathematical stability ensures to the empirical robustness and artifact reduction observed our results.

\begin{table}[t]
\centering
\scriptsize
\renewcommand{\arraystretch}{0.8}
\setlength{\tabcolsep}{8pt}
\caption{\textbf{Comprehensive Quantitative Evaluation of CFG against Rectified-CFG++ when both are integrated into leading T2I Models on MS-COCO 10K validation samples.} Lower($\downarrow$) FID and higher($\uparrow$) CLIP, Aesthetic, ImageReward, PickScore, and HPSv2 scores indicate better performance. Best values are highlighted in \textcolor{orange}{orange}, and second best in \textcolor{gray}{gray}.}
\label{tab:coco-10k-all-eval}
\begin{tabular}{ll|cccccc}
\toprule
\rowcolor{white}
\textbf{Model} & \textbf{Guidance} & \textbf{FID} $\downarrow$ & \textbf{CLIP} $\uparrow$ & \textbf{Aesthetic} $\uparrow$ & \textbf{ImageReward} $\uparrow$ & \textbf{PickScore} $\uparrow$ & \textbf{HPSv2} $\uparrow$ \\
\midrule

\multirow{2}{*}{\textbf{Lumina~\cite{lumina-2}}} & CFG & \cellcolor{gray!15}26.9321 & \cellcolor{orange!15}0.3511 & \cellcolor{orange!15}5.8226 & \cellcolor{orange!15}1.0924 & \cellcolor{gray!15}0.5867 & \cellcolor{gray!15}0.2797 \\
& \cellcolor{green!15}\textbf{Rect-CFG++} & \cellcolor{orange!15}\textbf{22.4899} & \cellcolor{gray!15}\textbf{0.3464} & \cellcolor{gray!15}\textbf{5.7755} & \cellcolor{gray!15}\textbf{0.9611} & \cellcolor{orange!15}\textbf{0.6133} & \cellcolor{orange!15}\textbf{0.3004} \\
\midrule 

\multirow{2}{*}{\textbf{SD3~\cite{sd32024}}} & CFG & \cellcolor{gray!15}23.8898 & \cellcolor{gray!15}0.3439 & \cellcolor{gray!15}5.5465 & \cellcolor{gray!15}0.9812 & \cellcolor{gray!15}0.4408 & \cellcolor{gray!15}0.2751 \\
& \cellcolor{green!15}\textbf{Rect-CFG++} & \cellcolor{orange!15}\textbf{23.3945} & \cellcolor{orange!15}\textbf{0.3471} & \cellcolor{orange!15}\textbf{5.6529} & \cellcolor{orange!15}\textbf{1.0009} & \cellcolor{orange!15}\textbf{0.5591} & \cellcolor{orange!15}\textbf{0.2897} \\
\midrule

\multirow{2}{*}{\textbf{SD3.5~\cite{sd32024}}} & CFG & \cellcolor{gray!15}20.2945 & \cellcolor{orange!15}0.3506 & \cellcolor{gray!15}6.155 & \cellcolor{gray!15}1.0487 & \cellcolor{gray!15}0.4923 & \cellcolor{gray!15}0.2933 \\
& \cellcolor{green!15}\textbf{Rect-CFG++} & \cellcolor{orange!15}\textbf{20.2169} & \cellcolor{gray!15}\textbf{0.3497} & \cellcolor{orange!15}\textbf{6.1651} & \cellcolor{orange!15}\textbf{1.0796} & \cellcolor{orange!15}\textbf{0.5077} & \cellcolor{orange!15}\textbf{0.2946} \\
\midrule

\multirow{2}{*}{\textbf{Flux-dev~\cite{flux2024}}} & CFG & \cellcolor{gray!15}37.8625 & \cellcolor{gray!15}0.3351 & \cellcolor{gray!15}4.7210 & \cellcolor{orange!15}1.0528 & \cellcolor{gray!15}0.3248 & \cellcolor{gray!15}0.2621 \\
& \cellcolor{green!15}\textbf{Rect-CFG++} & \cellcolor{orange!15}\textbf{32.2262} & \cellcolor{orange!15}\textbf{0.3493} & \cellcolor{orange!15}\textbf{5.3251} & \cellcolor{gray!15}\textbf{0.9480} & \cellcolor{orange!15}\textbf{0.6752} & \cellcolor{orange!15}\textbf{0.2996} \\

\bottomrule
\end{tabular}
\end{table}


\begin{table*}[!t]
  \centering
  \begin{minipage}[t]{0.48\textwidth}
    \captionof{table}{\textbf{Quantitative Evaluation on T2I-CompBench.} Evaluated across Color, Shape, Texture, and Spatial metrics. Rectified-CFG++ improves consistently across all dimensions.}
    \label{tab:t2i_compbench}
    \scriptsize
    \renewcommand{\arraystretch}{0.8}
    \setlength{\tabcolsep}{5pt}
    \begin{tabular}{l|cccc}
      \toprule
      \textbf{Model}       & \textbf{Color} ↑ & \textbf{Shape} ↑ & \textbf{Texture} ↑ & \textbf{Spatial} ↑ \\
      \midrule
      Lumina~\cite{lumina-2}          & \cellcolor{gray!15}0.7358 & \cellcolor{gray!15}0.6898 & \cellcolor{orange!15}0.7365 & \cellcolor{gray!15}0.3586 \\
      \rowcolor{green!15}
      \textbf{w/ Rect-CFG++}
                           & \cellcolor{orange!15}\textbf{0.7767} & \cellcolor{orange!15}\textbf{0.7042} & \cellcolor{gray!15}\textbf{0.6856} & \cellcolor{orange!15}\textbf{0.3608} \\
      \midrule
      SD3~\cite{sd32024}                  & \cellcolor{gray!15}0.7658 & \cellcolor{gray!15}0.5698 & \cellcolor{gray!15}0.7270 & \cellcolor{gray!15}0.3199 \\
      \rowcolor{green!15}
      \textbf{w/ Rect-CFG++}
                           & \cellcolor{orange!15}\textbf{0.8041} & \cellcolor{orange!15}\textbf{0.5778} & \cellcolor{orange!15}\textbf{0.7362} & \cellcolor{orange!15}\textbf{0.3306} \\
      \midrule
      SD3.5~\cite{sd32024}                & \cellcolor{gray!15}0.7698 & \cellcolor{gray!15}0.5792 & \cellcolor{gray!15}0.7413 & \cellcolor{gray!15}0.2856 \\
      \rowcolor{green!15}
      \textbf{w/ Rect-CFG++}
                           & \cellcolor{orange!15}\textbf{0.7770} & \cellcolor{orange!15}\textbf{0.6014} & \cellcolor{orange!15}\textbf{0.7627} & \cellcolor{orange!15}\textbf{0.2909} \\
      \midrule
      Flux-dev~\cite{flux2024}             & \cellcolor{gray!15}0.6132 & \cellcolor{gray!15}0.4152 & \cellcolor{gray!15}0.5928 & \cellcolor{gray!15}0.2488 \\
      \rowcolor{green!15}
      \textbf{w/ Rect-CFG++}
                           & \cellcolor{orange!15}\textbf{0.7728} & \cellcolor{orange!15}\textbf{0.5018} & \cellcolor{orange!15}\textbf{0.6705} & \cellcolor{orange!15}\textbf{0.2790} \\
      \bottomrule
    \end{tabular}
  \end{minipage}
  \hfill
  \begin{minipage}[t]{0.48\textwidth}
    \captionof{table}{\textbf{Quantitative Comparison of Guidance Strategies on MS-COCO 1K.} We evaluated standard guidance methods against Rect-CFG++ using FID (↓), CLIP ($\uparrow$), ImageReward ($\uparrow$), and HPSv2 ($\uparrow$) scores.}
    \label{tab:all_guidance_comparison_coco1k}
    \scriptsize
    \renewcommand{\arraystretch}{1.25}
    \setlength{\tabcolsep}{3pt}
    \begin{tabular}{l|cccc}
      \toprule
      \textbf{Guidance}       & \textbf{FID} ↓ & \textbf{ImageReward} ↑ & \textbf{CLIP} ↑ & \textbf{HPSv2} ↑ \\
      \midrule
      SD3.5                   & 77.3049        & 0.3852                 & 0.3260          & 0.2421          \\
      \midrule
      w/ CFG                  & 67.7133        & 1.0530                 & \cellcolor{orange!15}0.3515 & \cellcolor{gray!15}0.2941 \\
      w/ CFG-Zero$^\star$     & 68.3909        & 0.9947                 & 0.3458          & 0.2879          \\
      w/ APG                  & \cellcolor{gray!15}67.2311 & \cellcolor{gray!15}1.0748 & \cellcolor{gray!15}0.3513 & 0.2935 \\
      \cellcolor{green!15}\textbf{w/ Rect-CFG++}  & \cellcolor{orange!15}\textbf{67.1495} & \cellcolor{orange!15}\textbf{1.0845} & \textbf{0.3506} & \cellcolor{orange!15}\textbf{0.2959} \\
      \bottomrule
    \end{tabular}
  \end{minipage}
\end{table*}

\section{Experiments}
\label{sec:experiments}
\vspace{-7pt}
In this section, we present a comprehensive empirical evaluation of Rectified-CFG++ for text-to-image (T2I) generation using large-scale models. Our experiments aim to rigorously demonstrate the effectiveness of our approach at improving text–image alignment, color fidelity, and the preservation of fine details, generating high-quality samples while expending comparable inference costs as competing baseline methods. 

\textbf{Evaluation Metrics:}
To provide a multifaceted assessment of generated image quality and prompt adherence, we employed a suite of established metrics. We measured perceptual image quality and realism using the Fréchet Inception Distance (FID)~\cite{fid}, and we quantified text-image semantic alignment is using CLIP-Score~\cite{radford2021learning,hessel2021clipscore,ilharco2021openclip}. Furthermore, to capture aspects related to human preferences, visual aesthetics, and overall quality, we utilize ImageReward~\cite{xu2023imagereward}, PickScore~\cite{kirstain2023pick, wallace2024diffusion}, HPSv2~\cite{wu2023human}, and Aesthetic Score~\cite{schuhmann2022laionAesthetics}. These metrics collectively allow for a thorough evaluation of the generated images from different perspectives.

\textbf{Datasets and Baselines:}
We conducted objective model comparison on standard T2I benchmark datasets. Specifically, we used subsets of the MS-COCO dataset~\cite{lin2014microsoft,chung2024cfg++}, comprising 10,000 and 1,000 image-text pairs (referred to as MS-COCO 10K and MS-COCO 1K, respectively). We also used a subset of 1,000 image-text pairs from LAION-Aesthetic~\cite{schuhmann2022laionAesthetics} (LAION-Aesthetic 1K) and the 1,000 prompts from Pick-A-Pic~\cite{kirstain2023pick}. To demonstrate the broad applicability of Rectified-CFG++, we integrated it into and evaluate it on several state-of-the-art flow-based T2I foundation models: Stable Diffusion 3~\cite{sd32024}, Stable Diffusion 3.5~\cite{sd32024}, Flux~\cite{flux2024}, and Lumina~\cite{lumina-2}. These models are representative of current advancements in flow-based generative architectures.

\textbf{Implementation Details:}
All experiments were performed using a single NVIDIA A100 40GB GPU. When using our proposed method, Rectified-CFG++, we determined a set of effective hyperparameters which were kept consistent across all datasets and when integrated into baseline models. For all the compared methods, we utilized the default settings and configurations as reported in their original publications to ensure fair comparisons. Further detailed information regarding the experimental setup and hyperparameter settings can be found in Appendix~\ref{app:implemetation}.

\vspace{-3pt}
\subsection{Text-to-Image Generation Evaluation}
\label{sec:t2i_eval}
\vspace{-7pt}
\subsubsection{Quantitative Evaluation}
\label{sec:quantitative_eval}
\vspace{-7pt}
We first assess performance using established quantitative metrics. Table~\ref{tab:all_guidance_comparison_coco1k} provides a comparison on MS-COCO-1K against several guidance strategies: standard CFG~\cite{ho2022classifier}, CFG++~\cite{chung2024cfg++}, APG~\cite{sadat2024eliminating}, and CFG-Zero$^\star$~\cite{cfg-zero}. Rectified-CFG++ consistently outperformed the other strategies across all metrics on SD3.5~\cite{sd32024}. The results of a more comprehensive evaluation across multiple foundation models on MS-COCO-10K are given in Table~\ref{tab:coco-10k-all-eval}. These outcomes clearly demonstrate the efficacy of using Rectified-CFG++ when combined with leading text-to-image models. As compared to standard CFG integrated with the same base models, our method consistently improves scores across nearly all metrics. Notably, Rectified-CFG++ significantly lowers FID (indicating higher image fidelity) while simultaneously enhancing scores related to text alignment and human preference (CLIP, ImageReward, PickScore, HPSv2), as highlighted by the best in \colorbox{orange!15}{orange} and second-best in \colorbox{gray!15}{gray} values. For instance, on Lumina-Next, FID drops from $26.93$ to $22.49$, and on Flux, FID improves substantially from $37.86$ to $32.23$, accompanied by consistent gains in human preference metrics. Furthermore, we evaluated performance on T2I-CompBench~\cite{huang2025t2i}. 
As shown in Table~\ref{tab:t2i_compbench}, Rectified-CFG++ consistently improves text-to-image model performance than does baseline CFG across all four attribute dimensions, indicating enhanced capability at generating images that accurately reflect complex compositional instructions. We provide more experimental results in Appendix~\ref{app:more_quantitative}.

\vspace{-3pt}
\subsubsection{Intermediate Sampling Analysis}
\label{sec:intermediate_sampling}
\vspace{-7pt}
To understand the convergence dynamics and efficiency of Rectified-CFG++, we analyzed its generation quality at intermediate sampling steps. As may be observed in Fig.~\ref{fig:2-intermediate_cfg_vs_ours_fox}, standard CFG often introduces artifacts like oversaturation and high contrast early in the sampling process, and sometimes significantly deviates from the target manifold. By contrast, Rectified-CFG++ maintains stable generation quality throughout the process. More detailed visualization examples are provided in Appendix \ref{app:additional_experiments}.

\begin{figure}
  \begin{minipage}{0.65\textwidth}
    \scriptsize
    \renewcommand{\arraystretch}{0.3}
    \setlength{\tabcolsep}{0.2pt}
    \begin{tabular}{cccccc}
      \cellcolor{gray!15}\rotatebox{90}{\parbox{1.4cm}{\centering\textbf{Flux}}}
      &
      \includegraphics[width=0.20\linewidth]{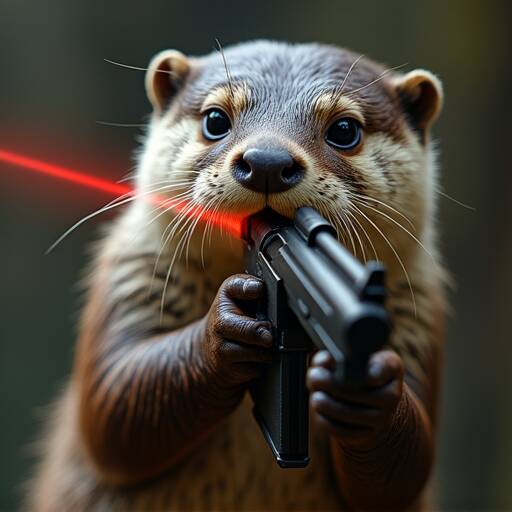} &
      \includegraphics[width=0.20\linewidth]{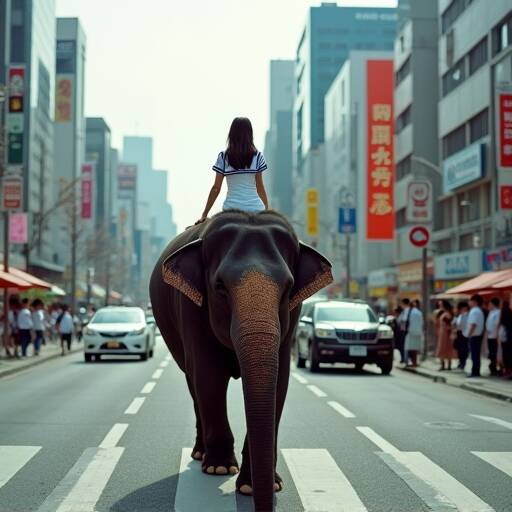} &
      \includegraphics[width=0.20\linewidth]{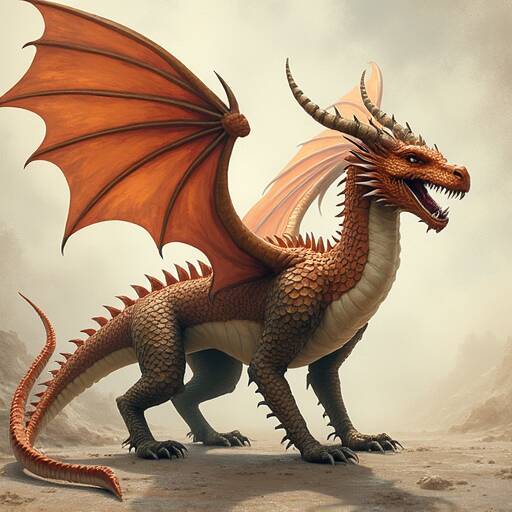} &
      \includegraphics[width=0.20\linewidth]{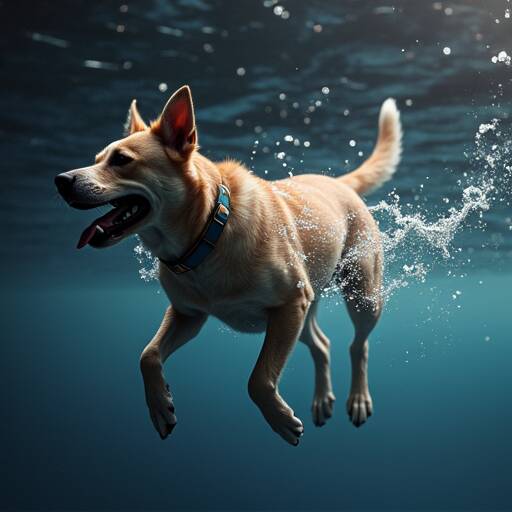} &
      \includegraphics[width=0.20\linewidth]{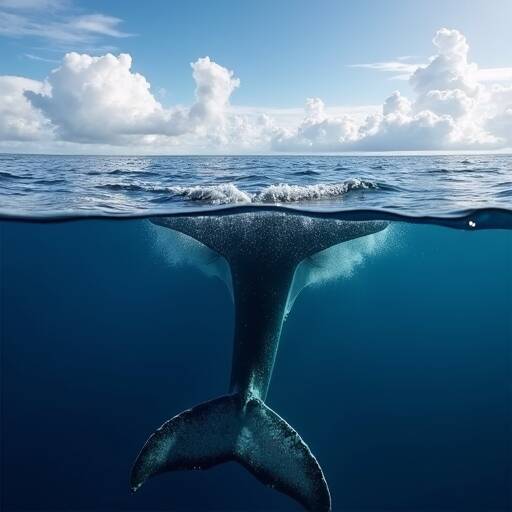} \\
    \end{tabular}\\[-2pt]
    \begin{tabular}{cccccc}
      \cellcolor{blue!15}\rotatebox{90}{\parbox{1.4cm}{\centering\textbf{CFG}}}
      &
      \includegraphics[width=0.20\linewidth]{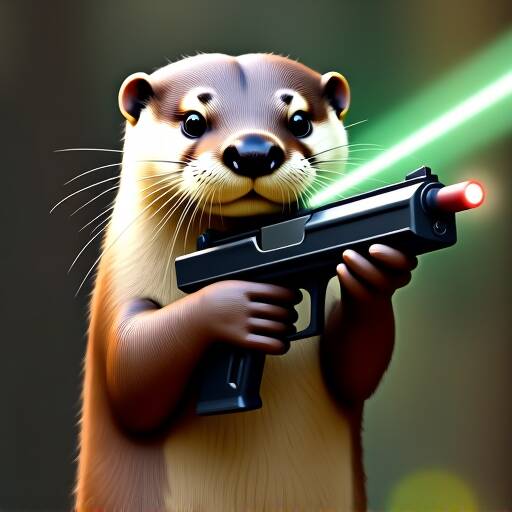} &
      \includegraphics[width=0.20\linewidth]{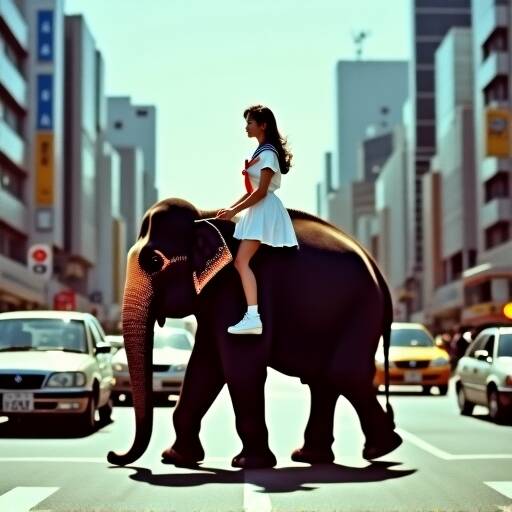} &
      \includegraphics[width=0.20\linewidth]{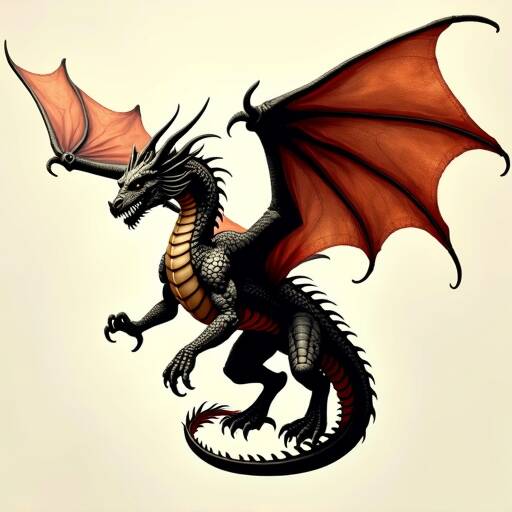} &
      \includegraphics[width=0.20\linewidth]{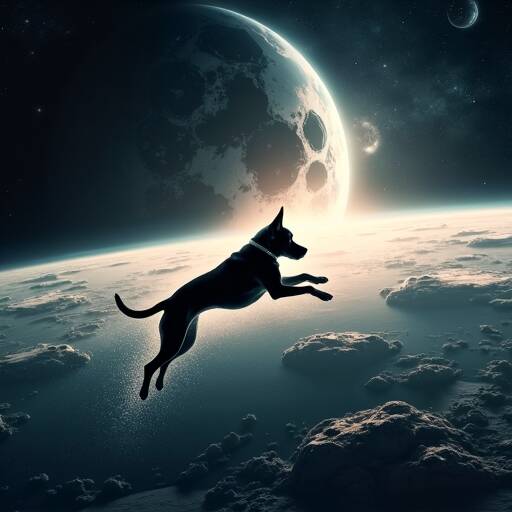} &
      \includegraphics[width=0.20\linewidth]{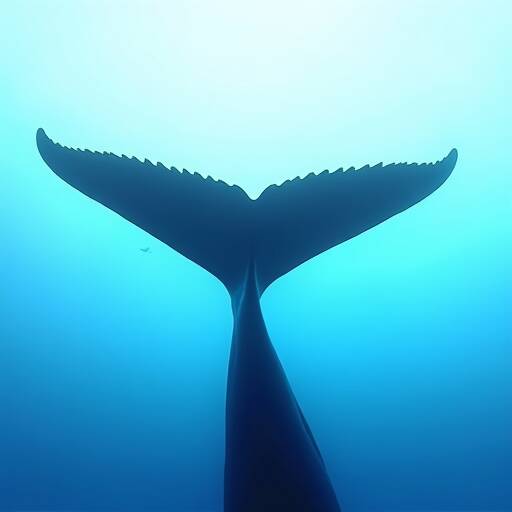} \\
    \end{tabular}\\[-2pt]
    \begin{tabular}{cccccc}
      \cellcolor{green!15}\rotatebox{90}{\parbox{1.4cm}{\centering\textbf{Rect.-CFG++}}}
      &
      \includegraphics[width=0.20\linewidth]{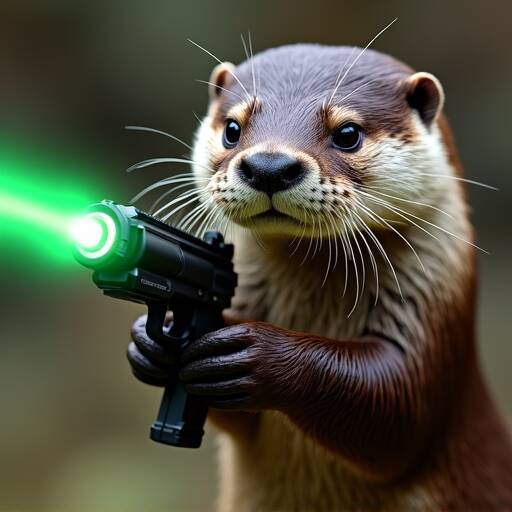} &
      \includegraphics[width=0.20\linewidth]{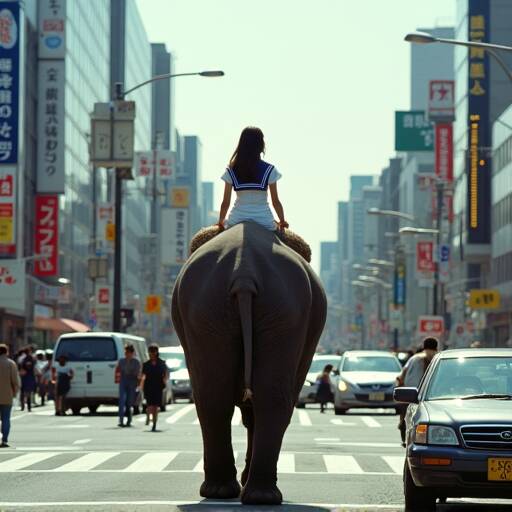} &
      \includegraphics[width=0.20\linewidth]{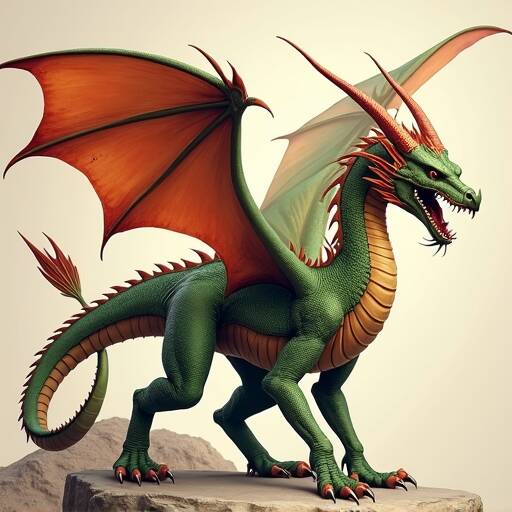} &
      \includegraphics[width=0.20\linewidth]{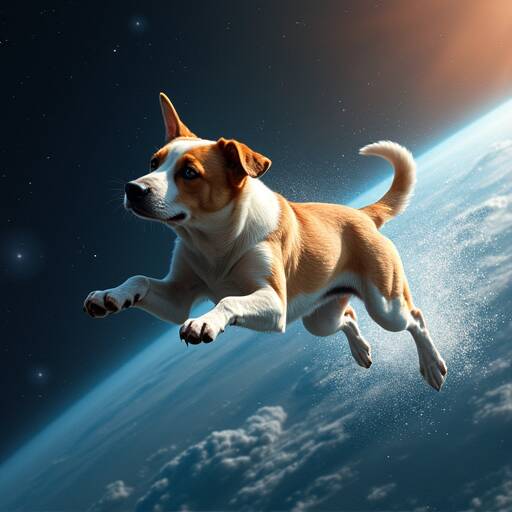} &
      \includegraphics[width=0.20\linewidth]{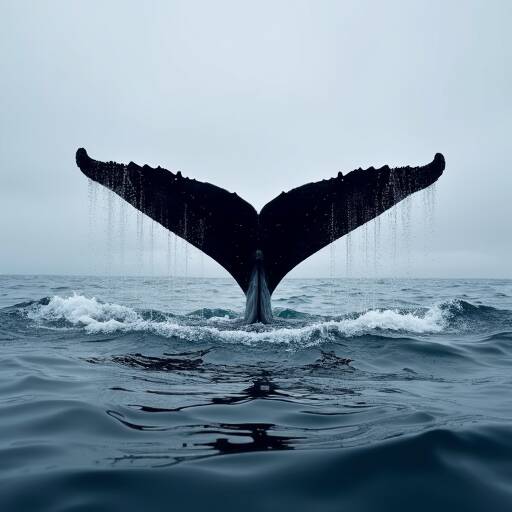} \\
    \end{tabular}
    \caption{T2I results from Flux~\cite{flux2024} across pick-a-pic~\cite{kirstain2023pick} prompts.}
    \label{fig:flux_guidance_compare}
  \end{minipage}%
  \hspace{10pt}
  \begin{minipage}{0.25\textwidth}
    \scriptsize
    \renewcommand{\arraystretch}{0.8}
    \setlength{\tabcolsep}{0pt}
    \begin{tabular}{cc}
      \cellcolor{blue!15}\textbf{CFG}&
      \cellcolor{blue!10}\textbf{APG}\\
      \includegraphics[width=0.59\linewidth]{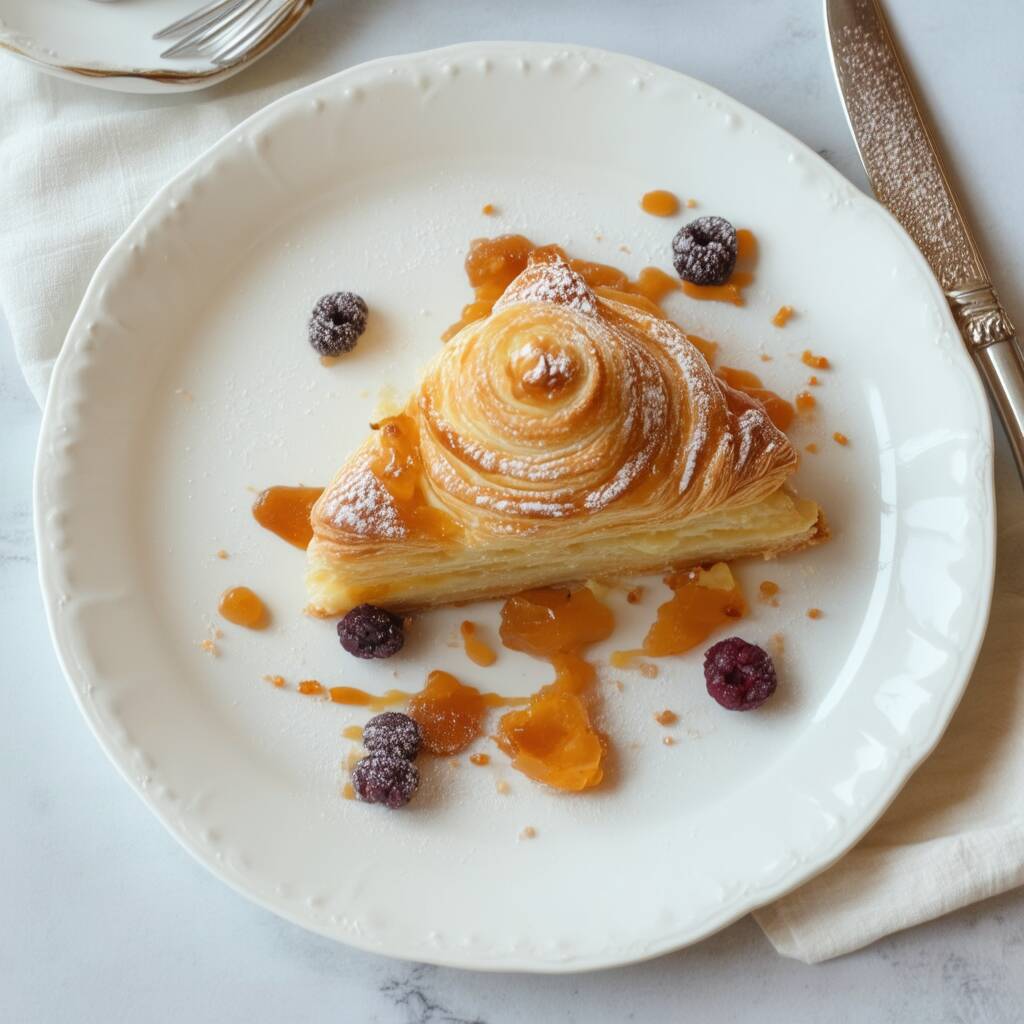}&
      \includegraphics[width=0.59\linewidth]{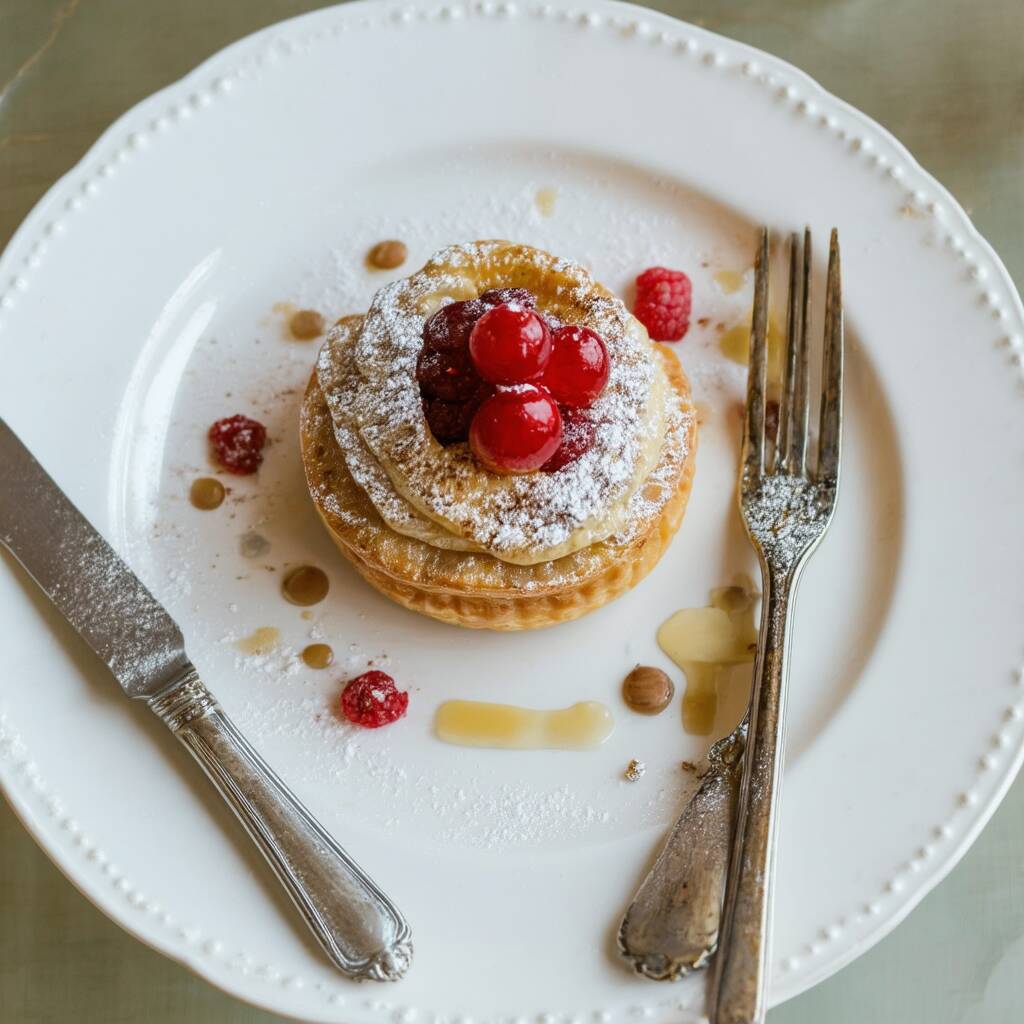}\\
      \cellcolor{blue!5}\textbf{CFG-Zero$^\star$}&
      \cellcolor{green!15}\textbf{Rectified-CFG++}\\
      \includegraphics[width=0.59\linewidth]{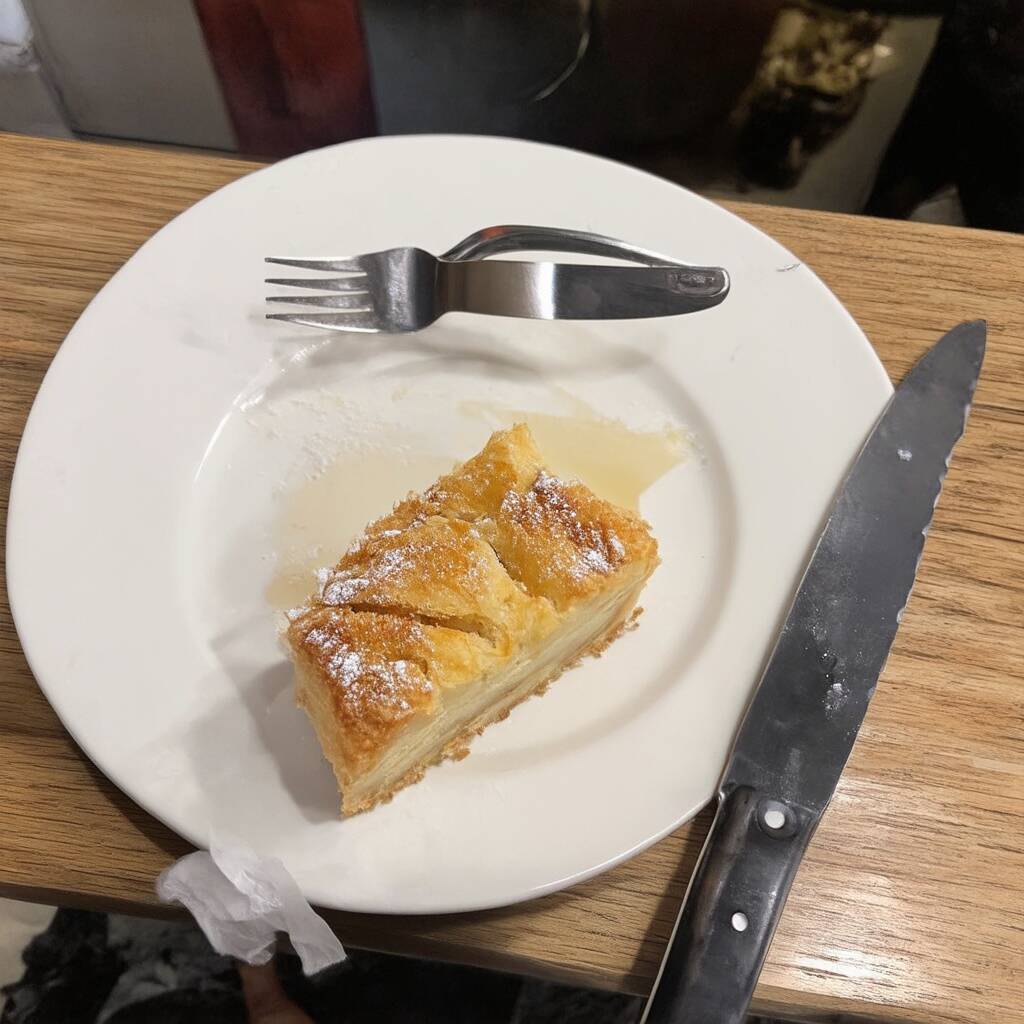}&
      \includegraphics[width=0.59\linewidth]{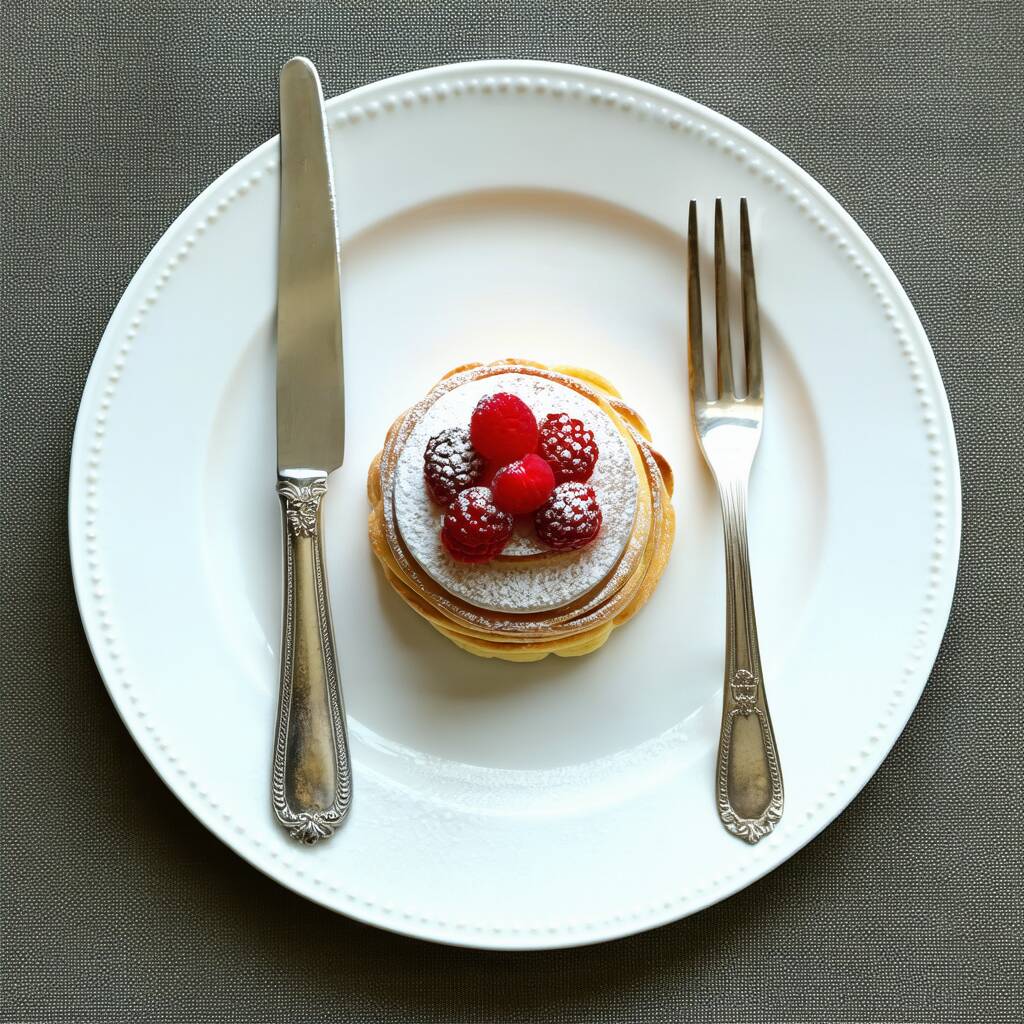}
    \end{tabular}
    \captionof{figure}{Guidance strategy comparison on SD3.5~\cite{sd32024}.} 
    \label{fig:guidance_strategy_comparison}
  \end{minipage}
\end{figure}

\begin{figure}[t]
\centering
\scriptsize
\renewcommand{\arraystretch}{0.5}
\setlength{\tabcolsep}{0pt}
\setlength{\belowcaptionskip}{-10pt}

\begin{tabular}{c@{\hspace{17pt}}c@{\hspace{17pt}}c}
\textbf{SD3~\cite{sd32024}} & \textbf{SD3.5~\cite{sd32024}} & \textbf{Lumina~\cite{lumina-2}} \\
\begin{tabular}{cc}
\cellcolor{blue!15}\textbf{CFG} & \cellcolor{green!15}\textbf{Rectified-CFG++} \\
\includegraphics[width=0.15\textwidth]{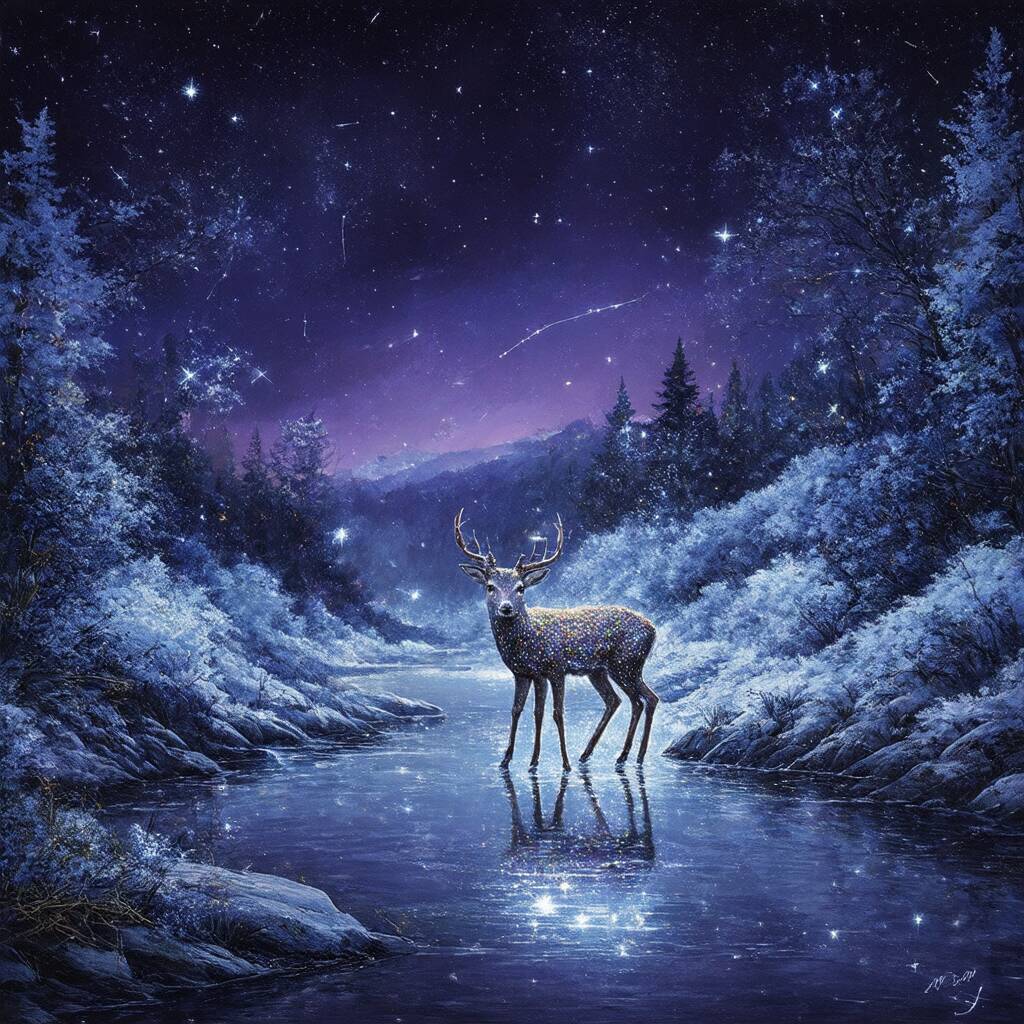} &
\includegraphics[width=0.15\textwidth]{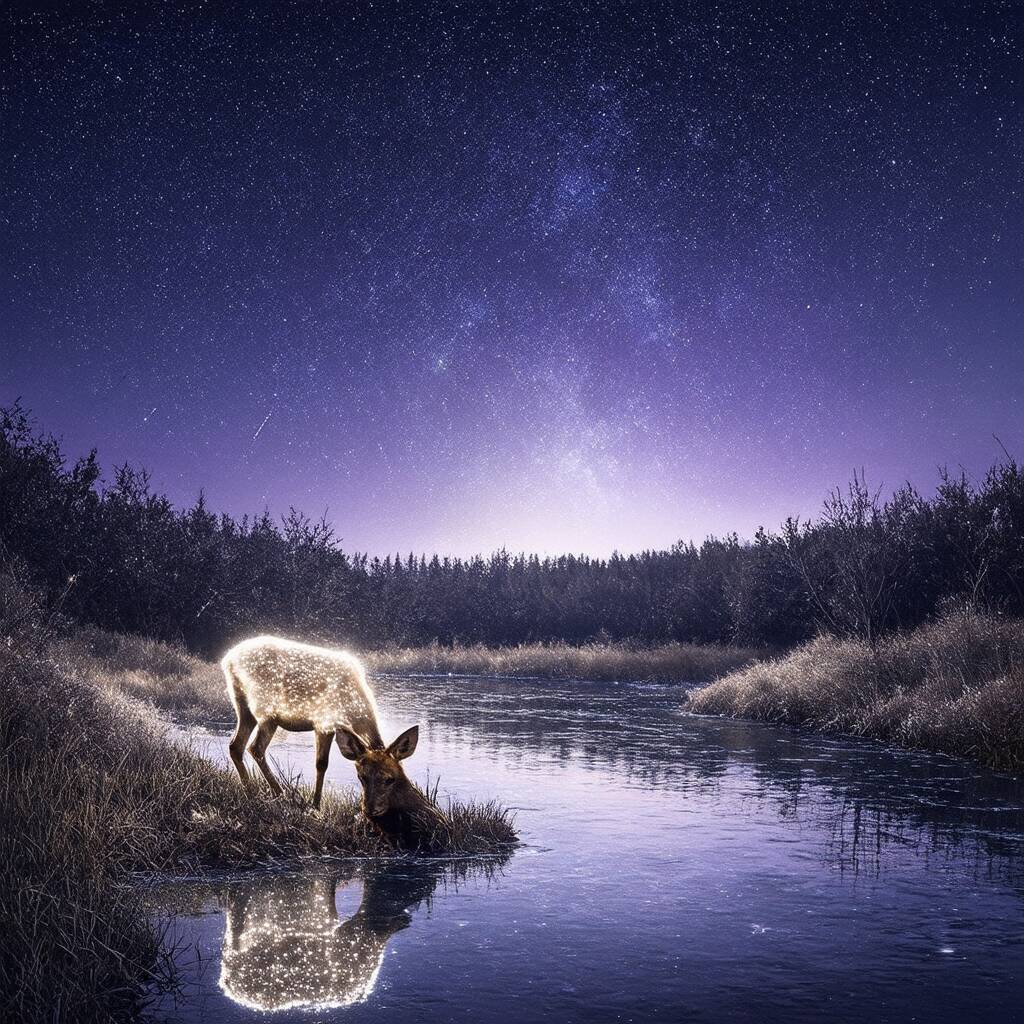} \\
\multicolumn{2}{c}{\parbox{0.3\textwidth}{\centering \textbf{Prompt: } \emph{"A deer made of shimmering starlight grazing beside a silver river...}
}}
\end{tabular}
&
\begin{tabular}{cc}
\cellcolor{blue!15}\textbf{CFG} & \cellcolor{green!15}\textbf{Rectified-CFG++} \\
\includegraphics[width=0.15\textwidth]{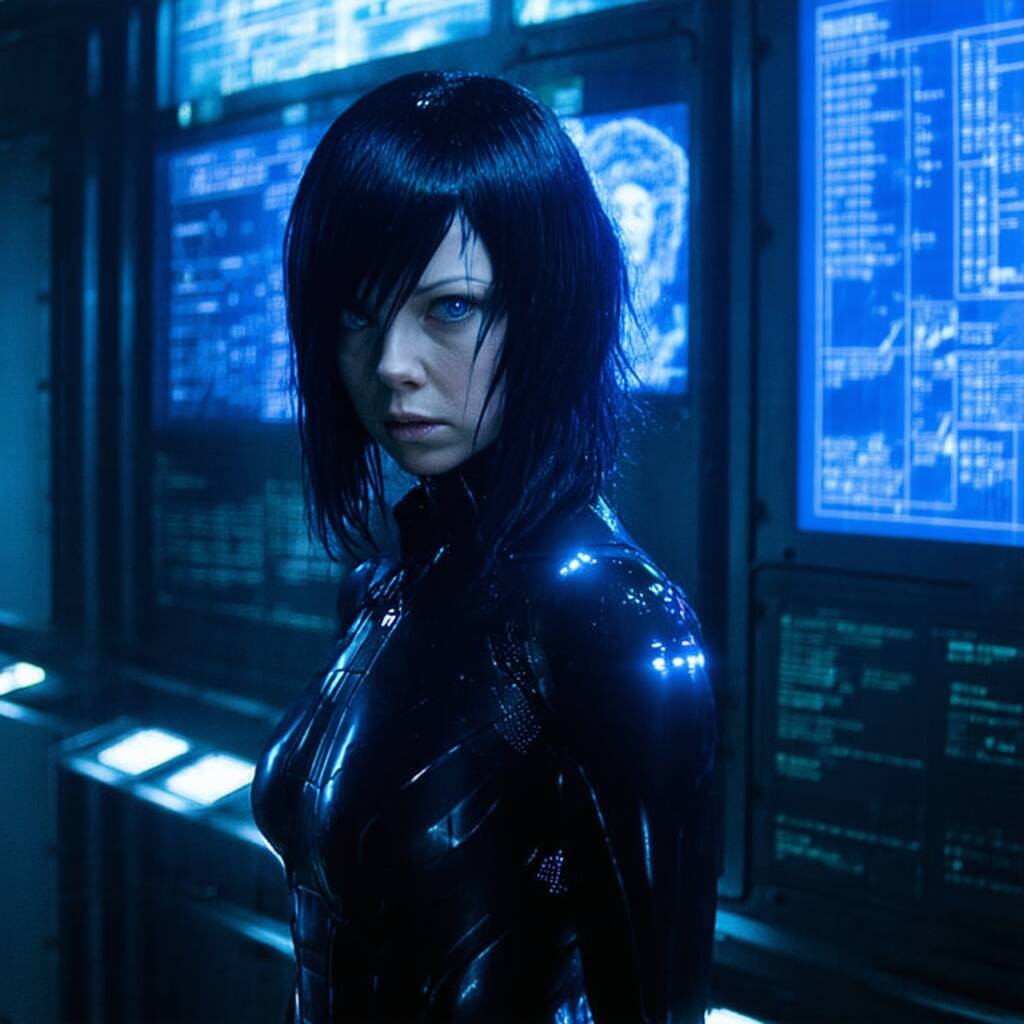} &
\includegraphics[width=0.15\textwidth]{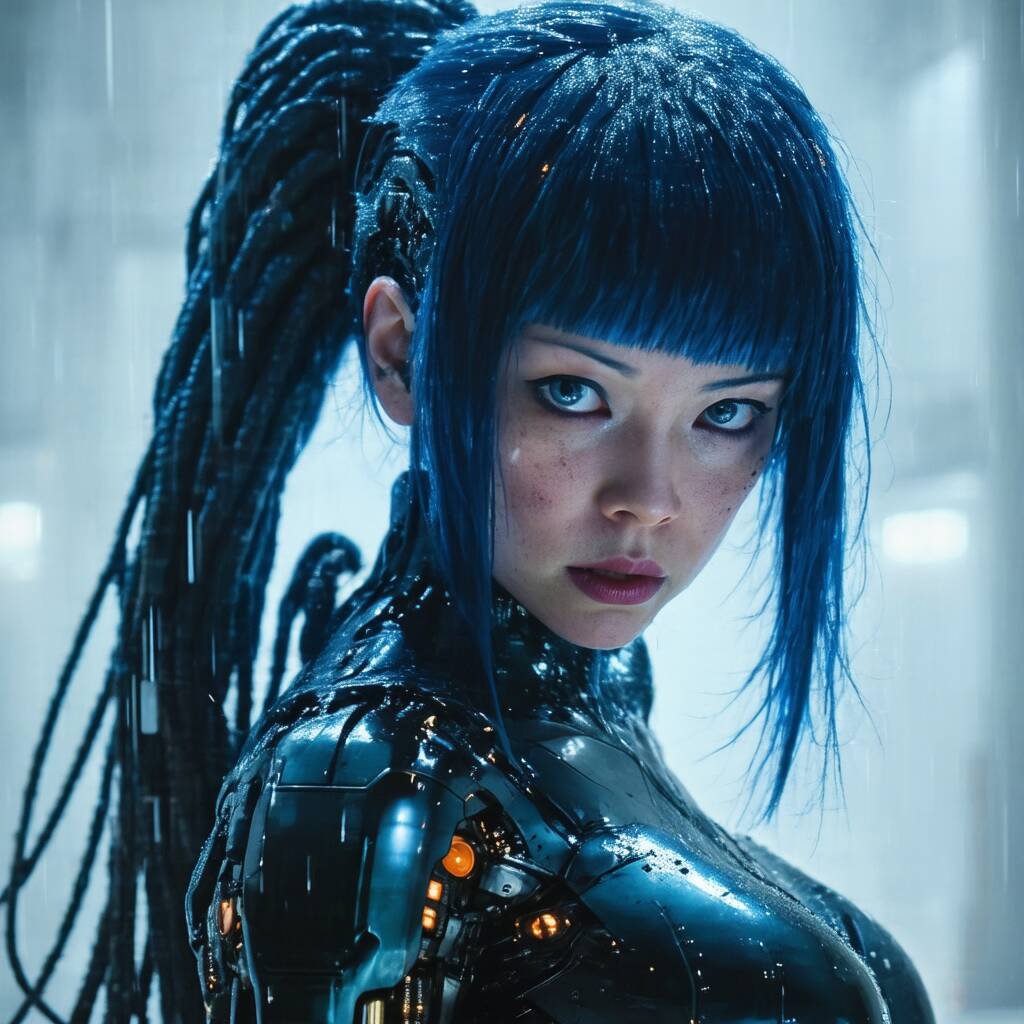} \\
\multicolumn{2}{c}{\parbox{0.3\textwidth}{\centering \textbf{Prompt: } \emph{Ghost In The Shell.}\vspace{2.5ex}}}
\end{tabular}
&
\begin{tabular}{cc}
\cellcolor{blue!15}\textbf{CFG} & \cellcolor{green!15}\textbf{Rectified-CFG++} \\
\includegraphics[width=0.15\textwidth]{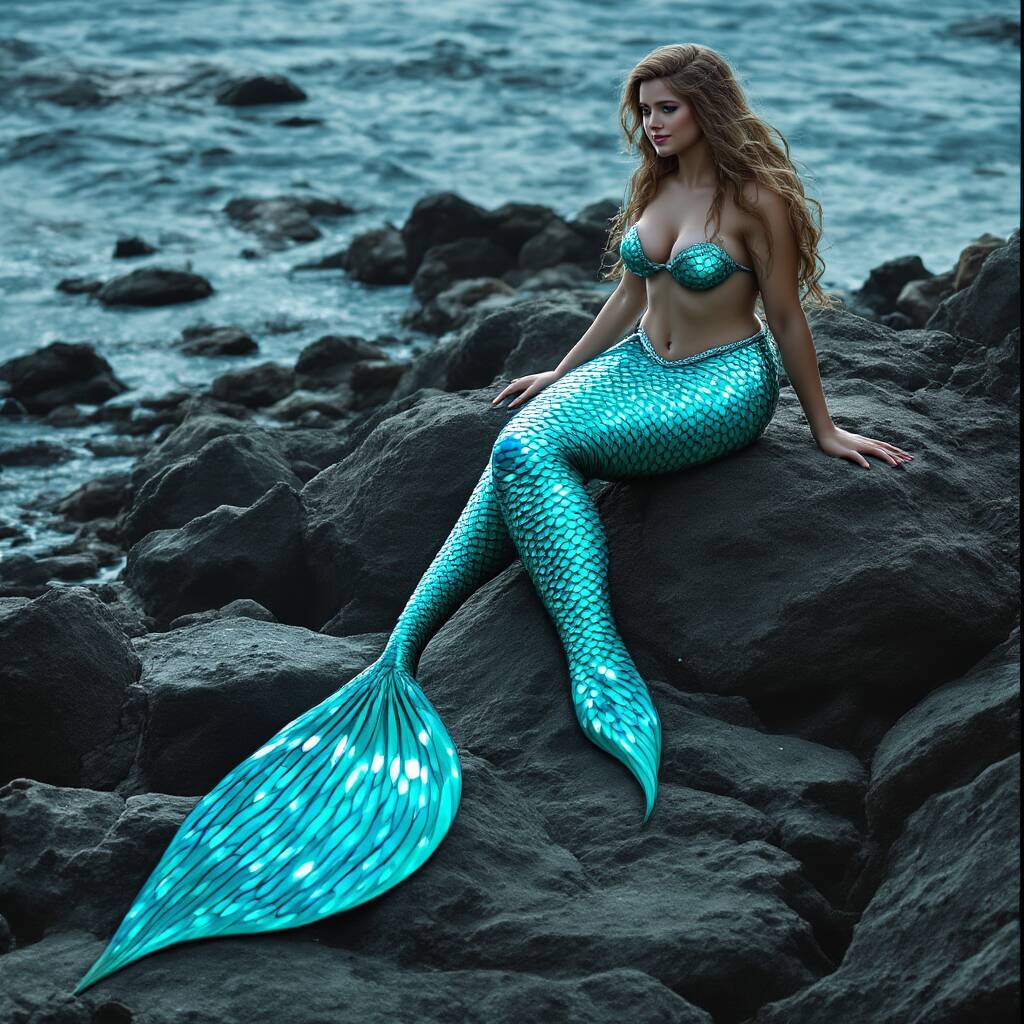} &
\includegraphics[width=0.15\textwidth]{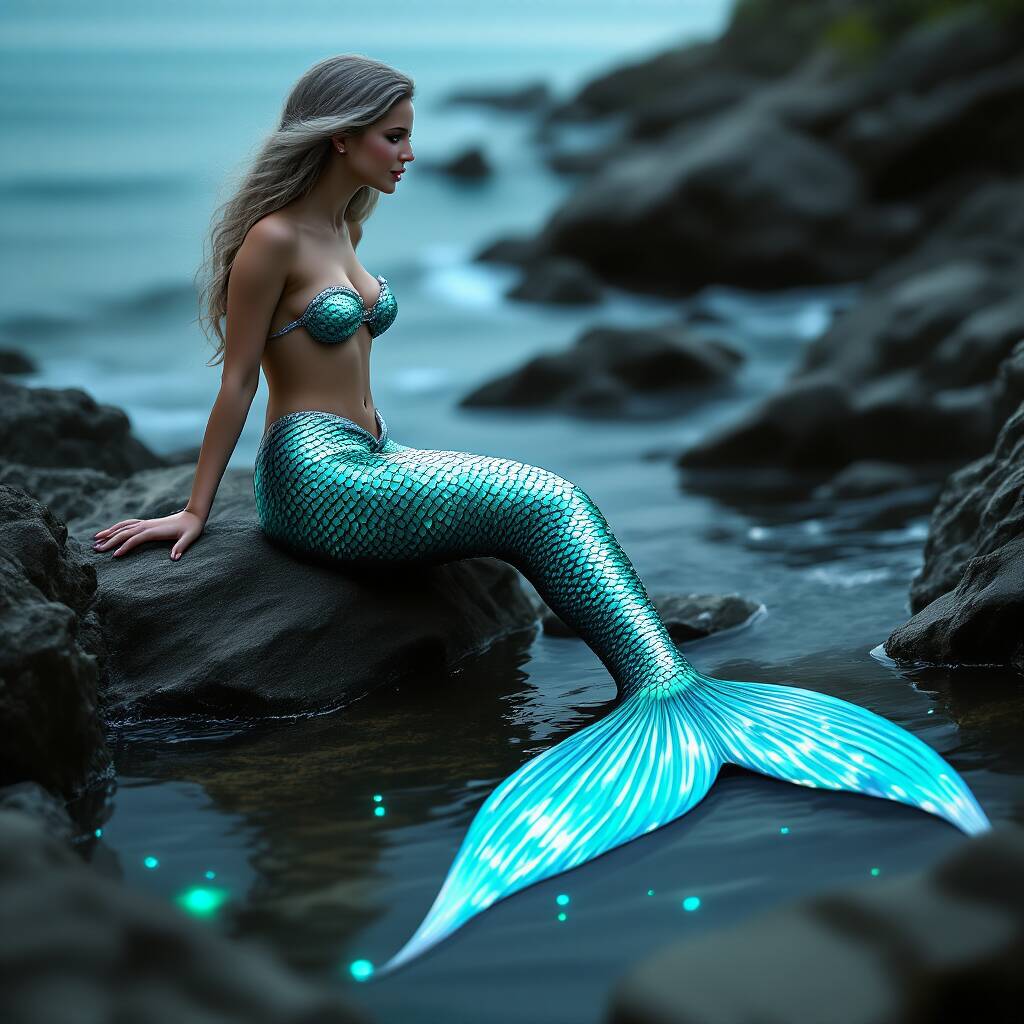} \\
\multicolumn{2}{c}{\parbox{0.3\textwidth}{\centering \textbf{Prompt: } \emph{A mermaid on a rocky shore, her tail shimmering with bioluminescent scales.}}}
\end{tabular}
\\[8pt]

\begin{tabular}{cc}
\includegraphics[width=0.15\textwidth]{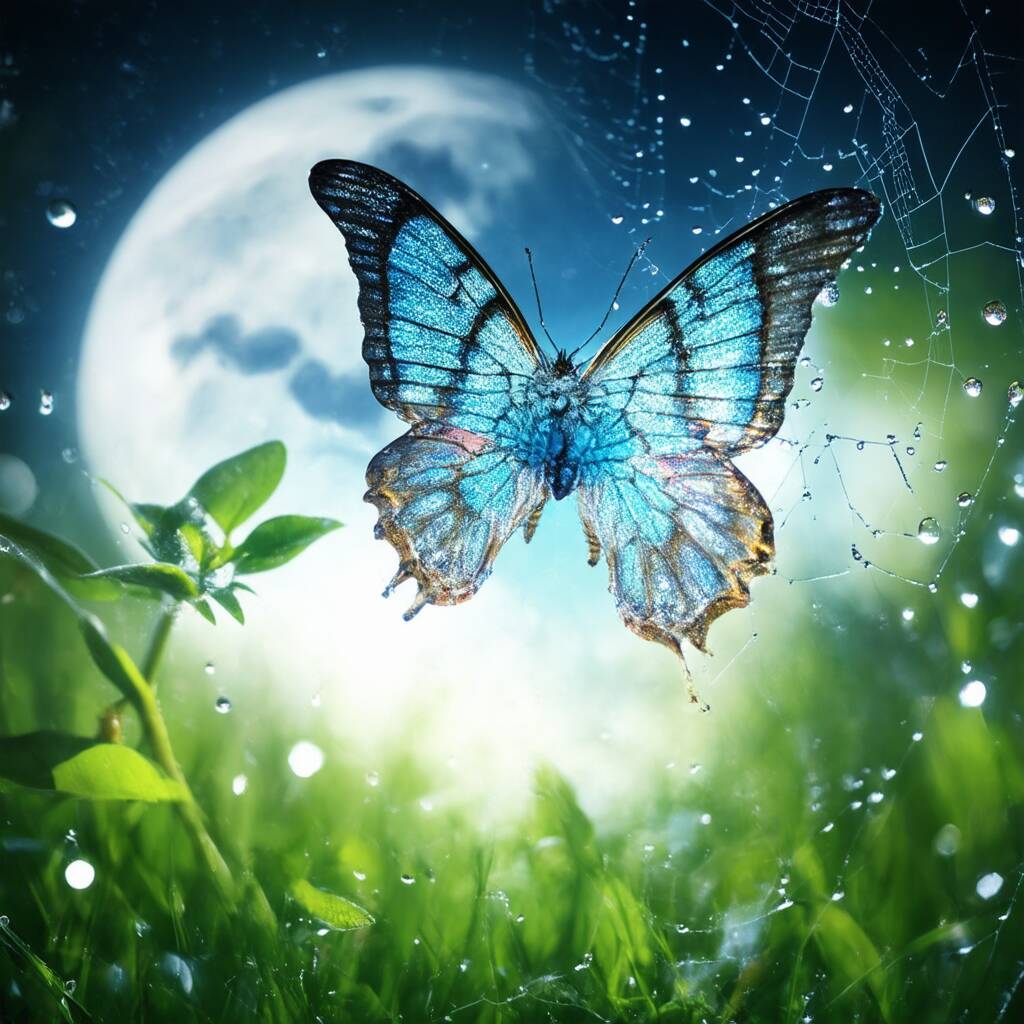} &
\includegraphics[width=0.15\textwidth]{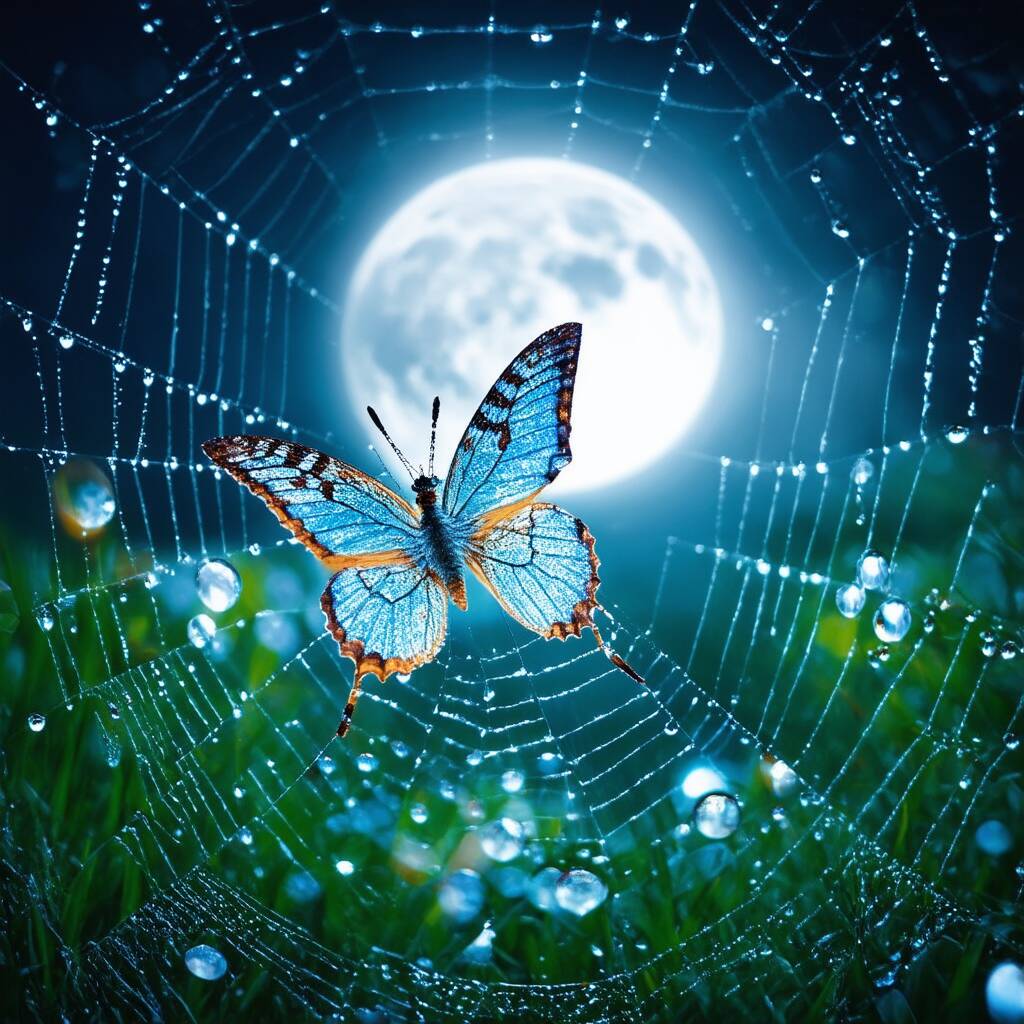} \\
\multicolumn{2}{c}{\parbox{0.3\textwidth}{\centering \textbf{Prompt: } \emph{A crystal-winged butterfly landing on a dewdrop-covered spiderweb...}}}
\end{tabular}
&
\begin{tabular}{cc}
\includegraphics[width=0.15\textwidth]{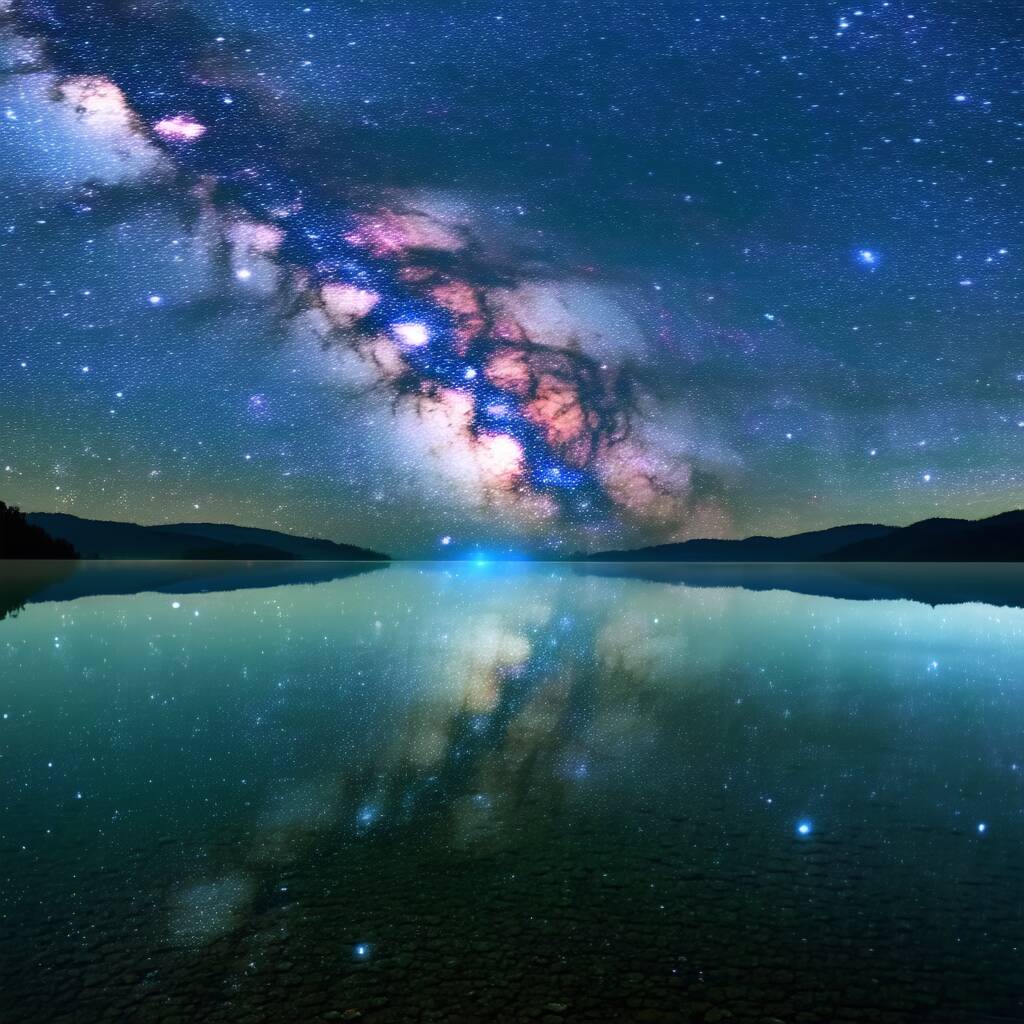} &
\includegraphics[width=0.15\textwidth]{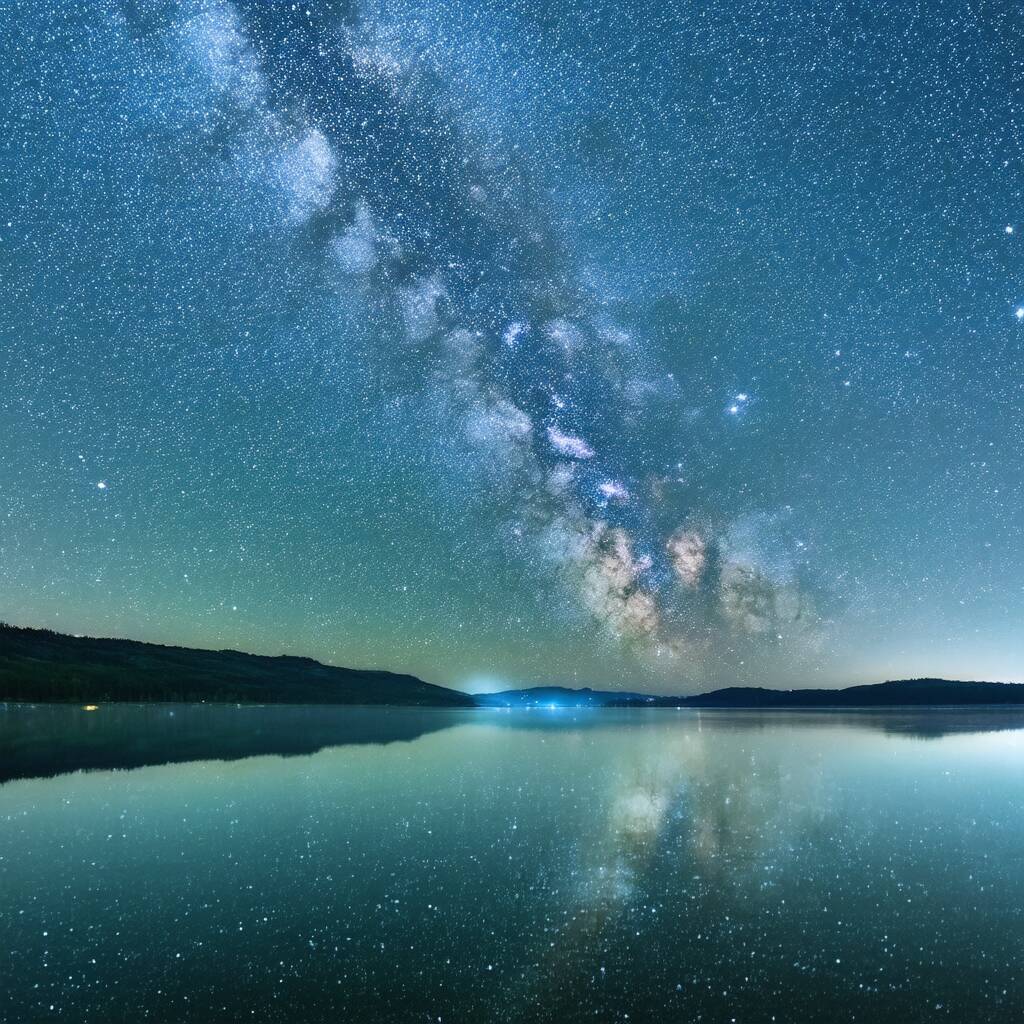} \\
\multicolumn{2}{c}{\parbox{0.3\textwidth}{\centering \textbf{Prompt: } \emph{Milky way at the lake.}\vspace{2.5ex}}}
\end{tabular}
&
\begin{tabular}{cc}
\includegraphics[width=0.15\textwidth]{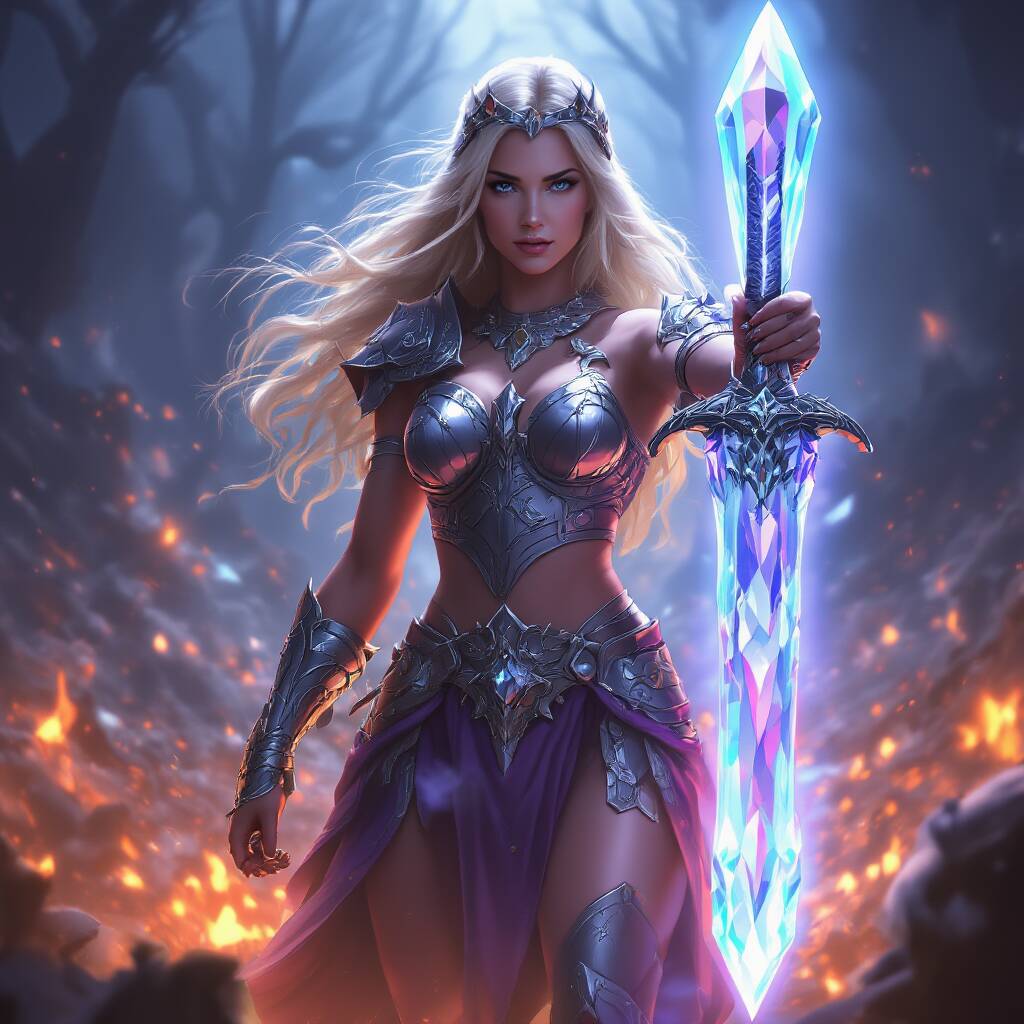} &
\includegraphics[width=0.15\textwidth]{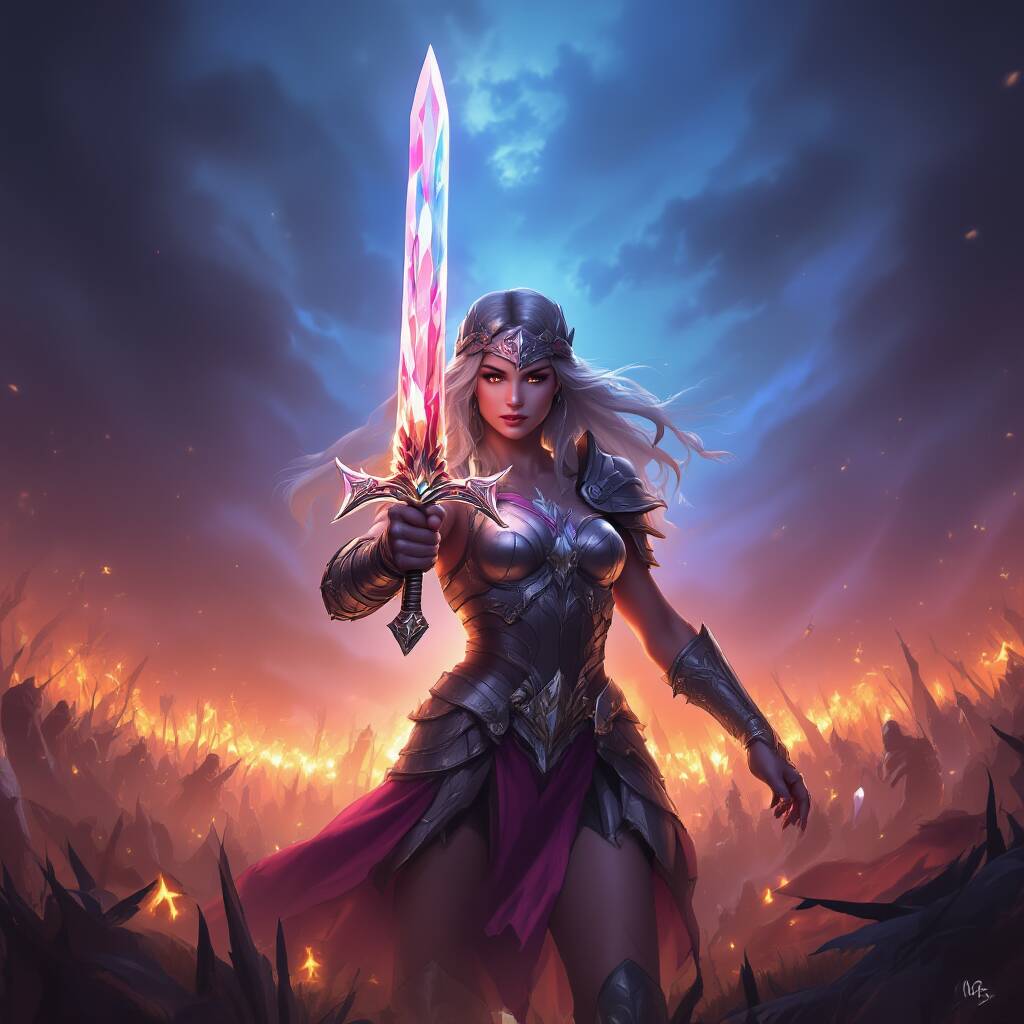} \\
\multicolumn{2}{c}{\parbox{0.3\textwidth}{\centering \textbf{Prompt: } \emph{A warrior princess brandishing a crystal sword in a glowing battlefield.}}}
\end{tabular}
\\[8pt]

\begin{tabular}{cc}
\includegraphics[width=0.15\textwidth]{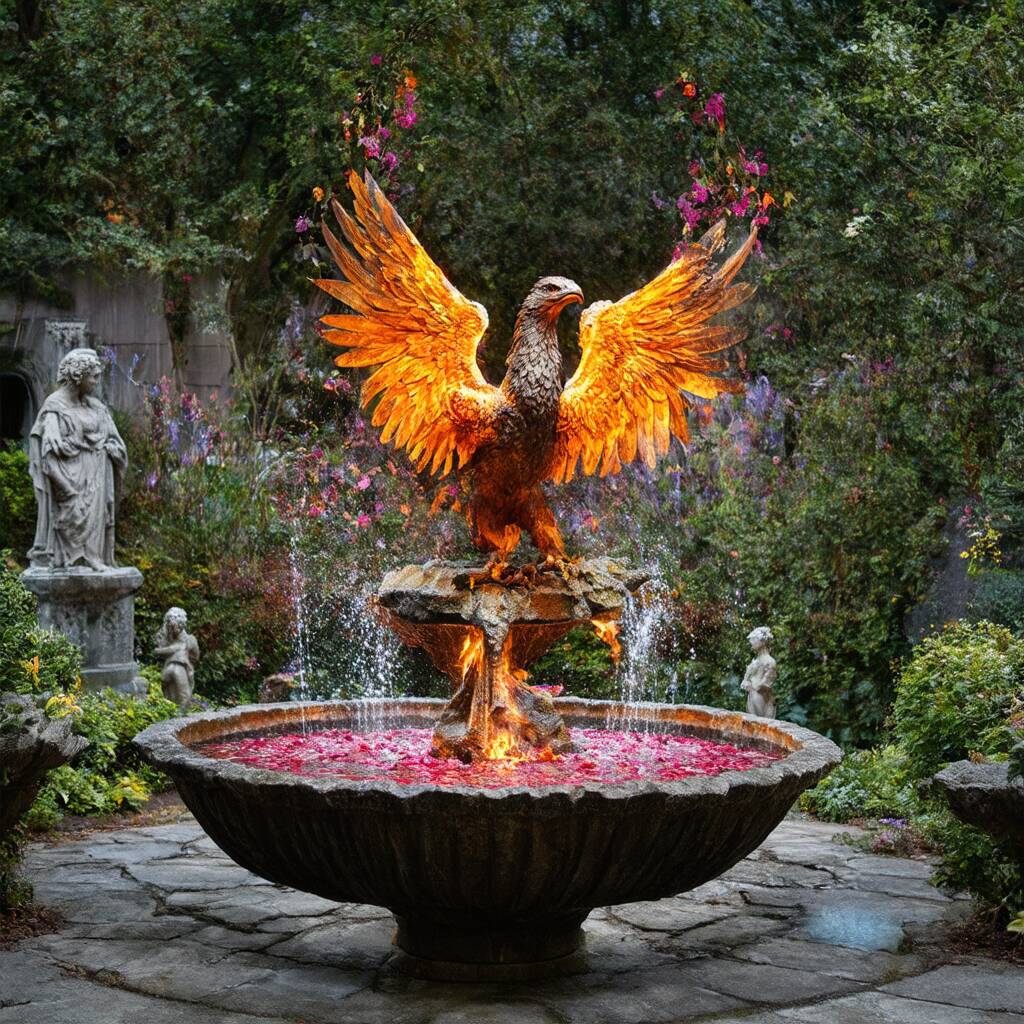} &
\includegraphics[width=0.15\textwidth]{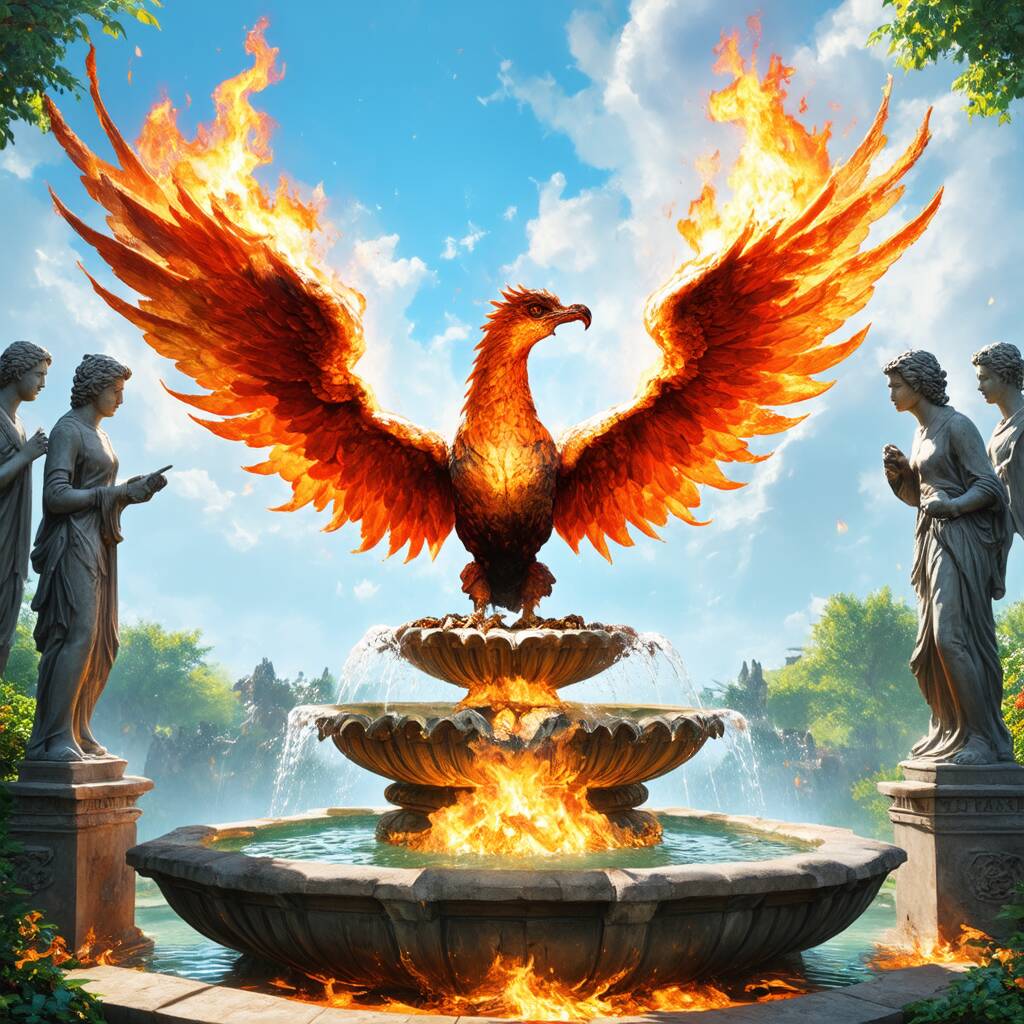} \\
\multicolumn{2}{c}{\parbox{0.3\textwidth}{\centering \textbf{Prompt: } \emph{A phoenix rising from an ancient garden fountain, wings made of blooming...}
}}
\end{tabular}
&
\begin{tabular}{cc}
\includegraphics[width=0.15\textwidth]{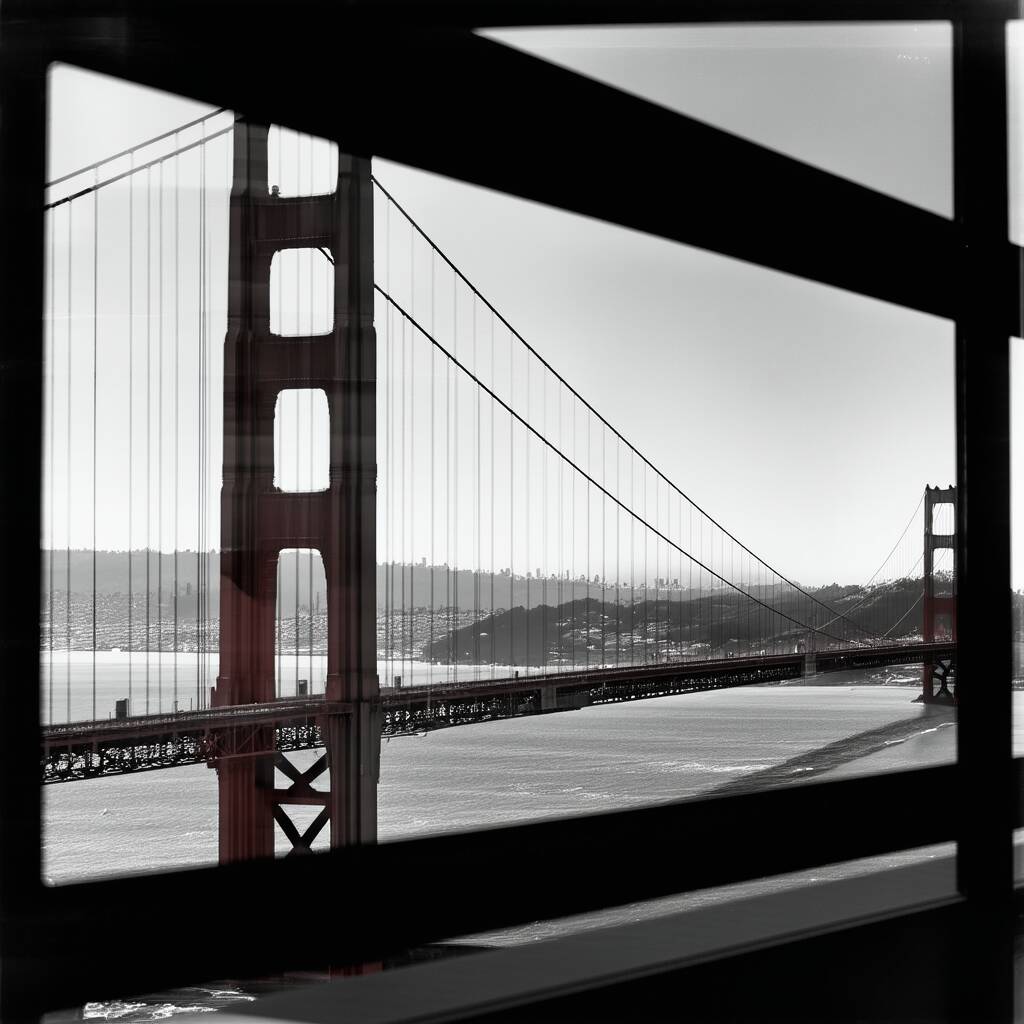} &
\includegraphics[width=0.15\textwidth]{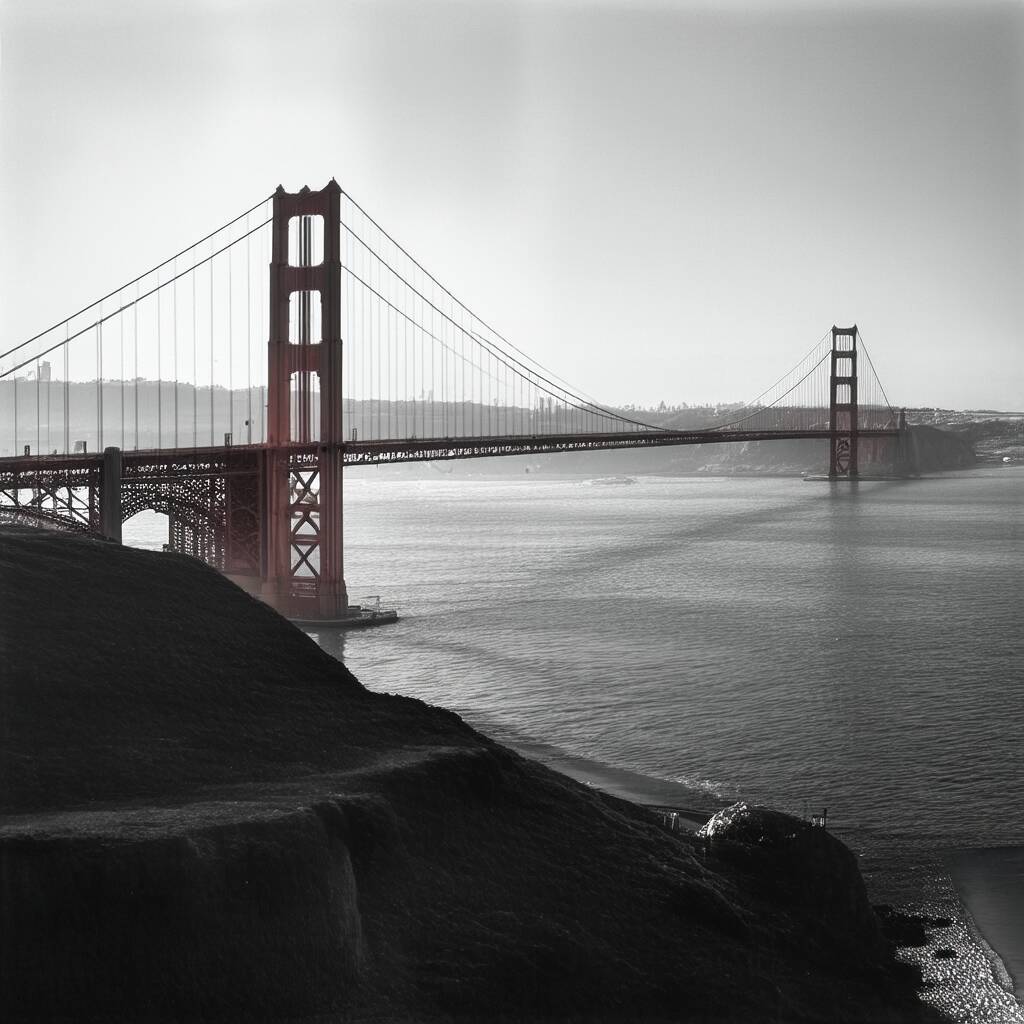} \\
\multicolumn{2}{c}{\parbox{0.3\textwidth}{\centering \textbf{Prompt: } \emph{Fred Lyon - San Francisco | The Gallery at Leica Store San Francisco.}}}
\end{tabular}
&
\begin{tabular}{cc}
\includegraphics[width=0.15\textwidth]{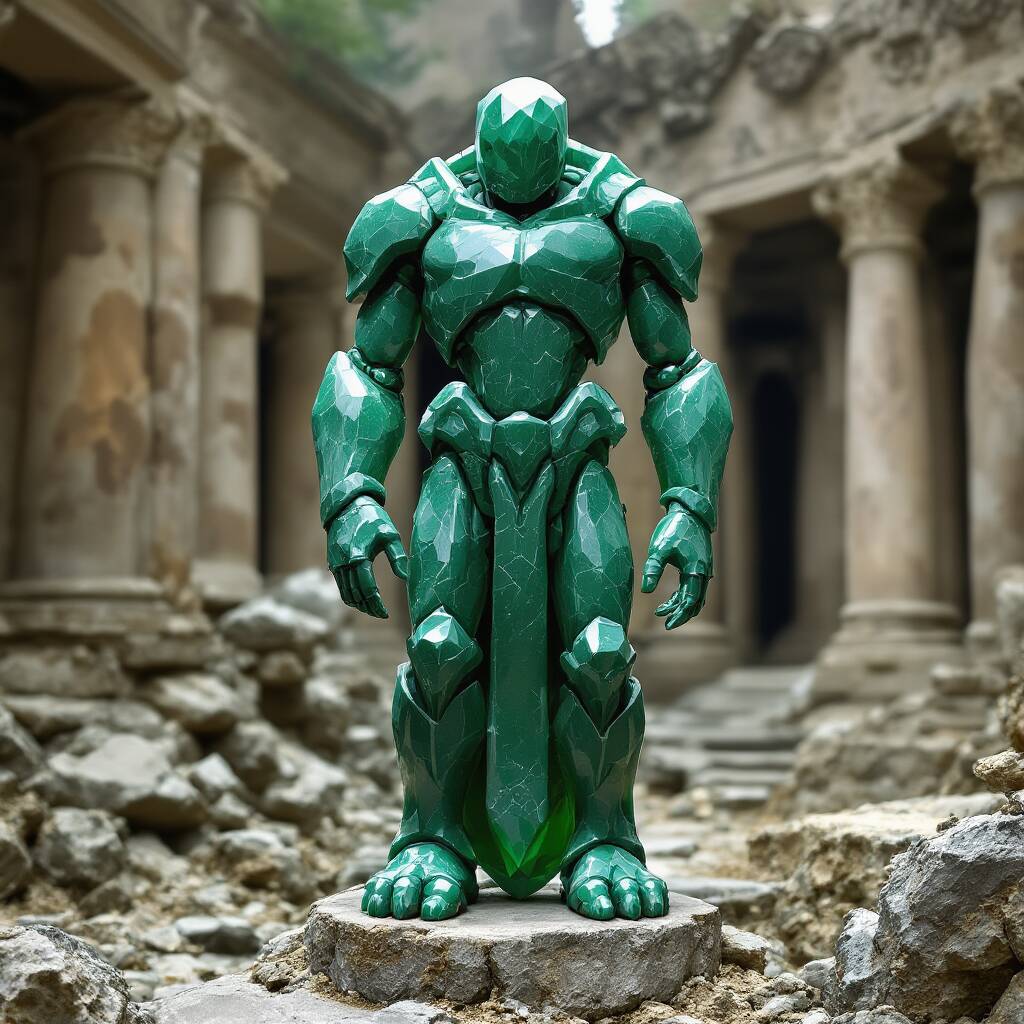} &
\includegraphics[width=0.15\textwidth]{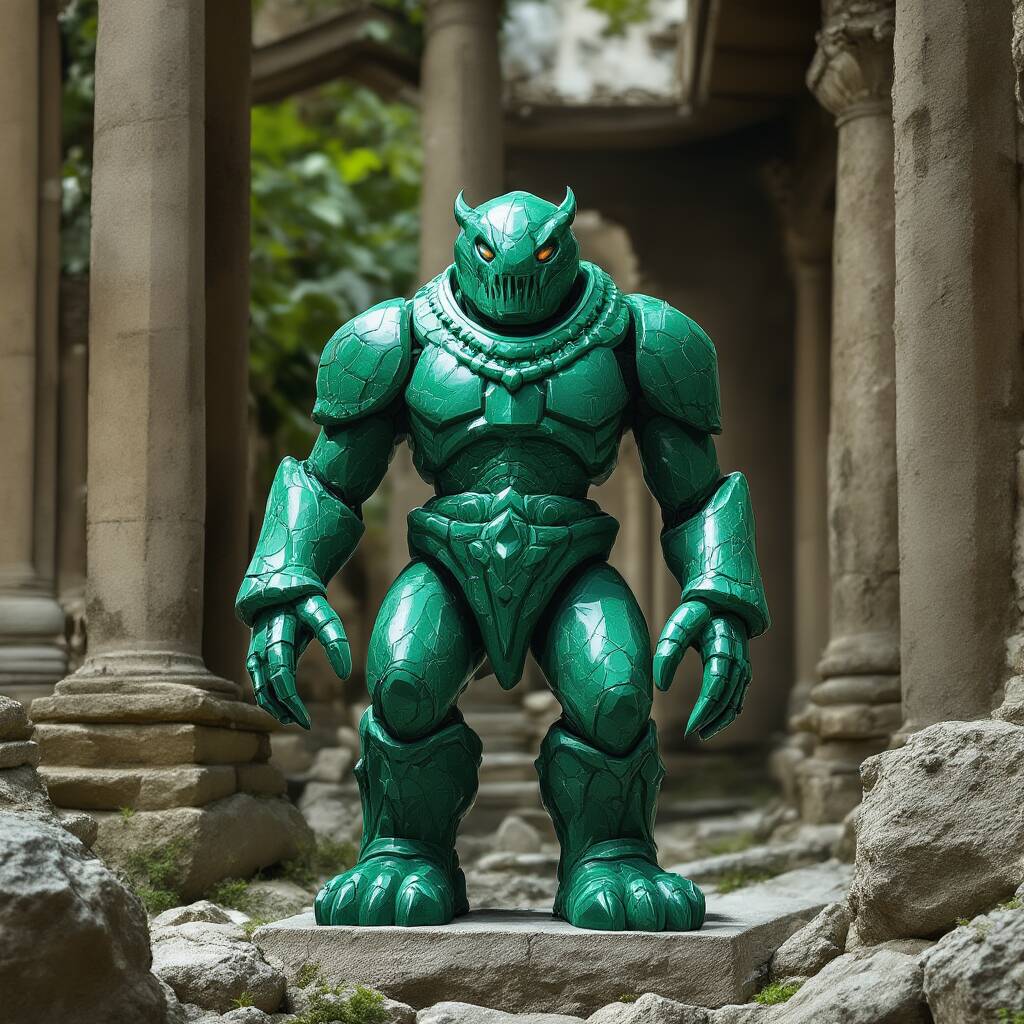} \\
\multicolumn{2}{c}{\parbox{0.3\textwidth}{\centering \textbf{Prompt: } \emph{A guardian golem carved from emerald stone standing vigil in ancient...}}}
\end{tabular}
\\[8pt]

\begin{tabular}{cc}
\includegraphics[width=0.15\textwidth]{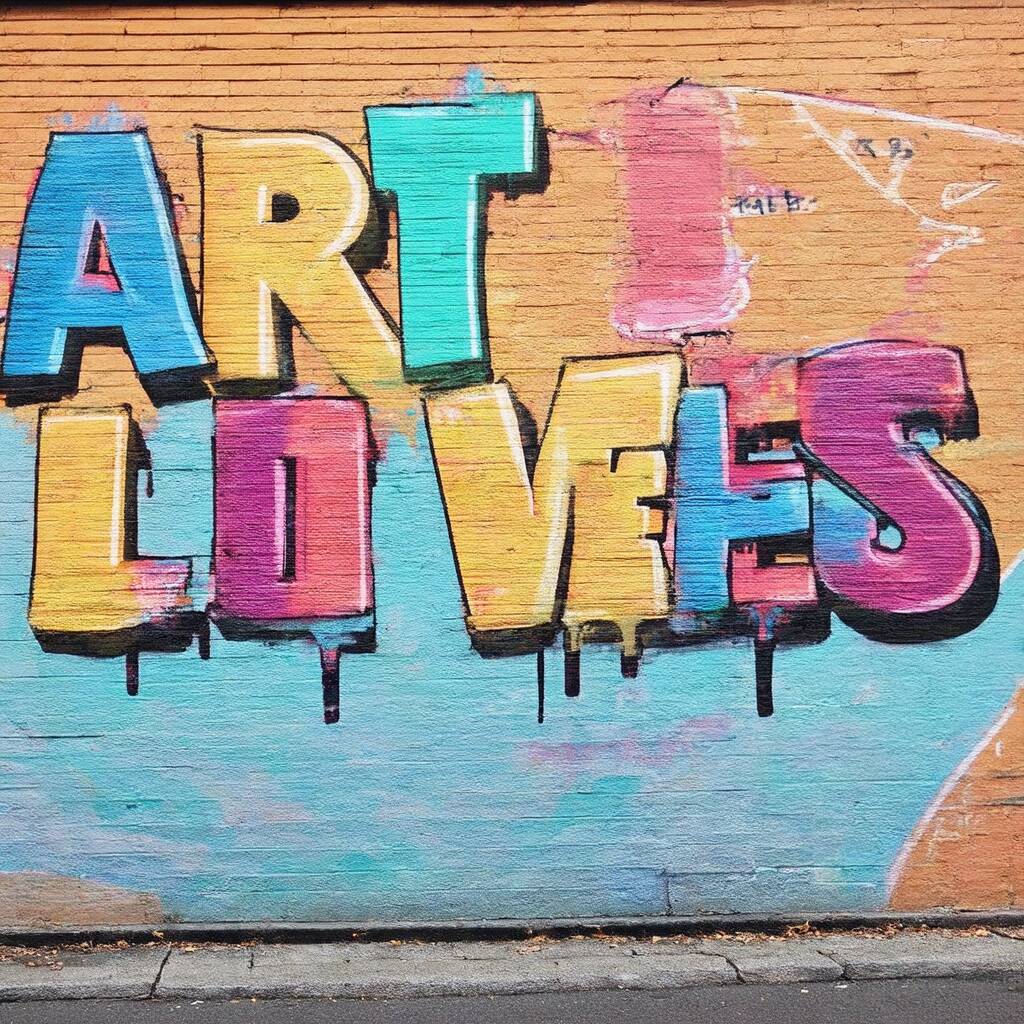} &
\includegraphics[width=0.15\textwidth]{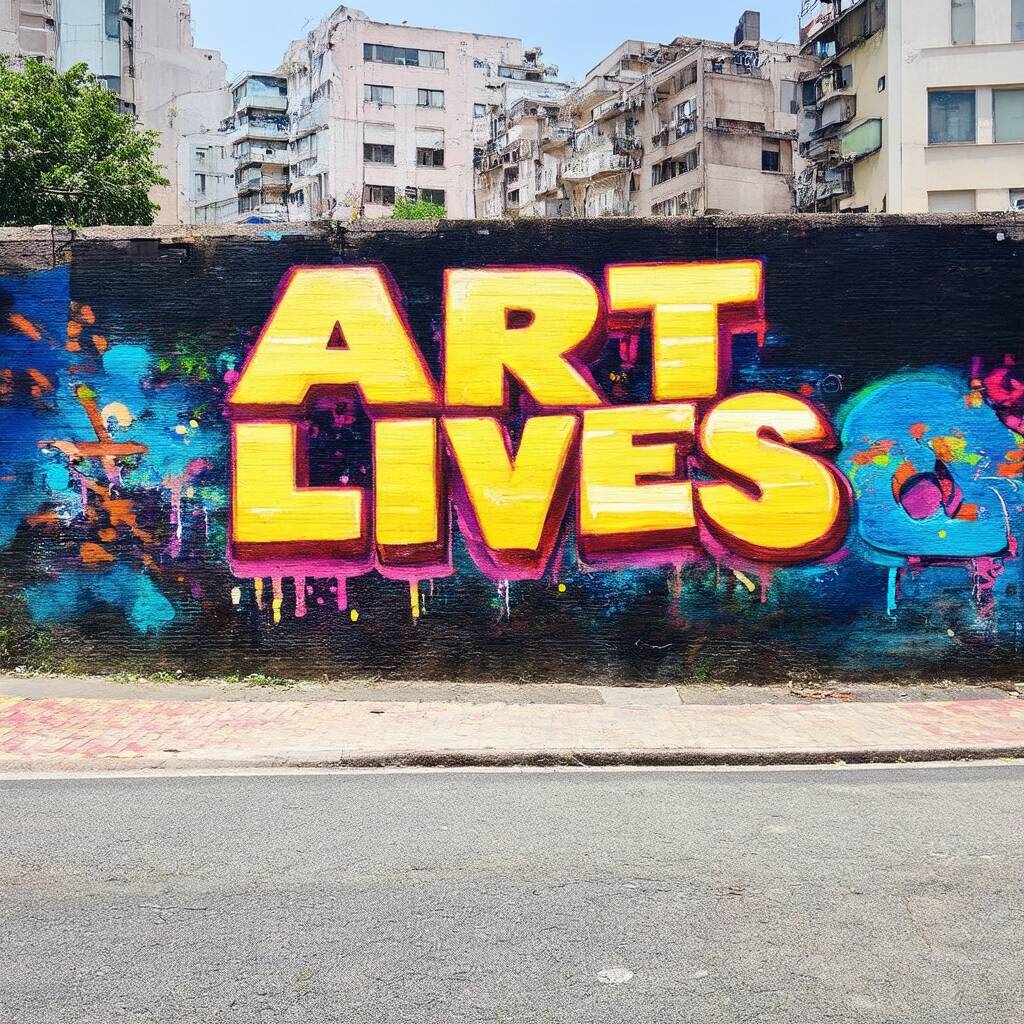} \\
\multicolumn{2}{c}{\parbox{0.3\textwidth}{\centering \textbf{Prompt: } \emph{A graffiti mural on a city wall saying 'ART LIVES' in colorful...}}}
\end{tabular}
&
\begin{tabular}{cc}
\includegraphics[width=0.15\textwidth]{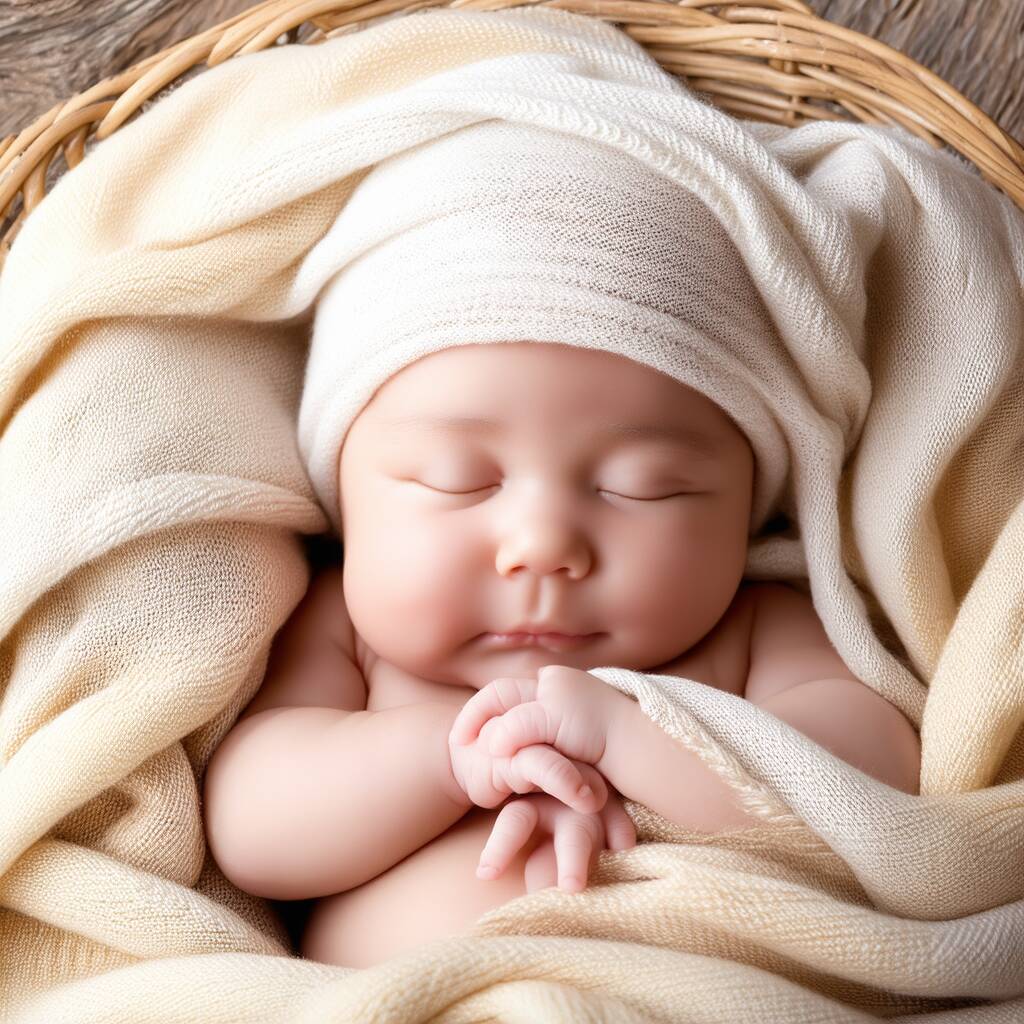} &
\includegraphics[width=0.15\textwidth]{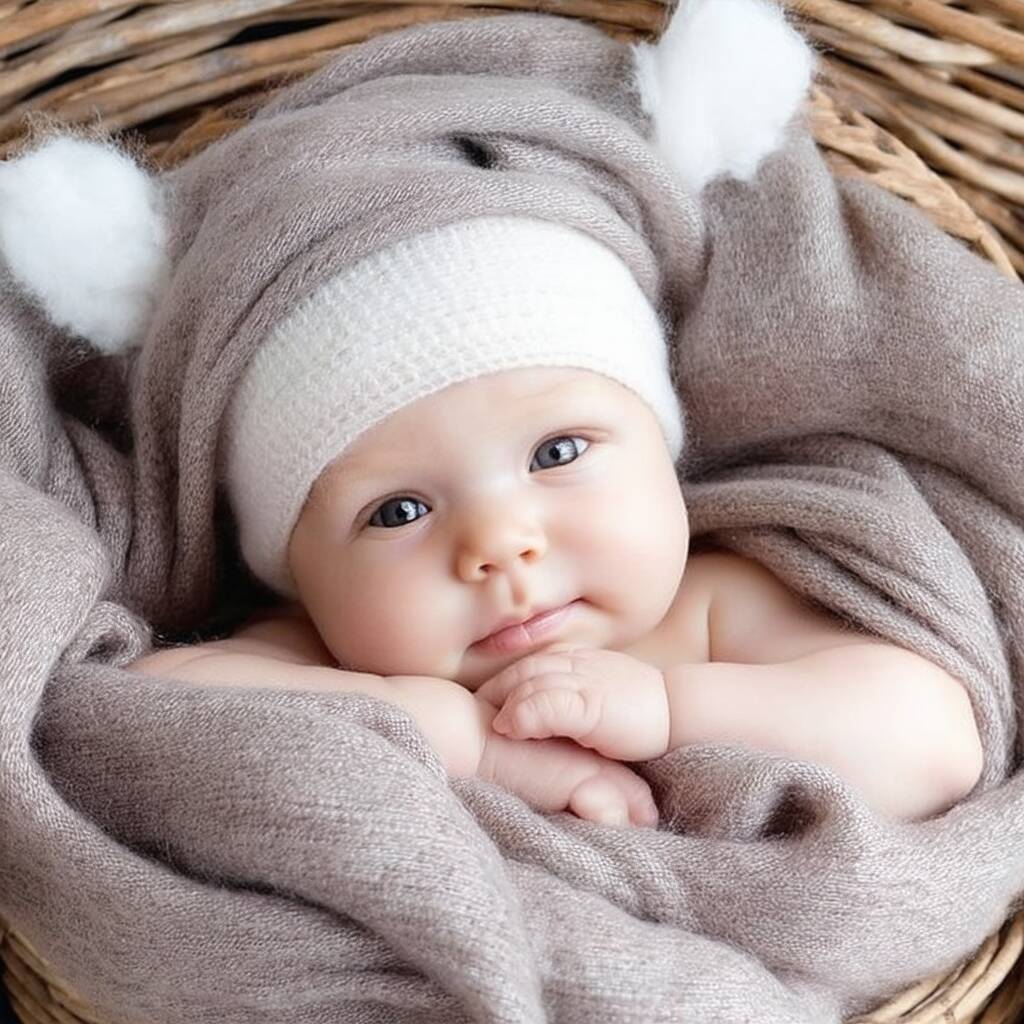} \\
\multicolumn{2}{c}{\parbox{0.3\textwidth}{\centering \textbf{Prompt: } \emph{Newborn Baby Blanket Photography, Super Soft Photo, Basket...}}}
\end{tabular}
&
\begin{tabular}{cc}
\includegraphics[width=0.15\textwidth]{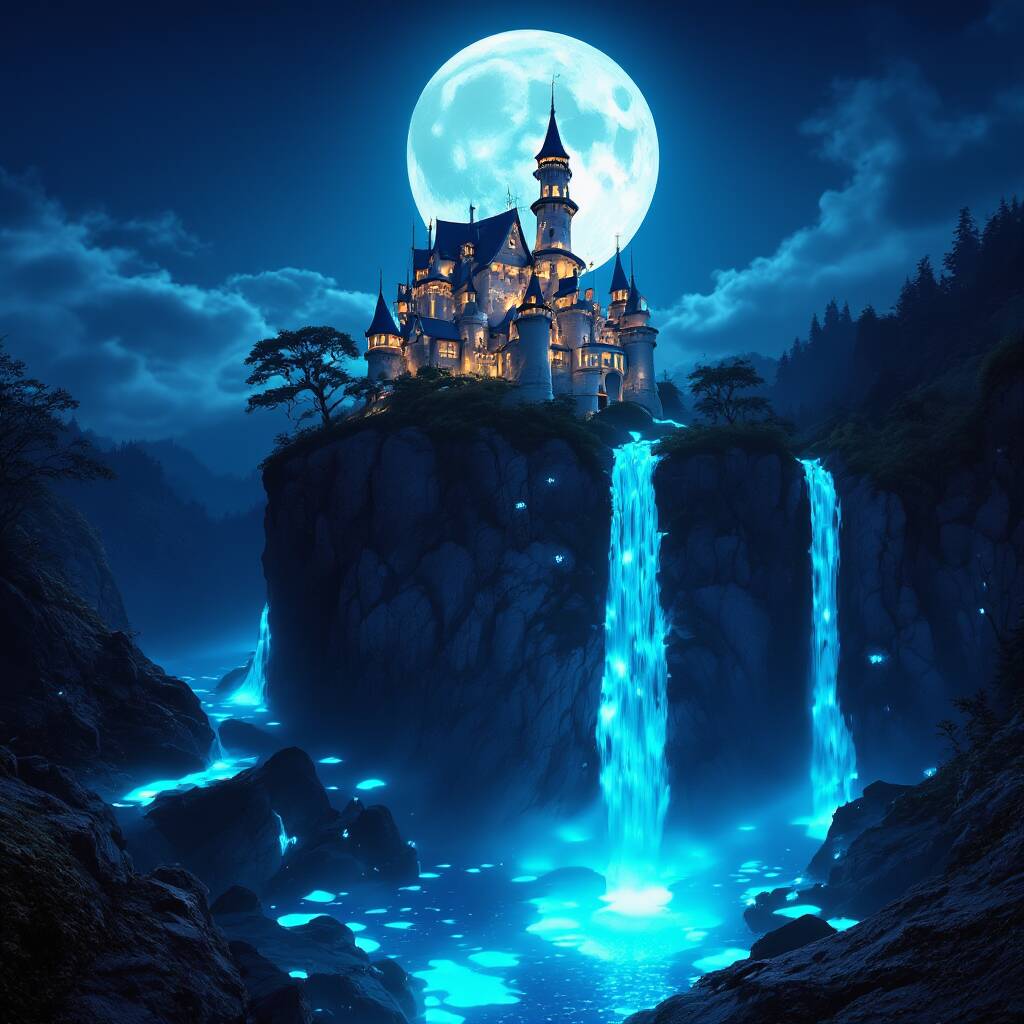} &
\includegraphics[width=0.15\textwidth]{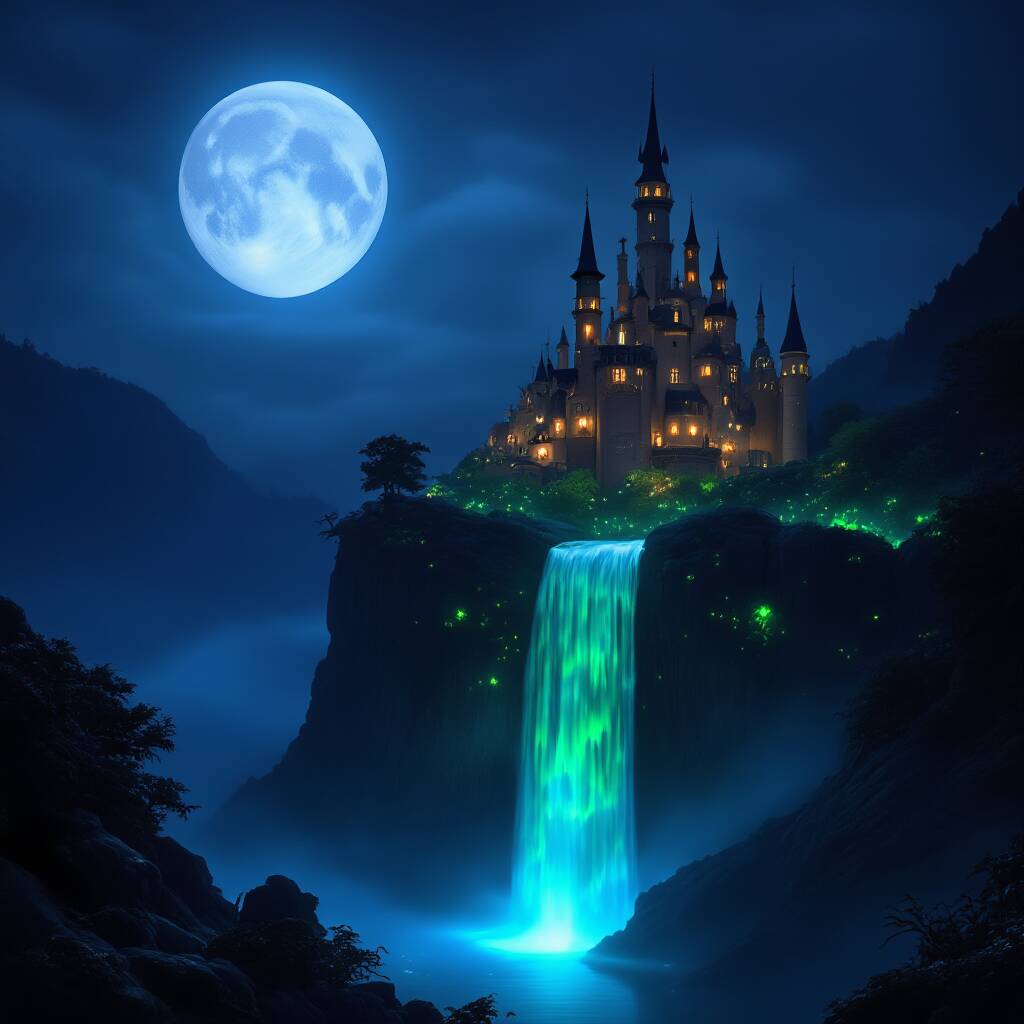} \\
\multicolumn{2}{c}{\parbox{0.3\textwidth}{\centering \textbf{Prompt: } \emph{A moonlit castle built atop a waterfall that glows with bioluminescent...}}}
\end{tabular}
\end{tabular}

\caption{
\textbf{Comparison of CFG vs Rectified-CFG++ combined with SD3/3.5~\cite{sd32024} and Lumina~\cite{lumina-2} with diverse prompts.} Rectified-CFG++ consistently better enhance semantic alignment, compositional balance, and generative fidelity across models and scenes.
}
\label{fig:cfg_rectcfg_sd3_sd35_lumina_perimage_prompt}
\end{figure}

\begin{figure*}[t]
\centering
\scriptsize
\renewcommand{\arraystretch}{1}
\setlength{\tabcolsep}{0.2pt}
\setlength{\belowcaptionskip}{-7pt}

\begin{tabular}{ccc}
\begin{tabular}{cc}
\cellcolor{blue!15}\textbf{CFG} & \cellcolor{green!15}\textbf{Rectified-CFG++} \\
\includegraphics[width=0.16\textwidth]{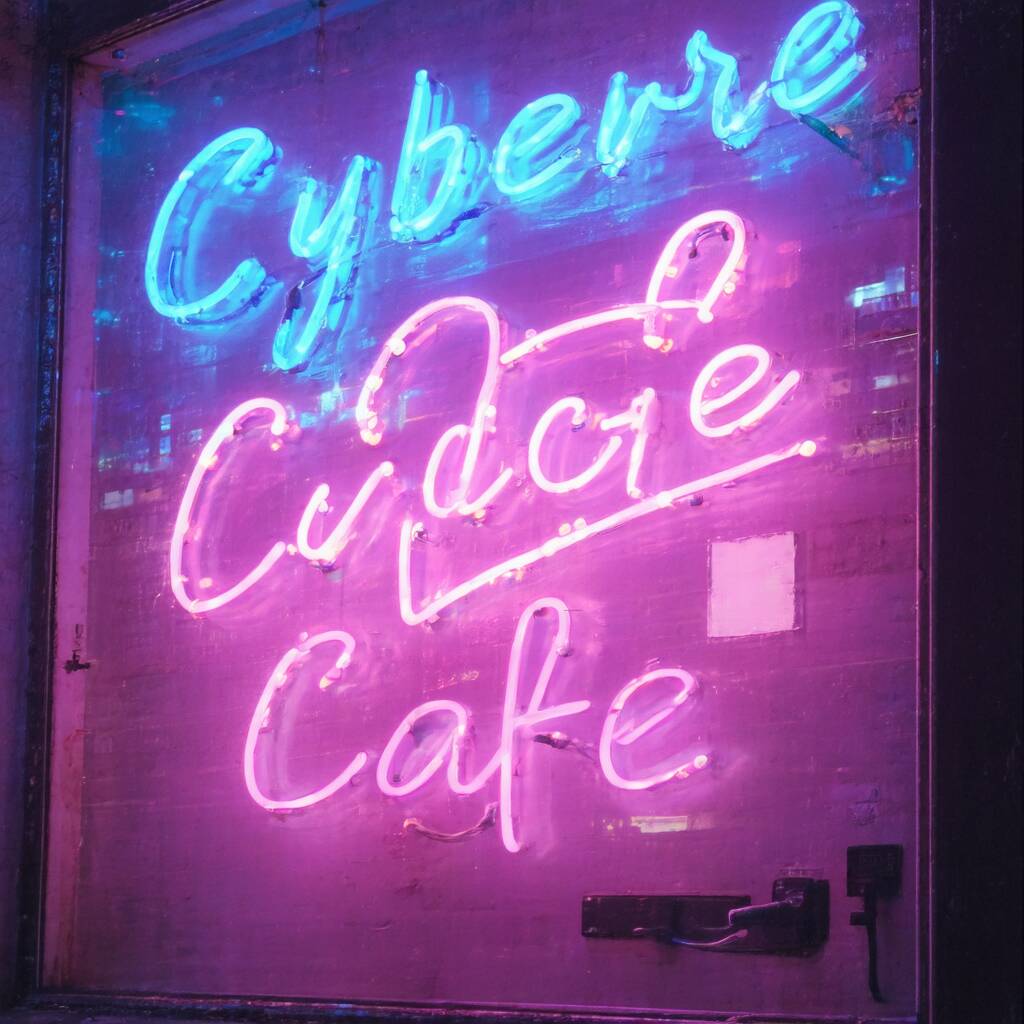} & 
\includegraphics[width=0.16\textwidth]{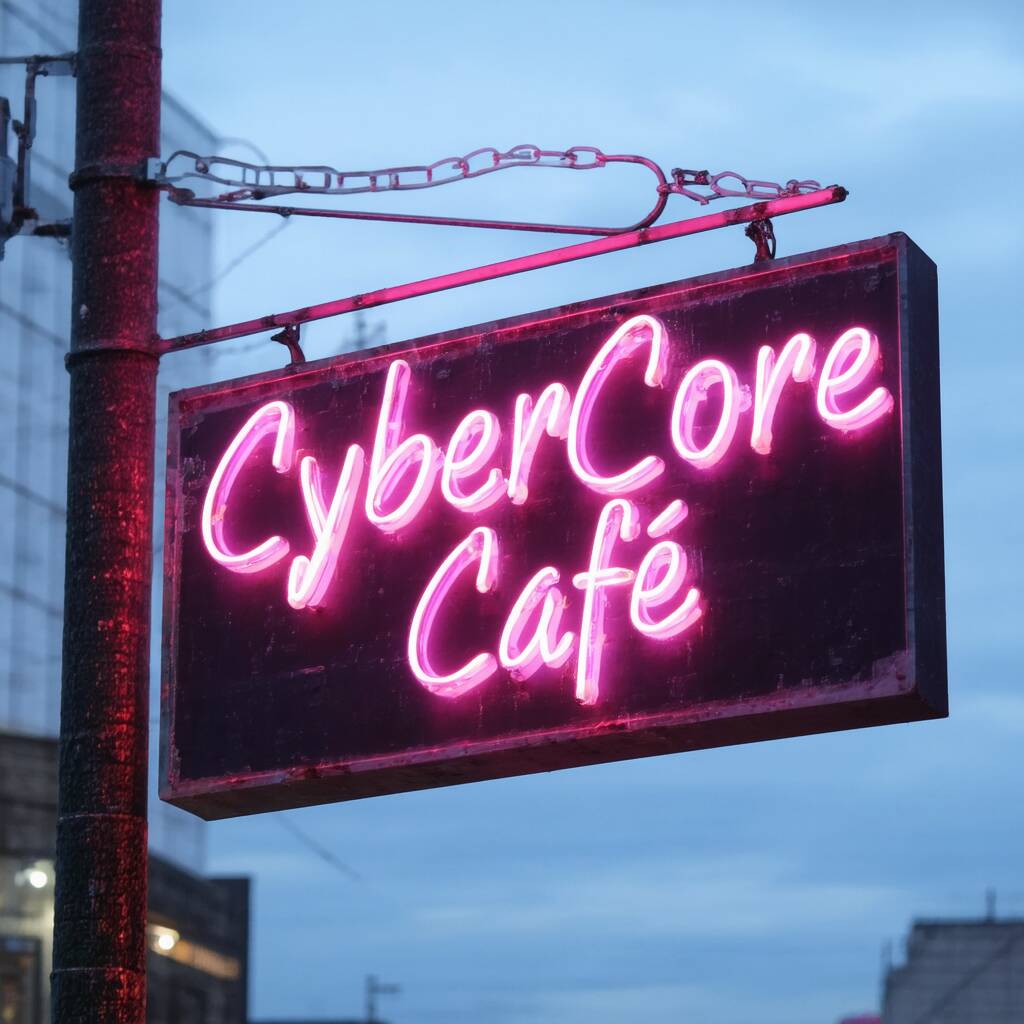}
\end{tabular}
&
\begin{tabular}{cc}
\cellcolor{blue!15}\textbf{CFG} & \cellcolor{green!15}\textbf{Rectified-CFG++} \\
\includegraphics[width=0.16\textwidth]{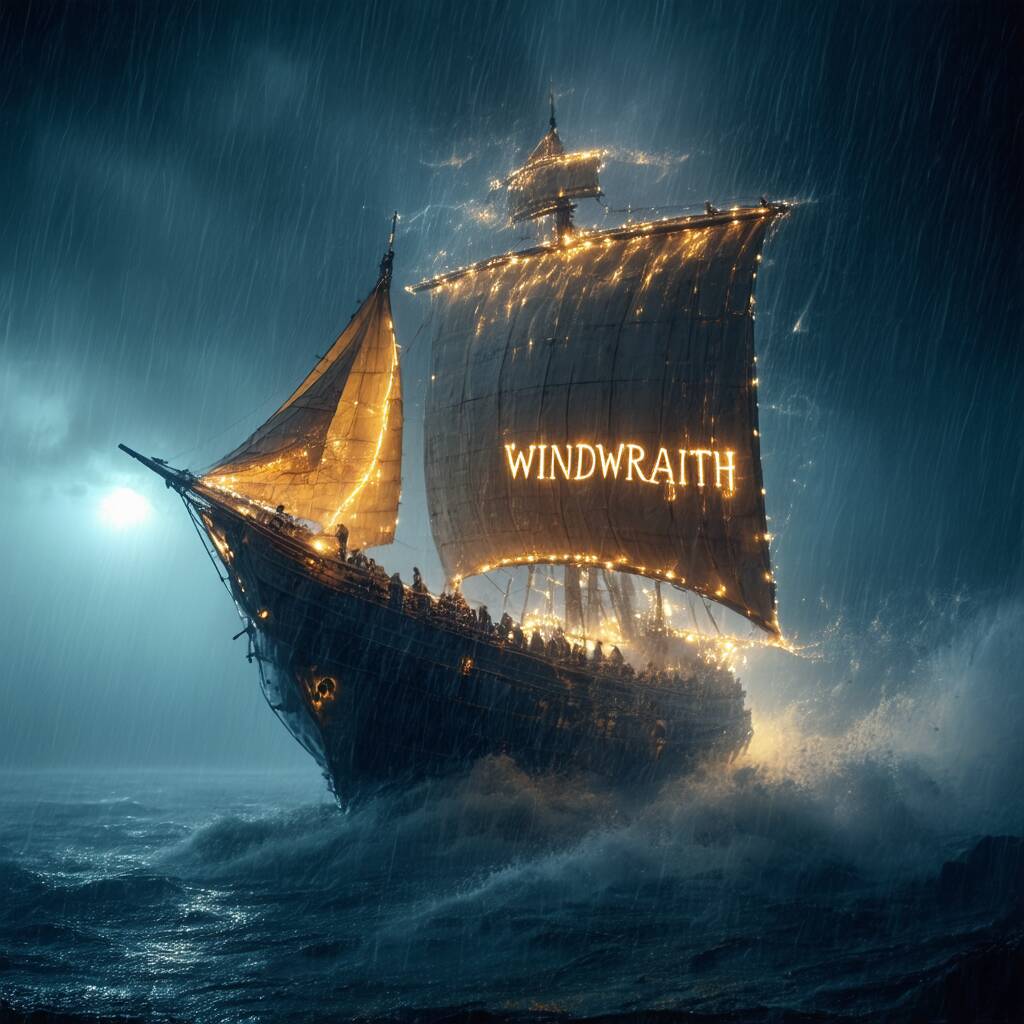} & 
\includegraphics[width=0.16\textwidth]{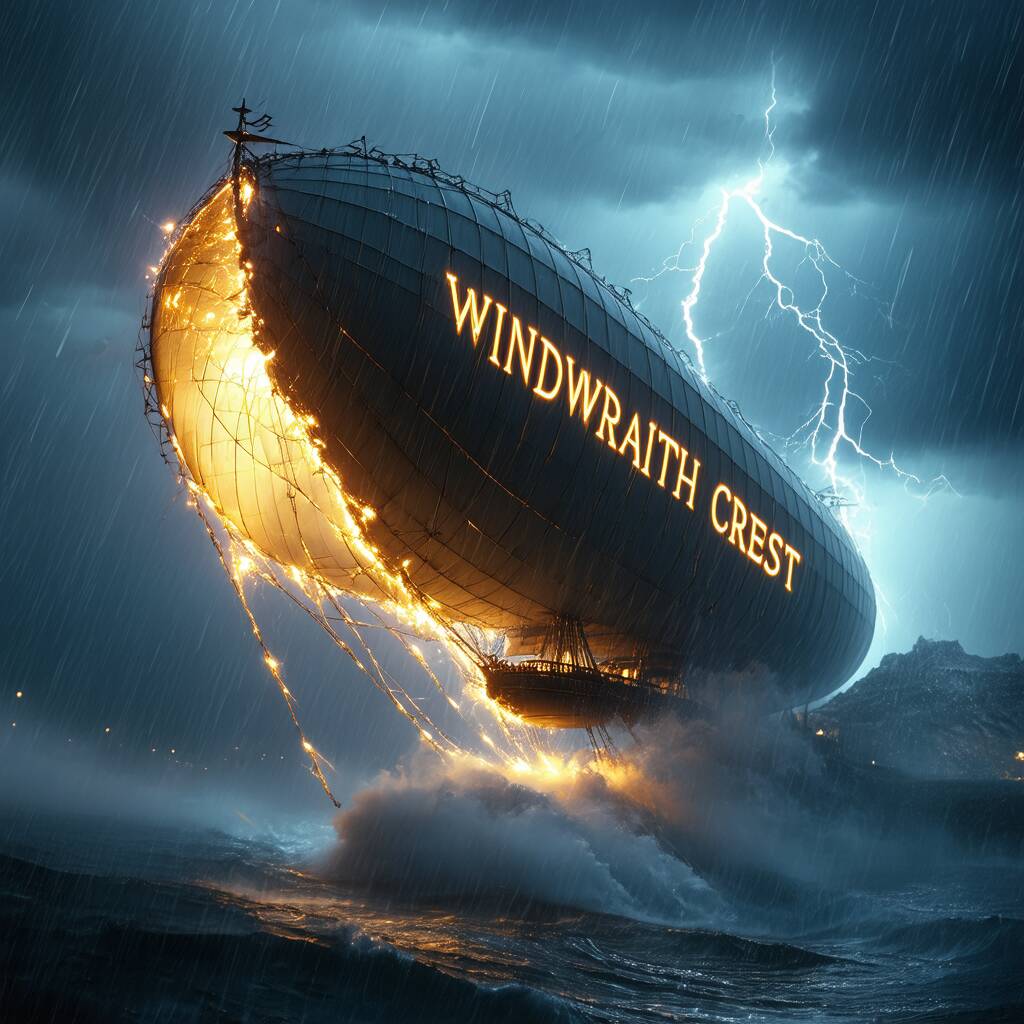}
\end{tabular}
&
\begin{tabular}{cc}
\cellcolor{blue!15}\textbf{CFG} & \cellcolor{green!15}\textbf{Rectified-CFG++} \\
\includegraphics[width=0.16\textwidth]{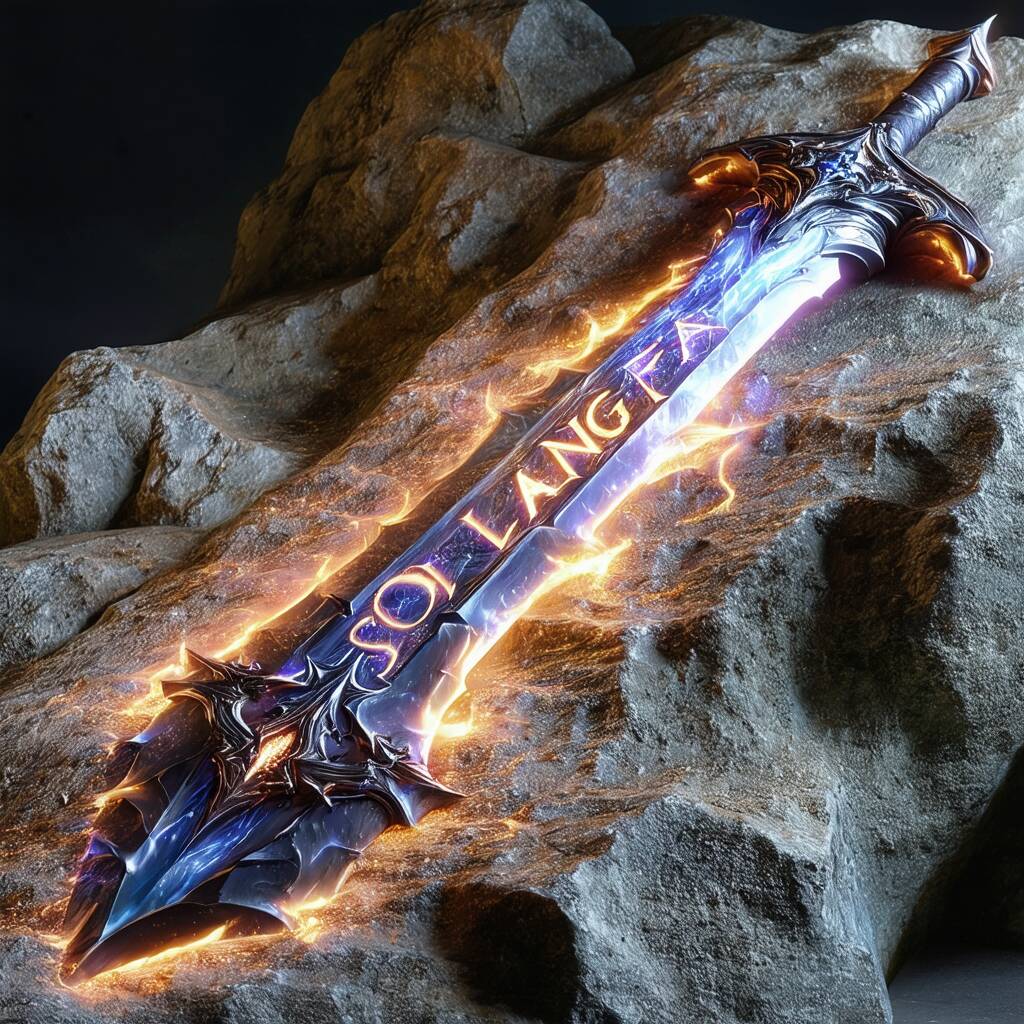} & 
\includegraphics[width=0.16\textwidth]{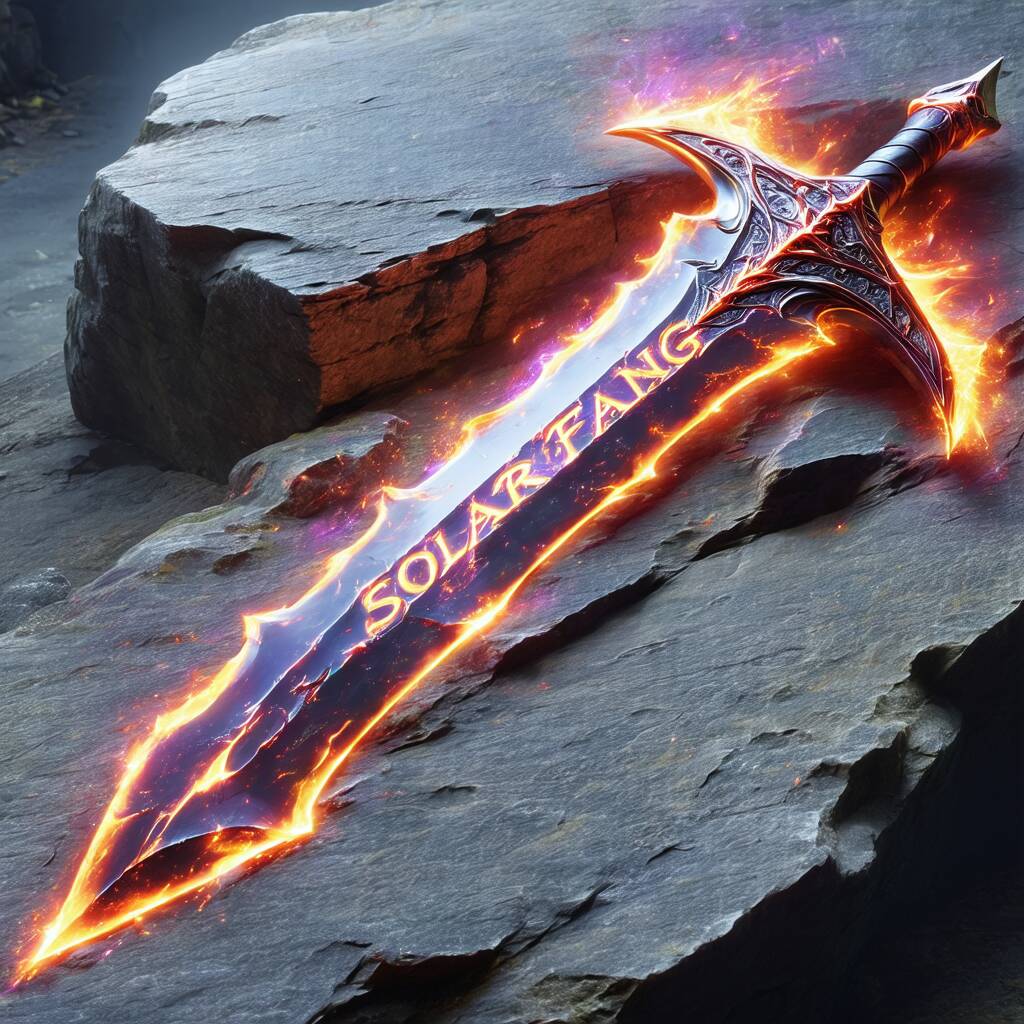}
\end{tabular}
\\
\parbox{0.33\textwidth}{\centering \textbf{Prompt:} \emph{A neon street sign that says 'CyberCore Café', glowing in magenta and blue.}} & 
\parbox{0.32\textwidth}{\centering \textbf{Prompt:} \emph{An airship sail mid-tear in a storm, revealing phrase 'WINDWRAITH CREST'...}} & 
\parbox{0.33\textwidth}{\centering \textbf{Prompt:} \emph{A magical sword embedded in stone, with the name 'SOLARFANG' etched along its...}}
\\[6pt]

\begin{tabular}{cc}
\includegraphics[width=0.16\textwidth]{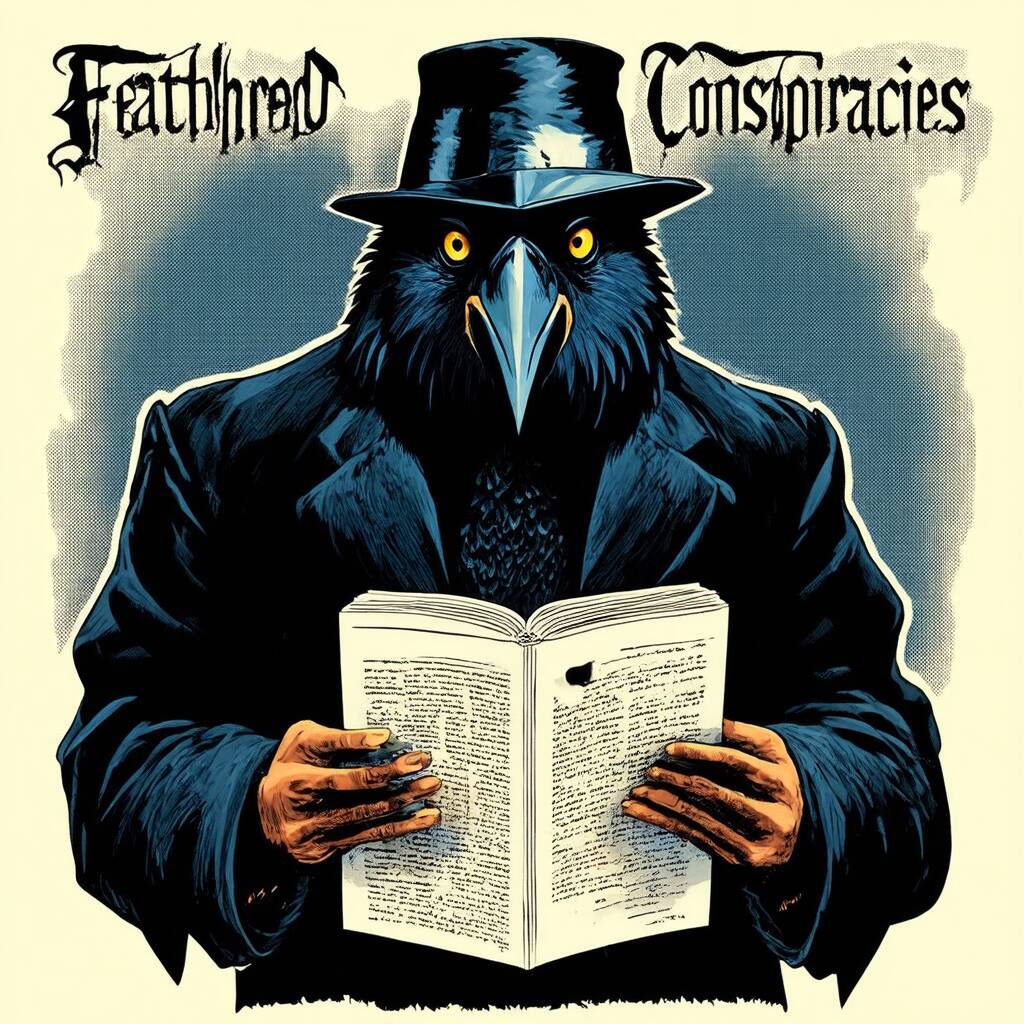} & 
\includegraphics[width=0.16\textwidth]{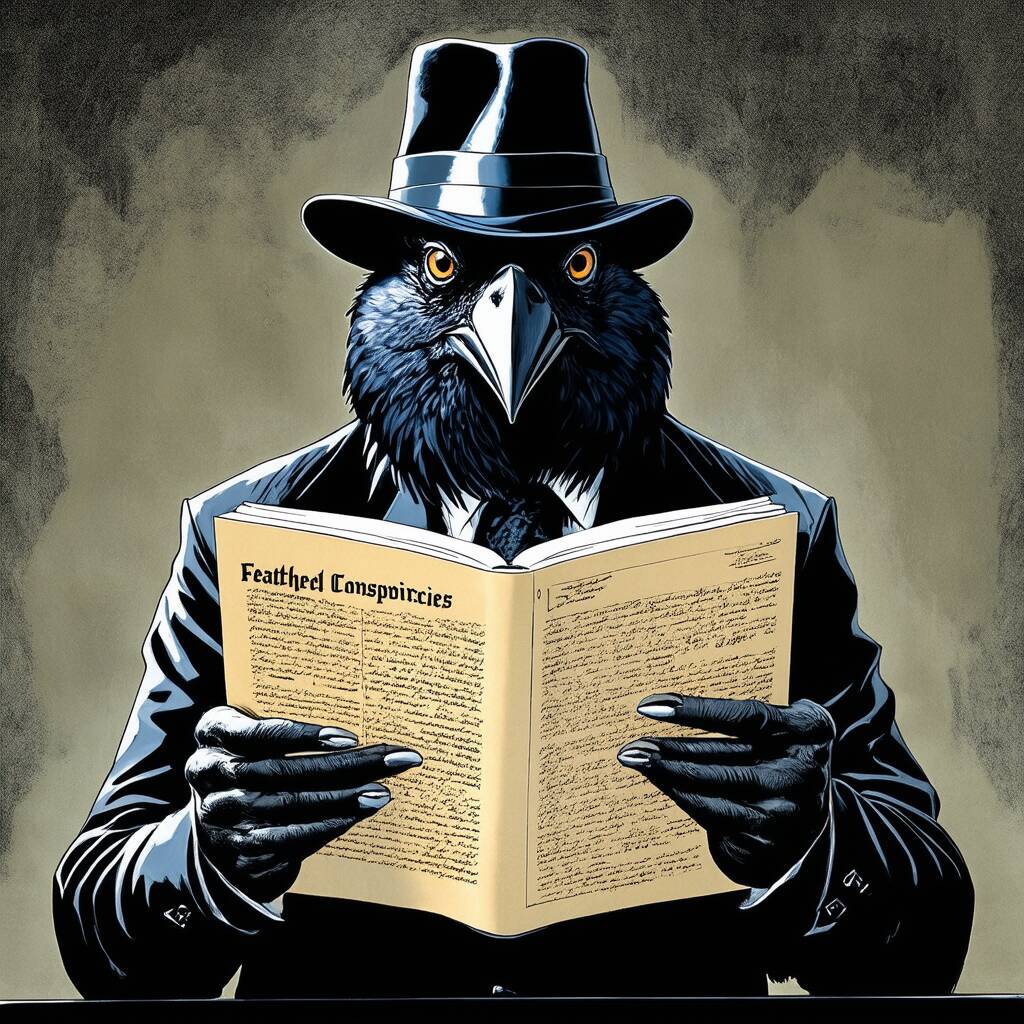}
\end{tabular}
&
\begin{tabular}{cc}
\includegraphics[width=0.16\textwidth]{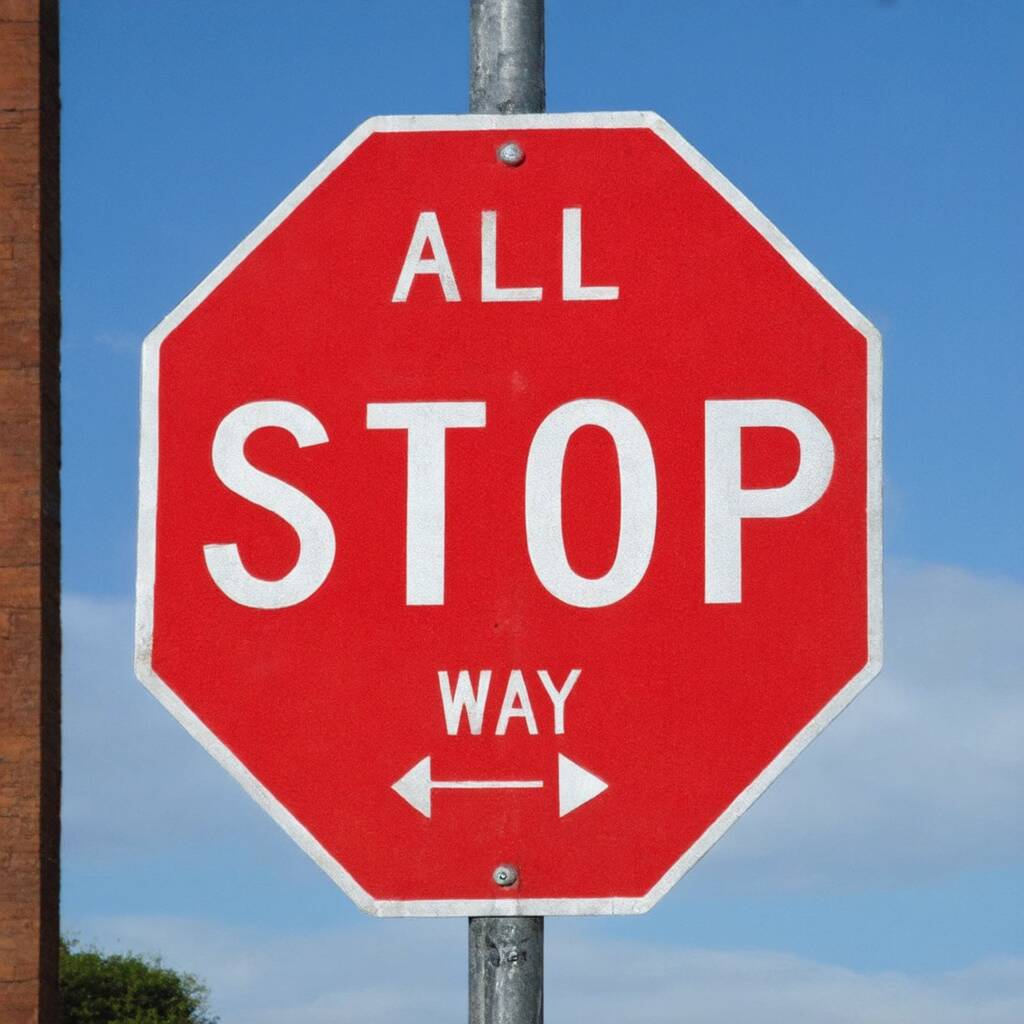} & 
\includegraphics[width=0.16\textwidth]{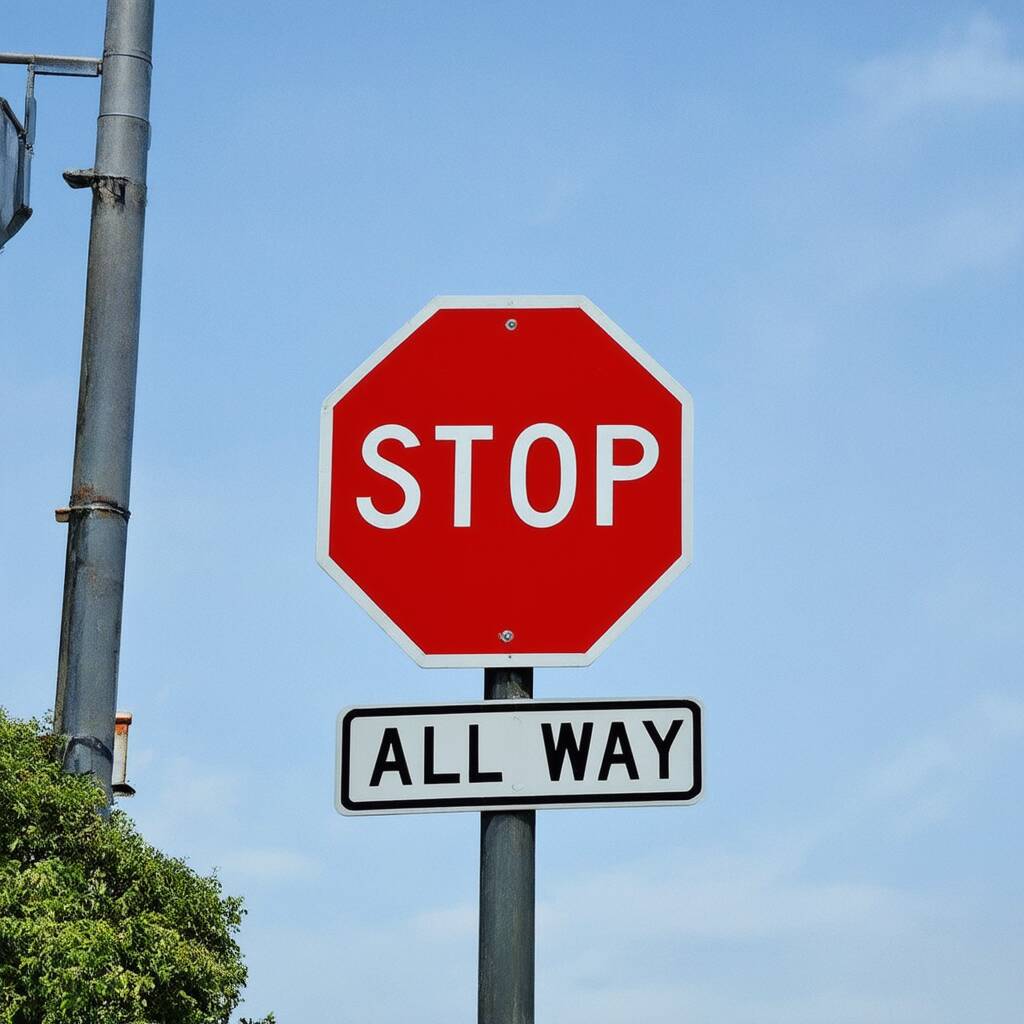}
\end{tabular}
&
\begin{tabular}{cc}
\includegraphics[width=0.16\textwidth]{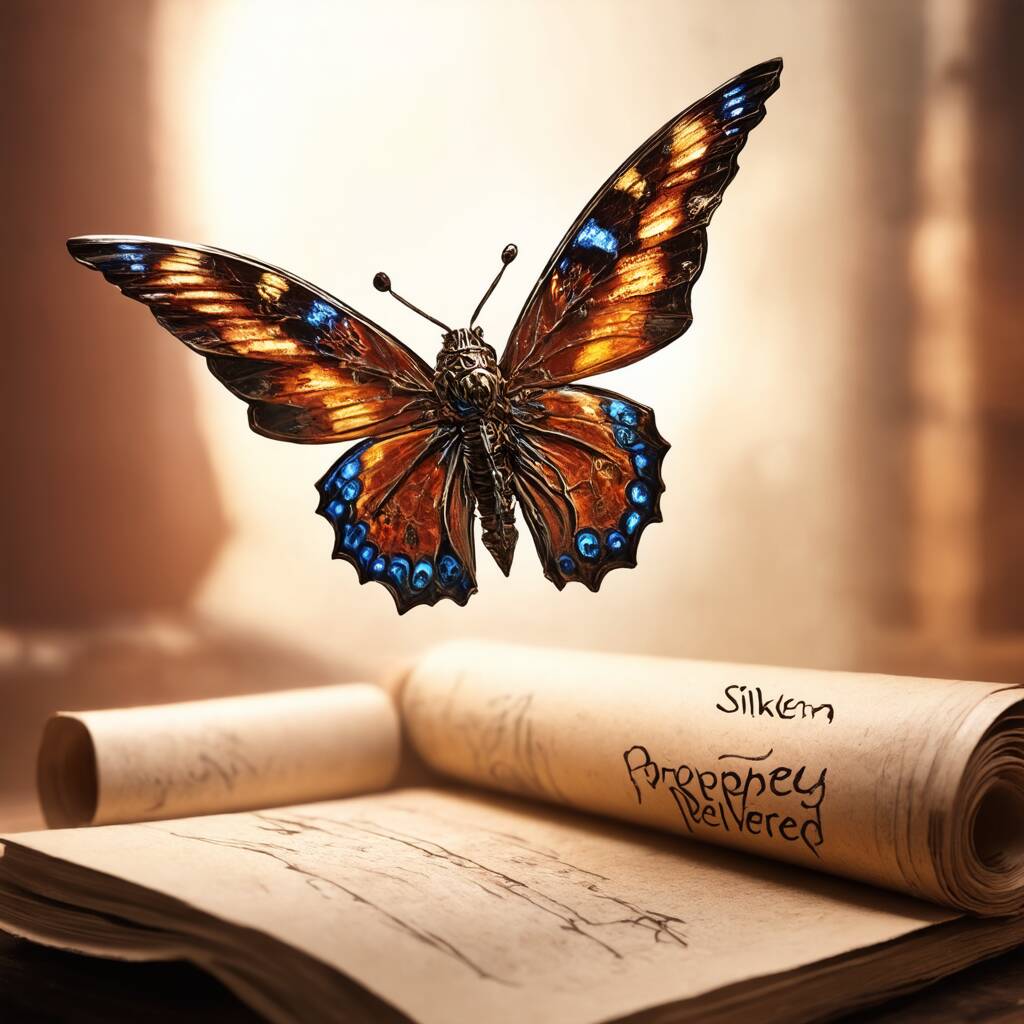} & 
\includegraphics[width=0.16\textwidth]{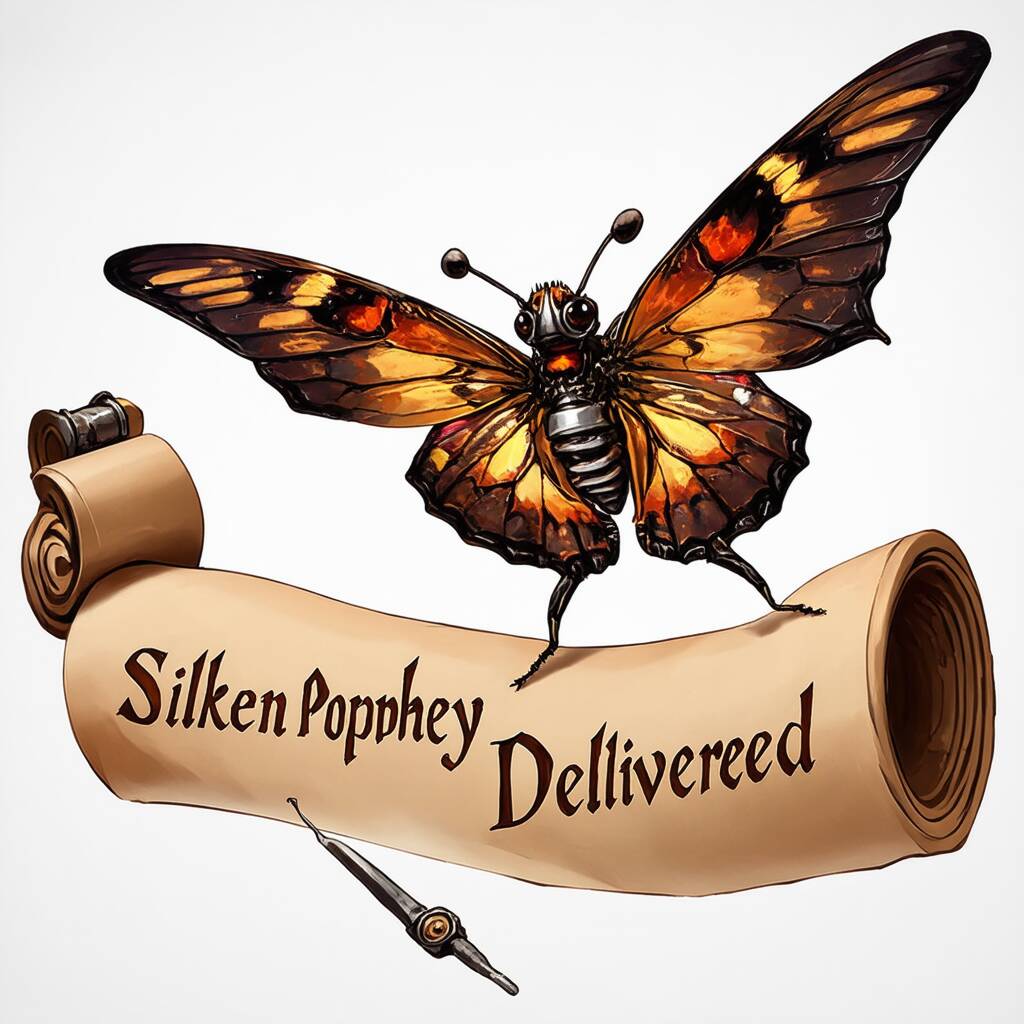}
\end{tabular}
\\ 
\parbox{0.33\textwidth}{\centering \textbf{Prompt:} \emph{A crow detective reading a paper titled 'Feathered Conspiracies', headline in...}} & 
\parbox{0.33\textwidth}{\centering \textbf{Prompt:} \emph{A stop sign with 'ALL WAY' written below it.}} &
\parbox{0.33\textwidth}{\centering \textbf{Prompt:} \emph{A mechanical butterfly landing on a scroll that reads 'Silken Prophecy Delivered'.}} 

\end{tabular}

\caption{
\textbf{Rectified-CFG++ enhances text generation quality.} It consistently improves the accuracy, legibility, and semantic alignment of text-to-image models as compared to standard CFG.
}

\label{fig:text_quality_cfg_vs_rect}
\end{figure*}

\vspace{-3pt}
\subsubsection{Qualitative Evaluation}
\label{sec:qualitative_eval}
\vspace{-7pt}
Qualitative comparisons further illuminate the advantages of Rectified-CFG++. Fig.~\ref{fig:flux_guidance_compare} shows generated text-to-image examples from the Flux~\cite{flux2024} model combined with the default \colorbox{gray!15}{Conditional flow}, \colorbox{blue!15}{Standard CFG}, and \colorbox{green!15}{Rectified-CFG++}. Our method produced images having better semantic quality, alignment, details, and overall composition with less visible artifacts. 
Fig.~\ref{fig:cfg_rectcfg_sd3_sd35_lumina_perimage_prompt} extends this comparison across SD3~\cite{sd32024}, SD3.5~\cite{sd32024}, and Lumina~\cite{lumina-2} using diverse curated prompts. Again, Rectified-CFG++ consistently better enhanced semantic alignment, compositional balance, and overall generative fidelity across all models and prompt types. 
Fig.~\ref{fig:guidance_strategy_comparison} visually compares different guidance methods. While standard CFG often suffers from oversaturation and misalignment, and other methods like APG~\cite{sadat2024eliminating} and CFG-Zero$^\star$~\cite{cfg-zero} offer partial improvements but compromise on detail or geometric accuracy, Rectified-CFG++ reliably yields more faithful, high-quality output.

\textbf{Text Legibility:} Importantly, Rectified-CFG++ significantly improves the rendering of text intent within images, a known challenge of diffusion models. As illustrated in Fig.~\ref{fig:text_quality_cfg_vs_rect}, prompts containing specific text like ``CyberCore Café'' or ``Feathered Conspiracies'' are rendered with much greater accuracy and legibility using Rectified-CFG++. The textual intent is clearer, better integrated into each scene, and more semantically correct. Additional examples demonstrating improved text rendering are provided in Appendix~\ref{app:more_qualitative}.

\vspace{-3pt}
\subsubsection{User Study}
\label{sec:user_study}
\vspace{-7pt}
To further validate Rectified-CFG++'s performance, we conducted a user study. 
For a given prompt and base model, participants were presented with four images generated using standard CFG~\cite{ho2022classifier}, APG~\cite{sadat2024eliminating}, CFG-Zero$^\star$~\cite{cfg-zero}, and Rectified-CFG++, each set presented in a randomized order. They were asked to select the best image based on the following criteria: \textit{Image Detail}, \textit{Color Naturalness and Consistency}, and \textit{Prompt Alignment (including text legibility)}. Figure~\ref{fig:user_study} displays the user preference ratios, indicating the preference of Rectified-CFG++ over the other guidance methods. More detail in Appendix~\ref{app:subjective_study}. 

\begin{figure*}
    \centering
    \setlength{\belowcaptionskip}{-10pt}
    \includegraphics[width=\linewidth]{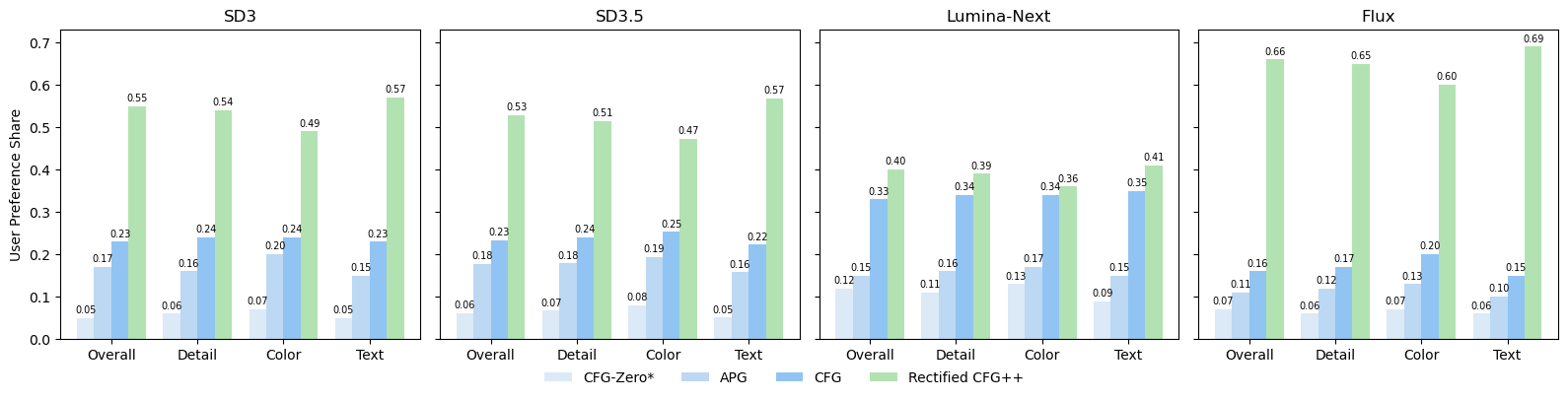}
    \caption{User study results comparing various guidance. The user preference ratio indicates the percentage of participants that preferred images created using Rectified-CFG++ over those created using CFG in terms of detail preservation, color consistency, and prompt alignment.}
    \label{fig:user_study}
\end{figure*}


\vspace{-3pt}
\subsection{Ablation Studies}
\label{sec:ablation}

\vspace{-7pt}
\textbf{Guidance Scale and Sampling Steps:}
We investigated the impact of varying the guidance scales and the number of sampling steps (NFEs). Fig.~\ref{fig:ablation_guidance} shows FID, CLIP, ImageReward and Aesthetic scores plotted against the guidance scale parameters, i.e. \colorbox{green!15}{$\lambda$} or \colorbox{blue!15}{$\omega$}. Rectified-CFG++ maintained high performance and stability across a wide range of scales, whereas standard CFG~\cite{ho2022classifier} swiftly degraded. This indicates that CFG~\cite{ho2022classifier} pushes samples further off-manifold. 
Figure~\ref{fig:ablation_steps} illustrates performance relative to the number of sampling steps (NFEs). Rectified-CFG++ consistently outperforms standard CFG, achieving better scores even with significantly fewer steps. This reinforces the findings in Section~\ref{sec:intermediate_sampling} and underscores the efficiency gains enabled by our method.

\begin{table*}[!t]
  \centering
  \begin{minipage}[t]{0.48\textwidth}
    \captionof{table}{\textbf{Computational cost comparison of standard CFG and Rectified-CFG++}.}
    \label{tab:comp_cost_multirow}
    \scriptsize
    \renewcommand{\arraystretch}{1}
    \setlength{\tabcolsep}{2.5pt}
    \begin{tabular}{l|c|c|c|c}
      \toprule
      \textbf{Resolution} & \textbf{Guidance} & \textbf{NFEs} & \textbf{FLOPs (G)} ↓     & \textbf{Runtime (s)} ↓  \\
      \midrule
      \multirow{2}{*}{512×512} 
        & CFG             & 28            & $0.61$x$10^6$ & 5.3148 \\
        & \cellcolor{green!15}\textbf{Rect-CFG++}
                         & 20            & $0.61$x$10^6$ & 5.3506 \\
      \midrule
      \multirow{2}{*}{1024×1024}
        & CFG             & 28            & $2.1$x$10^6$   & 16.2617 \\
        & \cellcolor{green!15}\textbf{Rect-CFG++}
                         & 20            & $2.1$x$10^6$   & 17.8804 \\
      \bottomrule
    \end{tabular}
  \end{minipage}
  \hfill
  \begin{minipage}[t]{0.48\textwidth}
    \captionof{table}{\textbf{Ablation study of Rectified-CFG++ components on MS-COCO 1K samples.}}
    \label{tab:ablation_components}
    \scriptsize
    \renewcommand{\arraystretch}{1.19}
    \setlength{\tabcolsep}{3.5pt}
    \begin{tabular}{l|cccc}
      \toprule
      \textbf{Configuration}        & \textbf{FID} ↓ & \textbf{CLIP} ↑ & \textbf{HPSv2} ↑ & \textbf{Aesthetic} ↑ \\
      \midrule
      w/ Unconditional              & 91.1180        & \cellcolor{gray!15}0.1439 & 0.1870 & 6.1049 \\
      w/o Predictor                 & 73.6981        & 0.3410                    & 0.2969 & 6.1064 \\
      w/o Corrector                 & \cellcolor{gray!15}74.6545 & 0.3414 & \cellcolor{gray!15}0.2975 & \cellcolor{gray!15}6.1047 \\
      \rowcolor{green!15}
      \textbf{Rectified-CFG++}      & \cellcolor{orange!15}\textbf{72.9745} & \cellcolor{orange!15}\textbf{0.3446} & \cellcolor{orange!15}\textbf{0.2995} & \cellcolor{orange!15}\textbf{6.1587} \\
      \bottomrule
    \end{tabular}
  \end{minipage}
\end{table*}

\textbf{Component Analysis:}
To isolate the contributions of the key components of Rectified-CFG++, we conducted an ablation study on MS-COCO 1K using FID, CLIP, HPSv2, and Aesthetic Score as shown in Table~\ref{tab:ablation_components}. The outcomes show that removing any of the studied components leads to degraded performance compared to the complete Rectified-CFG++ method. The configuration combining both the predictor and corrector steps achieved the best overall scores, validating the effectiveness of our integrated design.

\textbf{Computational Efficiency:}
Beyond generation quality, practical deployment requires computational efficiency. As demonstrated in our intermediate sampling analysis (Section~\ref{sec:intermediate_sampling}) and ablation studies 
Rectified-CFG++ achieves high-quality results using the same number or fewer sampling steps (NFEs) as compared to standard CFG. Table~\ref{tab:comp_cost_multirow} provides a direct comparison of text-to-image model performance using Rectified-CFG++ against standard CFG, where both models' were run for similar runtimes. Rectified-CFG++ achieved much better FID score (74.47) than standard CFG (85.82) on COCO-1K. We would also like to highlight that Rectified-CFG++ achieves much better FID at 5 NFEs, compared to 40 NFEs of standard CFG (see Appendix ~\ref{app:more_quantitative}).  In this scenario, Rectified-CFG++ required fewer NFEs, which translates to lower computational cost, reducing both total FLOPs and inference runtime while giving much better generation quality. These efficiency gains make Rectified-CFG++ more suitable for applications demanding faster generation or operating under resource constraints.

\begin{figure*}[t]
    \centering
    \setlength{\belowcaptionskip}{-7pt}
    \begin{subfigure}[t]{0.48\textwidth}
        \centering
        \includegraphics[width=\linewidth]{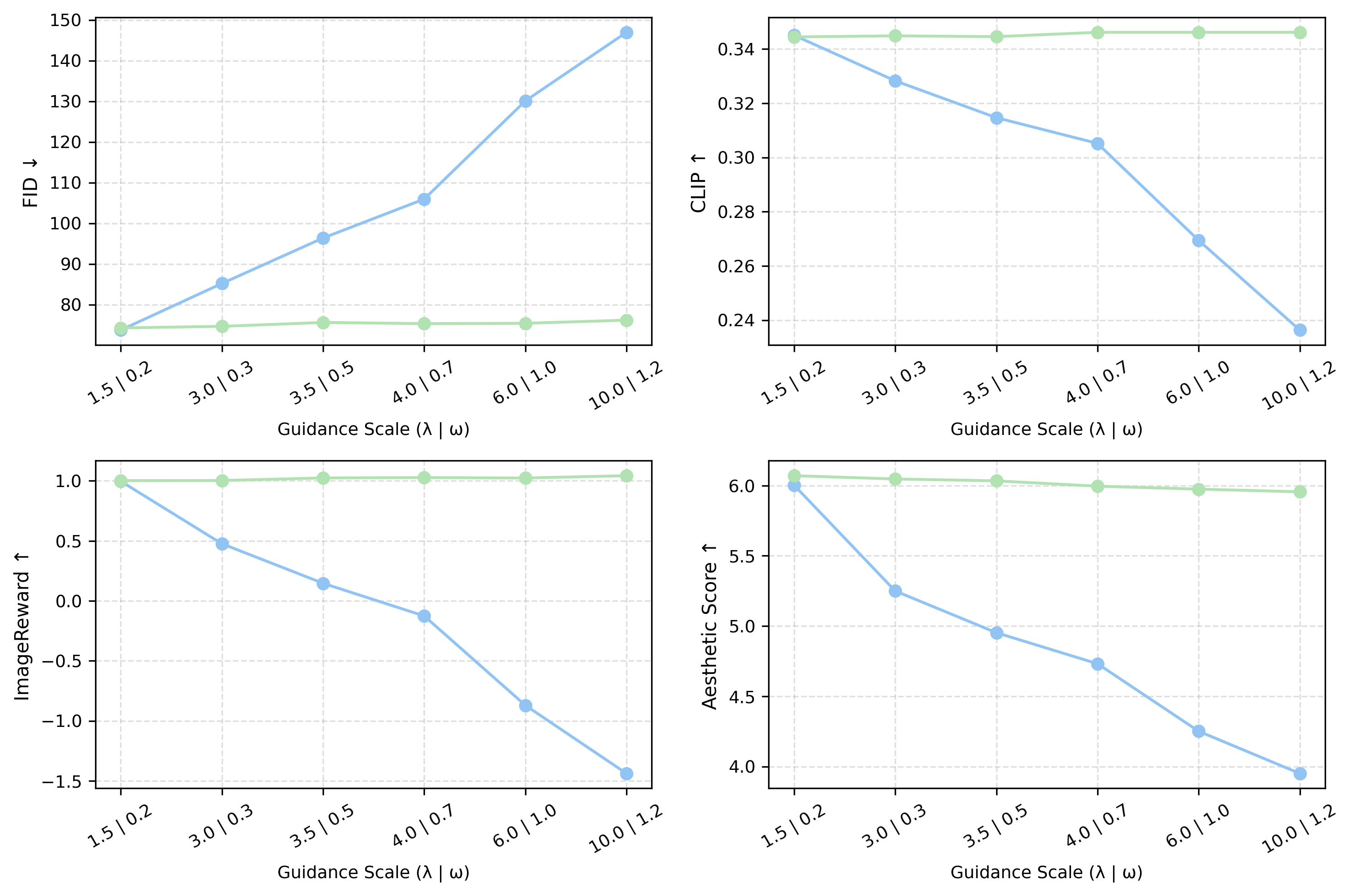}
        \caption{Effect of guidance scale ($\lambda|\omega$).} 
        \label{fig:ablation_guidance}
    \end{subfigure}%
    \hfill
    \begin{subfigure}[t]{0.48\textwidth}
        \centering
        \includegraphics[width=\linewidth]{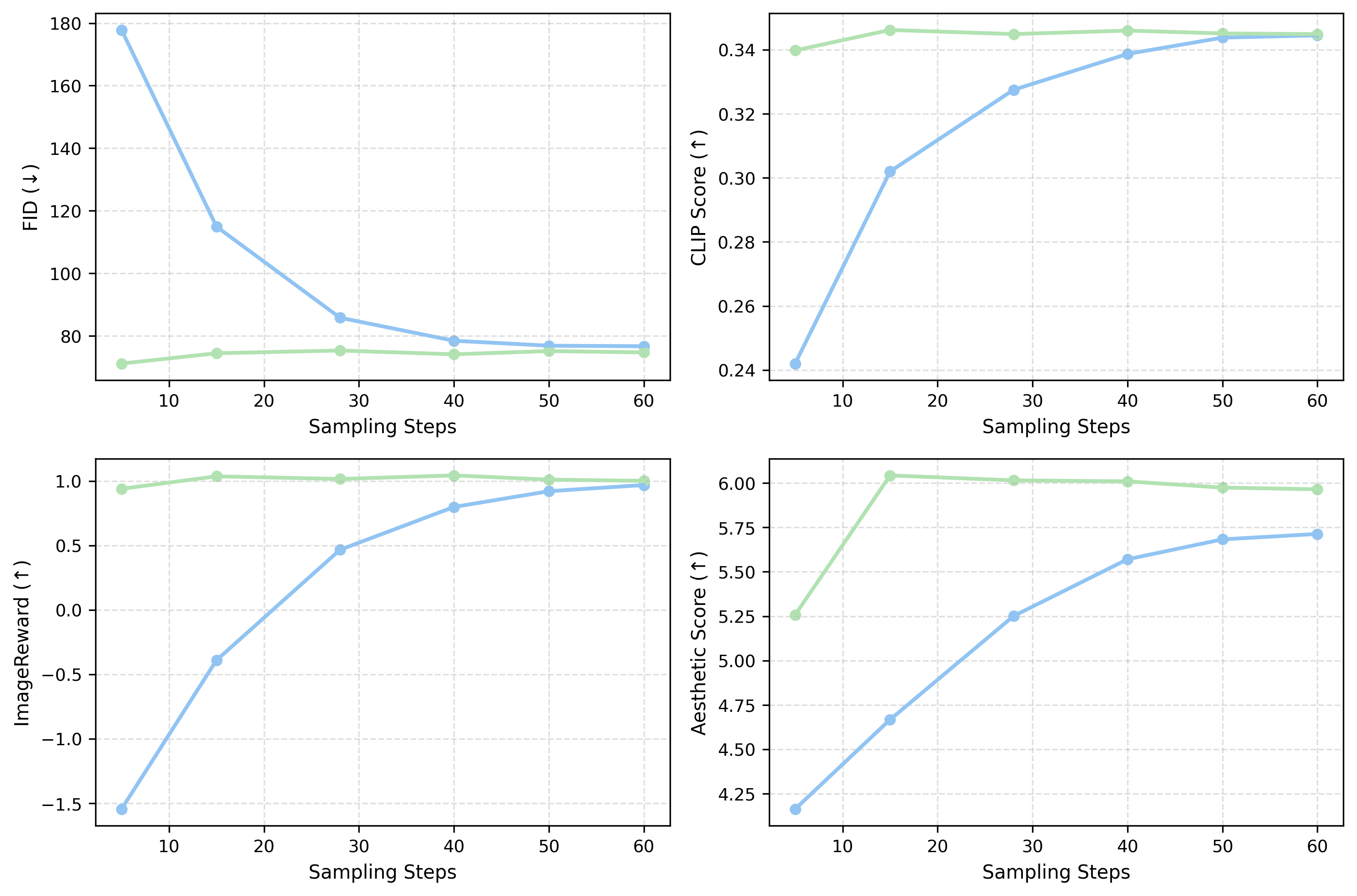}
        \caption{Comparison across sampling steps (NFEs).} 
        \label{fig:ablation_steps}
    \end{subfigure}
    \caption{Ablation study across guidance scales and sampling steps. Assessed using FID(upper left), CLIP(upper right), ImageReward(lower left), and Aesthetic Scores (lower right) for both (a) and (b).}
    \label{fig:ablation_joint}
\end{figure*}

\section{Conclusion and Discussion}
\label{sec:conclusion}
\vspace{-7pt}

We introduced \textbf{Rectified-CFG\texttt{++}}, a predictor–corrector guidance for text-to-image generative models that first follows the conditional velocity, then applies a weighted interpolation.
When combined with leading flow-based foundation models, Rectified-CFG++ consistently improved performance against all quality measurements. Furthermore, Rectified-CFG++ demonstrated greater stability across varying guidance scales, mitigating artifact and quality degradation issues frequently encountered when using CFG. A user study confirmed perceptual gains in detail, colour fidelity and text alignment when using Rectified-CFG++. Because Rectified-CFG\texttt{++} is training-free and adds negligible compute, it can serve as a drop-in upgrade of existing flow-matching generators. Future work will explore extensions to video and 3-D diffusion, and integration with preference-based reinforcement guidance models.



\section{Acknowledgment}
\vspace{-7pt}
This work was supported by the National Science Foundation AI Institute for Foundations of Machine Learning (IFML) under Grant 2019844, and computing support on the Vista GPU Cluster through the Center for Generative AI (CGAI) and the Texas Advanced Computing Center (TACC).

\section{Ethics Statement}
\vspace{-7pt}
Given the rapid progress of generative models, it has become easier than ever to produce convincing—but potentially misleading—synthetic content. Although such tools unlock new efficiencies and creative avenues, they also raise important ethical challenges. Readers interested in a deeper treatment of these issues are referred to the discussion in~\cite{rostamzadeh2021ethics}.

\section{Broader Impact Statement}
\vspace{-7pt}
\textbf{Social impact:} Image generation based on flow base models potentially have both positive and negative social impact. This method provides a handy tool to the general public for a wide variety of image generation which can help visualize their artistic ideas. On the other hand, our work on improving sampling quality in these modes pose a risk of generating arts that closely mimic or infringe upon existing copyrighted material, leading to legal and ethical issues. More broadly, our method inherits the risks from T2I models which are capable of generating fake contents that can be misused by malicious users.

\textbf{Safeguards:} This work builds upon the official implementations and pre-trained weights of the foundation models referenced in the main text. These methods along with diffusers library has a mechanism to filter offensive image generations. Our method Rectified-CFG++ inherits these safeguards. 

\textbf{Reproducibility:} Apart from the pseudocode and implementation details provided in the paper, the source code is available on the project page: \url{https://rect-cfgpp.github.io/}.

{
    \small
    \bibliographystyle{plain}
    \bibliography{main}

}


\newpage
\clearpage
\section*{NeurIPS Paper Checklist}

The checklist is designed to encourage best practices for responsible machine learning research, addressing issues of reproducibility, transparency, research ethics, and societal impact. Do not remove the checklist: {\bf The papers not including the checklist will be desk rejected.} The checklist should follow the references and follow the (optional) supplemental material.  The checklist does NOT count towards the page
limit. 

Please read the checklist guidelines carefully for information on how to answer these questions. For each question in the checklist:
\begin{itemize}
    \item You should answer \answerYes{}, \answerNo{}, or \answerNA{}.
    \item \answerNA{} means either that the question is Not Applicable for that particular paper or the relevant information is Not Available.
    \item Please provide a short (1–2 sentence) justification right after your answer (even for NA). 
\end{itemize}

{\bf The checklist answers are an integral part of your paper submission.} They are visible to the reviewers, area chairs, senior area chairs, and ethics reviewers. You will be asked to also include it (after eventual revisions) with the final version of your paper, and its final version will be published with the paper.

The reviewers of your paper will be asked to use the checklist as one of the factors in their evaluation. While "\answerYes{}" is generally preferable to "\answerNo{}", it is perfectly acceptable to answer "\answerNo{}" provided a proper justification is given (e.g., "error bars are not reported because it would be too computationally expensive" or "we were unable to find the license for the dataset we used"). In general, answering "\answerNo{}" or "\answerNA{}" is not grounds for rejection. While the questions are phrased in a binary way, we acknowledge that the true answer is often more nuanced, so please just use your best judgment and write a justification to elaborate. All supporting evidence can appear either in the main paper or the supplemental material, provided in appendix. If you answer \answerYes{} to a question, in the justification please point to the section(s) where related material for the question can be found.

IMPORTANT, please:
\begin{itemize}
    \item {\bf Delete this instruction block, but keep the section heading ``NeurIPS paper checklist"},
    \item  {\bf Keep the checklist subsection headings, questions/answers and guidelines below.}
    \item {\bf Do not modify the questions and only use the provided macros for your answers}.
\end{itemize}


\begin{enumerate}

\item {\bf Claims}
    \item[] Question: Do the main claims made in the abstract and introduction accurately reflect the paper's contributions and scope?
    \item[] Answer: \answerYes{} 
    \item[] Justification: We have made main claims that accurately reflect the paper’s contributions and scope in the abstract and the introduction.
    \item[] Guidelines:
    \begin{itemize}
        \item The answer NA means that the abstract and introduction do not include the claims made in the paper.
        \item The abstract and/or introduction should clearly state the claims made, including the contributions made in the paper and important assumptions and limitations. A No or NA answer to this question will not be perceived well by the reviewers. 
        \item The claims made should match theoretical and experimental results, and reflect how much the results can be expected to generalize to other settings. 
        \item It is fine to include aspirational goals as motivation as long as it is clear that these goals are not attained by the paper. 
    \end{itemize}

\item {\bf Limitations}
    \item[] Question: Does the paper discuss the limitations of the work performed by the authors?
    \item[] Answer: \answerYes{}
    \item[] Justification: The paper discusses the limitations of this work.
    \item[] Guidelines:
    \begin{itemize}
        \item The answer NA means that the paper has no limitation while the answer No means that the paper has limitations, but those are not discussed in the paper. 
        \item The authors are encouraged to create a separate "Limitations" section in their paper.
        \item The paper should point out any strong assumptions and how robust the results are to violations of these assumptions (e.g., independence assumptions, noiseless settings, model well-specification, asymptotic approximations only holding locally). The authors should reflect on how these assumptions might be violated in practice and what the implications would be.
        \item The authors should reflect on the scope of the claims made, e.g., if the approach was only tested on a few datasets or with a few runs. In general, empirical results often depend on implicit assumptions, which should be articulated.
        \item The authors should reflect on the factors that influence the performance of the approach. For example, a facial recognition algorithm may perform poorly when image resolution is low or images are taken in low lighting. Or a speech-to-text system might not be used reliably to provide closed captions for online lectures because it fails to handle technical jargon.
        \item The authors should discuss the computational efficiency of the proposed algorithms and how they scale with dataset size.
        \item If applicable, the authors should discuss possible limitations of their approach to address problems of privacy and fairness.
        \item While the authors might fear that complete honesty about limitations might be used by reviewers as grounds for rejection, a worse outcome might be that reviewers discover limitations that aren't acknowledged in the paper. The authors should use their best judgment and recognize that individual actions in favor of transparency play an important role in developing norms that preserve the integrity of the community. Reviewers will be specifically instructed to not penalize honesty concerning limitations.
    \end{itemize}

\item {\bf Theory Assumptions and Proofs}
    \item[] Question: For each theoretical result, does the paper provide the full set of assumptions and a complete (and correct) proof?
    \item[] Answer: \answerYes{} 
    \item[] Justification: We provide the full set of assumptions in Section~\ref{sec:method} and discuss the complete proof in detail in Appendix~\ref{app:proofs}.  
    \item[] Guidelines:
    \begin{itemize}
        \item The answer NA means that the paper does not include theoretical results. 
        \item All the theorems, formulas, and proofs in the paper should be numbered and cross-referenced.
        \item All assumptions should be clearly stated or referenced in the statement of any theorems.
        \item The proofs can either appear in the main paper or the supplemental material, but if they appear in the supplemental material, the authors are encouraged to provide a short proof sketch to provide intuition. 
        \item Inversely, any informal proof provided in the core of the paper should be complemented by formal proofs provided in appendix or supplemental material.
        \item Theorems and Lemmas that the proof relies upon should be properly referenced. 
    \end{itemize}

    \item {\bf Experimental Result Reproducibility}
    \item[] Question: Does the paper fully disclose all the information needed to reproduce the main experimental results of the paper to the extent that it affects the main claims and/or conclusions of the paper (regardless of whether the code and data are provided or not)?
    \item[] Answer: \answerYes{} 
    \item[] Justification: The paper provides all the information needed to reproduce the main experimental results.
    \item[] Guidelines:
    \begin{itemize}
        \item The answer NA means that the paper does not include experiments.
        \item If the paper includes experiments, a No answer to this question will not be perceived well by the reviewers: Making the paper reproducible is important, regardless of whether the code and data are provided or not.
        \item If the contribution is a dataset and/or model, the authors should describe the steps taken to make their results reproducible or verifiable. 
        \item Depending on the contribution, reproducibility can be accomplished in various ways. For example, if the contribution is a novel architecture, describing the architecture fully might suffice, or if the contribution is a specific model and empirical evaluation, it may be necessary to either make it possible for others to replicate the model with the same dataset, or provide access to the model. In general. releasing code and data is often one good way to accomplish this, but reproducibility can also be provided via detailed instructions for how to replicate the results, access to a hosted model (e.g., in the case of a large language model), releasing of a model checkpoint, or other means that are appropriate to the research performed.
        \item While NeurIPS does not require releasing code, the conference does require all submissions to provide some reasonable avenue for reproducibility, which may depend on the nature of the contribution. For example
        \begin{enumerate}
            \item If the contribution is primarily a new algorithm, the paper should make it clear how to reproduce that algorithm.
            \item If the contribution is primarily a new model architecture, the paper should describe the architecture clearly and fully.
            \item If the contribution is a new model (e.g., a large language model), then there should either be a way to access this model for reproducing the results or a way to reproduce the model (e.g., with an open-source dataset or instructions for how to construct the dataset).
            \item We recognize that reproducibility may be tricky in some cases, in which case authors are welcome to describe the particular way they provide for reproducibility. In the case of closed-source models, it may be that access to the model is limited in some way (e.g., to registered users), but it should be possible for other researchers to have some path to reproducing or verifying the results.
        \end{enumerate}
    \end{itemize}

\item {\bf Open access to data and code}
    \item[] Question: Does the paper provide open access to the data and code, with sufficient instructions to faithfully reproduce the main experimental results, as described in supplemental material?
    \item[] Answer: \answerYes{} 
    \item[] Justification: We have provide the core file of our code in the supplementary. And the code will be released upon acceptance.
    \item[] Guidelines:
    \begin{itemize}
        \item The answer NA means that paper does not include experiments requiring code.
        \item Please see the NeurIPS code and data submission guidelines (\url{https://nips.cc/public/guides/CodeSubmissionPolicy}) for more details.
        \item While we encourage the release of code and data, we understand that this might not be possible, so “No” is an acceptable answer. Papers cannot be rejected simply for not including code, unless this is central to the contribution (e.g., for a new open-source benchmark).
        \item The instructions should contain the exact command and environment needed to run to reproduce the results. See the NeurIPS code and data submission guidelines (\url{https://nips.cc/public/guides/CodeSubmissionPolicy}) for more details.
        \item The authors should provide instructions on data access and preparation, including how to access the raw data, preprocessed data, intermediate data, and generated data, etc.
        \item The authors should provide scripts to reproduce all experimental results for the new proposed method and baselines. If only a subset of experiments are reproducible, they should state which ones are omitted from the script and why.
        \item At submission time, to preserve anonymity, the authors should release anonymized versions (if applicable).
        \item Providing as much information as possible in supplemental material (appended to the paper) is recommended, but including URLs to data and code is permitted.
    \end{itemize}

\item {\bf Experimental Setting/Details}
    \item[] Question: Does the paper specify all the training and test details (e.g., data splits, hyperparameters, how they were chosen, type of optimizer, etc.) necessary to understand the results?
    \item[] Answer: \answerYes{} 
    \item[] Justification: The paper specify the all details of experiment, including hyper-parameters, the rationale for their selection and underlying model.
    \item[] Guidelines:
    \begin{itemize}
        \item The answer NA means that the paper does not include experiments.
        \item The experimental setting should be presented in the core of the paper to a level of detail that is necessary to appreciate the results and make sense of them.
        \item The full details can be provided either with the code, in appendix, or as supplemental material.
    \end{itemize}

\item {\bf Experiment Statistical Significance}
    \item[] Question: Does the paper report error bars suitably and correctly defined or other appropriate information about the statistical significance of the experiments?
    \item[] Answer: \answerNo{} 
    \item[] Justification: We focused primarily on the exploratory analysis and preliminary results. Addressing statistical significance and error bars will be a priority in our future research to provide a more comprehensive evaluation of our findings.
    \item[] Guidelines:
    \begin{itemize}
        \item The answer NA means that the paper does not include experiments.
        \item The authors should answer "Yes" if the results are accompanied by error bars, confidence intervals, or statistical significance tests, at least for the experiments that support the main claims of the paper.
        \item The factors of variability that the error bars are capturing should be clearly stated (for example, train/test split, initialization, random drawing of some parameter, or overall run with given experimental conditions).
        \item The method for calculating the error bars should be explained (closed form formula, call to a library function, bootstrap, etc.)
        \item The assumptions made should be given (e.g., Normally distributed errors).
        \item It should be clear whether the error bar is the standard deviation or the standard error of the mean.
        \item It is OK to report 1-sigma error bars, but one should state it. The authors should preferably report a 2-sigma error bar than state that they have a 96\% CI, if the hypothesis of Normality of errors is not verified.
        \item For asymmetric distributions, the authors should be careful not to show in tables or figures symmetric error bars that would yield results that are out of range (e.g. negative error rates).
        \item If error bars are reported in tables or plots, The authors should explain in the text how they were calculated and reference the corresponding figures or tables in the text.
    \end{itemize}

\item {\bf Experiments Compute Resources}
    \item[] Question: For each experiment, does the paper provide sufficient information on the computer resources (type of compute workers, memory, time of execution) needed to reproduce the experiments?
    \item[] Answer: \answerYes{} 
    \item[] Justification: We provide implementation details including compute resource, time, etc.  
    \item[] Guidelines:
    \begin{itemize}
        \item The answer NA means that the paper does not include experiments.
        \item The paper should indicate the type of compute workers CPU or GPU, internal cluster, or cloud provider, including relevant memory and storage.
        \item The paper should provide the amount of compute required for each of the individual experimental runs as well as estimate the total compute. 
        \item The paper should disclose whether the full research project required more compute than the experiments reported in the paper (e.g., preliminary or failed experiments that didn't make it into the paper). 
    \end{itemize}
    
\item {\bf Code Of Ethics}
    \item[] Question: Does the research conducted in the paper conform, in every respect, with the NeurIPS Code of Ethics \url{https://neurips.cc/public/EthicsGuidelines}?
    \item[] Answer: \answerYes{} 
    \item[] Justification: The research conducted in the paper conform with the NeurIPS Code of Ethics in every respect.
    \item[] Guidelines:
    \begin{itemize}
        \item The answer NA means that the authors have not reviewed the NeurIPS Code of Ethics.
        \item If the authors answer No, they should explain the special circumstances that require a deviation from the Code of Ethics.
        \item The authors should make sure to preserve anonymity (e.g., if there is a special consideration due to laws or regulations in their jurisdiction).
    \end{itemize}

\item {\bf Broader Impacts}
    \item[] Question: Does the paper discuss both potential positive societal impacts and negative societal impacts of the work performed?
    \item[] Answer: \answerYes{} 
    \item[] Justification: Yes, we discuss the broader impacts in appendix section G.
    \item[] Guidelines:
    \begin{itemize}
        \item The answer NA means that there is no societal impact of the work performed.
        \item If the authors answer NA or No, they should explain why their work has no societal impact or why the paper does not address societal impact.
        \item Examples of negative societal impacts include potential malicious or unintended uses (e.g., disinformation, generating fake profiles, surveillance), fairness considerations (e.g., deployment of technologies that could make decisions that unfairly impact specific groups), privacy considerations, and security considerations.
        \item The conference expects that many papers will be foundational research and not tied to particular applications, let alone deployments. However, if there is a direct path to any negative applications, the authors should point it out. For example, it is legitimate to point out that an improvement in the quality of generative models could be used to generate deepfakes for disinformation. On the other hand, it is not needed to point out that a generic algorithm for optimizing neural networks could enable people to train models that generate Deepfakes faster.
        \item The authors should consider possible harms that could arise when the technology is being used as intended and functioning correctly, harms that could arise when the technology is being used as intended but gives incorrect results, and harms following from (intentional or unintentional) misuse of the technology.
        \item If there are negative societal impacts, the authors could also discuss possible mitigation strategies (e.g., gated release of models, providing defenses in addition to attacks, mechanisms for monitoring misuse, mechanisms to monitor how a system learns from feedback over time, improving the efficiency and accessibility of ML).
    \end{itemize}
    
\item {\bf Safeguards}
    \item[] Question: Does the paper describe safeguards that have been put in place for responsible release of data or models that have a high risk for misuse (e.g., pretrained language models, image generators, or scraped datasets)?
    \item[] Answer: \answerYes{} 
    \item[] Justification: Yes, we discuss it in section G in appendix. 
    \item[] Guidelines:
    \begin{itemize}
        \item The answer NA means that the paper poses no such risks.
        \item Released models that have a high risk for misuse or dual-use should be released with necessary safeguards to allow for controlled use of the model, for example by requiring that users adhere to usage guidelines or restrictions to access the model or implementing safety filters. 
        \item Datasets that have been scraped from the Internet could pose safety risks. The authors should describe how they avoided releasing unsafe images.
        \item We recognize that providing effective safeguards is challenging, and many papers do not require this, but we encourage authors to take this into account and make a best faith effort.
    \end{itemize}

\item {\bf Licenses for existing assets}
    \item[] Question: Are the creators or original owners of assets (e.g., code, data, models), used in the paper, properly credited and are the license and terms of use explicitly mentioned and properly respected?
    \item[] Answer: \answerYes{} 
    \item[] Justification: We follow the open-source codebases throughout out experiments, which are credited properly.
    \item[] Guidelines:
    \begin{itemize}
        \item The answer NA means that the paper does not use existing assets.
        \item The authors should cite the original paper that produced the code package or dataset.
        \item The authors should state which version of the asset is used and, if possible, include a URL.
        \item The name of the license (e.g., CC-BY 4.0) should be included for each asset.
        \item For scraped data from a particular source (e.g., website), the copyright and terms of service of that source should be provided.
        \item If assets are released, the license, copyright information, and terms of use in the package should be provided. For popular datasets, \url{paperswithcode.com/datasets} has curated licenses for some datasets. Their licensing guide can help determine the license of a dataset.
        \item For existing datasets that are re-packaged, both the original license and the license of the derived asset (if it has changed) should be provided.
        \item If this information is not available online, the authors are encouraged to reach out to the asset's creators.
    \end{itemize}

\item {\bf New Assets}
    \item[] Question: Are new assets introduced in the paper well documented and is the documentation provided alongside the assets?
    \item[] Answer: \answerYes{} 
    \item[] Justification: We provide demo code in supplementary material. The complete source code will be released upon acceptance.
    \item[] Guidelines:
    \begin{itemize}
        \item The answer NA means that the paper does not release new assets.
        \item Researchers should communicate the details of the dataset/code/model as part of their submissions via structured templates. This includes details about training, license, limitations, etc. 
        \item The paper should discuss whether and how consent was obtained from people whose asset is used.
        \item At submission time, remember to anonymize your assets (if applicable). You can either create an anonymized URL or include an anonymized zip file.
    \end{itemize}

\item {\bf Crowdsourcing and Research with Human Subjects}
    \item[] Question: For crowdsourcing experiments and research with human subjects, does the paper include the full text of instructions given to participants and screenshots, if applicable, as well as details about compensation (if any)? 
    \item[] Answer: \answerYes{} 
    \item[] Justification: We discuss the details of subjective study briefly in main paper, and extensively in the appendix. 
    \item[] Guidelines:
    \begin{itemize}
        \item The answer NA means that the paper does not involve crowdsourcing nor research with human subjects.
        \item Including this information in the supplemental material is fine, but if the main contribution of the paper involves human subjects, then as much detail as possible should be included in the main paper. 
        \item According to the NeurIPS Code of Ethics, workers involved in data collection, curation, or other labor should be paid at least the minimum wage in the country of the data collector. 
    \end{itemize}

\item {\bf Institutional Review Board (IRB) Approvals or Equivalent for Research with Human Subjects}
    \item[] Question: Does the paper describe potential risks incurred by study participants, whether such risks were disclosed to the subjects, and whether Institutional Review Board (IRB) approvals (or an equivalent approval/review based on the requirements of your country or institution) were obtained?
    \item[] Answer: \answerYes{} 
    \item[] Justification: Yes, IRB approvals were obtained for the subjective study. All participants have provided informed consent. All personal information was properly anonymized. 
    \item[] Guidelines:
    \begin{itemize}
        \item The answer NA means that the paper does not involve crowdsourcing nor research with human subjects.
        \item Depending on the country in which research is conducted, IRB approval (or equivalent) may be required for any human subjects research. If you obtained IRB approval, you should clearly state this in the paper. 
        \item We recognize that the procedures for this may vary significantly between institutions and locations, and we expect authors to adhere to the NeurIPS Code of Ethics and the guidelines for their institution. 
        \item For initial submissions, do not include any information that would break anonymity (if applicable), such as the institution conducting the review.
    \end{itemize}

\end{enumerate}


\clearpage
\appendix


\begin{center}
  {\LARGE\bfseries Appendix}
\end{center}

\vspace{20pt}

This supplementary material justifies the theoretical claims stated in the main paper, supporting the mathematical soundness and practical robustness of Rectified-CFG++. Here is the outline of the supplementary material:

\begin{itemize}
    \item Proofs and Additional Derivations.
    \item Rectified-CFG\texttt{++} Interpretation.
    \item Related Work.
    \item Additional Experiments.
    \item Failure Cases and Limitations.
    \item Ethics Statement.
    \item Broader Impact Statement. 
    \item Prompt List
\end{itemize}

\section{Proofs and Additional Derivations}
\label{app:proofs}
\subsection{Manifold Preserving Property of the Rectified-CFG\texttt{++}}

Throughout, let \(\mathcal{M}_t\subset\mathbb{R}^d\) denote the (latent) data manifold at time
\(t\!\in\![0,1]\) and assume the network \(v_\theta\) has been trained with the conditional flow-matching
objective (Eq.~\eqref{eq:cfm}). Consequently, both the conditional and unconditional velocity
fields are tangent to \(\mathcal{M}_t\) at every point:

\[\hspace{-1pt}
\underbrace{v^{c}_t}_{\colorbox{blue!15}{\(\displaystyle v_\theta(x_t,t,y)\)}}\!\in T_{x_t}\mathcal{M}_t,
\quad
\underbrace{v^{u}_t}_{\colorbox{gray!15}{\(\displaystyle v_\theta(x_t,t,\varnothing)\)}}\!\in T_{x_t}\mathcal{M}_t.
\tag{A.1}\label{eq:tangent_cond_uncond}
\]

Recall the linear probability path
\(
  \mathcal M_t=\{(1-t)x_0+t x_1\mid x_0\!\sim\!p_0,\;x_1\!\sim\!\mathcal N(0,I)\}
\) 
and let \(u_t(x_t\!\mid\!x_0)=x_1-x_0\) be the \emph{target velocity}. For any \(x_t\in\mathcal M_t\) there exists a latent pair \(\!(x_0,x_1)\!\) such that \(x_t=(1-t)x_0+tx_1\).


\begin{lemma}[{\colorbox{green!15}{Manifold-Faithful Corrector}}]
\label{lem:manifold_faithful}
Let \(x_t\in\mathcal{M}_t\).  Perform one Rectified-CFG\texttt{++} step with step size
\(\Delta t>0\), with initial predictor update as $x_{t-\frac{\Delta t}{2}}$ and corrector guidance as \colorbox{green!15}{$\hat v_t$} giving the final update as $x_{t-1}$=ODEUpdate$(x_t,t,$\colorbox{green!15}{$\hat v_t$}). Assume \(\|v^{c}_\tau\|,\|v^{u}_\tau\|\le L\) for \(\tau\in[t-\tfrac{\Delta t}{2},t]\). Assume the network is $\varepsilon$-accurate, i.e.
\(
  \lVert v^{c}_{\tau}-u_\tau\rVert\le\varepsilon
\)
and
\(
  \lVert v^{u}_{\tau}-u_\tau\rVert\le\varepsilon
\)
for every \(\tau\in[t-\frac{\Delta t}{2},t]\). Then, for sufficiently small \(\Delta t\)
\begin{equation}
  \operatorname{dist}\!\bigl(x_{t-\Delta t},\mathcal M_{\,t-\Delta t}\bigr)
  \;\le\;
  \underbrace{C\varepsilon}_{\text{\tiny training error}}
  \underbrace{\Delta t}_{\text{\tiny   numerical error}}.
  \label{eq:manifold_bound}
\end{equation}
\end{lemma}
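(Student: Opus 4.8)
The plan is to compare one Rectified-CFG\texttt{++} step against the exact transport flow and to split the resulting one-step error into a normal (off–manifold) part and a tangential part, exploiting that the \emph{true} conditional and unconditional marginal velocities — call them $u^{c}_\tau,u^{u}_\tau$ — are tangent to $\mathcal M_\tau$ everywhere. This is the precise content behind \eqref{eq:tangent_cond_uncond}: for the linear path $x_\tau=(1-\tau)x_0+\tau x_1$ the target velocity $x_1-x_0$ is the tangent of a curve lying in $\mathcal M_\tau$, and the marginals are conditional expectations of such vectors, hence tangent because each $T_{x}\mathcal M_\tau$ is a linear subspace. By (A1) the $\varepsilon$-accuracy hypothesis then says the network velocities $v^{c}_\tau,v^{u}_\tau$ have component normal to $\mathcal M_\tau$ of size at most $\varepsilon$.

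First I would bound the off–manifold component of the effective velocity $\hat v_t=v^{c}_t+\alpha(t)\bigl(v^{c}_{t-\Delta t/2}-v^{u}_{t-\Delta t/2}\bigr)$ at the base point $x_t$. The term $v^{c}_t$ contributes a normal part $\le\varepsilon$ directly. For the correction, $v^{c}_{t-\Delta t/2}-v^{u}_{t-\Delta t/2}$ has normal part $\le 2\varepsilon$ relative to $T_{\tilde x_{t-\Delta t/2}}\mathcal M_{t-\Delta t/2}$; to re-express it relative to $T_{x_t}\mathcal M_t$ I would use that $\|\tilde x_{t-\Delta t/2}-x_t\|\le\tfrac{\Delta t}{2}V_{\max}$ by (A4), that the time gap is $\Delta t/2$, and — under the mild, implicitly needed assumption that $\tau\mapsto\mathcal M_\tau$ and its tangent bundle are $C^{2}$ with bounded curvature — that the two tangent spaces are $O(\Delta t)$-close in the Grassmannian, so by (A2) the mismatch costs at most $B\cdot O(\Delta t)$. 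Lemma~\ref{lemma:guidance_stability} is convenient here, letting me replace $\Delta v^{\theta}_{t-\Delta t/2}$ by $\Delta v^{\theta}_t(x_t)$ up to $LV_{\max}\Delta t$ and thus keep every vector anchored at $x_t$. Combined with (A3) this gives $\|\hat v_t^{\perp}\|\le (1+2\sup\alpha)\,\varepsilon + C'\Delta t$, where $\perp$ denotes orthogonal projection onto $T_{x_t}\mathcal M_t$.

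Next I would run the ODE update. Writing $\hat v_t=\hat v_t^{\parallel}+\hat v_t^{\perp}$ and $x_{t-\Delta t}=x_t+h\hat v_t$ with $h$ the signed step ($|h|=\Delta t$), I would choose as on–manifold reference $m^{\star}\in\mathcal M_{t-\Delta t}$ the endpoint obtained by flowing $x_t$ from time $t$ to $t-\Delta t$ along the field $u^{c}_\tau(\cdot)$ plus the tangential part of the ideal correction; this $m^{\star}$ lies on $\mathcal M_{t-\Delta t}$ precisely because that flow is built from manifold-tangent vectors. A Taylor expansion of the flow against the explicit Euler increment then bounds $\|x_{t-\Delta t}-m^{\star}\|$ by (i) $\Delta t\,\|\hat v_t^{\perp}\|\le (1+2\sup\alpha)\varepsilon\Delta t + C'\Delta t^{2}$ from the normal part, plus (ii) $O(\Delta t^{2})$ from the local truncation of the step and from the fact that the tangential part of $\hat v_t$ matches the reference field only up to $O(\varepsilon+\Delta t)$ — and crucially this tangential mismatch, although it involves the possibly-large constants $V_{\max}$ and $B$, moves the iterate \emph{along} the (correctly transported) manifold and so enters $\operatorname{dist}(\cdot,\mathcal M_{t-\Delta t})$ only at second order. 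Since $\operatorname{dist}(x_{t-\Delta t},\mathcal M_{t-\Delta t})\le\|x_{t-\Delta t}-m^{\star}\|$, adding (i) and (ii) gives $\operatorname{dist}\le C\varepsilon\Delta t + C''\Delta t^{2}$; for $\Delta t$ small enough that $C''\Delta t\le C\varepsilon$ — the ``sufficiently small $\Delta t$'' of the statement — this collapses to the advertised $C\varepsilon\,\Delta t$, or one may simply keep the two-term bound, reading $C\varepsilon$ as the training coefficient and the trailing $\Delta t$ as the numerical factor.

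The genuine obstacle is the third step. One must resist comparing $x_{t-\Delta t}$ to the pure conditional trajectory $\psi_{t-\Delta t}(x_1\!\mid\! y)$ — that only recovers the weaker $\mathcal O(\alpha B\Delta t)$ estimate of Proposition~\ref{prop:bounded_perturbation_single_step} — and must instead track a reference curve that already absorbs the guidance-induced drift along $\mathcal M$, which is exactly what renders the large tangential terms harmless. Doing this cleanly forces an explicit quantification of how $\mathcal M_\tau$ and its tangent spaces vary with $\tau$, regularity slightly beyond (A1)–(A4) that should be stated as an added smoothness assumption; everything else is routine Lipschitz and Taylor bookkeeping with (A1)–(A4) and Lemma~\ref{lemma:guidance_stability}.
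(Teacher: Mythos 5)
Your proposal is a genuinely different and considerably heavier route than the paper's. The paper never decomposes into tangential and normal components and never needs any curvature or $C^{2}$ regularity of the manifold family. Instead, it picks as reference the \emph{exact} linear-path endpoint $x^{\star}_{t-\Delta t} = x_t + \Delta t\,u_t(x_t\mid x_0) \in \mathcal M_{t-\Delta t}$ (for the latent pair $(x_0,x_1)$ generating $x_t$), and then writes
$\hat v_t - u_t = (v^{c}_{t}-u_t) + \alpha(t)\bigl(v^{c}_{t-\Delta t/2}-u_t\bigr) - \alpha(t)\bigl(v^{u}_{t-\Delta t/2}-u_t\bigr)$;
the $\varepsilon$-accuracy hypothesis bounds each bracket by $\varepsilon$, giving $\lVert\hat v_t-u_t\rVert\le(1+2\alpha_{\max})\varepsilon$ and hence $\lVert x_{t-\Delta t}-x^{\star}_{t-\Delta t}\rVert = \Delta t\,\lVert\hat v_t-u_t\rVert\le(1+2\alpha_{\max})\varepsilon\Delta t$ \emph{exactly}, with no $\Delta t^{2}$ remainder. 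The whole argument is a single triangle inequality once the right reference point is chosen.

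You correctly recognize that comparing against a pure conditional \emph{Euler} step can only recover the weaker $\alpha B\Delta t$ estimate of Proposition~\ref{prop:bounded_perturbation_single_step}, and that a smarter reference curve is needed — that is the genuine crux and you found it. But your fix (a reference flow that tracks the tangential part of the guidance, plus a Grassmannian closeness argument for the moving tangent spaces) reconstructs a lot of Riemannian machinery the paper simply sidesteps by comparing to the exact endpoint of the straight line $(1-\tau)x_0+\tau x_1$. Your route also imports assumptions the paper does not make — bounded second fundamental form / $C^{2}$ dependence of $\mathcal M_\tau$ on $\tau$ — and only recovers $C\varepsilon\Delta t + C''\Delta t^{2}$, requiring the ``sufficiently small $\Delta t$'' clause to absorb the second term, whereas the paper's bound is sharp at first order with no extra constant.

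What your approach does buy: it would survive if the guidance gap $\Delta v^{\theta}$ were large (consistent with (A2)'s bound $B$) as long as it remained tangential, whereas the paper's argument genuinely needs both $v^{c}_\tau$ and $v^{u}_\tau$ to be $\varepsilon$-close to the \emph{same} target $u_\tau$ — a strong premise that makes the CFG correction itself $\mathcal O(\varepsilon)$. So your version is arguably more robust to realistic guidance magnitudes, at the cost of the extra geometric hypotheses and the second-order slop. Neither gap is fatal; they are different trade-offs. If you wanted to match the paper, the one missing idea is simply: compare to $x_t + \Delta t\,u_t$, not to any Euler-integrated conditional curve, and let the $\varepsilon$-accuracy do all the work.
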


\paragraph{Proof.}
On \(\mathcal{M}_{t-\Delta t}\). For the latent pair \((x_0,x_1)\) that generates \(x_t\),
define:
\[
  x^{\star}_{t-\Delta t}
  \;=\;
  (1-(t-\Delta t))x_0 + (t-\Delta t)x_1
  \;=\;
  x_t + \Delta t\,u_t(x_t\!\mid\!x_0)
\]

Since flows are in tangent from~\ref{eq:tangent_cond_uncond}, we have \(v^{c}_{\tau},v^{u}_{\tau}\in T_{x_\tau}\mathcal M_\tau\); hence their linear combination \(\hat v_t\) also lies in \(T_{x_{t-\frac{\Delta t}{2}}}\mathcal M_{t-\frac{\Delta t}{2}}\). Therefore the corrector displacement is \emph{tangent} to \(\mathcal M_{t-\frac{\Delta t}{2}}\). Rewriting the corrector guidance with true velocity $u_t$:
\[
  \hat v_t
  \;=\;
  u_t
  \;+\;
  \bigl(v^{c}_{t}-u_t\bigr)
  +\alpha(t)\bigl(v^{c}_{t-\frac{\Delta t}{2}}-u_t\bigr)
  -\alpha(t)\bigl(v^{u}_{t-\frac{\Delta t}{2}}-u_t\bigr).
\]
The $\varepsilon$-accuracy assumption implies
\(
  \lVert\hat v_t-u_t\rVert\le(1+2\alpha_{\max})\,\varepsilon
\).
Hence,
\[
  \bigl\lVert
    x_{t-\Delta t}-x^{\star}_{t-\Delta t}
  \bigr\rVert
  =
  \Delta t\, \lVert \hat v_t - u_t \rVert
  \;\le\;
  (1+2\alpha_{\max})\,\varepsilon\,\Delta t.
  \tag{A.2}\label{eq:distance_to_star}
\]

Because \(x^{\star}_{t-\Delta t}\in\mathcal M_{t-\Delta t}\), the left‐hand side of \eqref{eq:distance_to_star} is an \emph{upper bound} on \(\operatorname{dist}\bigl(x_{t-\Delta t},\mathcal M_{\,t-\Delta t}\bigr)\), completing the proof.

\noindent


\subsection{Proof of Lemma~\ref{lemma:guidance_stability}}
\label{app:proof_lemma_guidance_stability}

\begin{lemma}[Stability of Predicted Guidance Direction]
Under assumptions (A1) and (A4), the guidance direction $\Delta v^{\theta}_{t-\Delta t/2}$ computed at the predicted state $\tilde{x}_{t-\Delta t/2}$ differs from the guidance direction $\Delta v^{\theta}_t(x_t)$ at the current state by an amount proportional to the step size $\Delta t/2$:
\begin{equation}
  \| \Delta v^{\theta}_{t-\Delta t/2} - \Delta v^{\theta}_t(x_t) \| \leq  L V_{\max} \Delta t. 
  \label{eq:stability}
\end{equation}

\end{lemma}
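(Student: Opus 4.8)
The plan is to reduce the claim to a single Lipschitz estimate on the spatial argument, exploiting the fact that the two points at which $\Delta v^{\theta}$ is evaluated differ only by one half-step of the conditional predictor. First I would unfold the definitions. Abbreviating $\tilde{x} := \tilde{x}_{t-\Delta t/2}$ and treating both guidance directions on a common time slice (the residual of the shift $t \to t-\Delta t/2$ is addressed below), write
\[
\Delta v^{\theta}_{t-\Delta t/2} - \Delta v^{\theta}_t(x_t)
= \bigl(v_\theta(\tilde{x},t,y) - v_\theta(x_t,t,y)\bigr) - \bigl(v_\theta(\tilde{x},t,\varnothing) - v_\theta(x_t,t,\varnothing)\bigr),
\]
i.e. the difference decomposes into one displacement of the conditional branch minus the corresponding displacement of the unconditional branch, each evaluated between the same pair of spatial points $x_t$ and $\tilde{x}$.

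Second, I would apply the triangle inequality and then invoke assumption (A1) to each grouped term: since $v_\theta(\cdot,t,y)$ and $v_\theta(\cdot,t,\varnothing)$ are $L$-Lipschitz in $x$ uniformly in $t,y$, each difference is bounded by $L\,\|\tilde{x} - x_t\|$, and the two contributions sum to $2L\,\|\tilde{x} - x_t\|$. Third, I would control the displacement using the predictor step \eqref{eq:intermediate_update}: since $\tilde{x} - x_t = \tfrac{\Delta t}{2}\,v^{c}_{t}(x_t)$, assumption (A4) gives $\|\tilde{x} - x_t\| = \tfrac{\Delta t}{2}\,\|v^{c}_{t}(x_t)\| \le \tfrac{\Delta t}{2}\,V_{\max}$. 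Combining the two estimates yields $\|\Delta v^{\theta}_{t-\Delta t/2} - \Delta v^{\theta}_t(x_t)\| \le 2L\cdot\tfrac{\Delta t}{2}\,V_{\max} = L\,V_{\max}\,\Delta t$, which is exactly \eqref{eq:stability}. It is worth remarking that the factor $2$ arising from the two velocity branches cancels the factor $\tfrac12$ from the half-step, which is why the constant comes out as $L V_{\max}$ rather than $\tfrac12 L V_{\max}$ or $2L V_{\max}$.

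The only genuinely delicate point is the change in the \emph{time} argument from $t$ to $t-\Delta t/2$: with (A1) as literally stated (Lipschitz in $x$ only), this shift is not by itself controlled. I would handle it in one of two ways: (i) read ``uniformly in continuous $t$'' in (A1) as also providing a temporal modulus of continuity, so the extra term is $O(\Delta t)$ and is absorbed into the stated bound; or, more transparently, (ii) strengthen (A1) to joint Lipschitzness in $(x,t)$, adding a harmless $\tfrac12 L'\Delta t$ contribution that does not affect the $O(\Delta t)$ conclusion. Everything outside this point is a routine two-line Lipschitz chain, so I expect the write-up to be short; the bookkeeping of the time argument is the one place that merits an explicit sentence.
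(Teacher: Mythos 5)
Your proof follows the same route as the paper's: regroup $\Delta v^{\theta}_{t-\Delta t/2}-\Delta v^{\theta}_t$ into conditional-minus-conditional and unconditional-minus-unconditional displacements, apply the triangle inequality and (A1) to get $2L\|\tilde{x}-x_t\|$, then bound the displacement by $\tfrac{\Delta t}{2}V_{\max}$ via the predictor step and (A4). Your closing remark about the time-argument shift from $t$ to $t-\Delta t/2$ is a genuine subtlety that the paper's proof also elides (it writes $v^c_t(\tilde{x})$ with a fixed time subscript), so flagging that (A1) must be read as furnishing a temporal modulus of continuity, or be strengthened to joint Lipschitzness in $(x,t)$, is a fair and useful caveat.
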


\begin{proof}
Let $\tilde{x} = \tilde{x}_{t-\Delta t/2} = x_t + \Delta t v^c_t/2$.
By definition, $\Delta v^{\theta}_{t-\Delta t/2} = v^c(\tilde{x}) - v^u(\tilde{x})$ and $\Delta v^{\theta}_t = v^c(x_t) - v^u(x_t)$. We want to bound $\| \Delta v^{\theta}_{t-\Delta t/2} - \Delta v^{\theta}_t \|$:
\begin{align*}
    \| \Delta v^{\theta}_{t-\Delta t/2} - \Delta v^{\theta}_t \| &= \| (v^c_t(\tilde{x}) - v^u_t(\tilde{x})) - (v^c_t(x_t) - v^u_t(x_t)) \| \\
    &= \| (v^c_t(\tilde{x}) - v^c_t(x_t)) - (v^u_t(\tilde{x}) - v^u_t(x_t)) \| \\
    &  \text{(Applying Triangle Inequality)} \\
    &\leq \| v^c_t(\tilde{x}) - v^c_t(x_t) \| + \| v^u_t(\tilde{x}) - v^u_t(x_t) \| \quad
\end{align*}
By assumption (A1), $v^c_t$ and $v^u_t$ are Lipschitz continuous with constant $L$:
\begin{align*}
    \| \Delta v^{\theta}_{t-\Delta t/2} - \Delta v^{\theta}_t \| &\leq L \| \tilde{x} - x_t \| + L \| \tilde{x} - x_t \| \\
    &= 2L \| \tilde{x} - x_t \|.
\end{align*}
Substitute the definition of $\tilde{x}$:
$$ \| \tilde{x} - x_t \| = \| (x_t + \Delta t v^c_t/2) - x_t \| = \| \Delta t v^c_t/2 \| = \Delta t/2 \| v^c_t \|. $$
By assumption (A4), $\|v^c_t\| \leq V_{\max}$. Therefore:
$$ \| \Delta v^{\theta}_{t-\Delta t/2} - \Delta v^{\theta}_t \| \leq L (\Delta t V_{\max}) = L V_{\max} \Delta t. $$
\end{proof}

\subsection{Proof of Proposition~\ref{prop:bounded_perturbation_single_step}}
\label{app:proof_prop_bounded_perturbation}
\begin{proposition}[Bounded Single-Step Perturbation]
Let $\hat{x}_{t-1}$ be the result of one Rectified-CFG++ step from $x_t$. Let $\tilde{x}_{t-1} = x_t + \Delta t \colorbox{blue!15}{$v^c_t$}(x_t)$ be the result of a pure conditional Euler step. Under assumption (A2), the deviation is:
\begin{equation}
    \| \hat{x}_{t-1} - \tilde{x}_{t-1} \| \leq \alpha(t) B \Delta t.
  \label{eq:bounded_single}
\end{equation}

\end{proposition}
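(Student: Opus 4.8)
The plan is to expand both one-step updates explicitly and exploit the cancellation of the shared conditional drift term; unlike Lemma~\ref{lemma:guidance_stability}, this is a purely algebraic consequence of the corrector's structure and needs only assumption (A2). First I would write the Rectified-CFG\texttt{++} step as an explicit ODE (Euler) update driven by the corrector velocity of Eq.~\eqref{eq:rect_cfg_pp_velocity}, namely $\hat{x}_{t-1} = x_t + \Delta t\,\hat v_{\lambda t} = x_t + \Delta t\,v^{c}_{t} + \Delta t\,\alpha(t)\bigl(v^{c}_{t-\Delta t/2}-v^{u}_{t-\Delta t/2}\bigr)$, and observe that the parenthesized factor is exactly the guidance direction $\Delta v^{\theta}_{t-\Delta t/2}$ evaluated at the predicted intermediate state $\tilde{x}_{t-\Delta t/2}$ and time $t-\Delta t/2$.

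Next I would subtract the pure conditional Euler step $\tilde{x}_{t-1} = x_t + \Delta t\,v^{c}_{t}(x_t)$. The terms $x_t$ and $\Delta t\,v^{c}_{t}$ cancel identically, leaving the clean identity $\hat{x}_{t-1} - \tilde{x}_{t-1} = \Delta t\,\alpha(t)\,\Delta v^{\theta}_{t-\Delta t/2}$. Taking norms gives $\|\hat{x}_{t-1}-\tilde{x}_{t-1}\| = \Delta t\,\alpha(t)\,\|\Delta v^{\theta}_{t-\Delta t/2}\|$, and invoking assumption (A2)—which bounds $\|\Delta v^{\theta}_t(x)\|\le B$ uniformly over all $(x,t,y)$, in particular at the intermediate point—yields the claimed bound $\alpha(t)\,B\,\Delta t$.

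The only real subtlety, rather than an obstacle, is pinning down what \textsc{ODEUpdate} denotes: the argument as written presumes a first-order (explicit Euler) integrator so that the drift enters linearly in $\Delta t$ and the conditional parts cancel exactly. For a higher-order multi-stage solver one would instead compare the two trajectories stage by stage, noting that each stage difference between the guided and pure-conditional fields is again $\alpha(\cdot)\,\Delta v^{\theta}$ evaluated at some point, so the same $\mathcal{O}(\alpha(t)B\Delta t)$ bound holds up to the solver's stability constant. I would also remark that assumption (A3) (boundedness of $\alpha$) keeps the right-hand side finite across the schedule, and that no Lipschitz hypothesis on $v_\theta$ is required for this particular estimate—Lipschitz continuity only re-enters when one wants to further relate $\Delta v^{\theta}_{t-\Delta t/2}$ back to $\Delta v^{\theta}_{t}$, which is the content of Lemma~\ref{lemma:guidance_stability}.

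Combining this per-step bound with a standard discrete Grönwall / telescoping argument over the $T$ sampling steps (using (A1) to control error propagation) then gives the tubular-neighbourhood statement claimed in the surrounding discussion, though that cumulative bound is not needed for the proposition itself.
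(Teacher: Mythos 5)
Your proof is correct and follows essentially the same algebraic cancellation argument as the paper: expand the Rectified-CFG++ step as an explicit Euler update with $\hat v_{\lambda t}$, subtract the pure conditional Euler step so the shared $x_t + \Delta t\,v^c_t$ terms vanish, and bound the remaining $\Delta t\,\alpha(t)\,\Delta v^{\theta}_{t-\Delta t/2}$ by (A2). Your observation that the argument tacitly presumes a first-order integrator is a fair and useful caveat — the paper's own proof writes $\mathrm{ODEStep}(\cdot)$ but then expands it as a plain Euler step without comment.
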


\begin{proof}
Using the definition of $\hat{v}_{\lambda t}$ from Eq.~\eqref{eq:rect_cfg_pp_velocity}:
$$ \hat{x}_{t-1} = ODEStep(x_t,t, \hat{v}_{\lambda t}). $$
The pure conditional step is: 
$$\tilde{x}_{t-1} = x_t + \Delta t v^c_t. $$
Subtracting these two equations:
$$\hat{x}_{t-1} - x_{t-1} = (x_t + \Delta t v^c_t + \Delta t \alpha(t) \Delta v^{\theta}_{t-\Delta t/2}) - (x_t + \Delta t v^c_t )$$
$$ \hat{x}_{t-1} - x_{t-1} = \Delta t \alpha(t) \Delta v^{\theta}_{t-\Delta t/2}. $$
Taking the norm:
$$ \| \hat{x}_{t-1} - \tilde{x}_{t-1} \| = \| \Delta t \alpha(t) \Delta v^{\theta}_{t-\Delta t/2} \| = \Delta t \alpha(t) \| \Delta v^{\theta}_{t-\Delta t/2} \|. $$
By assumption (A2), the guidance direction magnitude is bounded by $B$. Hence,
$$ \| \hat{x}_{t-1} - \tilde{x}_{t-1} \|  \leq \Delta t \alpha(t) B. $$
\end{proof}

\subsection{Bounded Distributional Deviation of the Rectified-CFG\texttt{++}}

We now formally establish the bounded distributional deviation of Rectified-CFG\texttt{++}.

\begin{proposition}[Bounded Distributional Deviation]
Let $p_t$ denote the true conditional distribution evolving under the true velocity $u_t$, and let \colorbox{green!15}{$\hat{p}_t$} denote the generated distribution evolving under the model’s effective velocity \colorbox{green!15}{$\hat{v}_t$}. Their evolution is governed by the continuity equation:
\begin{equation}
    \frac{\partial p_t}{\partial t} + \nabla \cdot (p_t u_t) = 0.
\end{equation}

The time evolution of their KL divergence is given by:
\begin{equation}
    \frac{d}{dt} D_{KL}(\hat{p}_t \| p_t)
    = \mathbb{E}_{x \sim \hat{p}_t}\big[ (\nabla \log \hat{p}_t(x) - \nabla \log p_t(x))^\top (\hat{v}_t(x) - u_t(x)) \big].    
\end{equation}

Then, the final distributional error of Rectified-CFG\texttt{++} is bounded as
\begin{equation}
    D_{\mathrm{KL}}(\hat{p}_{t}\|p_{t})
    \;\leq\;
    C_s \, \epsilon_v + C_s (B + L V_{\max}\Delta t) \int_0^t \alpha(\tau)\,{\rm d}\tau,
\end{equation}
where $C_s,\epsilon_v,B,L,V_{\max}$ are finite constants.
\end{proposition}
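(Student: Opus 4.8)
The plan is to control the growth rate $\frac{d}{dt}D_{\mathrm{KL}}(\hat p_t\|p_t)$ given in the statement and then integrate it, so the whole argument reduces to (i) a Cauchy--Schwarz split, (ii) a velocity-gap estimate, and (iii) a Gr\"onwall/direct integration. First I would apply Cauchy--Schwarz pointwise and then in the $\hat p_t$-expectation to the identity
\[
  \frac{d}{dt}D_{\mathrm{KL}}(\hat p_t\|p_t)
  = \mathbb{E}_{x\sim\hat p_t}\!\big[(\nabla\log\hat p_t - \nabla\log p_t)^{\!\top}(\hat v_t - u_t)\big],
\]
obtaining $\frac{d}{dt}D_{\mathrm{KL}}(\hat p_t\|p_t)\le \big(\sup_x\|\hat v_t(x)-u_t(x)\|\big)\cdot \mathbb{E}_{\hat p_t}\|\nabla\log\hat p_t-\nabla\log p_t\|$. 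The second factor is a score-mismatch quantity; under the standing regularity of the learned fields it is finite and uniformly bounded in $t$, and I would absorb its supremum over the sampling interval into the stability constant $C_s$. (One may instead keep it as $\sqrt{I(\hat p_t\|p_t)}$, with $I$ the relative Fisher information, and close a Gr\"onwall loop using a log-Sobolev constant; the uniform-bound route is cleaner and is what $C_s$ is meant to encode.)

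Next I would bound the velocity gap $\|\hat v_t-u_t\|$ using the explicit Rectified-CFG\texttt{++} field $\hat v_t = v^c_t + \alpha(t)\,\Delta v^{\theta}_{t-\Delta t/2}$. By the triangle inequality,
\[
  \|\hat v_t - u_t\| \le \|v^c_t - u_t\| + \alpha(t)\,\|\Delta v^{\theta}_{t-\Delta t/2}\|.
\]
The first term is the network's $\varepsilon$-accuracy, which we write as $\le \epsilon_v$. For the second, a further split gives $\|\Delta v^{\theta}_{t-\Delta t/2}\|\le \|\Delta v^{\theta}_t\| + \|\Delta v^{\theta}_{t-\Delta t/2}-\Delta v^{\theta}_t\|$, where the first piece is $\le B$ by (A2) and the second is $\le L V_{\max}\Delta t$ by Lemma~\ref{lemma:guidance_stability}. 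Hence $\|\hat v_t-u_t\|\le \epsilon_v + \alpha(t)(B+LV_{\max}\Delta t)$ uniformly in $x$.

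Combining the two displays yields $\frac{d}{dt}D_{\mathrm{KL}}(\hat p_t\|p_t)\le C_s\epsilon_v + C_s(B+LV_{\max}\Delta t)\,\alpha(t)$. I would then integrate this differential inequality from the initial time of sampling, where $\hat p$ and $p$ both coincide with the Gaussian prior $\mathcal N(0,I)$ so that $D_{\mathrm{KL}}=0$, up to time $t$:
\[
  D_{\mathrm{KL}}(\hat p_t\|p_t) \le C_s\epsilon_v\, t + C_s(B+LV_{\max}\Delta t)\int_0^t\alpha(\tau)\,{\rm d}\tau,
\]
and use $t\le 1$ to replace $C_s\epsilon_v t$ by $C_s\epsilon_v$, which is exactly the claimed bound; assumption (A3) guarantees the remaining integral is finite. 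The main obstacle is the score-mismatch factor $\mathbb{E}_{\hat p_t}\|\nabla\log(\hat p_t/p_t)\|$: this is the one place where a genuine extra hypothesis is needed, and making $C_s$ honest requires either a uniform bound on that quantity along the trajectory or a log-Sobolev/curvature condition on the path family $\{p_t\}$ to close the Gr\"onwall loop. Everything else is the routine velocity-decomposition bookkeeping above, invoking Lemma~\ref{lemma:guidance_stability} and assumptions (A1)--(A4).
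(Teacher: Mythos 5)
Your proof is correct and follows essentially the same route as the paper: Cauchy--Schwarz on the KL-derivative identity, a uniform score-mismatch constant $C_s$, the velocity-gap decomposition via $\varepsilon$-accuracy plus Lemma~\ref{lemma:guidance_stability}, then a direct integration of the differential inequality. You are slightly more explicit at two steps the paper glosses over (the triangle inequality giving $\|\Delta v^{\theta}_{t-\Delta t/2}\|\le B+LV_{\max}\Delta t$, and the absorption of the extra factor of $t$ multiplying $C_s\epsilon_v$ via $t\le 1$), and your caveat about the hypothesis hiding in $C_s$ is exactly the soft spot the paper addresses only loosely, by invoking the manifold-proximity result Lemma~\ref{lem:manifold_faithful} to claim both scores stay bounded in a tubular neighbourhood of $\mathcal{M}_t$.
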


\begin{proof}
Starting from the integrated KL evolution and applying the Cauchy–Schwarz inequality:
\[
\Bigl| D_{KL}(\hat{p}_{t}\|p_{t}) \Bigr|
\;\leq\; \int \mathbb{E}_{x\sim \hat{p}_t}\big[ \|\nabla \log \hat{p}_t - \nabla \log p_t\| \cdot \|\hat{v}_t - u_t\| \big]\,dt.
\]

We separately bound the velocity error and score error:

\noindent
\textbf{Velocity error:} For Rectified-CFG\texttt{++},
\[
\hat{v}_t \;=\; v^c_t \;+\; \alpha(t)\,\Delta v^{\theta}_{t-\Delta t/2}.
\]
From Lemma~\ref{lemma:guidance_stability} and the $\varepsilon$-accuracy assumption, we obtain
\[
\|\hat{v}_t - u_t\| \;\leq\; \epsilon_v + \alpha(t)(B + L V_{\max}\Delta t).
\]

\noindent
\textbf{Score error:} The predictor–corrector mechanism (Lemma~\ref{lem:manifold_faithful}) ensures the sampling process remains proximally close to the latent manifold, i.e. 
${\rm supp}(p_t)\subset T_{\delta(\mathcal{M})}$. 
Within this stable region, both $\nabla \log p_t$ and $\nabla \log \hat{p}_t$ remain bounded, 
and their difference is upper bounded by a finite constant $C_s$.

\noindent
\textbf{Combining:} Plugging these bounds into the KL derivative:
\[
\frac{d}{dt} D_{KL}(\hat{p}_t\|p_t)
\;\leq\;
\mathbb{E}_{x\sim \hat{p}_t}\!\big[ C_s \big(\epsilon_v + \alpha(t)(B + L V_{\max}\Delta t)\big) \big].
\]

Integrating from $t=1$ to $t=0$ yields the claimed final error bound, proving distributional stability.
\end{proof}

This result strengthens our central claim: the manifold-constraint of Rectified-CFG\texttt{++} ensures a stable, corrective evolution of the generated distribution.


\begin{figure}[!htbp]
    \centering
    \includegraphics[width=\linewidth]{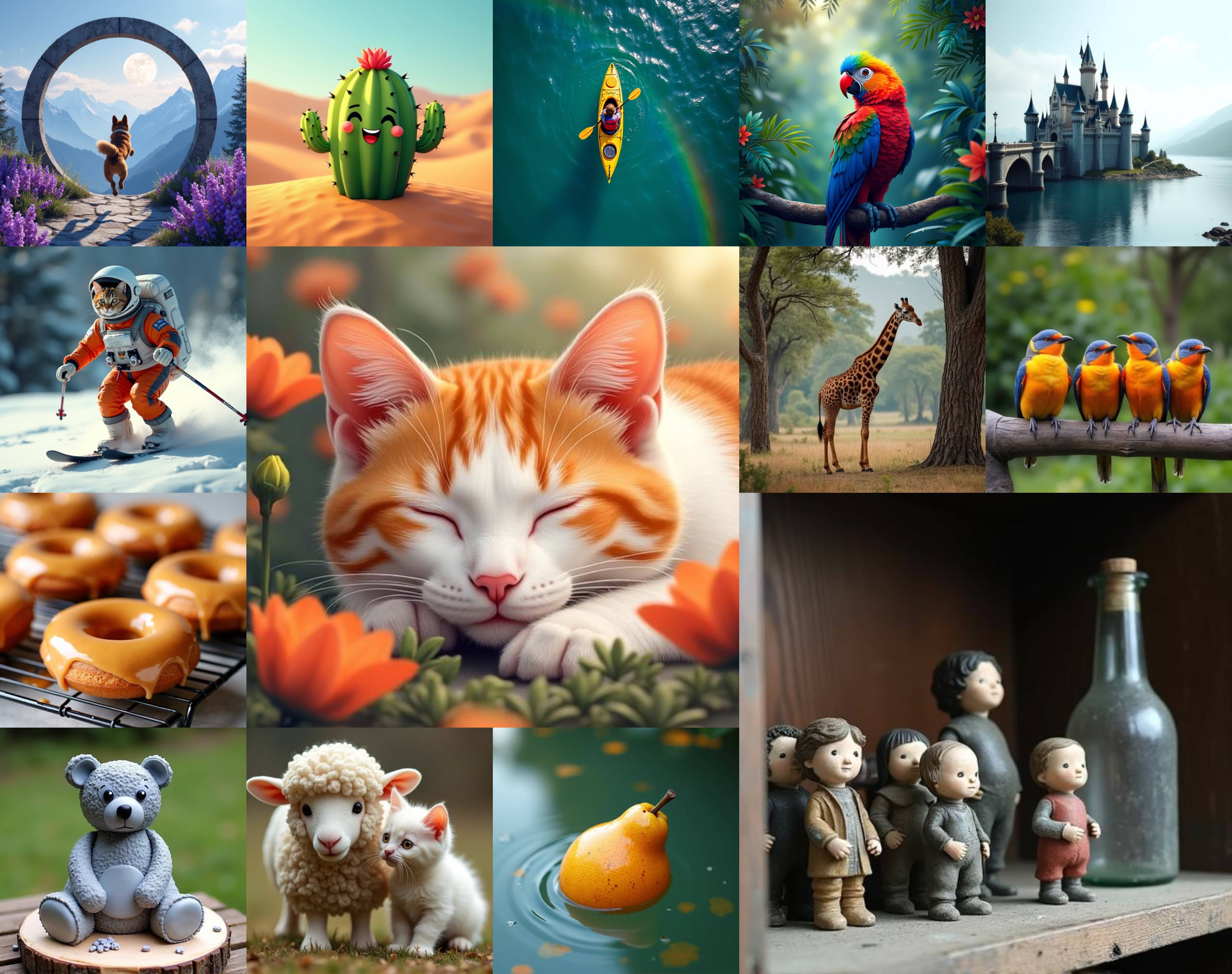}
    \caption{Samples produced by Flux-dev~\cite{flux2024} using Rectified-CFG++.}
    \label{fig:rect-cfg-samples}
\end{figure}

\section{Rectified-CFG++ Interpretation}
\subsection{Geometric intuition}
The overall Rectified-CFG\texttt{++} displacement is a linear combination of already-trusted directions (\colorbox{blue!15}{conditional} and \colorbox{gray!15}{unconditional})\footnote{%
No orthogonal component of the form \(\eta\,\Delta v_t\) with a \emph{new} \(\Delta v_t\notin T_{x_t}\mathcal{M}_t\) is introduced, in contrast to standard CFG when \(\omega\!>\!1\).}. Hence the trajectory is “projected” onto the local tangent plane at every step, preventing the dramatic colour saturation and structural distortions that may arise when trajectories leave \(\mathcal{M}_t\) (Fig.~\ref{fig:motivation_guidance_comparison}). Rectified-CFG++ sampling method can be viewed geometrically as a manifold-constrained trajectory refinement approach. 

Rectified-CFG++ first performs a conditional predictor step, projecting the latent state onto the learned manifold, then ensures that each intermediate representation remains manifold-aligned. Subsequently, the adaptive corrector step applies a controlled, manifold-aware adjustment towards the conditional trajectory. Geometrically (see Fig.~\ref{fig:fig-1-compare}), this two-step process ensures that trajectories smoothly traverse along the manifold, allowing precise guidance towards text-conditioned regions without manifold deviation or overshoot. Consequently, our method achieves both precise generation aligned with the text conditions, and stable intermediate states that avoid drifting off-manifold (see Fig.~\ref{fig:2-intermediate_cfg_vs_ours_fox}), significantly mitigating the artifacts typically induced by using CFG~\cite{ho2022classifier}.

\subsection{Enhanced Text Alignment and Manifold-Aware Generation} Rectified-CFG++ sampling achieves significantly improved text alignment by adaptively correcting trajectories closer to the underlying learned data manifold. Traditional CFG approaches often push generated images away from the natural manifold due to aggressive conditional updates, causing unnatural distortions and poor aesthetics. Geometrically (see Fig.~\ref{fig:fig-1-compare}), this controlled navigation across the latent space prevents manifold deviation, preserving intrinsic visual coherence and semantic consistency. Consequently, our method delivers images that not only more precisely match textual descriptions but also exhibit significantly enhanced quality, characterized by reduced visual artifacts, greater perceptual realism, and smoother, more natural intermediate representations. Because the update never strays far from \(\mathcal{M}_t\), the model can faithfully realize additional conditional signals (text prompts) without wasting capacity “returning” to the manifold. Empirically, this yields better text-alignment and lower FID scores across guidance scales. Lemma~\ref{lem:manifold_faithful} explains that improvement as a direct consequence of geometric consistency.

\subsection{Remark on guidance weights.}
Throughout this paper we have described Rectified-CFG++ as a combination of unconditional and conditional velocity fields with a time–dependent weight $\alpha(t)\!\in\!\mathbb R_{+}$:
\[
\colorbox{green!15}{$\hat{v}_{\lambda,t}$}=\; v_t^{c} \;+\; \alpha(t)\bigl(v_{t-\frac{1}{2}}^{c}-v_{t-\frac{1}{2}}^{u}\bigr).
\] 
For many flow–matching models (e.g.\ Flux) we obtain best results when \mbox{$0\!\le\!\alpha(t)\!\le\!1$}, yielding a true interpolation that keeps the trajectory firmly on–manifold. However, Rectified-CFG++ is not restricted to $\alpha(t)\!\le\!1$. On models where the initial sampling steps are noticeably dependent on conditional branches, we deliberately allowed $\alpha(t)\!>\!1$ during the early (high-noise) portion of the trajectory, then decay it below~1 as $t\!\to\!0$. The same predictor/corrector structure still applies; the method merely chooses a schedule that can pass through both interpolation and mild extrapolation regimes while remaining numerically stable. We therefore treat $\alpha(t)$ as a time-scheduled re-weighting rather than a strict convex coefficient:  
\begin{equation}
  \label{eq:alpha_schedule}
  \alpha(t)=\lambda_{\max}(1-t)^{\gamma},
  \qquad \lambda_{\max}\ge0,\;\gamma>0
\end{equation}
with $\lambda_{\max}$ tuned on a model basis.  When $\lambda_{\max}\!>\!1$ the early steps behave like a soft extrapolation, yet the empirical results in §\ref{sec:experiments} show that the rectified predictor–corrector architecture still prevents off-manifold divergences often observed when using naïve CFG.

\begin{figure}
    \centering
    \includegraphics[width=1\linewidth]{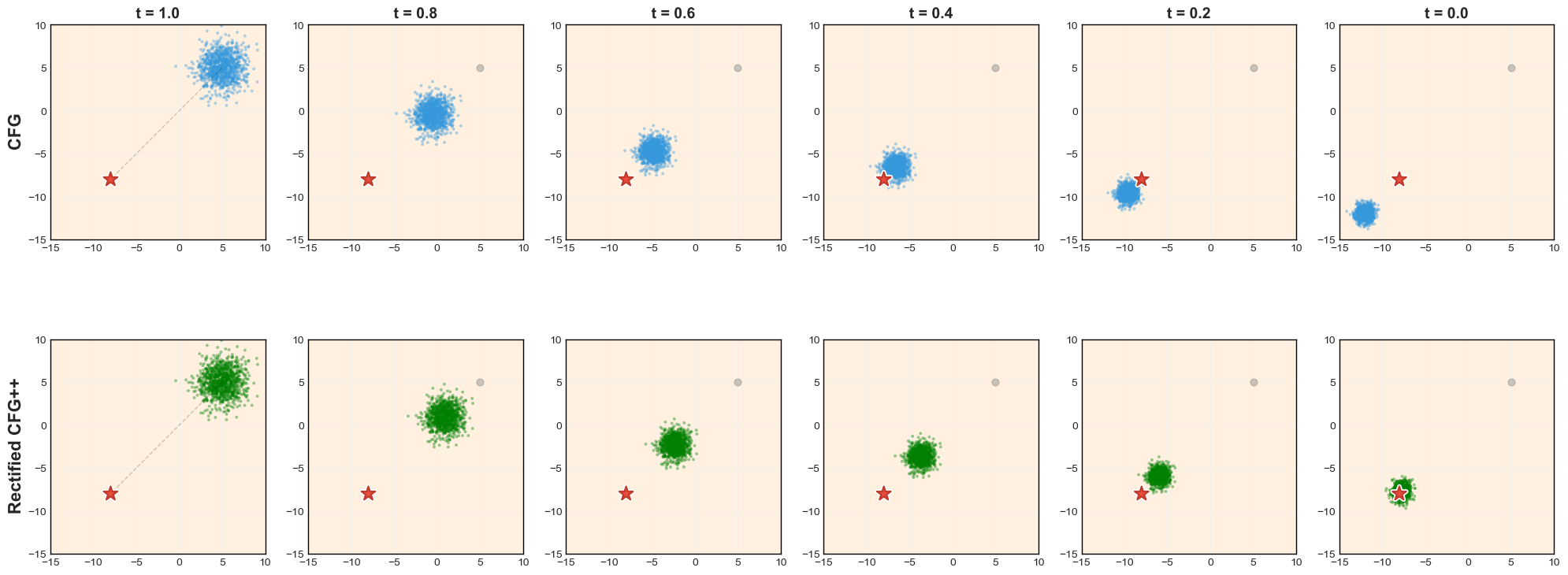}
    \caption{
    \textbf{Following~\cite{cfg-zero}, we show comparison of sampling trajectories under CFG (top row) and Rectified-CFG++ (bottom row)}. Each column shows the evolution of 200 latent samples from $t=1.0$ to $t=0.0$ (left to right).  
    \emph{Markers:} the blue (top) and green (bottom) points trace the sample positions; the red star marks the target. Under standard CFG, the trajectories initially drift off the learned transport manifold—pulling sharply toward the conditional target only at later steps—resulting in abrupt, off-manifold jumps. In contrast, Rectified-CFG++ maintains a smooth, on-manifold path: the predictor step keeps samples close to the learned flow, and the corrector applies a controlled interpolation that steadily guides them toward the target.
    }
    \label{fig:sampe-trajectory}
\end{figure}


\begin{algorithm}[H]
\caption{RF sampling with CFG}
\label{alg:normal_flux}
\begin{algorithmic}[1]
\Require Trained Flux model $v_{\theta}$, text condition $c$, time steps $N$, step size $\Delta t=1/N$.
\State $x_1 \sim p_Z(z)$ \Comment{Sample from noise distribution}
\For{$n=0,1,\dots,N-1$}
    \State $t_n = n\Delta t$
    \State $\hat{v}_{\theta} \gets (1-\omega)v^u_t + \omega v^c_t$ 
    \State $x_{t_{n-1}} \gets x_{t_n} + \Delta t\, \hat{v}_{\theta}$ \Comment{ODE}
\EndFor \\
\Return $x_0$
\end{algorithmic}
\end{algorithm}

\section{Related Work}
\label{sec:related}

\subsection{Diffusion Models}
Diffusion models (DMs) learn a stochastic (or deterministic) reverse process that gradually converts Gaussian noise into natural images. Pioneering score–based work~\cite{song2020score} and the DDPM formulation of~\cite{ho2020ddpm} established the foundations that later enabled large-scale text-to-image systems such as GLIDE~\cite{nichol2022glide}, DALLE~\cite{dalle}, Imagen~\cite{saharia2022photorealistic}, and Stable Diffusion~\cite{rombach2022high,sd32024}. Architectural innovations—e.g.\ latent-space diffusion~\cite{rombach2022high} improved sample quality and inference speed.

\subsection{Flow based Generative Models}
Normalizing flows (NFs) parameterize an invertible transformation with a tractable Jacobian determinant. Early discrete NFs (e.g.\ Glow~\cite{kingma2018glow}) were eclipsed by Continuous Normalizing Flows (CNFs) that solve an ODE defined by a neural velocity field~\cite{chen2018neural,grathwohl2018ffjord}. Recent flow–matching objectives cast generative modeling as learning a vector field that transports noise to data along a predefined schedule~\cite{lipman2023flow}. Rectified Flow (RF)~\cite{liu2023flow} shows that a simple mean-squared objective suffices, eliminating simulation noise and yielding fast ODE solvers. In the text-to-image domain, model like SD3~\cite{sd32024}, Lumina-Next~\cite{lumina-2}, and \textsc{Flux}~\cite{flux2024} combine an RF objective with a large multi-modal diffusion transformer to deliver competitive image quality.

\begin{wrapfigure}{r}{0.5\textwidth}
    \centering
    \includegraphics[width=\linewidth]{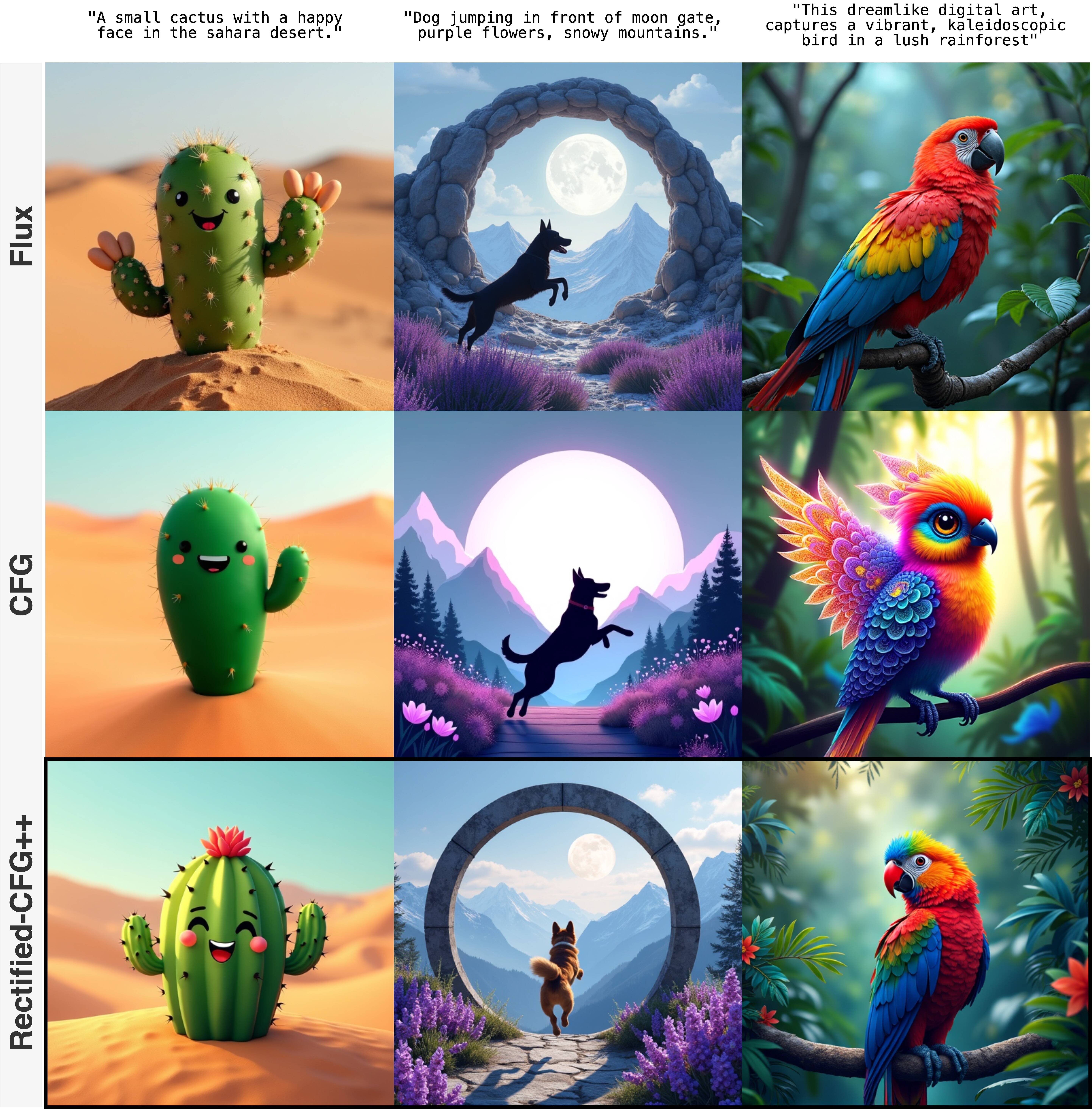}
    \caption{Comparison of T2I results using Flux, with CFG, and with Rectified-CFG++.}
    \label{fig:compare-2}
\end{wrapfigure}

\subsection{Guidance in Diffusion Models}
Classifier guidance (CG)~\cite{dhariwal2021diffusion} injects gradients from an external classifier but demands a high-accuracy auxiliary network. Classifier-Free Guidance (CFG)~\cite{ho2022classifier} sidesteps this requirement by training conditional and unconditional networks jointly and linearly extrapolating their predictions during sampling.  
While CFG is now ubiquitous~\cite{nichol2022glide,saharia2022photorealistic,dalle,sd32024}, the high guidance scale pushes samples off the data manifold, causing over-saturation and structural collapse~\cite{chung2024cfg++,sadat2024eliminating}. Recent work replaces the single extrapolation with adaptive weighting or updates in sampler: Dynamic thresholding~\cite{saharia2022photorealistic}, CADS~\cite{sadat2023cads}, ReCFG~\cite{recfg}, characteristic-guidance~\cite{zheng2023characteristic}, weight schedulers~\cite{wang2024analysis}, Interval guidance~\cite{kynkaanniemi2024applying}, CFG++~\cite{chung2024cfg++}, APG~\cite{sadat2024eliminating}, AutoG~\cite{karras2024guiding}, and step-limited CFG~\cite{kynkaanniemi2024applying}. All are designed for stochastic diffusions; they either cannot be translated to flow based models, or underperform or destabilize the ODE trajectory~\cite{cfg-zero}. 

CFG accumulates error over sampling steps, that scales with the norm of the unconditional velocity. These observations motivate our design of Rectified-CFG++: we reinterpret guidance as an interpolation in velocity-field space and embed it in an FM-compatible predictor–corrector ODE solver. By anchoring each predictor step with a conditional update to anchor the trajectory along the learned transport path and scheduling a purely interpolative corrector, we preserve the manifold geometry learned by the flow while still reaping the alignment gains of strong guidance. Extensive experiments show consistent improvements over vanilla CFG and its DM-centric variants across all flow based models. 

\begin{table}[t]
\centering
\scriptsize
\renewcommand{\arraystretch}{1.5}
\setlength{\tabcolsep}{0pt}
\caption{\textbf{Sampling update rules for various guidance strategies.} All methods operate in latent flow space using a velocity function \( v_\theta(z, t, \cdot) \). Rectified-CFG++ introduces a predictor-corrector formulation combining unconditional drift and conditional correction.}
\label{tab:sampling_equation_summary}
\begin{tabular}{l|c|c}
\toprule
\textbf{Method} & \textbf{Velocity Functions Used} & \textbf{Update Equation} \\
\midrule

\rowcolor{blue!15}
CFG & \( v_\theta(z, t, y),\; v_\theta(z, t, \emptyset) \) 
    & \( z_{t-1} = z_t + \Delta t \cdot \left[ (1-\omega) \cdot v_\theta(z_t, t, \emptyset) + \omega \cdot v_\theta(z_t, t, y) \right] \) \\

\rowcolor{blue!5}
APG & \( v_\theta(z, t, y),\; \Delta v_t^{(\eta, r, \beta)} \) 
     & \( z_{t-1} = z_t + \Delta t \cdot \left[ v_\theta(z_t, t, y) + \Delta v_t^{(\eta, r, \beta)} \right] \) \\

\rowcolor{blue!1}
CFG-Zero* & \( v_\theta(z, t, y),\; v_\theta(z, t, \emptyset) \) 
    
    & \( z_{t+1} = z_t + \Delta t \cdot \Big[ (1 - \omega) \cdot s_t^\star \cdot v_\theta(z_t, t, \emptyset) + \omega \cdot v_\theta(z_t, t, y) \Big] \) \\

\rowcolor{green!15}
\textbf{Rect.-CFG++} 
& \( v_\theta(z, t, y),\; v_\theta(z, t, \emptyset) \) 
& \begin{tabular}[c]{@{}c@{}} 
    \( z_{n+1} = z_n + \Delta t \cdot \Big[ v_\theta(z_n, t_n, y) + \alpha(t_n) \cdot \big( v_\theta(z_{n+\frac{\Delta}{2}}, t_n, y) - v_\theta(z_{n+\frac{\Delta}{2}}, t_n, \emptyset) \big) \Big] \)
  \end{tabular} \\

\bottomrule
\end{tabular}
\end{table}

\section{Additional Experiments}
\label{app:additional_experiments}

\subsection{Implementation Details}
\label{app:implemetation}
All experiments were conducted on a single NVIDIA A100 40 GB GPU. Code was written in Python 3.10, using PyTorch 2.0.1 and the latest HuggingFace Diffusers library. We evaluate four flow-based text-to-image backbones, taken from huggingface diffusers:
\begin{itemize}
  \item \textbf{Stable Diffusion 3~\cite{sd32024} (SD3)} and \textbf{3.5~\cite{sd32024} (SD3.5)}: public weights from \texttt{stabilityai/stable-diffusion-3-medium} \\ and \texttt{stabilityai/stable-diffusion-3.5-large}.
  \item \textbf{Flux-dev~\cite{flux2024}}: guidance-distilled Flux models from \texttt{black-forest-labs/FLUX.1-dev}.
  \item \textbf{Lumina~\cite{lumina-2}}: public weights from \texttt{Alpha-VLLM/Lumina-Image-2.0}.
\end{itemize}
All models generate $1024\times1024$ images from text prompts without additional fine-tuning.

\subsection{Details of User Study}
\label{app:subjective_study}

To assess perceptual quality and prompt fidelity, we conducted a blind four‐way forced-choice comparison subjective study. No personally identifiable information was collected and standard guidelines for interacting with human subjects were followed. There was no risk incurred and no vulnerable population.

\paragraph{Participants \& Prompts:}
We recruited 30 unique expert workers with knowledge of image processing, generative AI, computer vision, etc. Each worker was shown 32 distinct text prompts (e.g.\ “a number of people standing around a large group of luggage bags”), randomly sampled from our MS-COCO 10K~\cite{lin2014microsoft} subset and Pick-a-Pic 1K~\cite{kirstain2023pick}.

\paragraph{Interface \& Instructions:}
For each prompt, participants saw four generated images from a particular T2I model - one per method (CFG~\cite{ho2022classifier}, APG~\cite{sadat2024eliminating}, CFG-Zero*~\cite{cfg-zero}, and Rectified-CFG++) - in randomized order. The survey page (Fig.~\ref{fig:user_study_interface}) instructed them to select the best image on four factors:
\begin{itemize}
  \item \textbf{Detail:} fine structures and textures.
  \item \textbf{Naturalness \& Color:} realism of scene and color consistency.
  \item \textbf{Text Legibility:} clarity of any embedded text or signage.
  \item \textbf{Overall:} overall holistic preference.
\end{itemize}
Participants were encouraged to switch to a larger screen or zoom if necessary to inspect fine details. We repeated this for all four T2I models, i.e. SD3/3.5~\cite{sd32024}, Lumina 2.0~\cite{lumina-2}, and Flux~\cite{flux2024}.

\paragraph{Data Collection:}
Each $(\text{prompt},\text{generations})$ pair was rated by 30 independent expert participants, yielding $15360$ total responses across all four T2I models. Image positions and prompt order were fully randomized to mitigate presentation bias.

We aggregate per‐pair preferences for each method, the fraction of times it was chosen as best. As shown in Fig.~\ref{fig:user_study}, Rectified-CFG++ is preferred over all alternatives on Detail, Naturalness \& Color, Text Legibility, and Overall confirming its advantages in fine detail, color fidelity, and prompt adherence.

\begin{figure}[!htbp]
  \centering
  \begin{minipage}{\textwidth}
    \centering
    \includegraphics[width=0.85\textwidth]{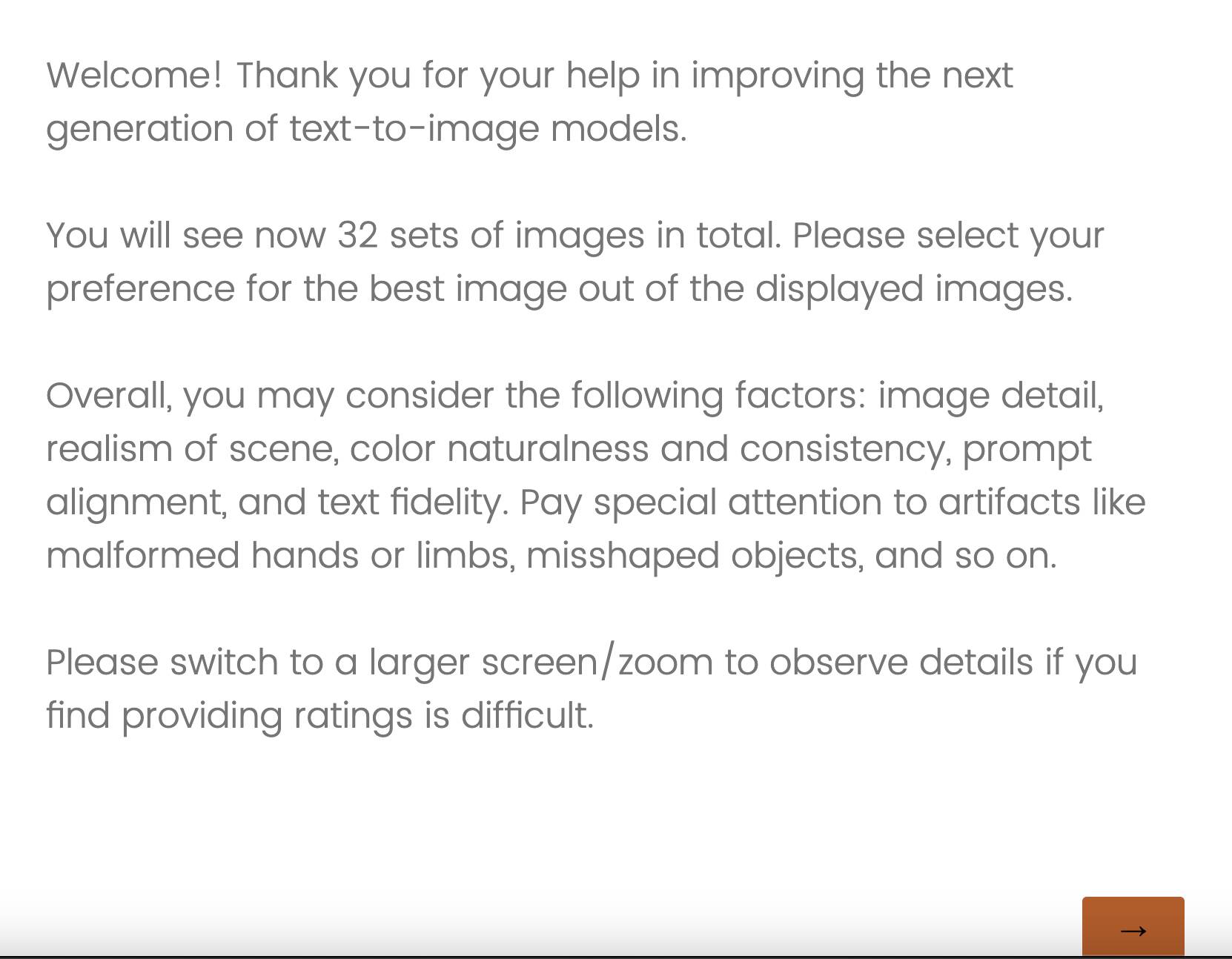}
  \end{minipage}\\[1ex]
  \begin{minipage}{0.48\textwidth}
    \centering
    \includegraphics[width=\textwidth]{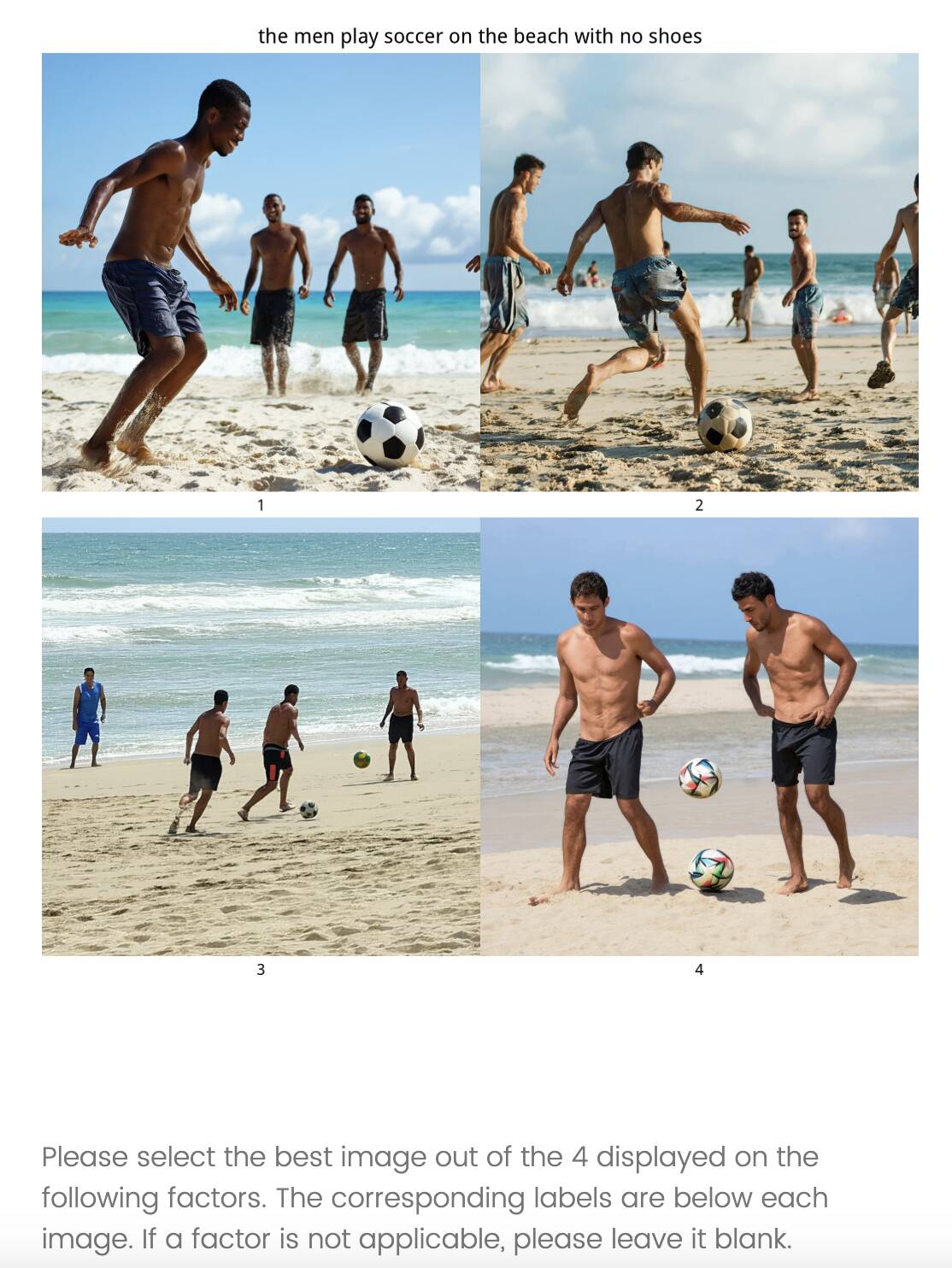}
  \end{minipage}
  \hfill
  \begin{minipage}{0.48\textwidth}
    \centering
    \includegraphics[width=\textwidth]{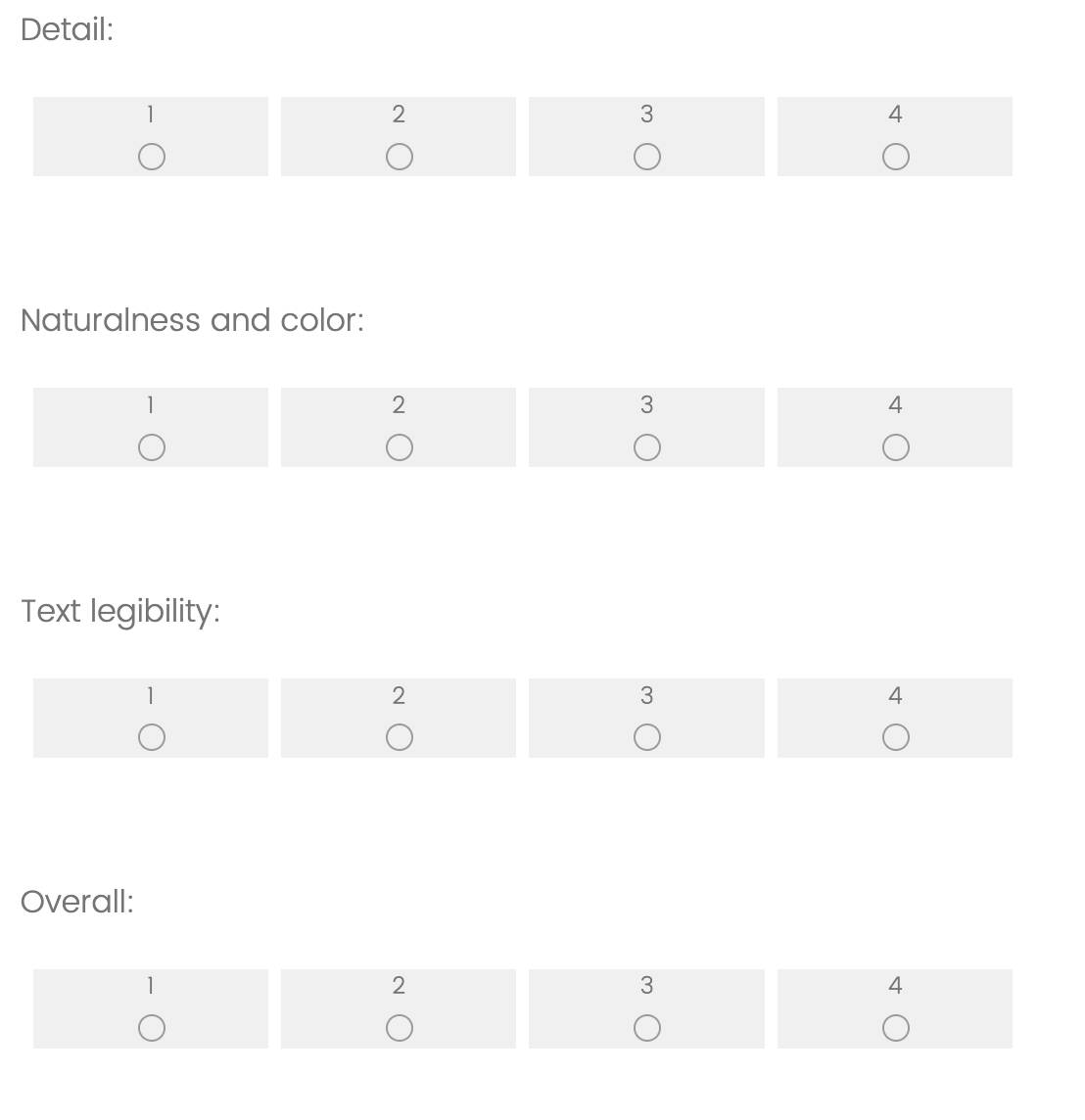}
  \end{minipage}
  \caption{
    \textbf{Interface for the user study.}
    \emph{Top:} participants first read detailed instructions on the evaluation criteria (detail, naturalness \& color, text legibility, overall) and usage guidelines (e.g.\ zooming, screen size). 
    \emph{Bottom left:} an example text prompt together with the four generated images shown in randomized order. 
    \emph{Bottom right:} the corresponding multiple‐choice rating options for each criterion, where workers select which of the four images best satisfies the given factor.
  }
  \label{fig:user_study_interface}
\end{figure}

\subsection{More Quantitative Results}
\label{app:more_quantitative}

Here, we report further metric‐based comparisons of Rectified-CFG++ across multiple datasets, models, guidance scales, and sampling budgets.

\paragraph{LAION-Aesthetic and Pick-a-Pic Evaluations:}
Table~\ref{tab:laion_quant_eval} summarizes performance on the LAION-Aesthetic 1K subset.  Rectified-CFG++ consistently lowers FID and improves CLIP‐Score, ImageReward, PickScore and HPSv2 across all four backbones.  For example, on Flux-dev the FID drops from 120.13 to 112.19, while ImageReward jumps from 0.0968 to 0.6849. Table~\ref{tab:pickapic_quant_eval} presents results on the Pick-a-Pic 1K prompts. Rectified-CFG++ yields uniformly higher CLIP‐Score, Aesthetic, ImageReward, PickScore and HPSv2. 

\paragraph{Guidance Scale Ablations:}
Table~\ref{tab:flux_multiscale_eval_no_aesthetic} reports FID, CLIP and ImageReward for Flux-dev~\cite{flux2024} under six different guidance scales $(\omega,\lambda)$.  Across all settings—from mild to aggressive guidance—Rectified-CFG++ matches or exceeds CFG, with best results highlighted in orange. Table~\ref{tab:sd3_sd35_flux_multiscale_all_datasets_extended} extends this multi‐scale comparison to SD3~\cite{sd32024} and SD3.5~\cite{sd32024} on both MS-COCO-1K and LAION-Aesthetic-1K. 

\paragraph{Sampling Step Ablations:}
Finally, Tables~\ref{tab:flux_nfe_metricwise}, \ref{tab:sd3_nfe_metricwise} and \ref{tab:sd35_nfe_metricwise} compare standard CFG and Rectified-CFG++ as the number of function evaluations (NFEs) varies from 5 to 60.  Even with as few as 5 NFEs, Rectified-CFG++ reduces Flux~\cite{flux2024}’s FID from 177.8 to 71.2 and boosts ImageReward by over 2.4 points.  Similar gains are observed on SD3~\cite{sd32024} and SD3.5~\cite{sd32024}: at 15 NFEs, SD3’s FID falls from 72.7 to 69.1, and at 28 NFEs SD3.5’s ImageReward rises from 0.72 to 0.77. These results confirm that Rectified-CFG++ not only improves ultimate quality but also accelerates convergence under limited sampling budgets.


\subsection{More Qualitative Results}
\label{app:more_qualitative}

To complement our quantitative evaluation, we present extensive qualitative evaluations across four state-of-the-art flow-based text-to-image backbones (SD3~\cite{sd32024}, SD3.5~\cite{sd32024}, Flux-dev~\cite{flux2024}, and Lumina 2.0~\cite{lumina-2}). In each case we select diverse, challenging prompts—ranging from signage and typography to fantasy scenes, and text‐heavy compositions—and show side-by-side renderings in Figures~\ref{fig:appendix_sd3_comparison}–\ref{fig:appendix_lumina_comparison}.  

\paragraph{Improved Prompt Fidelity and Detail:}  
Across all models, Rectified-CFG++ better captures the precise wording, style, and layout of complex text prompts. In Figure~\ref{fig:appendix_sd3_comparison}~ (top) the “Welcome to Dustvale” billboard exhibits crisp, correctly proportioned lettering under Rectified-CFG++, whereas CFG renders unclear and distant characters. Similarly, for the “Elixir of Time” grimoire (Figure~\ref{fig:appendix_sd3_comparison}), our method preserves fine runic serifs and balanced illumination, avoiding the blotchy over-saturation and gibberish text seen with CFG.

\paragraph{Enhanced Geometry and Color Balance:}  
Rectified-CFG++ produces more coherent object shapes and natural color distributions. In the ruined observatory prompt (Figure~\ref{fig:appendix_sd3_comparison}, middle left), the dome geometry remains intact and the night-sky hues appear smoothly graded, in contrast to the heavy color clipping and warped glass panes under CFG. 

\paragraph{Robustness on Artistic and Text‐Intensive Tasks:}  
In text‐heavy or highly stylized contexts (Figures~\ref{fig:text_quality_page1}–\ref{fig:text_quality_page2}), CFG often fails to form legible letters or distorts ornamented scripts, whereas Rectified-CFG++ maintains semantic clarity and faithful adherence to prompt instructions. For example, the medieval scroll (“Quest Accepted”) and the glowing “IGNIS SCRIPTUM” spell circle are rendered with sharp, even strokes only under our method.

\paragraph{Stable Intermediate Trajectories:}  
Figure~\ref{fig:intermediate_denoising_compare} visualizes successive denoised latents for two prompts using both CFG and Rectified-CFG++. While CFG trajectories diverge off-manifold—yielding oversaturated patches and incoherent forms in early timesteps, Rectified-CFG++ remains tightly clustered, preserving anatomical and geometric consistency at every step. Even with only 7 NFEs (Figure~\ref{fig:intermediate_compare_cfg_vs_rectcfg}), our sampler produces high‐fidelity results far sooner, demonstrating accelerated convergence.

\paragraph{Generalization Across Models:}  
Figures~\ref{fig:appendix_sd35_comparison}, \ref{fig:appendix_flux_comparison} and \ref{fig:appendix_lumina_comparison} confirm that these qualitative gains extend across all flow-based models tested - SD3~\cite{sd32024}, SD3.5~\cite{sd32024}, Flux-dev~\cite{flux2024}, and Lumina 2.0~\cite{lumina-2}.  Whether generating playful scenes (“a cat in a space suit skiing”), hyper-realistic product shots (“leaf-covered Porsche”), or fantastical landscapes (floating island cities, glowing jellyfish cathedrals), Rectified-CFG++ consistently yields crisper details, fewer artifacts, and stronger alignment to both text and style cues.

Together, these qualitative examples illustrate that the manifold-aware update of Rectified-CFG++ not only improves objective metrics but also delivers visibly superior images in a wide variety of challenging text-to-image scenarios.

\section{Failure Cases and Limitations}
\label{app:failures_limitations}

Although Rectified-CFG++ greatly reduces off-manifold artifacts, we observe that, for prompts requiring multiple interacting objects the method sometimes misplaces secondary elements or fails to respect relative scale. On further investigation, we observe that these limitations arise from underlying T2I model, and is consistent across all guidance methods. Our approach, being entirely training-free, inherits the dependence on pretrained velocity accuracy, any systematic bias or normal-space drift in $v_\theta$ may propagate through Rectified-CFG++.

\section{Ethics Statement}
Given the rapid progress of generative models, it has become easier than ever to produce convincing—but potentially misleading—synthetic content. Although such tools unlock new efficiencies and creative avenues, they also raise important ethical challenges. Readers interested in a deeper treatment of these issues are referred to the discussion in~\cite{rostamzadeh2021ethics}.

\section{Broader Impact Statement}
\textbf{Social impact:} Image generation with flow-based models potentially has both positive and negative social impact. This method provides a handy tool to the general public for generating a wide variety of images which can help visualize their artistic ideas. On the other hand, our work on improving sampling quality in these models poses a risk of generating art that closely mimics or infringes upon existing copyrighted material, leading to legal and ethical issues. More broadly, our method inherits the risks from T2I models which are capable of generating fake content that can be misused by malicious users.

\textbf{Safeguards:} This work builds upon the official implementations and pre-trained weights of the foundation models referenced in the main text. These methods along with the diffusers library has a mechanism to filter offensive image generations. Our method Rectified-CFG++ inherits these safeguards. 

\textbf{Reproducibility:} Apart from the pseudocode and implementation details provided in the paper, the source code is available on the project page: https://rectified-cfgpp.github.io/.

\begin{table}[!htbp]
\centering
\scriptsize
\caption{\textbf{Quantitative evaluation of Rectified-CFG++ across T2I models on LAION-Aesthetic 1K samples. Best values highlighted in \textcolor{orange}{orange}, second-best in \textcolor{gray}{gray}.}}
\label{tab:laion_quant_eval}
\resizebox{\textwidth}{!}{
\begin{tabular}{ll|cccccc}
\toprule
\rowcolor{white}
\textbf{Model} & \textbf{Guidance} & \textbf{FID} $\downarrow$ & \textbf{CLIP} $\uparrow$ & \textbf{Aesthetic} $\uparrow$ & \textbf{ImageReward} $\uparrow$ & \textbf{PickScore} $\uparrow$ & \textbf{HPSv2} $\uparrow$ \\
\midrule

\multirow{2}{*}{\textbf{Lumina~\cite{lumina-2}}} & CFG & \cellcolor{gray!15}112.3344 & \cellcolor{gray!15}0.2717 & \cellcolor{gray!15}5.6823 & \cellcolor{orange!15}0.4173 & \cellcolor{orange!15}0.5913 & \cellcolor{orange!15}0.2324 \\
& \cellcolor{green!15}\textbf{Rect. CFG++ (Ours)} & \cellcolor{orange!15}\textbf{110.4973} & \cellcolor{orange!15}\textbf{0.2771} & \cellcolor{orange!15}\textbf{5.6823} & \cellcolor{gray!15}\textbf{0.4108} & \cellcolor{gray!15}\textbf{0.4087} & \cellcolor{gray!15}\textbf{0.2098} \\
\midrule

\multirow{2}{*}{\textbf{SD3~\cite{sd32024}}} 
& CFG & \cellcolor{gray!15}107.2530 & \cellcolor{gray!15}0.3092 & \cellcolor{orange!15}6.0328 & \cellcolor{gray!15}0.5800 & \cellcolor{gray!15}0.4708 & \cellcolor{gray!15}0.2464 \\
& \cellcolor{green!15}\textbf{Rect. CFG++ (Ours)} & \cellcolor{orange!15}\textbf{105.9037} & \cellcolor{orange!15}\textbf{0.3125} & \cellcolor{gray!15}\textbf{5.9750} & \cellcolor{orange!15}\textbf{0.6840} & \cellcolor{orange!15}\textbf{0.5292} & \cellcolor{orange!15}\textbf{0.2549} \\
\midrule

\multirow{2}{*}{\textbf{SD3.5~\cite{sd32024}}} 
& CFG & \cellcolor{gray!15}108.4751 & \cellcolor{gray!15}0.3162 & \cellcolor{orange!15}6.1245 & \cellcolor{gray!15}0.6984 & \cellcolor{gray!15}0.4798 & \cellcolor{gray!15}0.2543 \\
& \cellcolor{green!15}\textbf{Rect. CFG++ (Ours)} & \cellcolor{orange!15}\textbf{107.3915} & \cellcolor{orange!15}\textbf{0.3164} & \cellcolor{gray!15}\textbf{5.9528} & \cellcolor{orange!15}\textbf{0.7635} & \cellcolor{orange!15}\textbf{0.5202} & \cellcolor{orange!15}\textbf{0.2569} \\
\midrule

\multirow{2}{*}{\textbf{Flux-dev~\cite{flux2024}}} 
& CFG & \cellcolor{gray!15}120.1258 & \cellcolor{gray!15}0.2939 & \cellcolor{gray!15}4.8033 & \cellcolor{gray!15}0.0968 & \cellcolor{gray!15}0.3469 & \cellcolor{gray!15}0.2181 \\
& \cellcolor{green!15}\textbf{Rect. CFG++ (Ours)} & \cellcolor{orange!15}\textbf{112.1902} & \cellcolor{orange!15}\textbf{0.3065} & \cellcolor{orange!15}\textbf{5.5694} & \cellcolor{orange!15}\textbf{0.6849} & \cellcolor{orange!15}\textbf{0.6531} & \cellcolor{orange!15}\textbf{0.2518} \\

\bottomrule
\end{tabular}
}
\end{table}
\vspace{20pt}

\begin{table}[!htbp]
\centering
\scriptsize
\caption{\textbf{Quantitative Evaluation of Rectified-CFG++ Across T2I Models on Pick-a-Pic 1K samples. Best values highlighted in \textcolor{orange}{orange}, second-best in \textcolor{gray}{gray}.}}
\label{tab:pickapic_quant_eval}
\resizebox{\textwidth}{!}{%
\begin{tabular}{ll|ccccc}
\toprule
\rowcolor{white}
\textbf{Model} & \textbf{Guidance} & \textbf{CLIP} $\uparrow$ & \textbf{Aesthetic} $\uparrow$ & \textbf{ImageReward} $\uparrow$ & \textbf{PickScore} $\uparrow$ & \textbf{HPSv2} $\uparrow$ \\
\midrule

\multirow{2}{*}{\textbf{Lumina~\cite{lumina-2}}} & CFG & \cellcolor{gray!15}0.3336 & \cellcolor{gray!15}5.6996 & \cellcolor{orange!15}1.0080 & \cellcolor{orange!15}0.5841 & \cellcolor{gray!15}0.2910 \\
& \cellcolor{green!15}\textbf{Rect. CFG++ (Ours)} & \cellcolor{orange!15}\textbf{0.3378} & \cellcolor{orange!15}\textbf{5.8770} & \cellcolor{gray!15}\textbf{0.7621} & \cellcolor{gray!15}\textbf{0.4159} &\cellcolor{orange!15} \textbf{0.2982} \\
\midrule

\multirow{2}{*}{\textbf{SD3~\cite{sd32024}}} 
& CFG  & \cellcolor{gray!15}0.3453 & \cellcolor{orange!15}5.7286 & \cellcolor{gray!15}0.8268 & \cellcolor{gray!15}0.4908 & \cellcolor{gray!15}0.2859 \\
& \cellcolor{green!15}\textbf{Rect. CFG++ (Ours)} &\cellcolor{orange!15}\textbf{0.3487} & \cellcolor{gray!15}\textbf{5.6441} & \cellcolor{orange!15}\textbf{0.9364} & \cellcolor{orange!15}\textbf{0.5092} & \cellcolor{orange!15}\textbf{0.2933} \\
\midrule

\multirow{2}{*}{\textbf{SD3.5~\cite{sd32024}}} 
& CFG & \cellcolor{gray!15}0.3551 & \cellcolor{gray!15}6.0411 & \cellcolor{gray!15}1.0181 & \cellcolor{orange!15}0.5211 & \cellcolor{gray!15}0.2980 \\
& \cellcolor{green!15}\textbf{Rect. CFG++ (Ours)}  & \cellcolor{orange!15}\textbf{0.3564} & \cellcolor{orange!15}\textbf{6.8767} & \cellcolor{orange!15}\textbf{1.0267} & \cellcolor{gray!15}\textbf{0.4789} & \cellcolor{orange!15}\textbf{0.2996} \\
\midrule

\multirow{2}{*}{\textbf{Flux-dev~\cite{flux2024}}} 
& CFG & \cellcolor{gray!15}0.3312 & \cellcolor{gray!15}5.1419 & \cellcolor{gray!15}0.5336 & \cellcolor{gray!15}0.3428 & \cellcolor{gray!15}0.2609 \\
& \cellcolor{green!15}\textbf{Rect. CFG++ (Ours)} & \cellcolor{orange!15}\textbf{0.3406} & \cellcolor{orange!15}\textbf{5.8455} & \cellcolor{orange!15}\textbf{0.9641} & \cellcolor{orange!15}\textbf{0.6572} & \cellcolor{orange!15}\textbf{0.2974} \\

\bottomrule
\end{tabular}
}
\end{table}
\vspace{20pt}

\begin{table}[!htbp]
\centering
\small
\caption{\textbf{Multi-scale quantitative evaluation of the Flux~\cite{flux2024} model (28 NFEs) on MS-COCO 1K and LAION-Aesthetics 1K.} We implemented Flux~\cite{flux2024} using both standard CFG~\cite{ho2022classifier} and Rectified-CFG++ as the guidance scales $(\omega, \lambda)$ were varied. Lower ($\downarrow$) FID and higher ($\uparrow$) CLIP and ImageReward scores indicate better performance. Best values highlighted in \textcolor{orange}{orange}, second-best in \textcolor{gray}{gray}. (Best viewed zoomed in.)}
\label{tab:flux_multiscale_eval_no_aesthetic}
\resizebox{\textwidth}{!}{%
\begin{tabular}{l|ccc|ccc|ccc|ccc|ccc|ccc}
\toprule
\multirow{2}{*}{\textbf{Method}} & 
\multicolumn{3}{c|}{$\omega=1.5$, $\lambda=0.2$} &
\multicolumn{3}{c|}{$\omega=3.0$, $\lambda=0.3$} &
\multicolumn{3}{c|}{$\omega=3.5$, $\lambda=0.5$} &
\multicolumn{3}{c|}{$\omega=4.0$, $\lambda=0.7$} &
\multicolumn{3}{c|}{$\omega=6.0$, $\lambda=1.0$} &
\multicolumn{3}{c}{$\omega=10.0$, $\lambda=1.2$} \\
\cmidrule(lr){2-4}
\cmidrule(lr){5-7}
\cmidrule(lr){8-10}
\cmidrule(lr){11-13}
\cmidrule(lr){14-16}
\cmidrule(lr){17-19}
& \textbf{FID}$\downarrow$ & \textbf{CLIP}$\uparrow$ & \textbf{ImgRwd}$\uparrow$
& \textbf{FID}$\downarrow$ & \textbf{CLIP}$\uparrow$ & \textbf{ImgRwd}$\uparrow$
& \textbf{FID}$\downarrow$ & \textbf{CLIP}$\uparrow$ & \textbf{ImgRwd}$\uparrow$
& \textbf{FID}$\downarrow$ & \textbf{CLIP}$\uparrow$ & \textbf{ImgRwd}$\uparrow$
& \textbf{FID}$\downarrow$ & \textbf{CLIP}$\uparrow$ & \textbf{ImgRwd}$\uparrow$
& \textbf{FID}$\downarrow$ & \textbf{CLIP}$\uparrow$ & \textbf{ImgRwd}$\uparrow$ \\
\midrule

\multicolumn{19}{l}{\textcolor{red}{\textbf{MS-COCO 1K}}} \\
CFG & 
\cellcolor{orange!15}73.7315 & \cellcolor{orange!15}0.3451 & \cellcolor{gray!15}0.9973 &
\cellcolor{gray!15}85.1933 & \cellcolor{gray!15}0.3283 & \cellcolor{gray!15}0.4762 &
\cellcolor{gray!15}96.3729 & \cellcolor{gray!15}0.3147 & \cellcolor{gray!15}0.1467 &
\cellcolor{gray!15}105.9574 & \cellcolor{gray!15}0.3052 & \cellcolor{gray!15}-0.1258 &
\cellcolor{gray!15}130.1050 & \cellcolor{gray!15}0.2694 & \cellcolor{gray!15}-0.8706 &
\cellcolor{gray!15}146.9677 & \cellcolor{gray!15}0.2363 & \cellcolor{gray!15}-1.4388 \\

\cellcolor{green!15}\textbf{Rect-CFG++ } & 
\cellcolor{gray!15}\textbf{74.2674} & \cellcolor{gray!15}\textbf{0.3445} & \cellcolor{orange!15}\textbf{1.0022} &
\cellcolor{orange!15}\textbf{74.6608} & \cellcolor{orange!15}\textbf{0.3449} & \cellcolor{orange!15}\textbf{1.0030} &
\cellcolor{orange!15}\textbf{75.6161} & \cellcolor{orange!15}\textbf{0.3446} & \cellcolor{orange!15}\textbf{1.0248} &
\cellcolor{orange!15}\textbf{75.3240} & \cellcolor{orange!15}\textbf{0.3462} & \cellcolor{orange!15}\textbf{1.0274} &
\cellcolor{orange!15}\textbf{75.4086} & \cellcolor{orange!15}\textbf{0.3462} & \cellcolor{orange!15}\textbf{1.0241} &
\cellcolor{orange!15}\textbf{76.1754} & \cellcolor{orange!15}\textbf{0.3462} & \cellcolor{orange!15}\textbf{1.0434} \\

\midrule
\multicolumn{19}{l}{\textcolor{red}{\textbf{LAION-Aesthetic 1K}}} \\
CFG & 
\cellcolor{orange!15}68.8747 & \cellcolor{orange!15}0.3061 & \cellcolor{gray!15}0.6808 &
\cellcolor{gray!15}72.4575 & \cellcolor{gray!15}0.3006 & \cellcolor{gray!15}0.3201 &
\cellcolor{gray!15}85.4752  & \cellcolor{gray!15}0.2856 & \cellcolor{gray!15}-0.1378 &
\cellcolor{gray!15}96.2533 & \cellcolor{gray!15}0.2707 & \cellcolor{gray!15}-0.5708 &
\cellcolor{gray!15}107.0080  & \cellcolor{gray!15}0.2518 & \cellcolor{gray!15}-0.9278 &
\cellcolor{gray!15}131.8580 & \cellcolor{gray!15}0.2183 & \cellcolor{gray!15}-1.4793 \\

\cellcolor{green!15}\textbf{Rect-CFG++} & 
\cellcolor{gray!15}\textbf{69.4215} & \cellcolor{gray!15}\textbf{0.3023} & \cellcolor{orange!15}\textbf{0.6844} &
\cellcolor{orange!15}\textbf{69.1240} & \cellcolor{orange!15}\textbf{0.3054} & \cellcolor{orange!15}\textbf{0.7091} &
\cellcolor{orange!15}\textbf{68.7578} & \cellcolor{orange!15}\textbf{0.3072} & \cellcolor{orange!15}\textbf{0.7033} &
\cellcolor{orange!15}\textbf{68.3281} & \cellcolor{orange!15}\textbf{0.3094} & \cellcolor{orange!15}\textbf{0.7281} &
\cellcolor{orange!15}\textbf{68.4089} & \cellcolor{orange!15}\textbf{0.3092} & \cellcolor{orange!15}\textbf{0.7396} &
\cellcolor{orange!15}\textbf{68.3509} & \cellcolor{orange!15}\textbf{0.3103} & \cellcolor{orange!15}\textbf{0.7356} \\

\bottomrule
\end{tabular}
}
\end{table}
\vspace{20pt}

\begin{table}[!htbp]
\centering
\small
\caption{\textbf{Multi-scale quantitative evaluation of the SD3~\cite{sd32024} and SD3.5~\cite{sd32024} T2I models using CFG and Rectified-CFG++ (28 NFEs) on the MS-COCO 1K and LAION-Aesthetic 1K datasets, as the guidance scales $(\omega, \lambda)$ were varied}. Lower FID and higher CLIP ImageReward indicate better performance. (Best viewed zoomed in.)}
\label{tab:sd3_sd35_flux_multiscale_all_datasets_extended}
\resizebox{\textwidth}{!}{%
\begin{tabular}{ll|ccc|ccc|ccc|ccc|ccc|ccc}
\toprule
\multirow{2}{*}{\textbf{Model}} & \multirow{2}{*}{\textbf{Guidance}} &
\multicolumn{3}{c|}{$\omega=2.0$, $\lambda=2.0$} &
\multicolumn{3}{c|}{$\omega=3.0$, $\lambda=3.5$} &
\multicolumn{3}{c|}{$\omega=3.5$, $\lambda=5.0$} &
\multicolumn{3}{c|}{$\omega=4.5$, $\lambda=7.0$} &
\multicolumn{3}{c|}{$\omega=6.0$, $\lambda=9.0$} &
\multicolumn{3}{c}{$\omega=10.0$, $\lambda=12.0$} \\
\cmidrule(lr){3-5}
\cmidrule(lr){6-8}
\cmidrule(lr){9-11}
\cmidrule(lr){12-14}
\cmidrule(lr){15-17}
\cmidrule(lr){18-20}

& & \textbf{FID}$\downarrow$ & \textbf{CLIP}$\uparrow$ & \textbf{ImgRwd}$\uparrow$
  & \textbf{FID}$\downarrow$ & \textbf{CLIP}$\uparrow$ & \textbf{ImgRwd}$\uparrow$
  & \textbf{FID}$\downarrow$ & \textbf{CLIP}$\uparrow$ & \textbf{ImgRwd}$\uparrow$
  & \textbf{FID}$\downarrow$ & \textbf{CLIP}$\uparrow$ & \textbf{ImgRwd}$\uparrow$
  & \textbf{FID}$\downarrow$ & \textbf{CLIP}$\uparrow$ & \textbf{ImgRwd}$\uparrow$
  & \textbf{FID}$\downarrow$ & \textbf{CLIP}$\uparrow$ & \textbf{ImgRwd}$\uparrow$ \\
\midrule

\multicolumn{20}{l}{\textcolor{red}{\textbf{MS-COCO 1K}}} \\

\multirow{2}{*}{\textbf{SD3}}
& CFG
&
\cellcolor{orange!15}65.6608 & \cellcolor{gray!15}0.3407 & \cellcolor{gray!15}0.6658 &
\cellcolor{gray!15}68.5913 & \cellcolor{orange!15}0.3469 & \cellcolor{gray!15}0.9180 &
\cellcolor{gray!15}69.5383 & \cellcolor{orange!15}0.3486 & \cellcolor{orange!15}1.0035 &
\cellcolor{gray!15}70.4443 & \cellcolor{orange!15}0.3491 & \cellcolor{orange!15}1.0162 &
\cellcolor{gray!15}69.8652 & \cellcolor{orange!15}0.3477 & \cellcolor{orange!15}1.0292 &
\cellcolor{gray!15}70.4416 & \cellcolor{orange!15}0.3432 & \cellcolor{orange!15}0.9015 \\

& \cellcolor{green!15}\textbf{Rectified-CFG++ }
&
\cellcolor{gray!15}\textbf{66.6097} & \cellcolor{orange!15}\textbf{0.3456} & \cellcolor{orange!15}\textbf{0.9037} &
\cellcolor{orange!15}\textbf{67.7332} & \cellcolor{gray!15}\textbf{0.3467} & \cellcolor{orange!15}\textbf{0.9739} &
\cellcolor{orange!15}\textbf{67.7651} & \cellcolor{gray!15}\textbf{0.3463} & \cellcolor{gray!15}\textbf{0.9884} &
\cellcolor{orange!15}\textbf{67.9835} & \cellcolor{gray!15}\textbf{0.3476} & \cellcolor{gray!15}\textbf{1.0156} &
\cellcolor{orange!15}\textbf{68.9262} & \cellcolor{gray!15}\textbf{0.3475} & \cellcolor{gray!15}\textbf{1.0067} &
\cellcolor{orange!15}\textbf{69.7212} & \cellcolor{gray!15}\textbf{0.3394} & \cellcolor{gray!15}\textbf{0.7768} \\

\multirow{2}{*}{\textbf{SD3.5}}
& CFG
&
\cellcolor{orange!15}66.9723 & \cellcolor{gray!15}0.3468 & \cellcolor{gray!15}0.9239 &  
\cellcolor{orange!15}67.7133 & \cellcolor{orange!15}0.3515 & \cellcolor{orange!15}1.0530 &  
\cellcolor{gray!15}67.9481 & \cellcolor{orange!15}0.3518 & \cellcolor{gray!15}1.0584 &  
\cellcolor{gray!15}68.2184 & \cellcolor{gray!15}0.3509 & \cellcolor{gray!15}1.0522 &  
\cellcolor{gray!15}69.0347 & \cellcolor{gray!15}0.3476 & \cellcolor{gray!15}0.9633 &  
\cellcolor{orange!15}74.7052 & \cellcolor{orange!15}0.3388 & \cellcolor{orange!15}0.7214 \\ 

& \cellcolor{green!15}\textbf{Rectified-CFG++ }
&
\cellcolor{gray!15}\textbf{67.3784} & \cellcolor{orange!15}\textbf{0.3506} & \cellcolor{orange!15}\textbf{1.0410} &  
\cellcolor{gray!15}\textbf{67.8372} & \cellcolor{gray!15}\textbf{0.3505} & \cellcolor{gray!15}\textbf{1.0558} &  
\cellcolor{orange!15}\textbf{67.1495} & \cellcolor{gray!15}\textbf{0.3506} & \cellcolor{orange!15}\textbf{1.0845} &  
\cellcolor{orange!15}\textbf{66.4993} & \cellcolor{orange!15}\textbf{0.3509} & \cellcolor{orange!15}\textbf{1.0807} &  
\cellcolor{orange!15}\textbf{67.3128} & \cellcolor{orange!15}\textbf{0.3481} & \cellcolor{orange!15}\textbf{0.9884} &  
\cellcolor{gray!15}\textbf{76.2934} & \cellcolor{gray!15}\textbf{0.3340} & \cellcolor{gray!15}\textbf{0.5523} \\ 

\midrule
\multicolumn{20}{l}{\textcolor{red}{\textbf{LAION-Aesthetic 1K}}} \\

\multirow{2}{*}{\textbf{SD3}}
& CFG
&
\cellcolor{gray!15}109.6643 & \cellcolor{gray!15}0.3025 & \cellcolor{gray!15}0.3825 &
\cellcolor{orange!15}107.2530 & \cellcolor{gray!15}0.3092 & \cellcolor{gray!15}0.5800 &
\cellcolor{orange!15}105.1719 & \cellcolor{gray!15}0.3131 & \cellcolor{gray!15}0.7125 &
\cellcolor{gray!15}106.4279 & \cellcolor{gray!15}0.3135 & \cellcolor{gray!15}0.7055 &
\cellcolor{gray!15}105.1366 & \cellcolor{gray!15}0.3110 & \cellcolor{gray!15}0.6641 &
\cellcolor{orange!15}105.5225 & \cellcolor{orange!15}0.3018 & \cellcolor{orange!15}0.4775 \\

& \cellcolor{green!15}\textbf{Rectified-CFG++ }
&
\cellcolor{orange!15}\textbf{109.0101} & \cellcolor{orange!15}\textbf{0.3103} & \cellcolor{orange!15}\textbf{0.6018} &
\cellcolor{gray!15}\textbf{107.4210} & \cellcolor{orange!15}\textbf{0.3129} & \cellcolor{orange!15}\textbf{0.6655} &
\cellcolor{gray!15}\textbf{105.6636} & \cellcolor{orange!15}\textbf{0.3119} & \cellcolor{orange!15}\textbf{0.6784} &
\cellcolor{orange!15}\textbf{105.9037} & \cellcolor{orange!15}\textbf{0.3125} & \cellcolor{orange!15}\textbf{0.6840} &
\cellcolor{orange!15}\textbf{104.8691} & \cellcolor{orange!15}\textbf{0.3128} & \cellcolor{orange!15}\textbf{0.7278} &
\cellcolor{gray!15}\textbf{105.7928} & \cellcolor{gray!15}\textbf{0.2986} & \cellcolor{gray!15}\textbf{0.3902} \\

\multirow{2}{*}{\textbf{SD3.5}}
& CFG
&
\cellcolor{gray!15}112.6539 & \cellcolor{gray!15}0.3075 & \cellcolor{gray!15}0.5507 &
\cellcolor{gray!15}108.4751 & \cellcolor{gray!15}0.3162 & \cellcolor{gray!15}0.6984 &
\cellcolor{gray!15}107.1446 & \cellcolor{orange!15}0.3183 & \cellcolor{gray!15}0.7675 &
\cellcolor{orange!15}105.8216 & \cellcolor{orange!15}0.3173 & \cellcolor{gray!15}0.7302 &
\cellcolor{gray!15}107.1061 & \cellcolor{gray!15}0.3122 & \cellcolor{gray!15}0.6257 &
\cellcolor{orange!15}111.3334 & \cellcolor{orange!15}0.2955 & \cellcolor{orange!15}0.2583 \\

& \cellcolor{green!15}\textbf{Rectified-CFG++ }
&
\cellcolor{orange!15}\textbf{110.2739} & \cellcolor{orange!15}\textbf{0.3155} & \cellcolor{orange!15}\textbf{0.7149} &
\cellcolor{orange!15}\textbf{107.3088} & \cellcolor{orange!15}\textbf{0.3178} & \cellcolor{orange!15}\textbf{0.7440} &
\cellcolor{orange!15}\textbf{107.7859} & \cellcolor{gray!15}\textbf{0.3174} & \cellcolor{orange!15}\textbf{0.7867} &
\cellcolor{gray!15}\textbf{107.3915} & \cellcolor{gray!15}\textbf{0.3164} & \cellcolor{orange!15}\textbf{0.7635} &
\cellcolor{orange!15}\textbf{106.6539} & \cellcolor{orange!15}\textbf{0.3140} & \cellcolor{orange!15}\textbf{0.6757} &
\cellcolor{gray!15}\textbf{112.0855} & \cellcolor{gray!15}\textbf{0.2916} & \cellcolor{gray!15}\textbf{0.1052} \\

\bottomrule
\end{tabular}
}
\end{table}
\vspace{20pt}

\begin{table}[!htbp]
\centering
\scriptsize
\caption{\textbf{Evaluation of the Flux~\cite{flux2024} model across different sampling steps (NFEs) on MS-COCO 1K.} We compare standard CFG and Rectified CFG++ across key metrics. Lower FID and higher CLIP/ImageReward indicate better performance. }
\label{tab:flux_nfe_metricwise}
\begin{tabular}{c|cc|cc|cc}
\toprule
\multirow{2}{*}{\textbf{Steps}} 
& \multicolumn{2}{c|}{\textbf{FID ↓}} 
& \multicolumn{2}{c|}{\textbf{CLIP ↑}} 
& \multicolumn{2}{c}{\textbf{ImageReward ↑}} \\
\cmidrule(lr){2-3}
\cmidrule(lr){4-5}
\cmidrule(lr){6-7}
& \textbf{CFG} & \cellcolor{green!10}\textbf{Rect.-CFG++} 
& \textbf{CFG} & \cellcolor{green!10}\textbf{Rect.-CFG++}
& \textbf{CFG} & \cellcolor{green!10}\textbf{Rect.-CFG++} \\
\midrule

5   & \cellcolor{gray!15}177.81 & \cellcolor{orange!15}\textbf{71.17} 
    & \cellcolor{gray!15}0.24 & \cellcolor{orange!15}\textbf{0.33} 
    & \cellcolor{gray!15}-1.54 & \cellcolor{orange!15}\textbf{0.93} \\

15  & \cellcolor{gray!15}114.94 & \cellcolor{orange!15}\textbf{74.47} 
     & \cellcolor{gray!15}0.30 & \cellcolor{orange!15}\textbf{0.34} 
     & \cellcolor{gray!15}-0.38 & \cellcolor{orange!15}\textbf{1.04} \\

28  & \cellcolor{gray!15}85.82 & \cellcolor{orange!15}\textbf{75.34} 
     & \cellcolor{gray!15}0.32 & \cellcolor{orange!15}\textbf{0.34} 
     & \cellcolor{gray!15}0.46 & \cellcolor{orange!15}\textbf{1.01} \\

40  & \cellcolor{gray!15}78.47 & \cellcolor{orange!15}\textbf{74.13} 
     & \cellcolor{gray!15}0.34 & \cellcolor{orange!15}\textbf{0.35} 
     & \cellcolor{gray!15}0.80 & \cellcolor{orange!15}\textbf{1.04} \\

50  & \cellcolor{gray!15}76.88 & \cellcolor{orange!15}\textbf{75.17} 
     & \cellcolor{gray!15}0.34 & \cellcolor{orange!15}\textbf{0.35} 
     & \cellcolor{gray!15}0.92 & \cellcolor{orange!15}\textbf{1.01} \\

60  & \cellcolor{gray!15}85.82 & \cellcolor{orange!15}\textbf{75.34} 
     & \cellcolor{gray!15}0.32 & \cellcolor{orange!15}\textbf{0.34} 
     & \cellcolor{gray!15}0.47 & \cellcolor{orange!15}\textbf{1.02} \\

\bottomrule
\end{tabular}
\end{table}
\vspace{20pt}

\begin{table}[!htbp]
\centering
\scriptsize
\renewcommand{\arraystretch}{1}
\setlength{\tabcolsep}{4pt}
\caption{\textbf{Evaluation of the SD3~\cite{sd32024} model across different sampling steps (NFEs) on MS-COCO 1K.} Comparison between standard CFG and Rectified-CFG++.}
\label{tab:sd3_nfe_metricwise}
\begin{tabular}{c|cc|cc|cc}
\toprule
\multirow{2}{*}{\textbf{Steps}} 
& \multicolumn{2}{c|}{\textbf{FID ↓}} 
& \multicolumn{2}{c|}{\textbf{CLIP ↑}} 
& \multicolumn{2}{c}{\textbf{ImageReward ↑}} \\
\cmidrule(lr){2-3}
\cmidrule(lr){4-5}
\cmidrule(lr){6-7}
& \textbf{CFG} & \cellcolor{green!10}\textbf{Rect. CFG++} 
& \textbf{CFG} & \cellcolor{green!10}\textbf{Rect. CFG++}
& \textbf{CFG} & \cellcolor{green!10}\textbf{Rect. CFG++} \\
\midrule

5   & \cellcolor{gray!15}129.0333 & \cellcolor{orange!15}\textbf{112.8318} 
    & \cellcolor{gray!15}0.2654 & \cellcolor{orange!15}\textbf{0.2779} 
    & \cellcolor{gray!15}-1.4232 & \cellcolor{orange!15}\textbf{-1.0803} \\

15  & \cellcolor{gray!15}72.7270  & \cellcolor{orange!15}\textbf{69.0608} 
     & \cellcolor{orange!15}0.3427 & \cellcolor{gray!15}\textbf{0.3418} 
     & \cellcolor{gray!15}0.6826 & \cellcolor{orange!15}\textbf{0.7326} \\

28  & \cellcolor{gray!15}72.7399 & \cellcolor{orange!15}\textbf{70.0272} 
     & \cellcolor{orange!15}0.3461 & \cellcolor{gray!15}\textbf{0.3432} 
     & \cellcolor{orange!15}0.8961 & \cellcolor{gray!15}\textbf{0.8294} \\

40  & \cellcolor{gray!15}72.8198 & \cellcolor{orange!15}\textbf{68.7318} 
     & \cellcolor{gray!15}0.3449 & \cellcolor{orange!15}\textbf{0.3453} 
     & \cellcolor{orange!15}0.9244 & \cellcolor{gray!15}\textbf{0.8836} \\

50  & \cellcolor{gray!15}73.4710 & \cellcolor{orange!15}\textbf{70.1959} 
     & \cellcolor{gray!15}0.3456 & \cellcolor{orange!15}\textbf{0.3463} 
     & \cellcolor{orange!15}0.9244 & \cellcolor{gray!15}\textbf{0.8710} \\

60  & \cellcolor{gray!15}73.2599 & \cellcolor{orange!15}\textbf{68.9540} 
     & \cellcolor{gray!15}0.3450 & \cellcolor{orange!15}\textbf{0.3456} 
     & \cellcolor{orange!15}0.9143 & \cellcolor{gray!15}\textbf{0.8986} \\

\bottomrule
\end{tabular}
\end{table}
\vspace{20pt}

\begin{table}[!htbp]
\centering
\scriptsize
\renewcommand{\arraystretch}{1}
\setlength{\tabcolsep}{4pt}
\caption{\textbf{Evaluation of the SD3.5~\cite{sd32024} model across different sampling steps (NFEs) on MS-COCO 1K.} Comparison between standard CFG and Rectified-CFG++.}
\label{tab:sd35_nfe_metricwise}
\begin{tabular}{c|cc|cc|cc}
\toprule
\multirow{2}{*}{\textbf{Steps}} 
& \multicolumn{2}{c|}{\textbf{FID ↓}} 
& \multicolumn{2}{c|}{\textbf{CLIP ↑}} 
& \multicolumn{2}{c}{\textbf{ImageReward ↑}} \\
\cmidrule(lr){2-3}
\cmidrule(lr){4-5}
\cmidrule(lr){6-7}
& \textbf{CFG} & \cellcolor{green!10}\textbf{Rect. CFG++} 
& \textbf{CFG} & \cellcolor{green!10}\textbf{Rect. CFG++}
& \textbf{CFG} & \cellcolor{green!10}\textbf{Rect. CFG++} \\
\midrule

5   & \cellcolor{orange!15}85.9537 & \cellcolor{gray!15}\textbf{149.3422} 
    & \cellcolor{orange!15}0.3214 & \cellcolor{gray!15}\textbf{0.2300} 
    & \cellcolor{orange!15}-0.1806 & \cellcolor{gray!15}\textbf{-1.5413} \\

15  & \cellcolor{gray!15}69.4994 & \cellcolor{orange!15}\textbf{69.3713} 
     & \cellcolor{gray!15}0.3361 & \cellcolor{orange!15}\textbf{0.3430} 
     & \cellcolor{gray!15}0.6813 & \cellcolor{orange!15}\textbf{0.6820} \\

28  & \cellcolor{gray!15}69.8250 & \cellcolor{orange!15}\textbf{69.1095} 
     & \cellcolor{gray!15}0.3435 & \cellcolor{orange!15}\textbf{0.3443} 
     & \cellcolor{gray!15}0.7274 & \cellcolor{orange!15}\textbf{0.7750} \\

40  & \cellcolor{gray!15}69.2999 & \cellcolor{orange!15}\textbf{69.2601} 
     & \cellcolor{gray!15}0.3431 & \cellcolor{orange!15}\textbf{0.3437} 
     & \cellcolor{gray!15}0.7310 & \cellcolor{orange!15}\textbf{0.7708} \\

50  & \cellcolor{gray!15}69.3650 & \cellcolor{orange!15}\textbf{69.0434} 
     & \cellcolor{orange!15}0.3452 & \cellcolor{gray!15}\textbf{0.3443} 
     & \cellcolor{gray!15}0.7506 & \cellcolor{orange!15}\textbf{0.7705} \\

60  & \cellcolor{gray!15}68.8897 & \cellcolor{orange!15}\textbf{67.9782} 
     & \cellcolor{gray!15}0.3438 & \cellcolor{orange!15}\textbf{0.3441} 
     & \cellcolor{gray!15}0.7348 & \cellcolor{orange!15}\textbf{0.7611} \\

\bottomrule
\end{tabular}
\end{table}
\vspace{20pt}

\begin{figure}[!htbp]
\centering
\scriptsize
\renewcommand{\arraystretch}{0.5}
\setlength{\tabcolsep}{0pt}

\begin{tabular}{c@{\hspace{5pt}}c@{\hspace{5pt}}}

\begin{tabular}{cc}
\cellcolor{blue!15}\textbf{CFG} & \cellcolor{green!15}\textbf{Rectified-CFG++} \\
\includegraphics[width=0.22\textwidth]{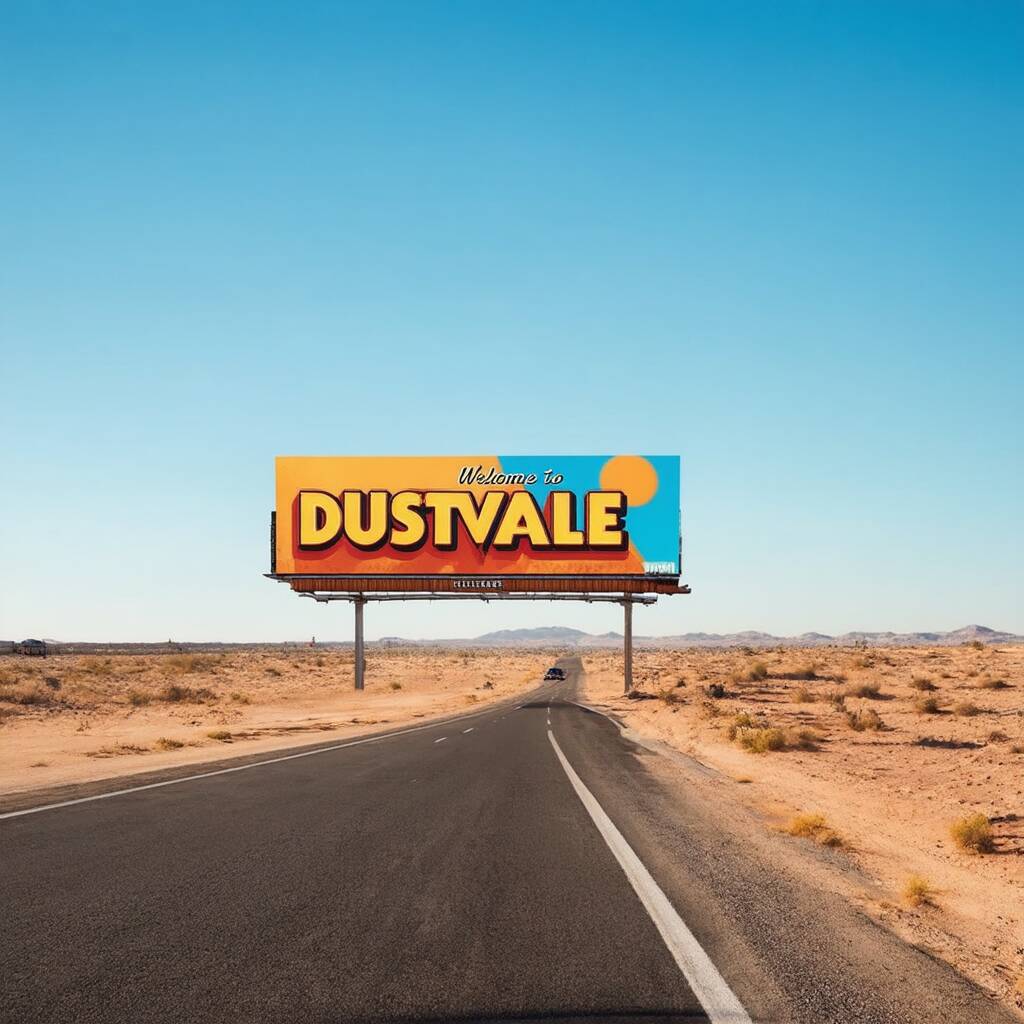} &
\includegraphics[width=0.22\textwidth]{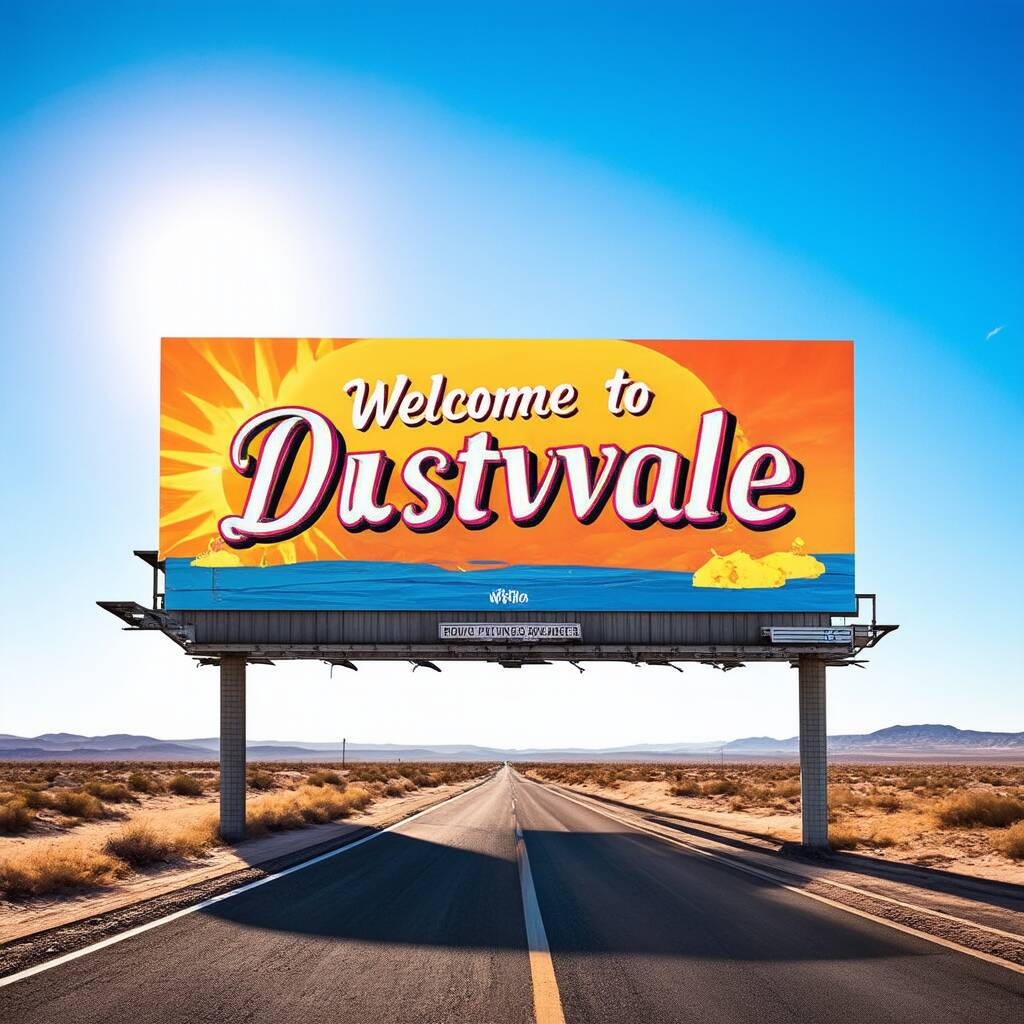} \\
\multicolumn{2}{c}{\parbox{0.43\textwidth}{\centering \textbf{Prompt:} \emph{A billboard on a desert highway showing 'Welcome to Dustvale', retro 70s font, sun flare.}}}
\end{tabular}
&
\begin{tabular}{cc}
\cellcolor{blue!15}\textbf{CFG} & \cellcolor{green!15}\textbf{Rectified-CFG++} \\
\includegraphics[width=0.22\textwidth]{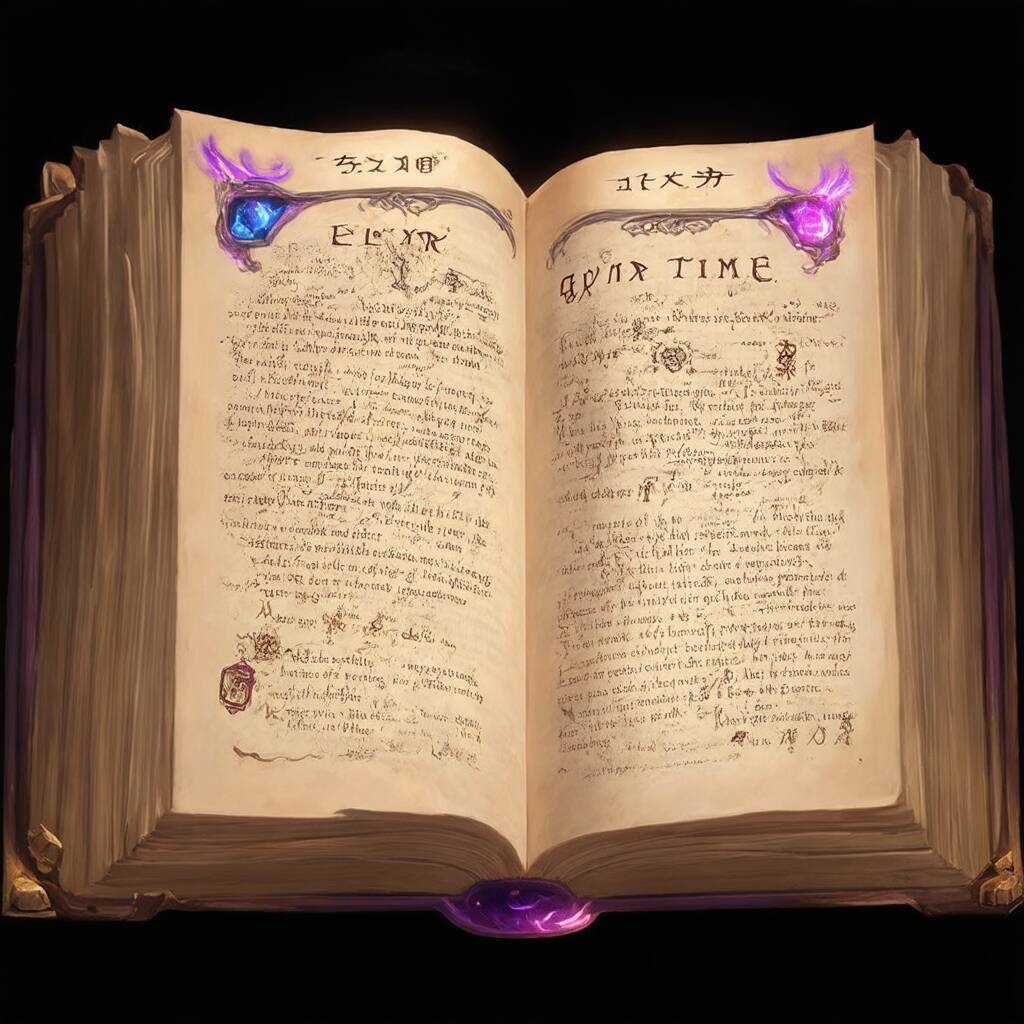} &
\includegraphics[width=0.22\textwidth]{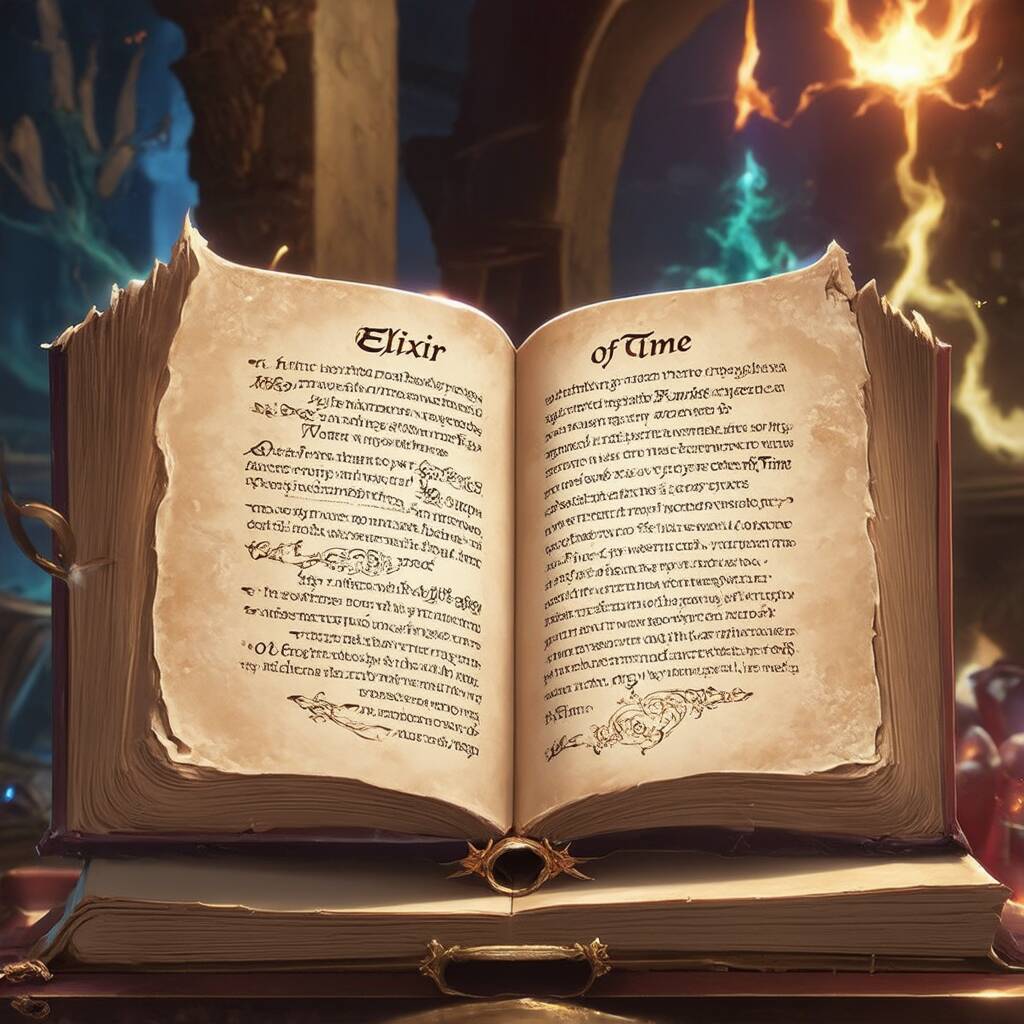} \\
\multicolumn{2}{c}{\parbox{0.43\textwidth}{\centering \textbf{Prompt:} \emph{A magic tome opened to a page with glowing text that reads 'Elixir of Time' in enchanted runes.}}}
\end{tabular}
\\[8pt]

\begin{tabular}{cc}
\cellcolor{blue!15}\textbf{CFG} & \cellcolor{green!15}\textbf{Rectified-CFG++} \\
\includegraphics[width=0.22\textwidth]{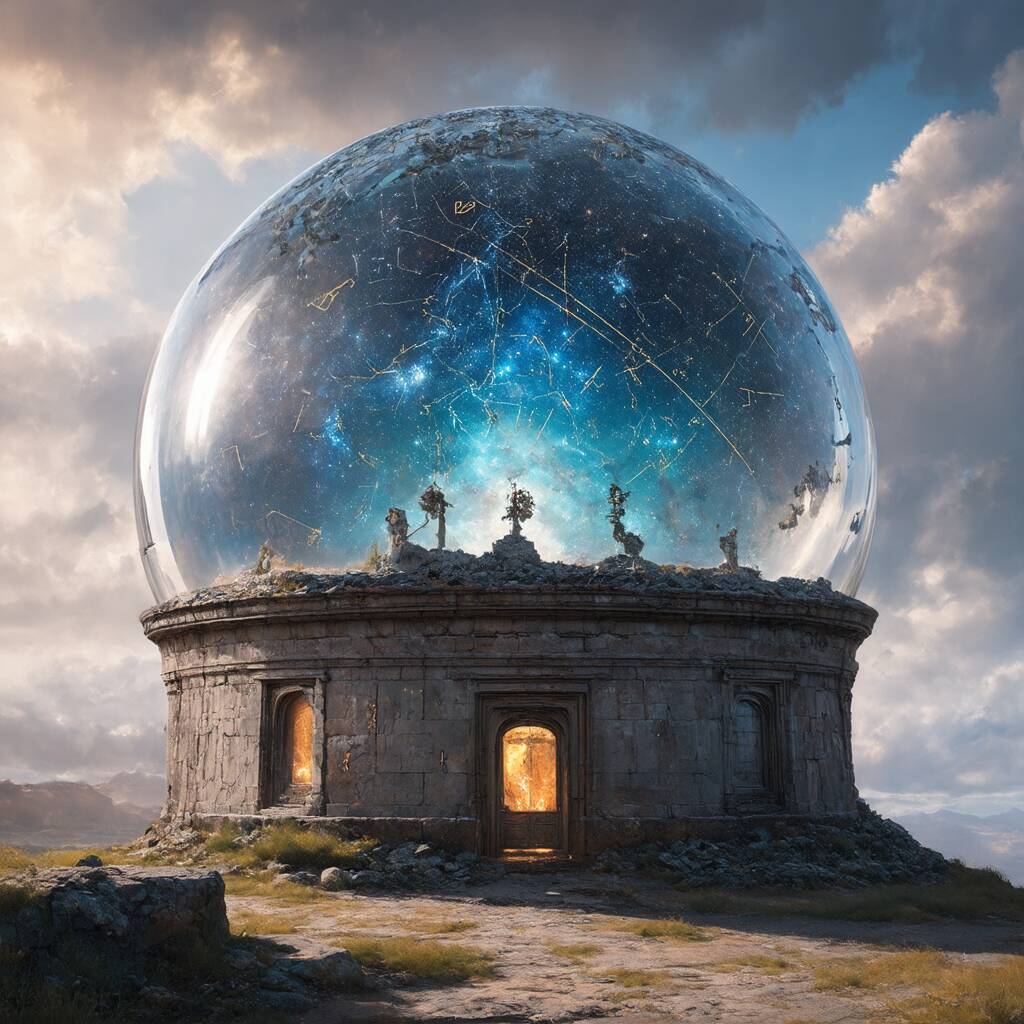} &
\includegraphics[width=0.22\textwidth]{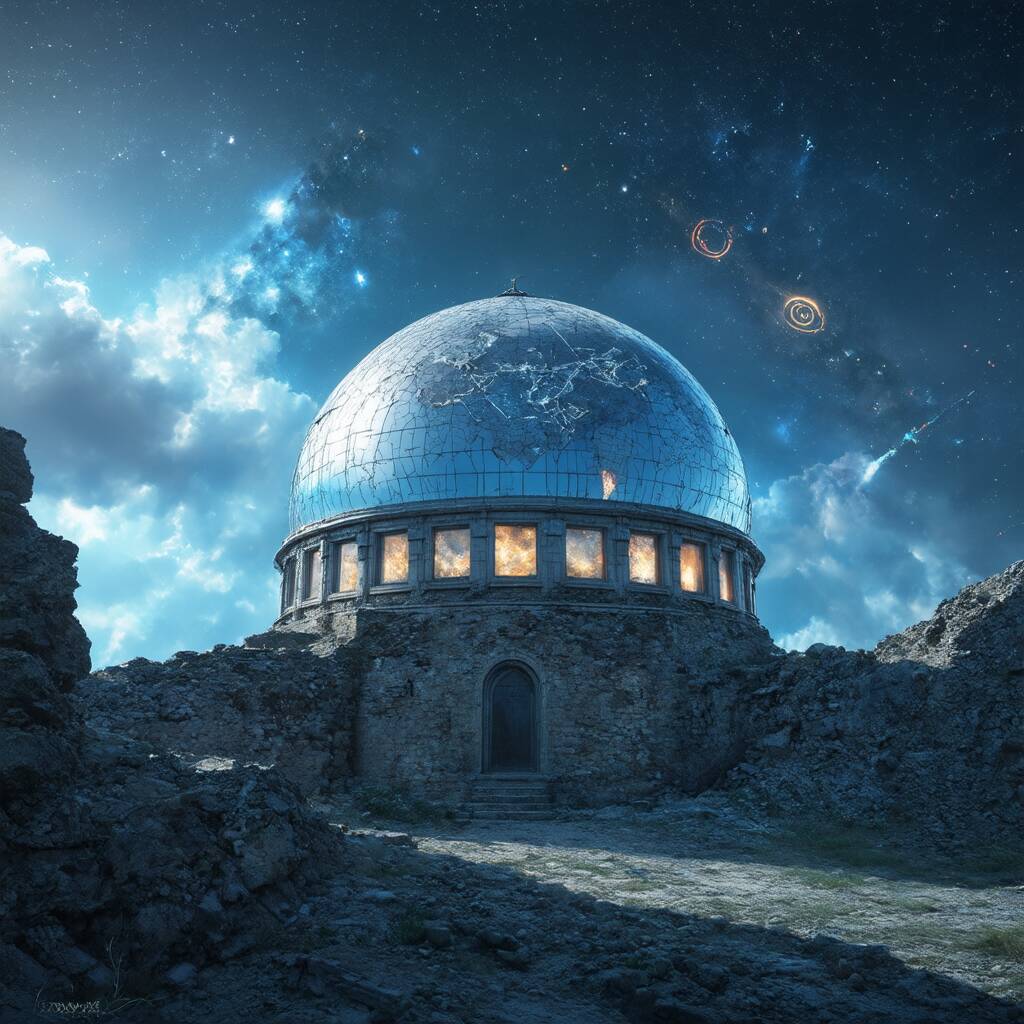} \\
\multicolumn{2}{c}{\parbox{0.43\textwidth}{\centering \textbf{Prompt:} \emph{A ruined celestial observatory at the edge of the world, its cracked glass dome revealing a rotating sky-map...}
}}
\end{tabular}
&
\begin{tabular}{cc}
\cellcolor{blue!15}\textbf{CFG} & \cellcolor{green!15}\textbf{Rectified-CFG++} \\
\includegraphics[width=0.22\textwidth]{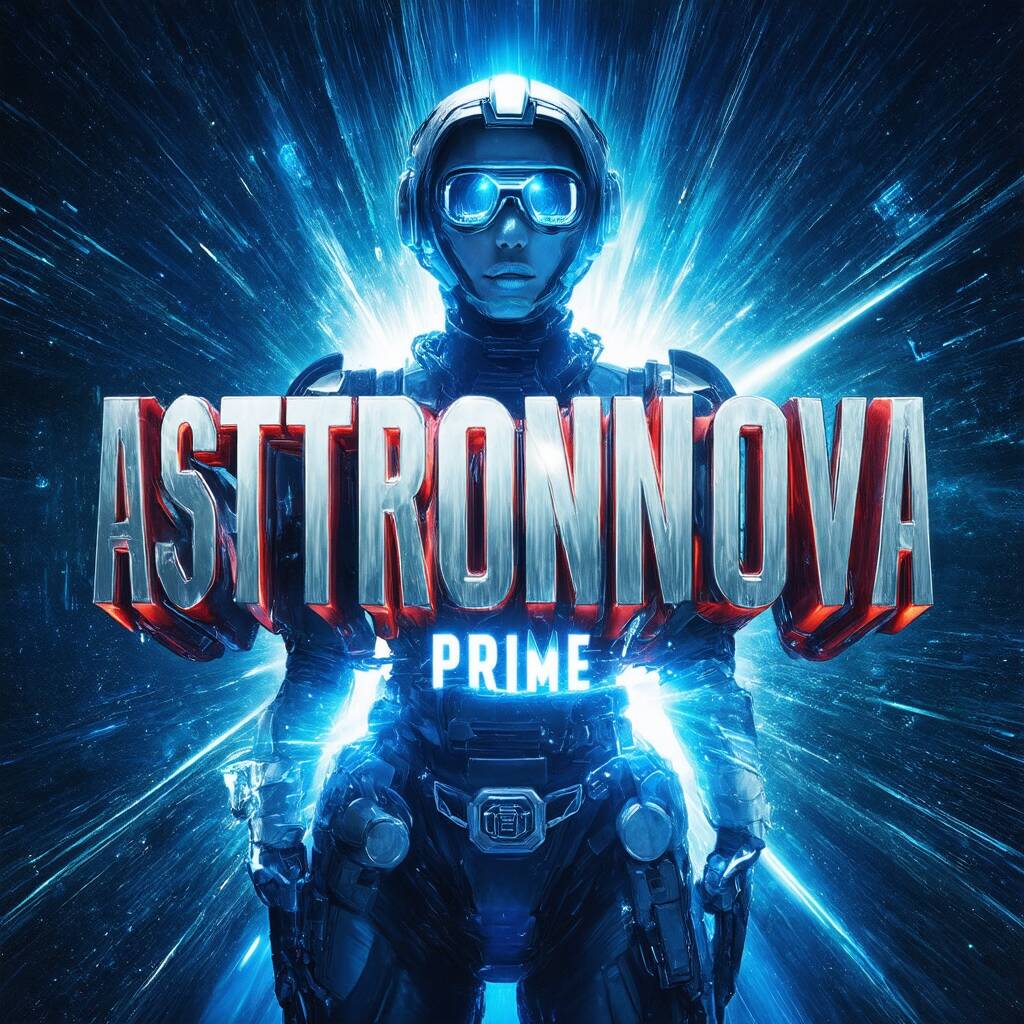} &
\includegraphics[width=0.22\textwidth]{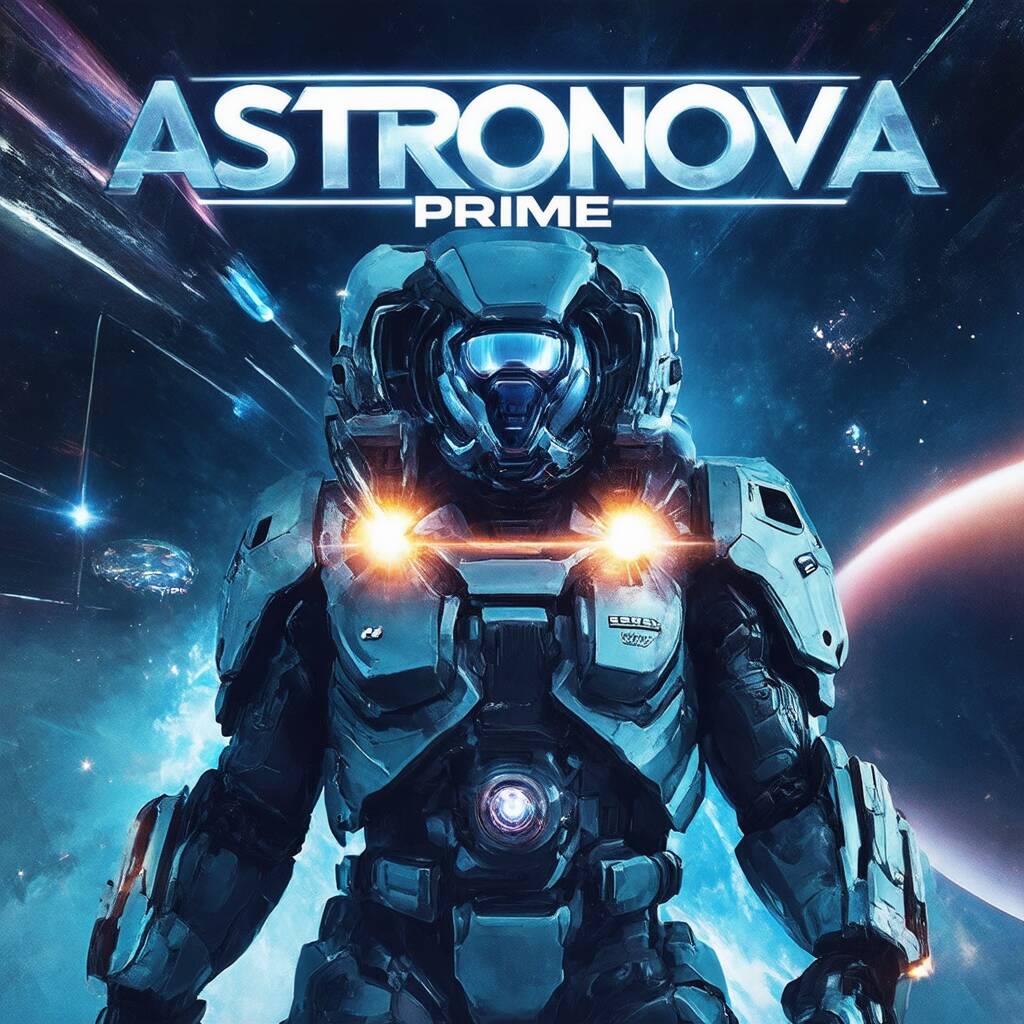} \\
\multicolumn{2}{c}{\parbox{0.43\textwidth}{\centering \textbf{Prompt:} \emph{A sci-fi poster with the bold title 'ASTRONOVA PRIME' in metallic lettering and lens flares.}}}
\end{tabular}
\\[8pt]

\begin{tabular}{cc}
\cellcolor{blue!15}\textbf{CFG} & \cellcolor{green!15}\textbf{Rectified-CFG++} \\
\includegraphics[width=0.22\textwidth]{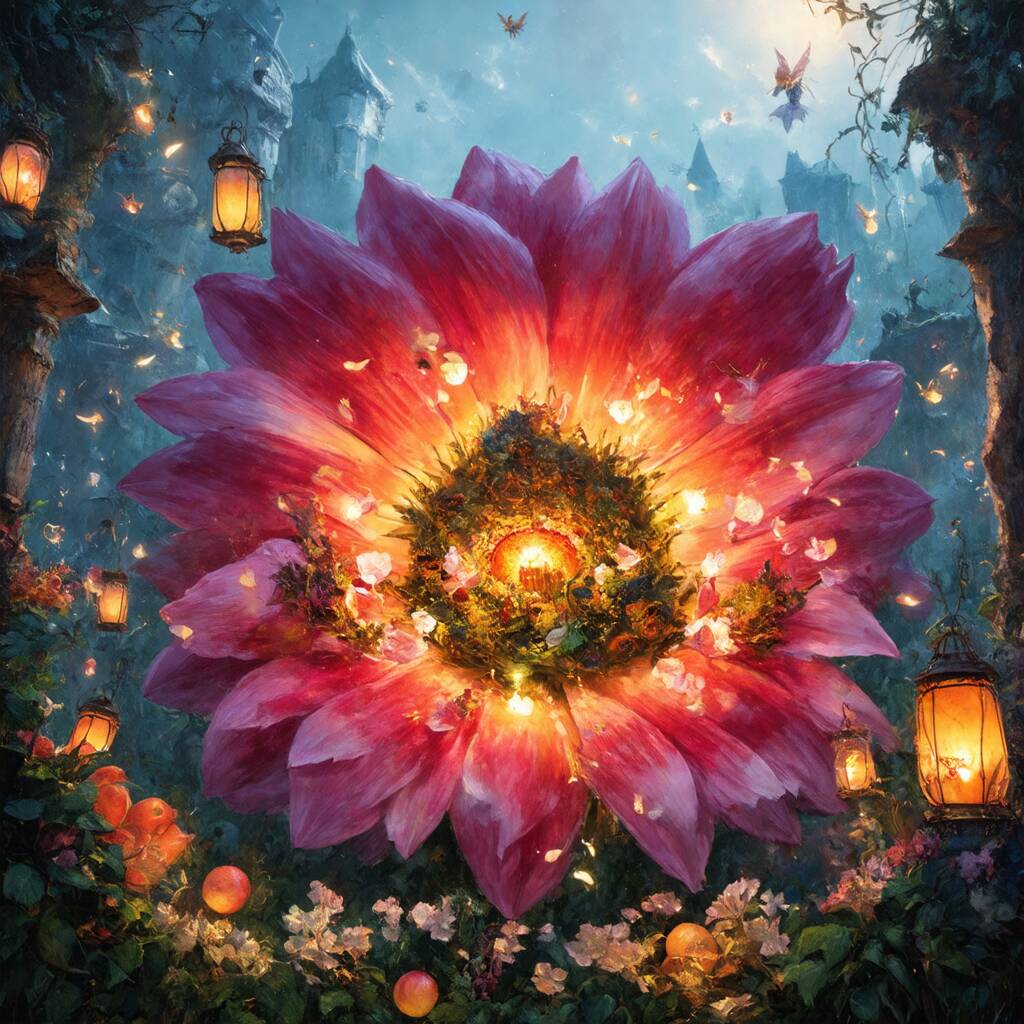} &
\includegraphics[width=0.22\textwidth]{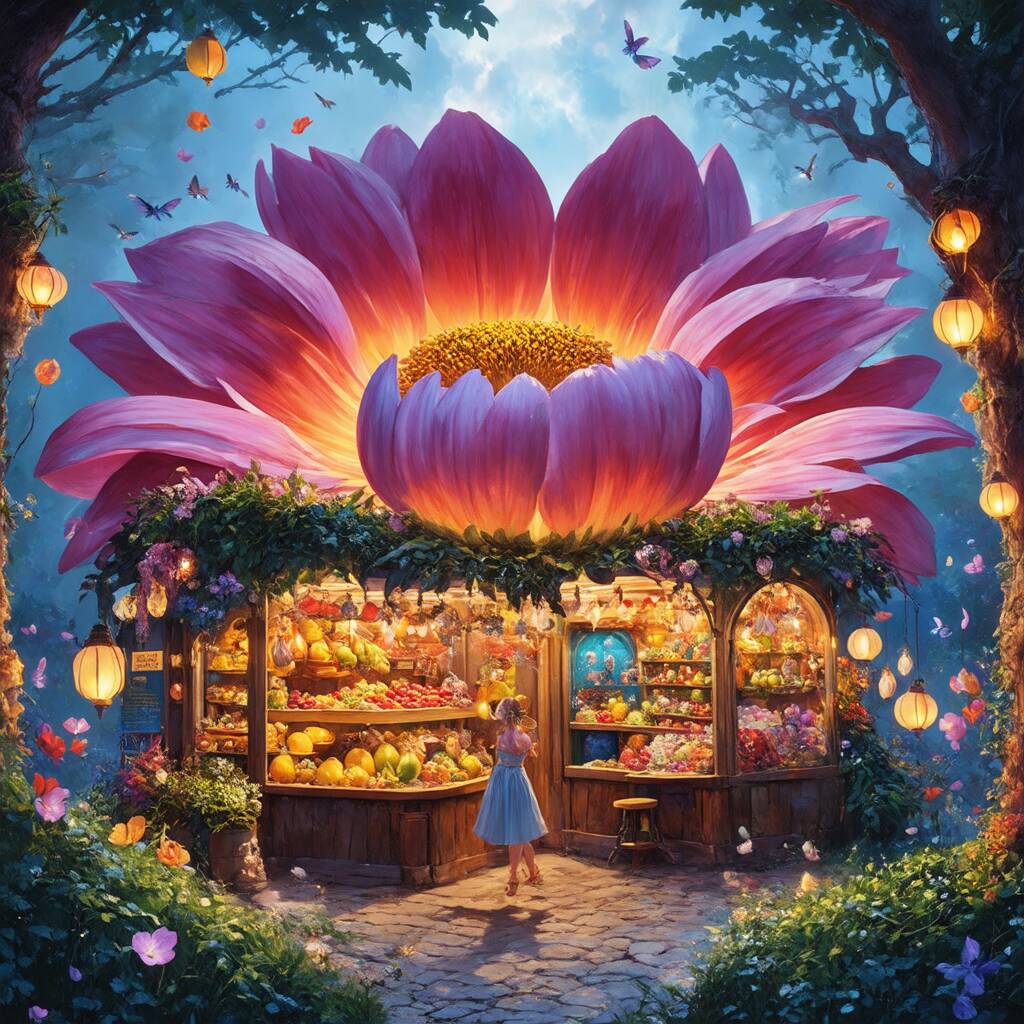} \\
\multicolumn{2}{c}{\parbox{0.43\textwidth}{\centering \textbf{Prompt:} \emph{A fairy marketplace hidden inside a giant blooming flower, stalls glowing with enchanted trinkets, fruit that...}
}}
\end{tabular}
&
\begin{tabular}{cc}
\cellcolor{blue!15}\textbf{CFG} & \cellcolor{green!15}\textbf{Rectified-CFG++} \\
\includegraphics[width=0.22\textwidth]{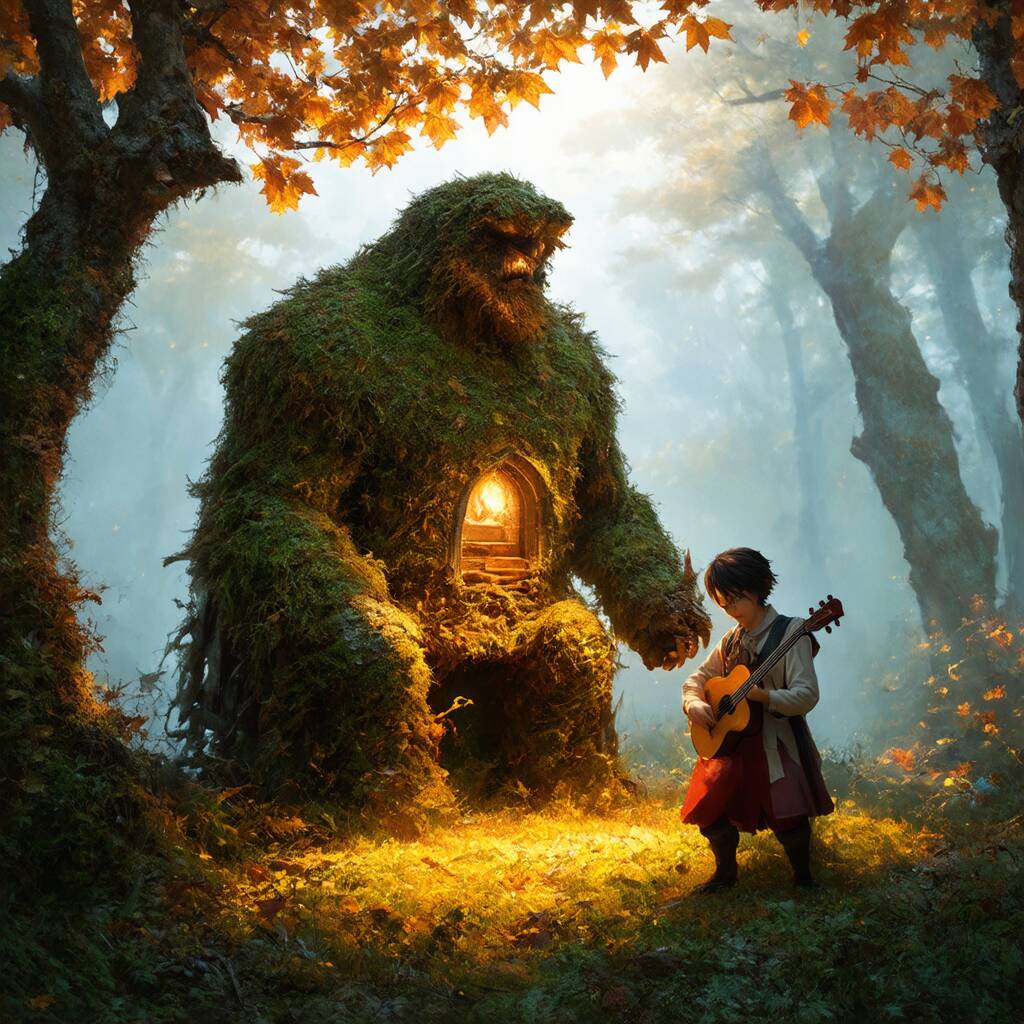} &
\includegraphics[width=0.22\textwidth]{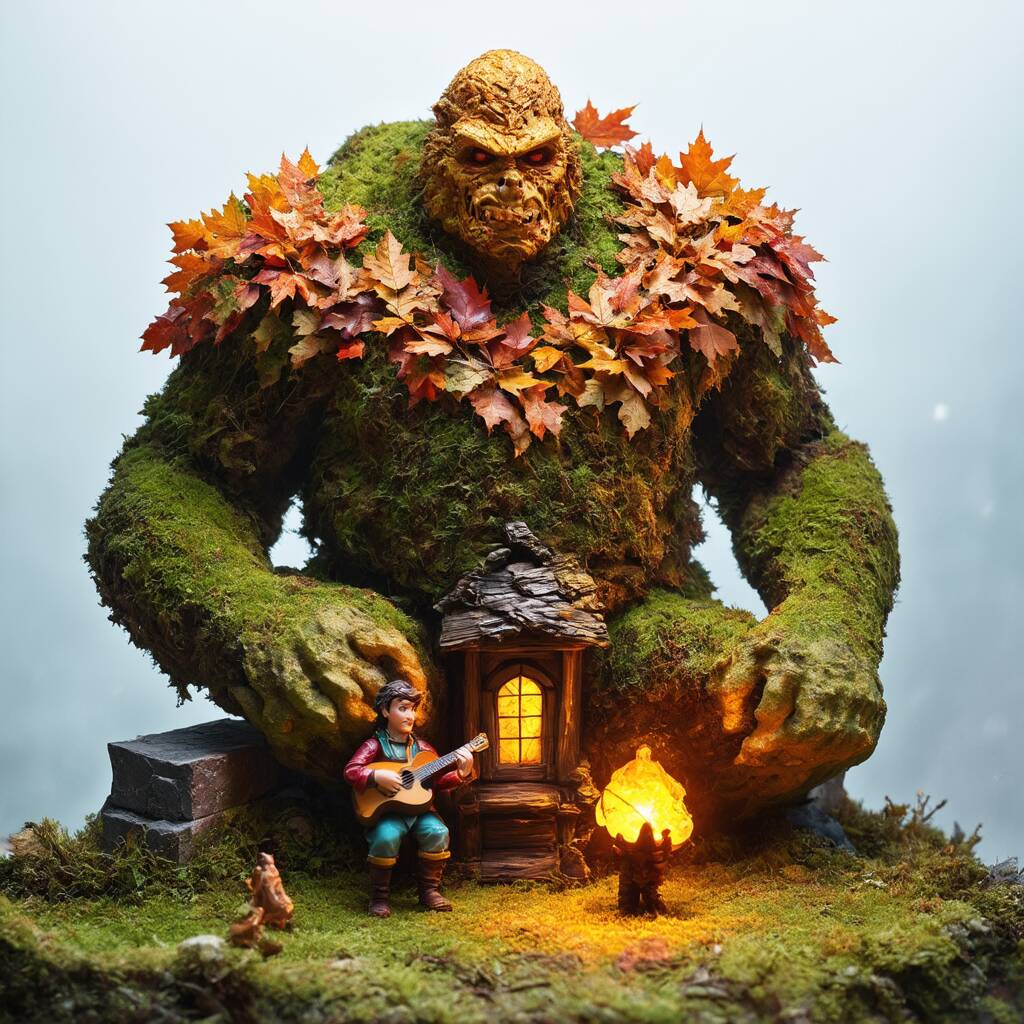} \\
\multicolumn{2}{c}{\parbox{0.43\textwidth}{\centering \textbf{Prompt:} \emph{A painting of a desert nomad holding a glowing hourglass, as time swirls around their robes.}}}
\end{tabular}

\\[8pt]

\begin{tabular}{cc}
\cellcolor{blue!15}\textbf{CFG} & \cellcolor{green!15}\textbf{Rectified-CFG++} \\
\includegraphics[width=0.22\textwidth]{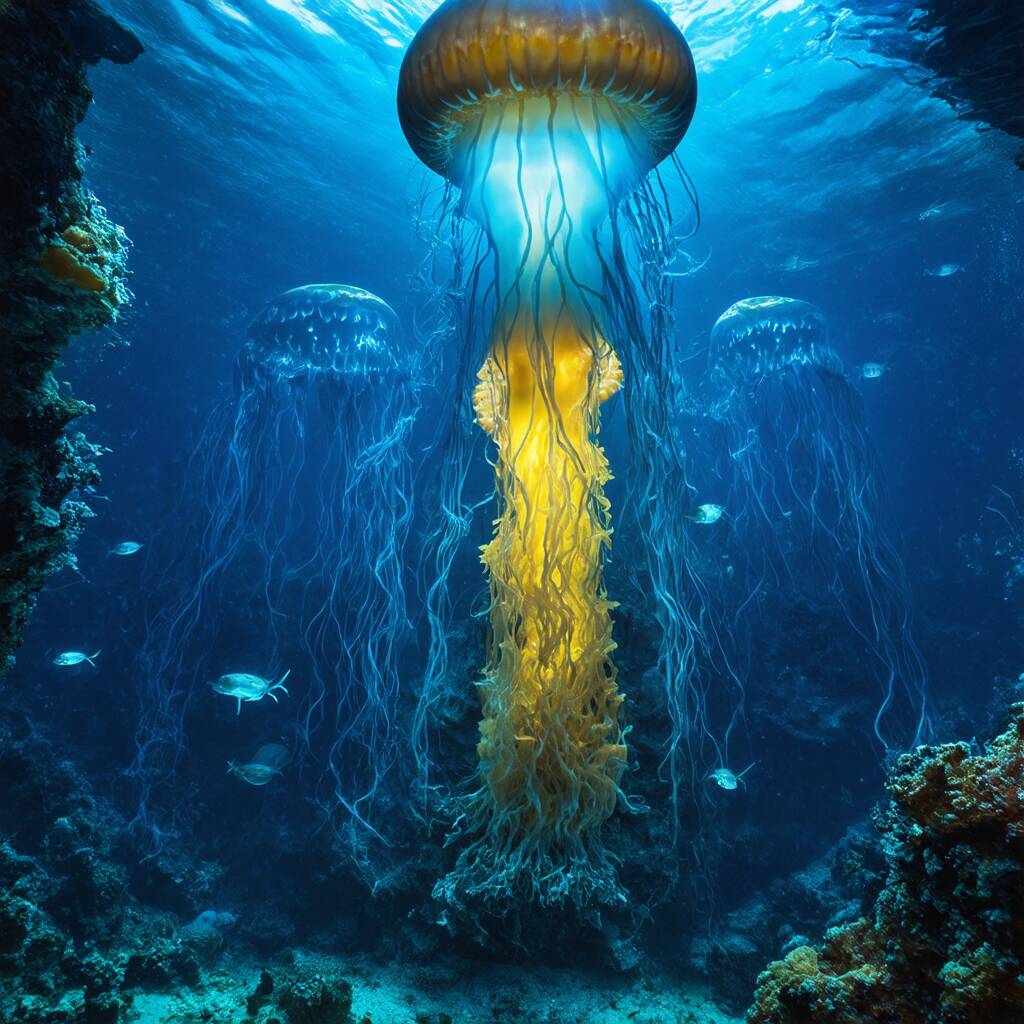} &
\includegraphics[width=0.22\textwidth]{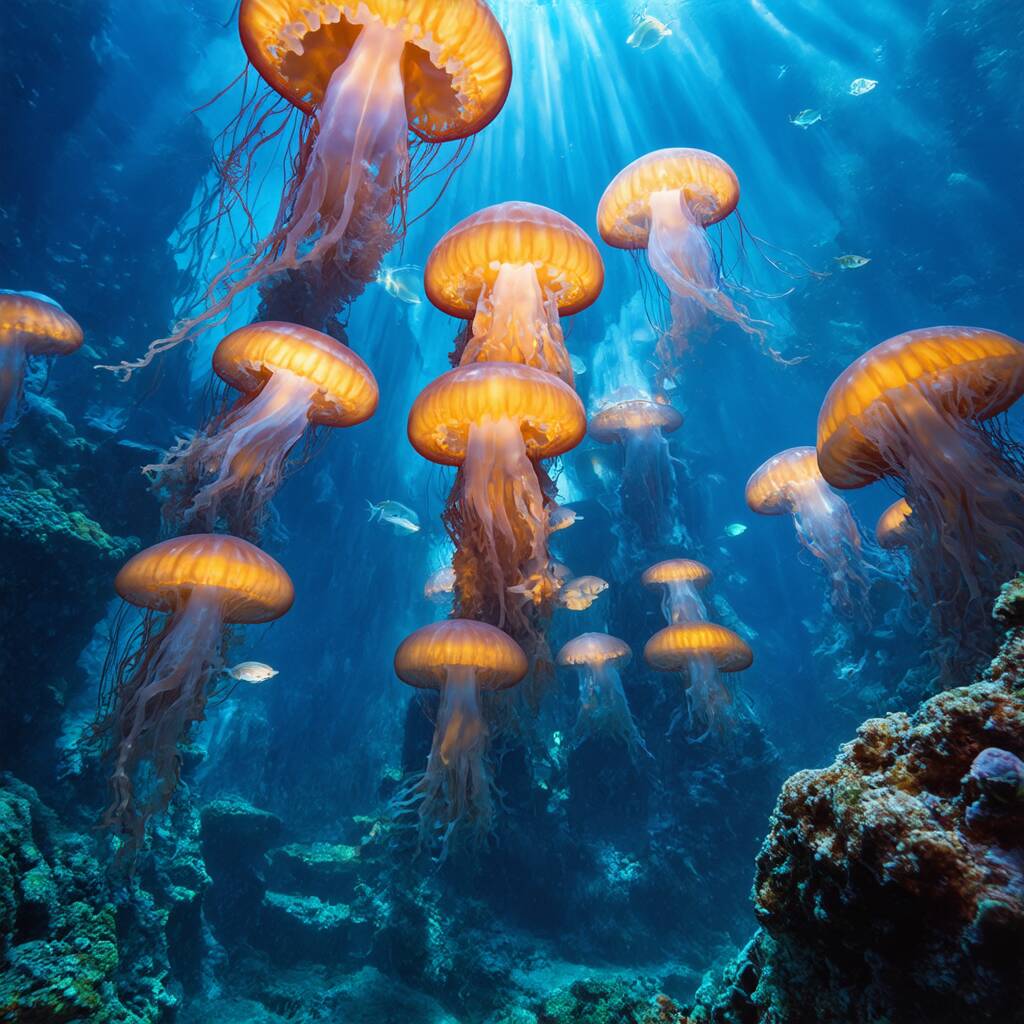} \\
\multicolumn{2}{c}{\parbox{0.43\textwidth}{\centering \textbf{Prompt:} \emph{A colossal bioluminescent jellyfish cathedral towering in the deep ocean, with massive glowing coral...}
}}
\end{tabular}
&
\begin{tabular}{cc}
\cellcolor{blue!15}\textbf{CFG} & \cellcolor{green!15}\textbf{Rectified-CFG++} \\
\includegraphics[width=0.22\textwidth]{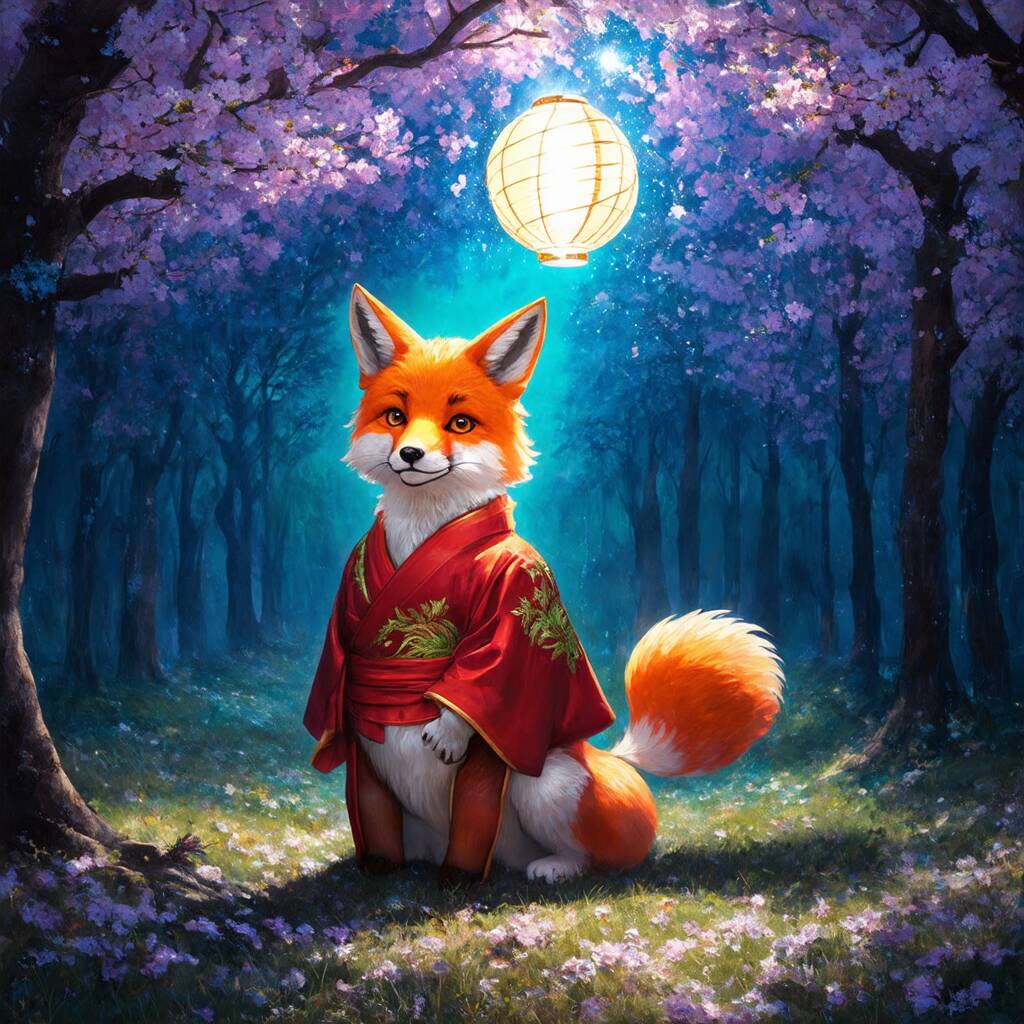} &
\includegraphics[width=0.22\textwidth]{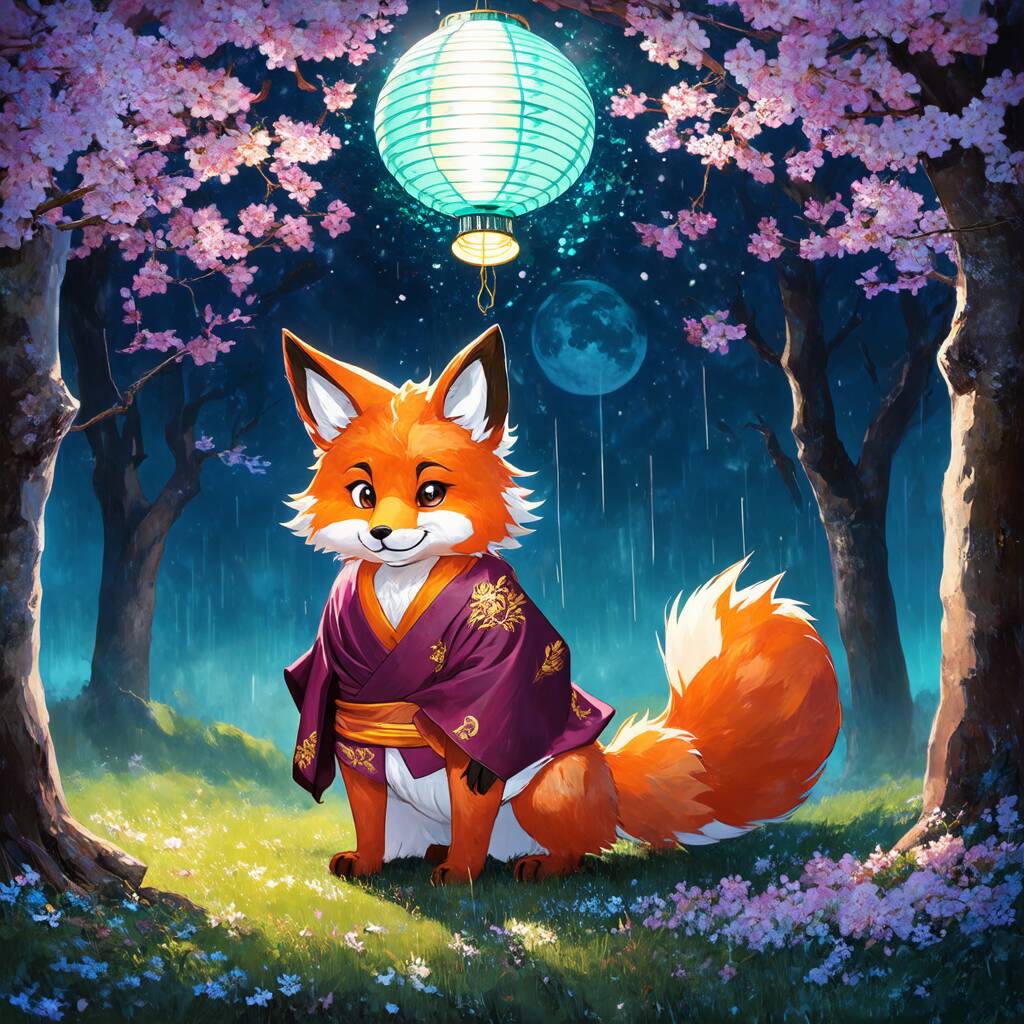} \\
\multicolumn{2}{c}{\parbox{0.43\textwidth}{\centering \textbf{Prompt:} \emph{At the center of a glowing sakura forest drenched in moonlight, a mystical fox-like humanoid with radiant orange...}
}}
\end{tabular}

\end{tabular}

\vspace{6pt}
\caption{
\textbf{Outcome of the SD3~\cite{sd32024} T2I models when using CFG vs Rectified-CFG++ for a variety of prompts.} Our method consistently improves image generation quality by producing more coherent, semantically aligned, and visually rich results, even under complex or artistic prompting scenarios.
}
\label{fig:appendix_sd3_comparison}
\end{figure}

\begin{figure}[!htbp]
\centering
\scriptsize
\renewcommand{\arraystretch}{0.5}
\setlength{\tabcolsep}{0pt}

\begin{tabular}{c@{\hspace{5pt}}c@{\hspace{5pt}}}

\begin{tabular}{cc}
\cellcolor{blue!15}\textbf{CFG} & \cellcolor{green!15}\textbf{Rectified-CFG++} \\
\includegraphics[width=0.22\textwidth]{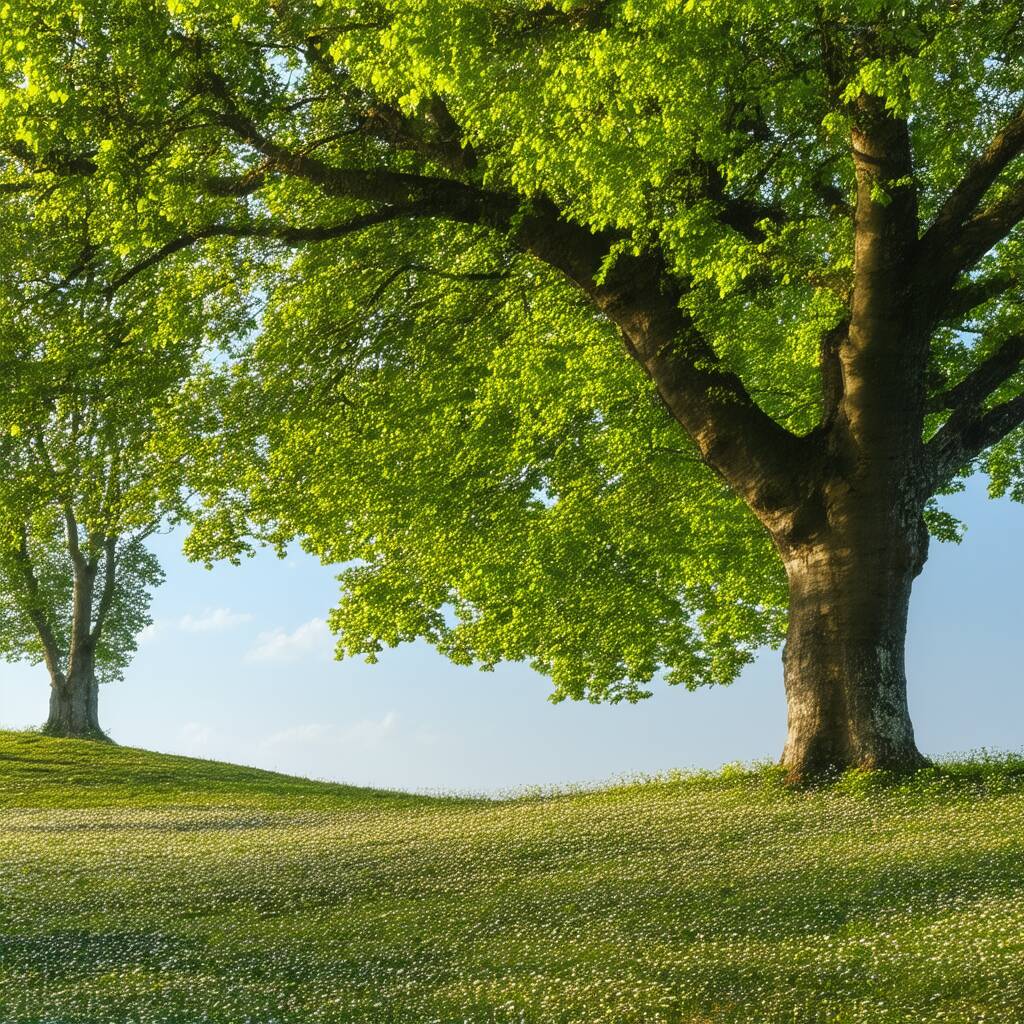} &
\includegraphics[width=0.22\textwidth]{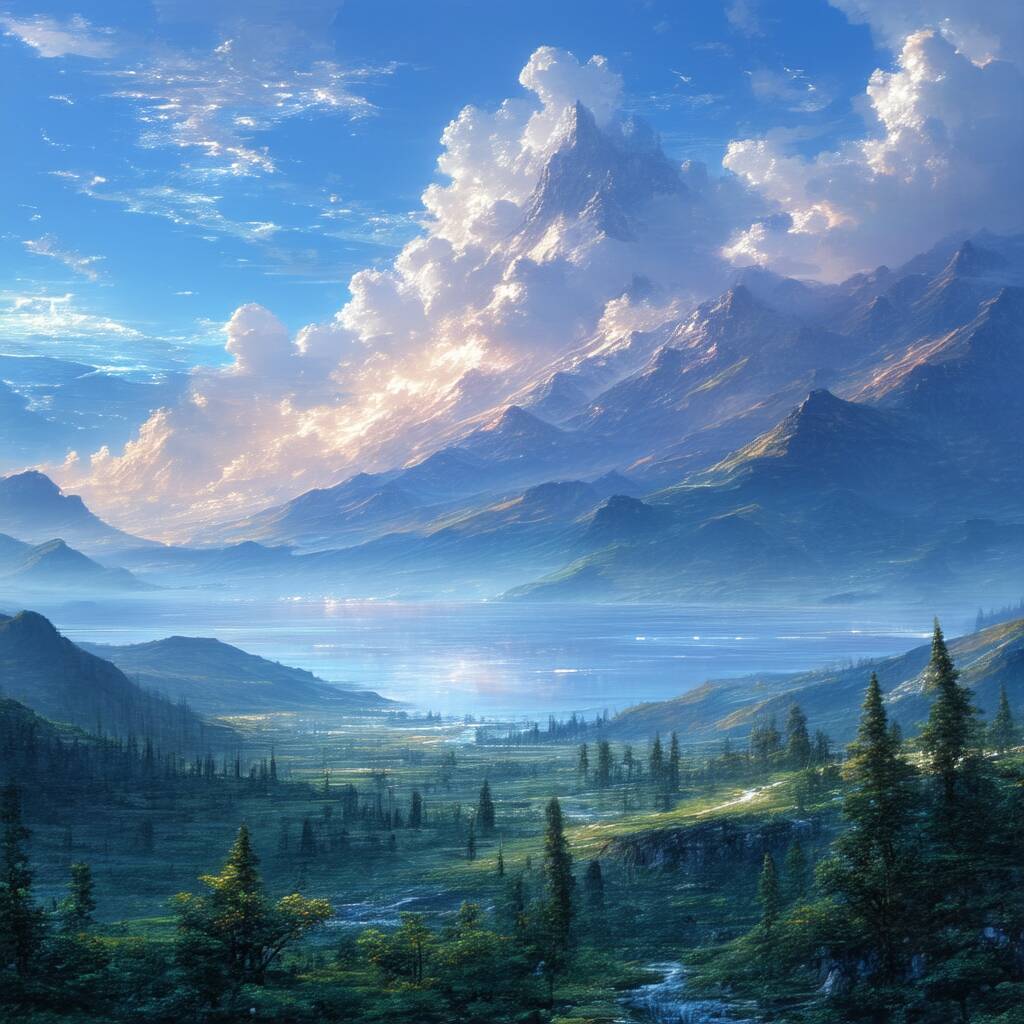} \\
\multicolumn{2}{c}{\parbox{0.43\textwidth}{\centering \textbf{Prompt:} \emph{HD Wallpaper.}}}
\end{tabular}
&
\begin{tabular}{cc}
\cellcolor{blue!15}\textbf{CFG} & \cellcolor{green!15}\textbf{Rectified-CFG++} \\
\includegraphics[width=0.22\textwidth]{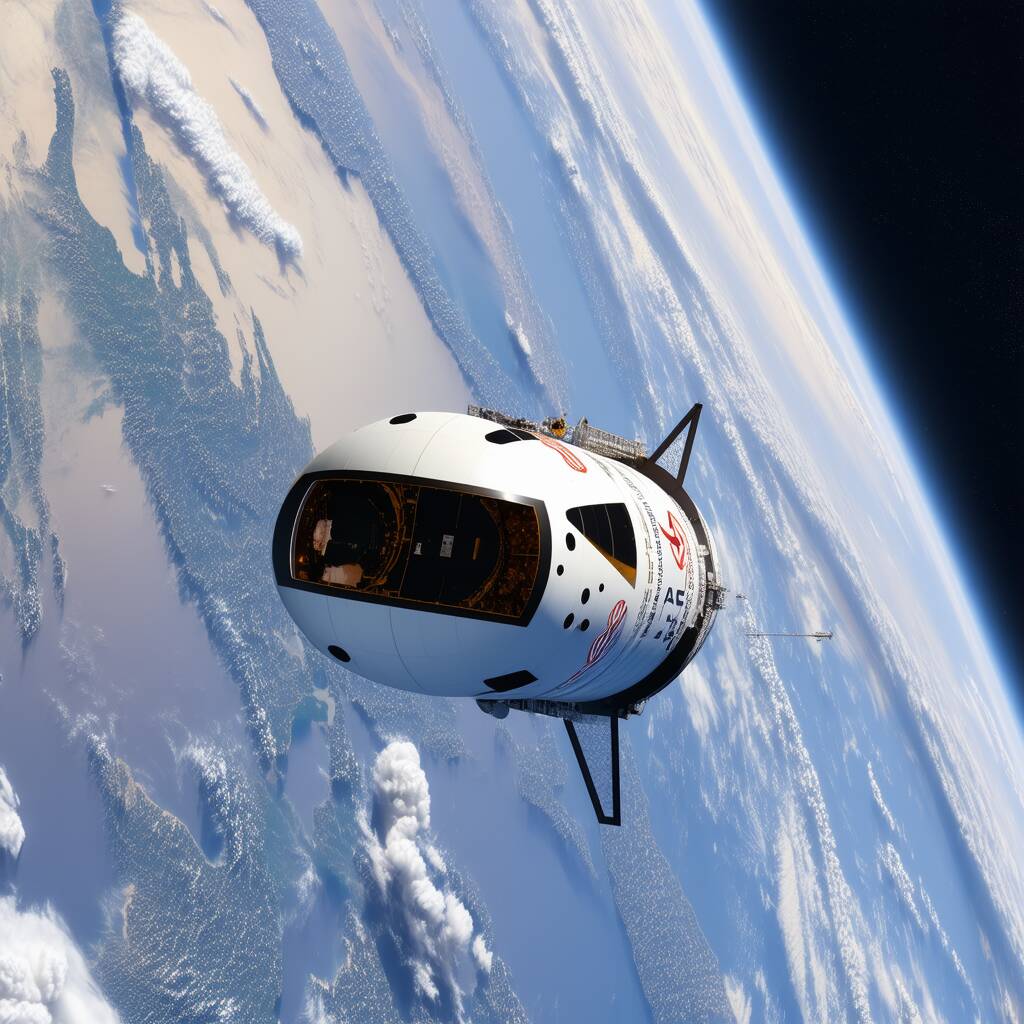} &
\includegraphics[width=0.22\textwidth]{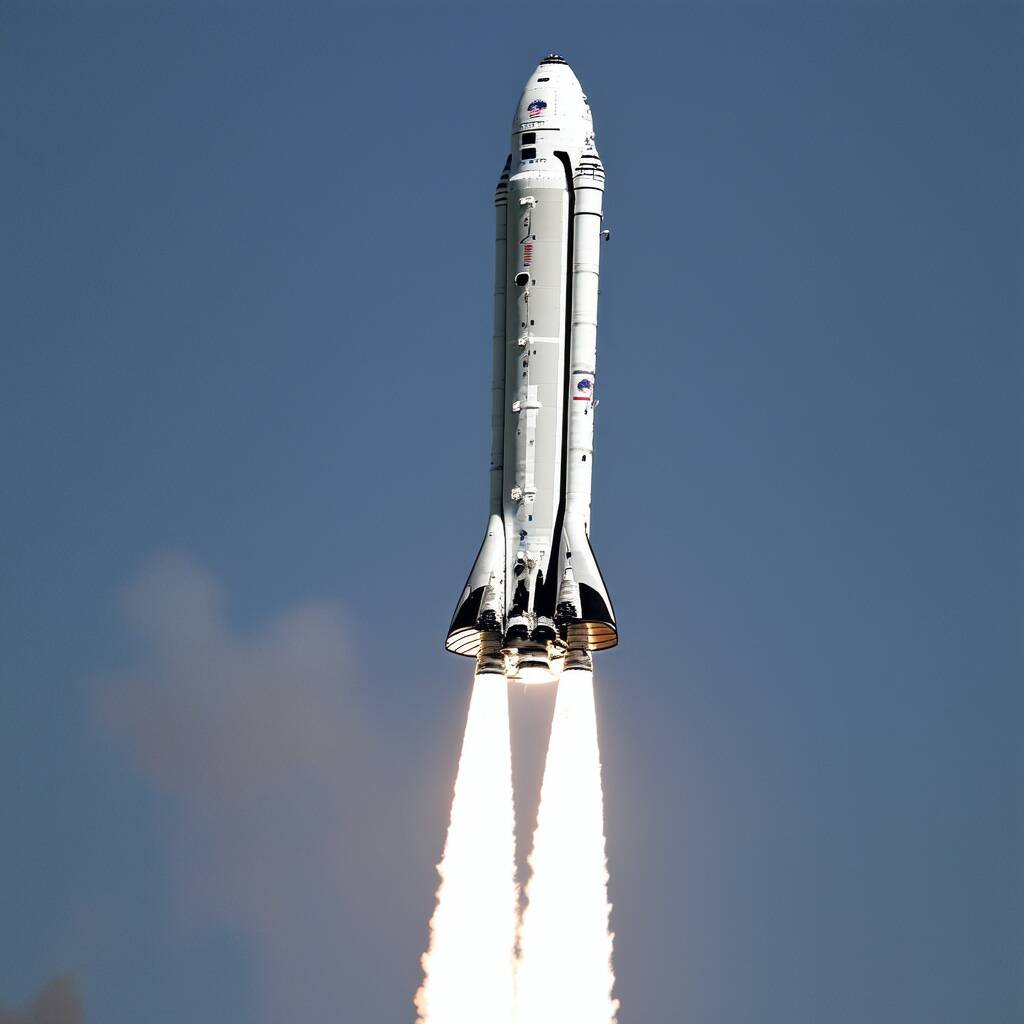} \\
\multicolumn{2}{c}{\parbox{0.43\textwidth}{\centering \textbf{Prompt:} \emph{SpaceX ready to send first all-civilian crew on...}}}
\end{tabular}
\\[8pt]

\begin{tabular}{cc}
\cellcolor{blue!15}\textbf{CFG} & \cellcolor{green!15}\textbf{Rectified-CFG++} \\
\includegraphics[width=0.22\textwidth]{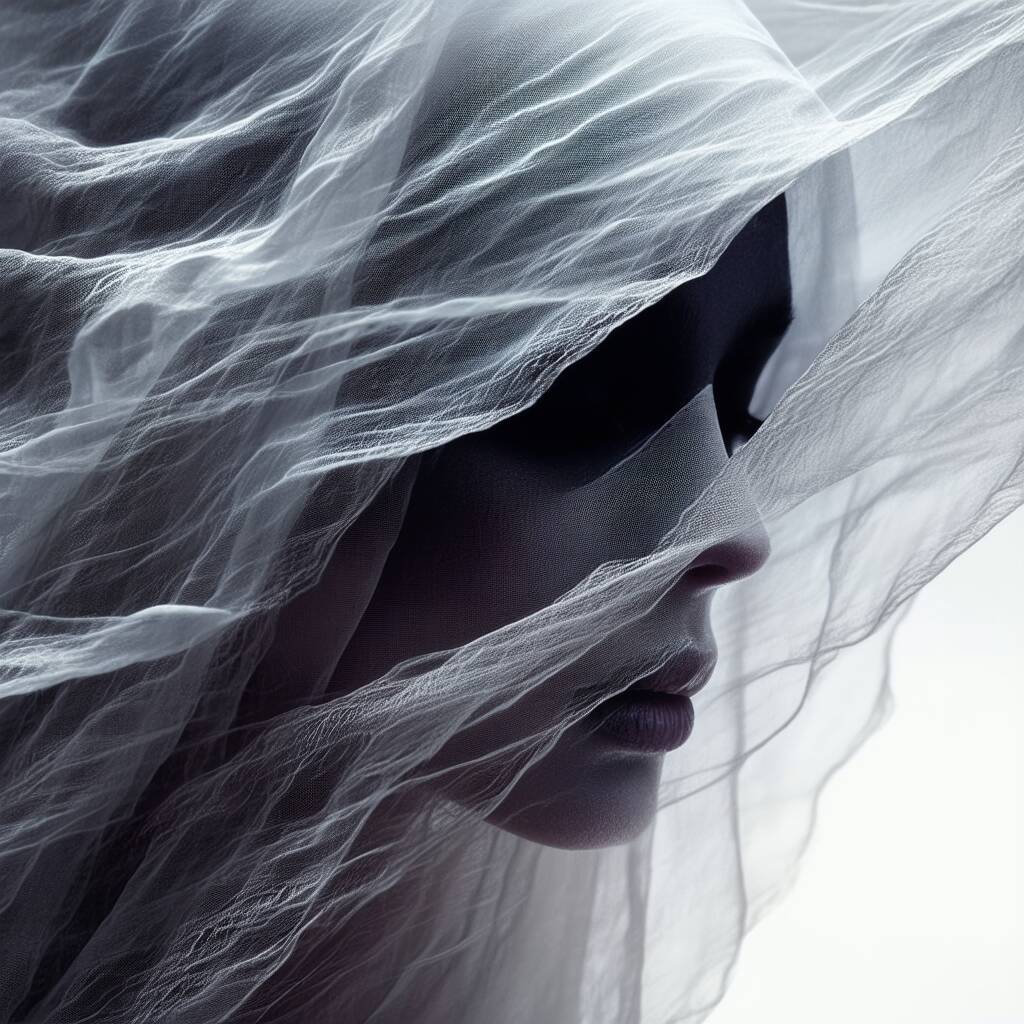} &
\includegraphics[width=0.22\textwidth]{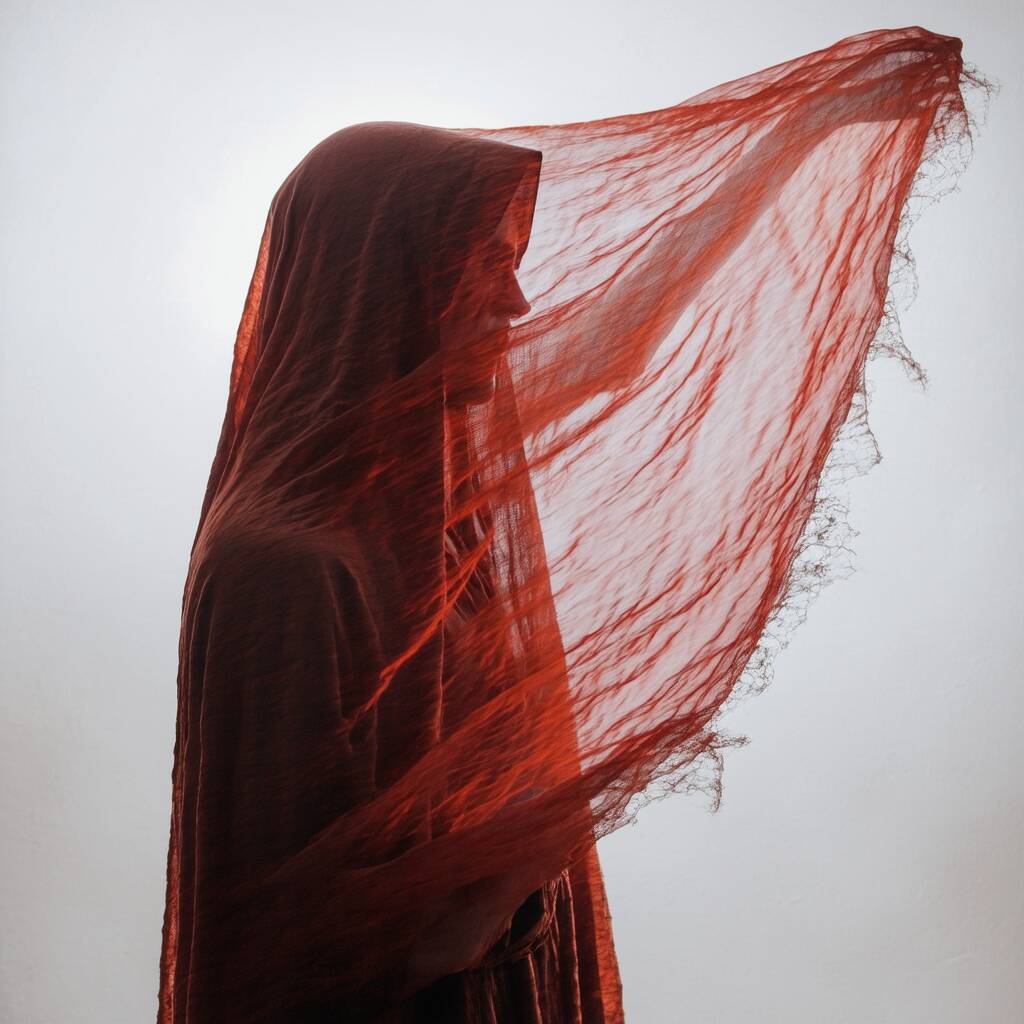} \\
\multicolumn{2}{c}{\parbox{0.43\textwidth}{\centering \textbf{Prompt:} \emph{Veiled Within.}}}
\end{tabular}
&
\begin{tabular}{cc}
\cellcolor{blue!15}\textbf{CFG} & \cellcolor{green!15}\textbf{Rectified-CFG++} \\
\includegraphics[width=0.22\textwidth]{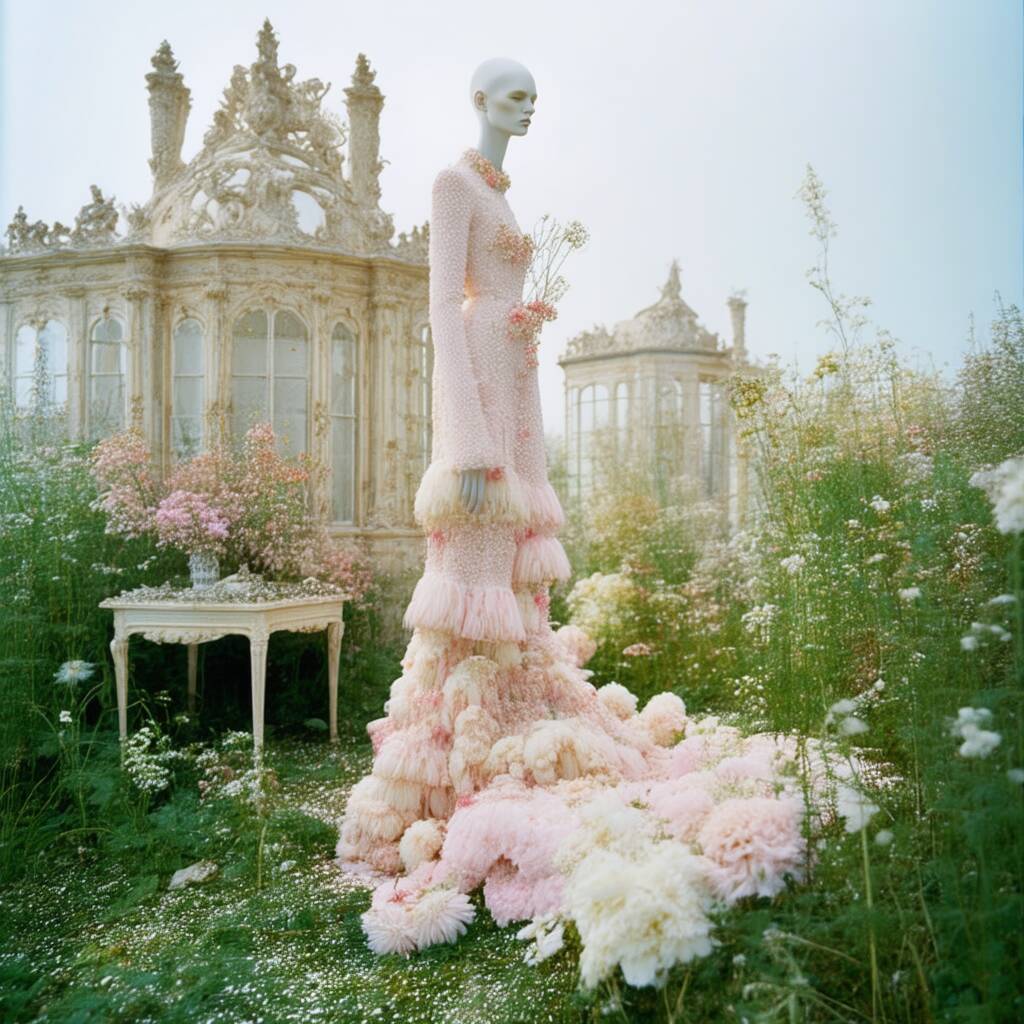} &
\includegraphics[width=0.22\textwidth]{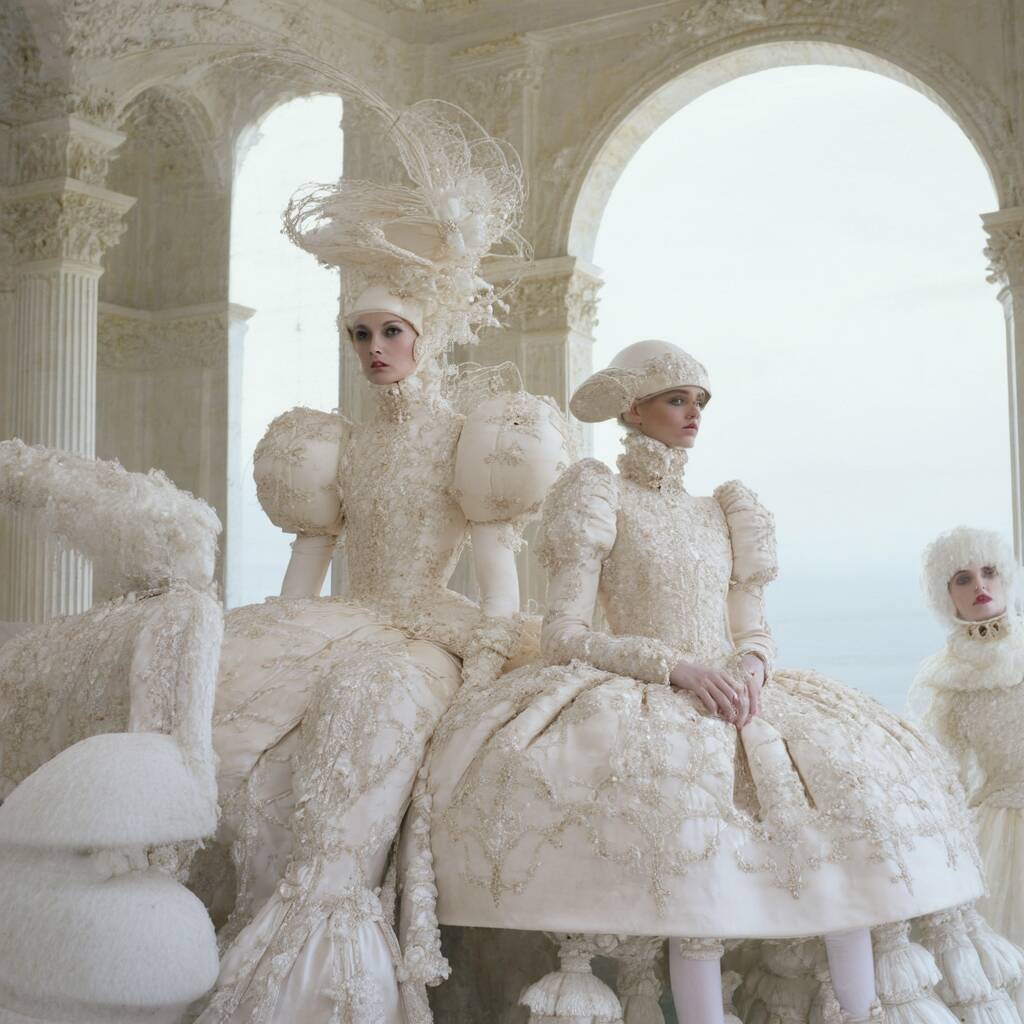} \\
\multicolumn{2}{c}{\parbox{0.43\textwidth}{\centering \textbf{Prompt:} \emph{The One And Only by Tim Walker for Vogue May...}}}
\end{tabular}
\\[8pt]

\begin{tabular}{cc}
\cellcolor{blue!15}\textbf{CFG} & \cellcolor{green!15}\textbf{Rectified-CFG++} \\
\includegraphics[width=0.22\textwidth]{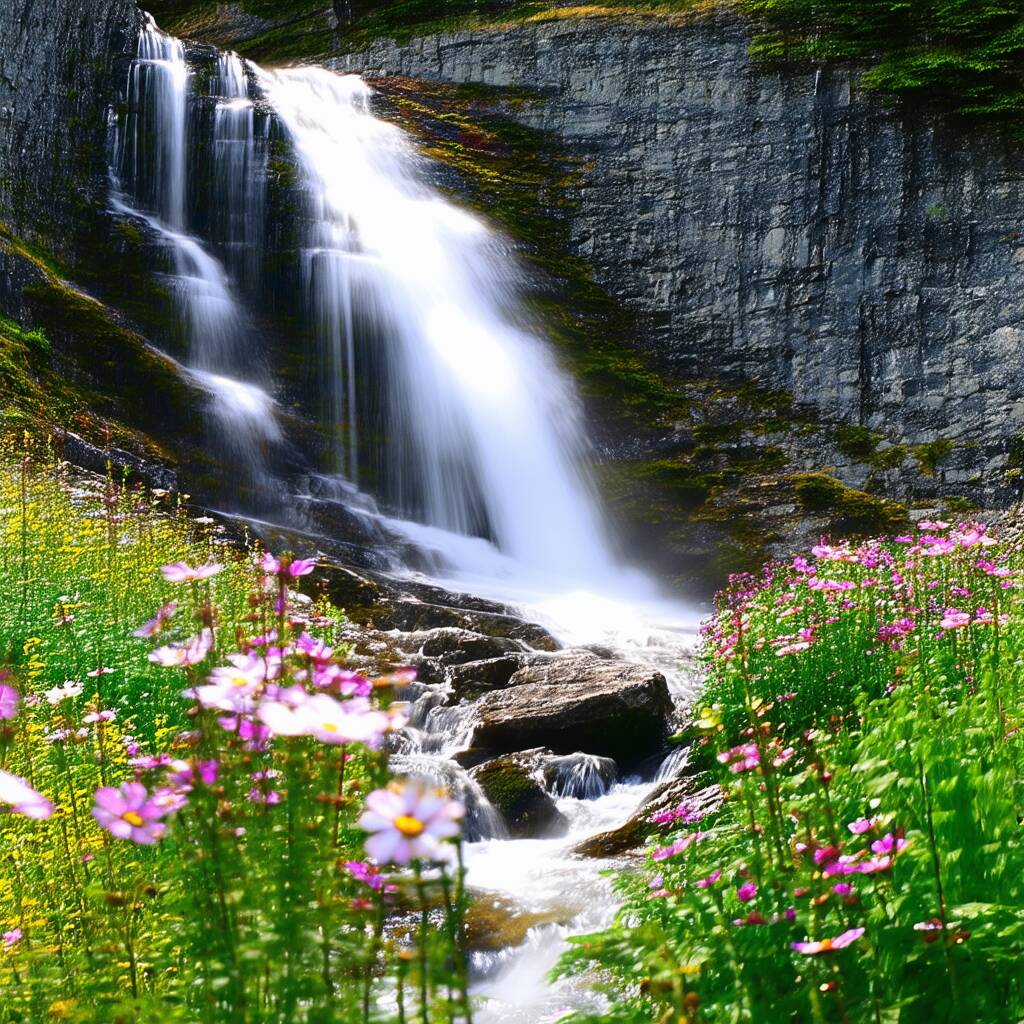} &
\includegraphics[width=0.22\textwidth]{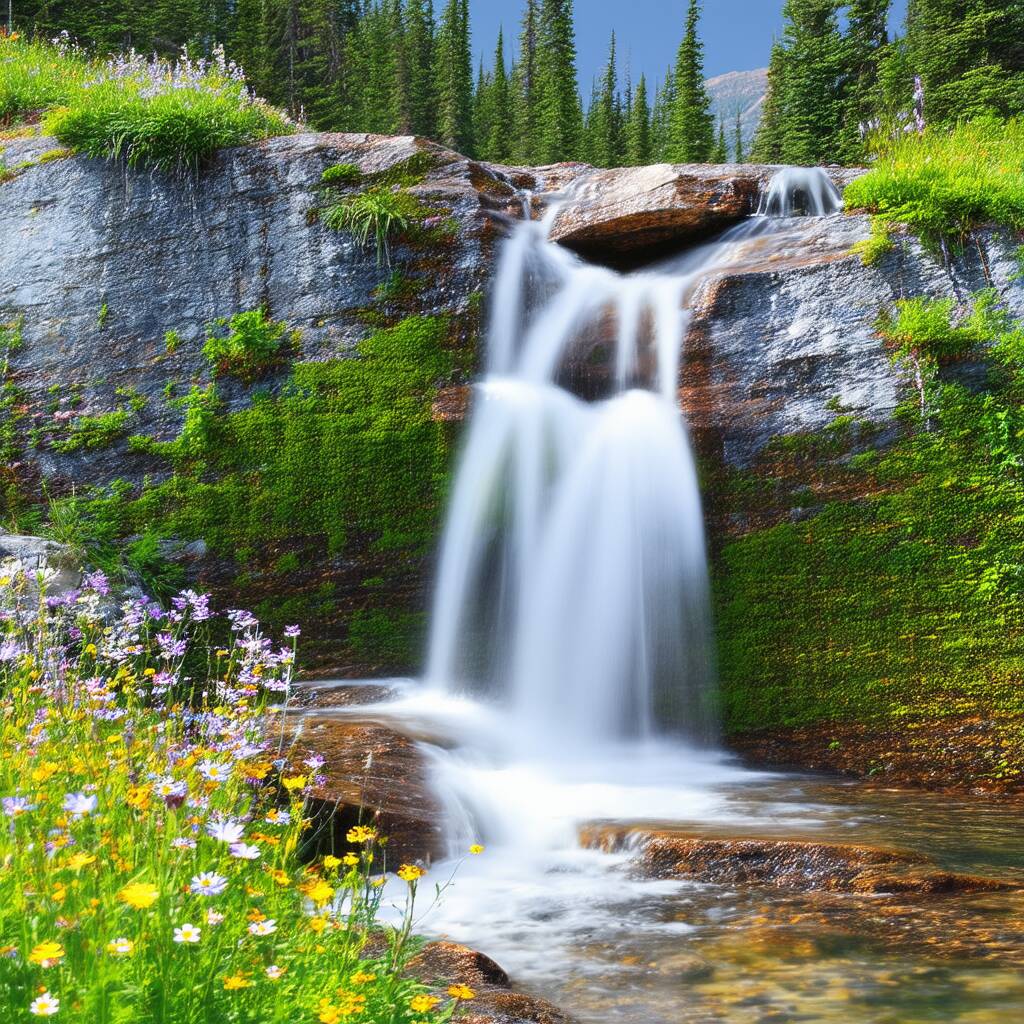} \\
\multicolumn{2}{c}{\parbox{0.43\textwidth}{\centering \textbf{Prompt:} \emph{Waterfall surrounded by nice early summer flowers. Heather Foothills, Mt. Hood Meadows ski area, Oregon...}
}}
\end{tabular}
&
\begin{tabular}{cc}
\cellcolor{blue!15}\textbf{CFG} & \cellcolor{green!15}\textbf{Rectified-CFG++} \\
\includegraphics[width=0.22\textwidth]{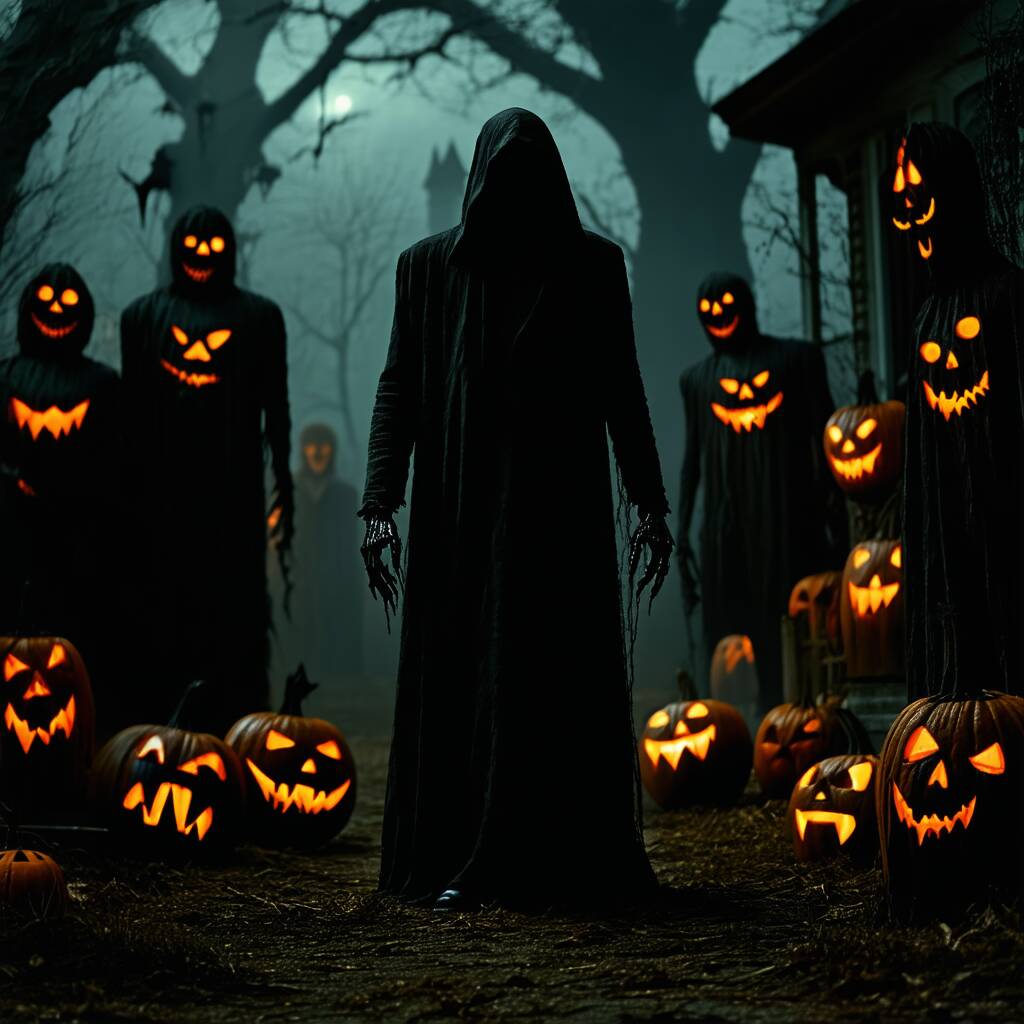} &
\includegraphics[width=0.22\textwidth]{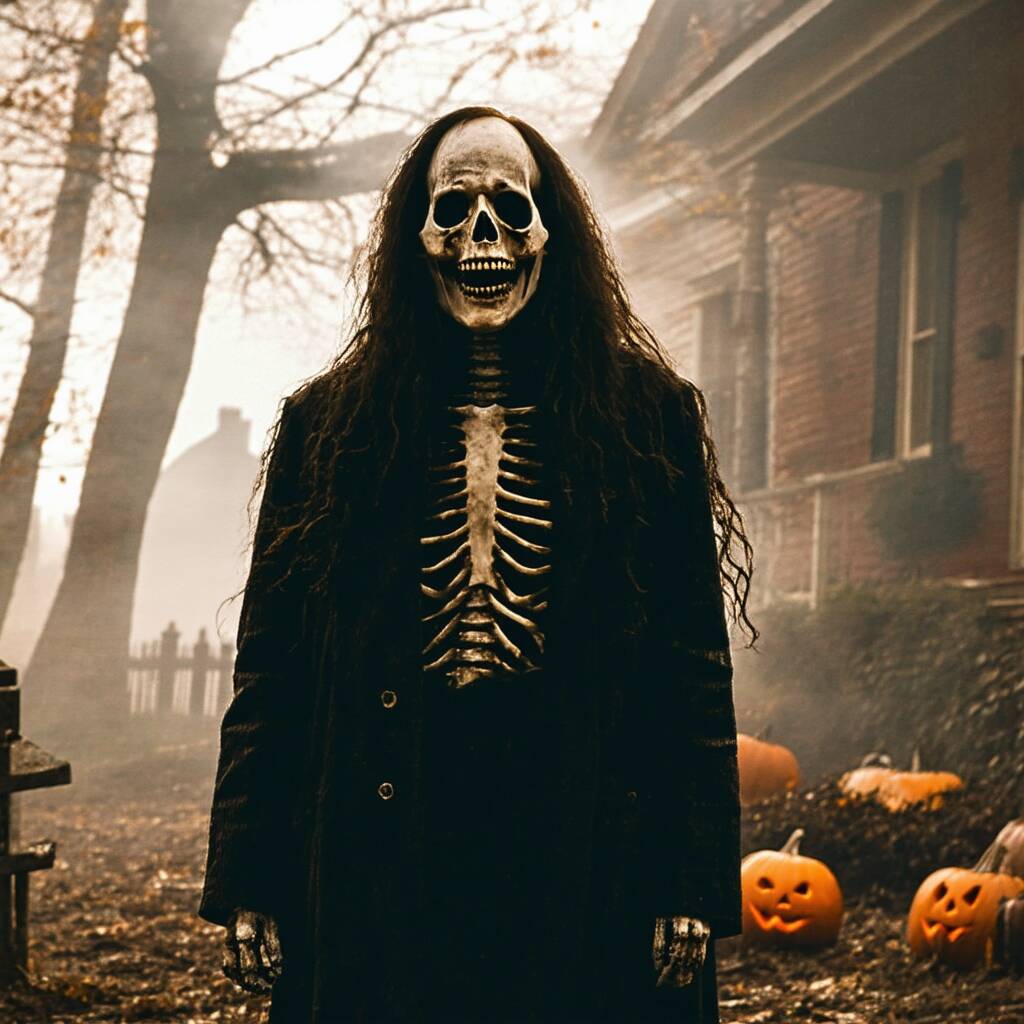} \\
\multicolumn{2}{c}{\parbox{0.43\textwidth}{\centering \textbf{Prompt:} \emph{Halloween scares up 918 mn at global box office earns second highest horror opening of all time in North...}}}
\end{tabular}
\\[8pt]

\begin{tabular}{cc}
\cellcolor{blue!15}\textbf{CFG} & \cellcolor{green!15}\textbf{Rectified-CFG++} \\
\includegraphics[width=0.22\textwidth]{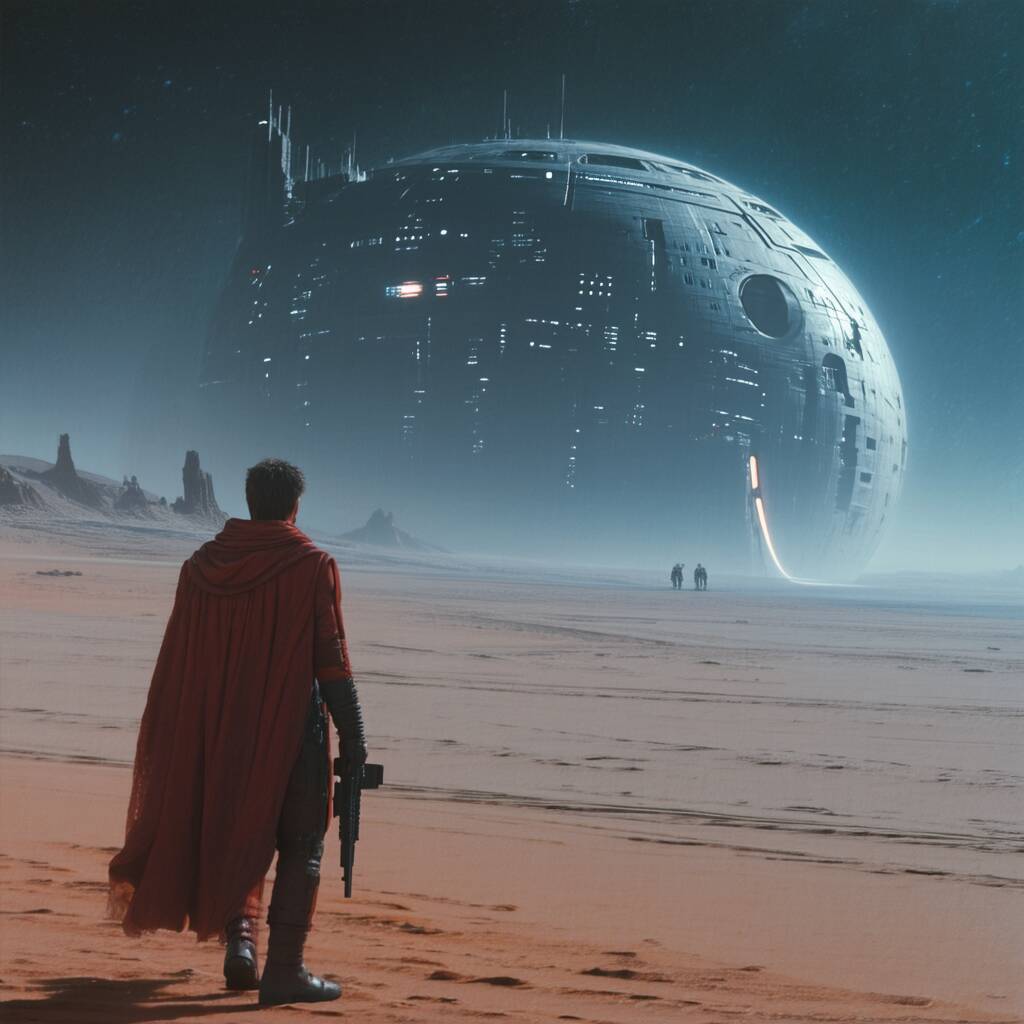} &
\includegraphics[width=0.22\textwidth]{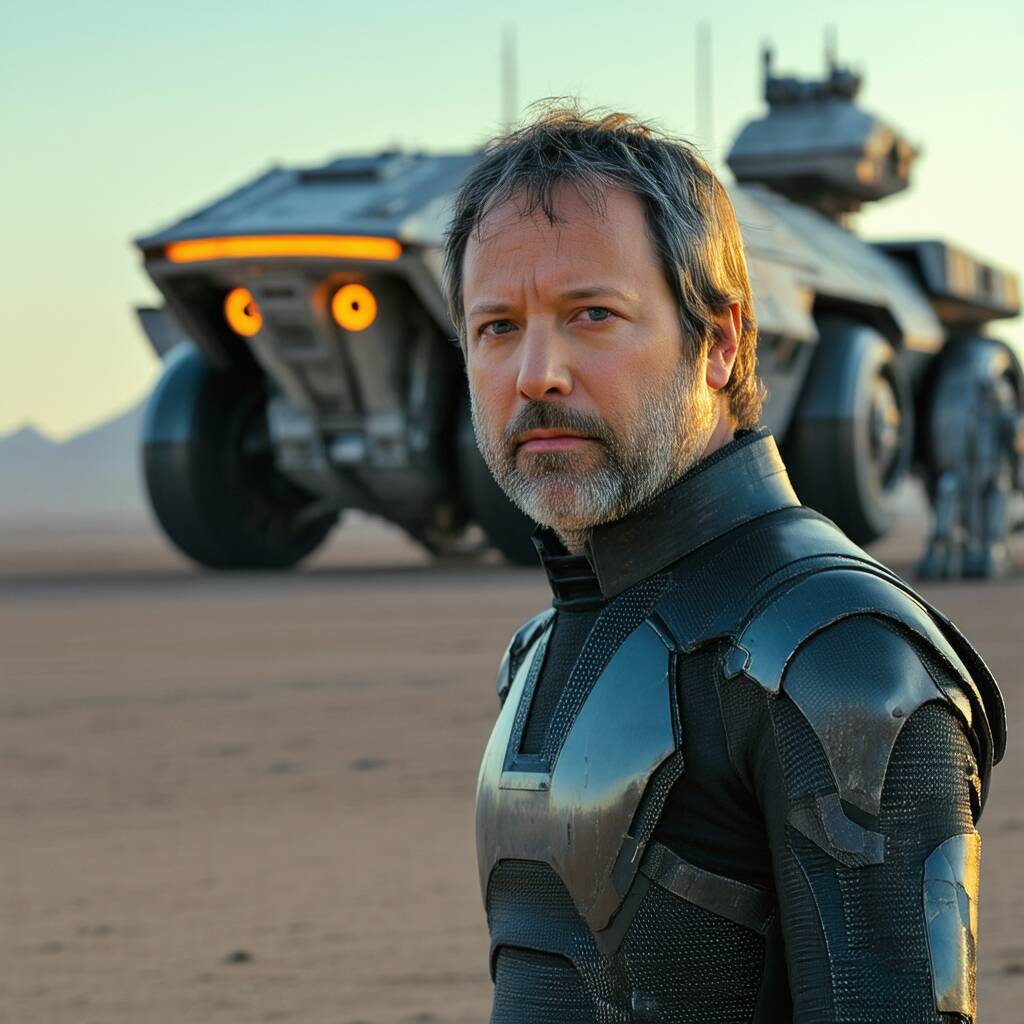} \\
\multicolumn{2}{c}{\parbox{0.43\textwidth}{\centering \textbf{Prompt:} \emph{Stevens}}}
\end{tabular}
&
\begin{tabular}{cc}
\cellcolor{blue!15}\textbf{CFG} & \cellcolor{green!15}\textbf{Rectified-CFG++} \\
\includegraphics[width=0.22\textwidth]{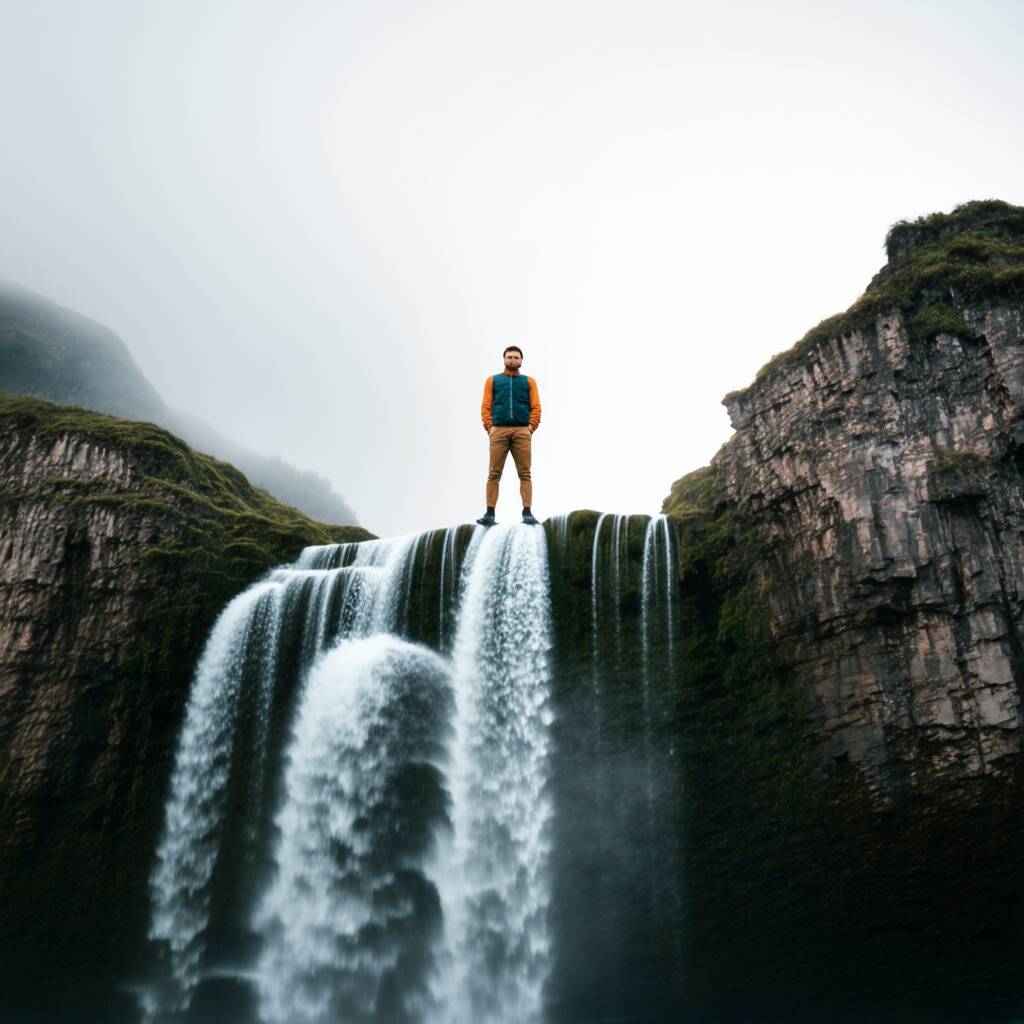} &
\includegraphics[width=0.22\textwidth]{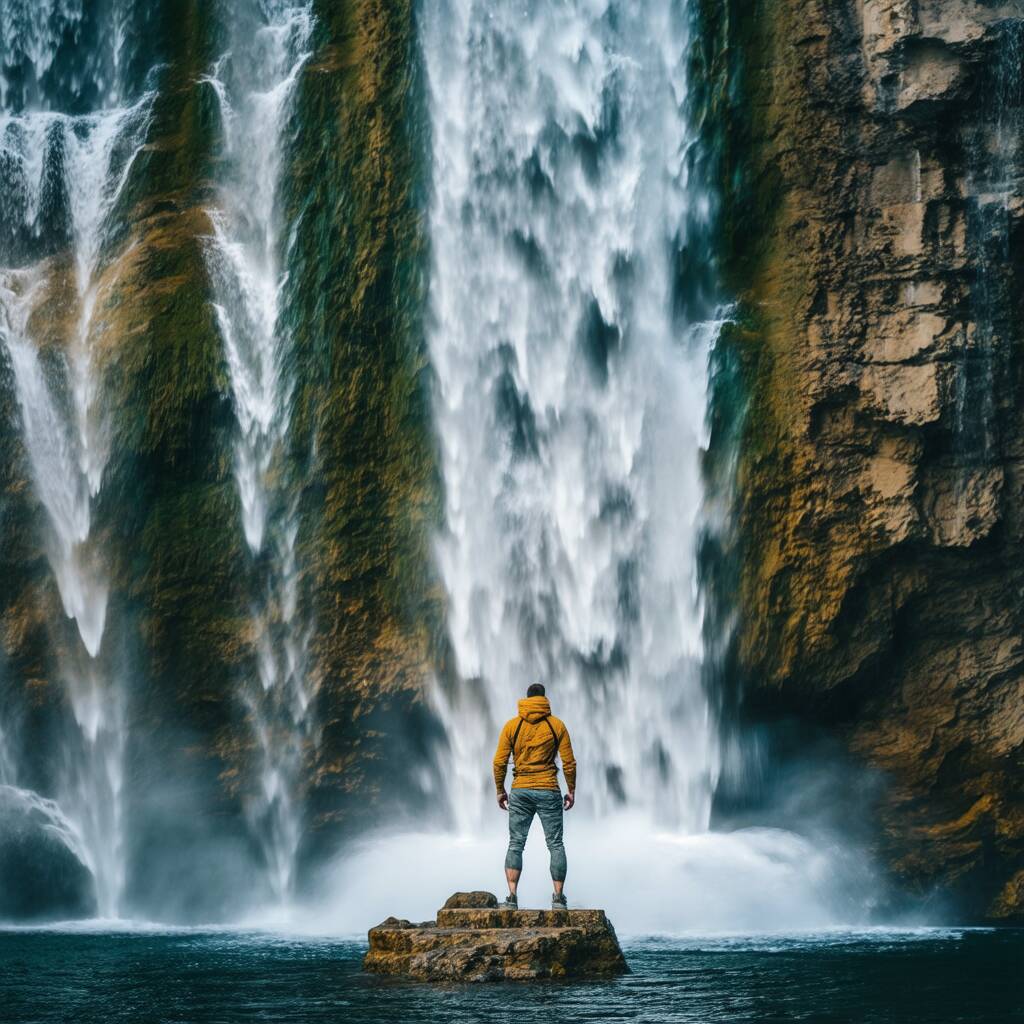} \\
\multicolumn{2}{c}{\parbox{0.43\textwidth}{\centering \textbf{Prompt:} \emph{Man stands near a waterfall}}}
\end{tabular}
\\[8pt]

\begin{tabular}{cc}
\cellcolor{blue!15}\textbf{CFG} & \cellcolor{green!15}\textbf{Rectified-CFG++} \\
\includegraphics[width=0.22\textwidth]{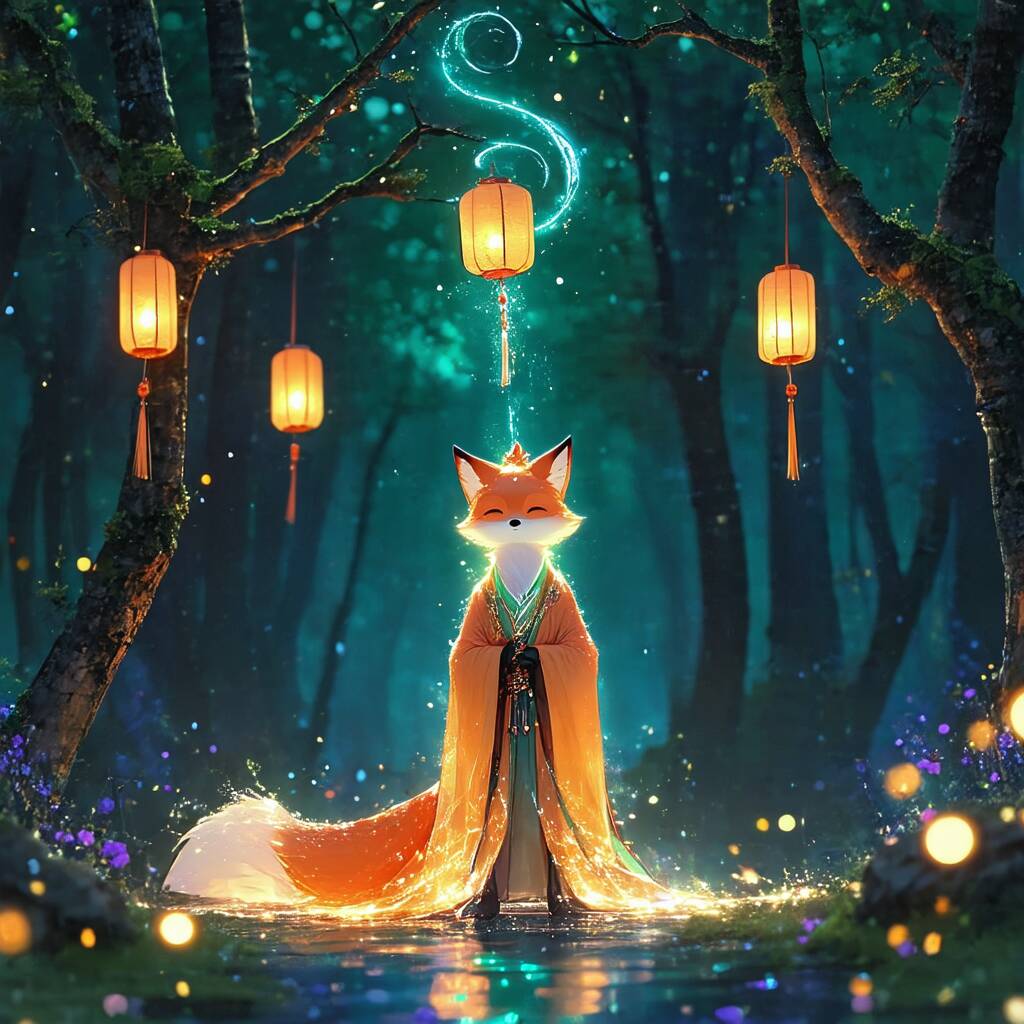} &
\includegraphics[width=0.22\textwidth]{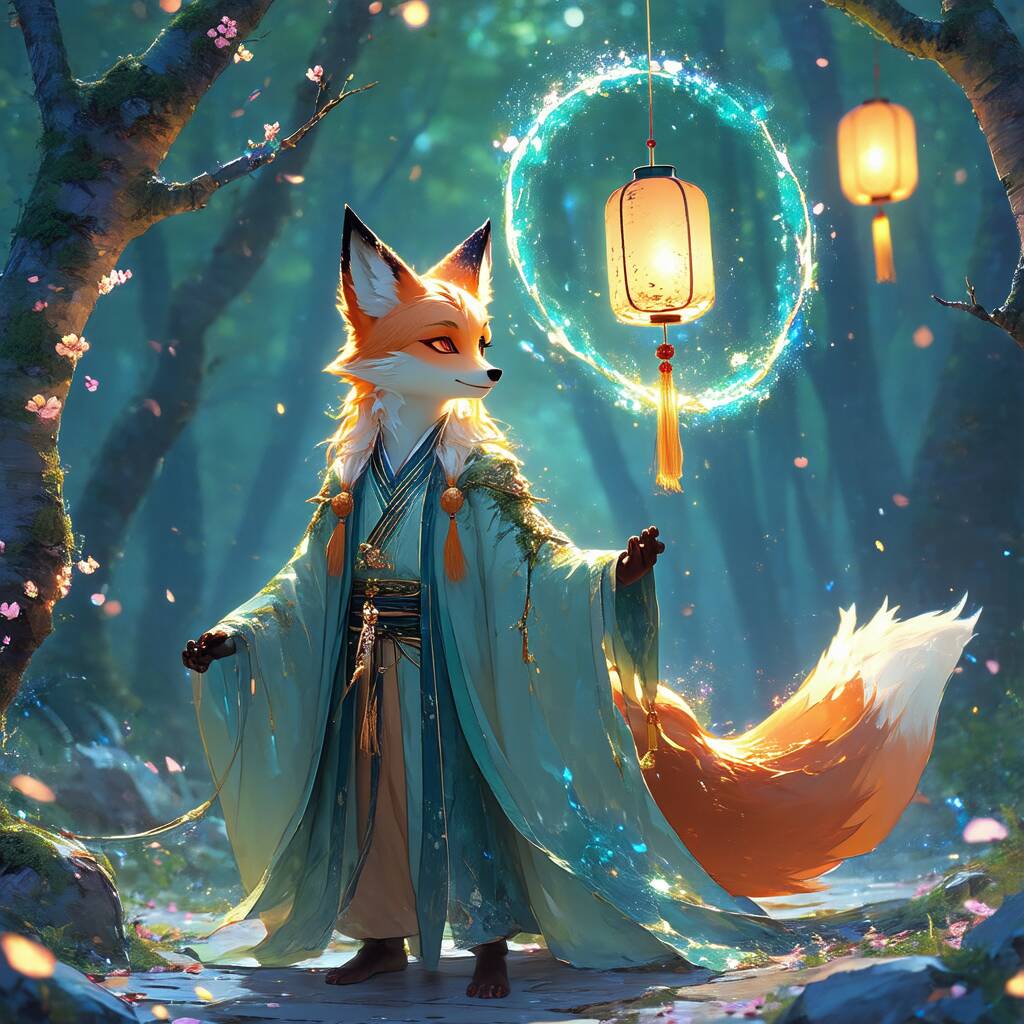} \\
\multicolumn{2}{c}{\parbox{0.43\textwidth}{\centering \textbf{Prompt:} \emph{At the center of a glowing sakura forest drenched in moonlight, a mystical fox-like humanoid with radiant orange fur and...}
}}
\end{tabular}
&
\begin{tabular}{cc}
\cellcolor{blue!15}\textbf{CFG} & \cellcolor{green!15}\textbf{Rectified-CFG++} \\
\includegraphics[width=0.22\textwidth]{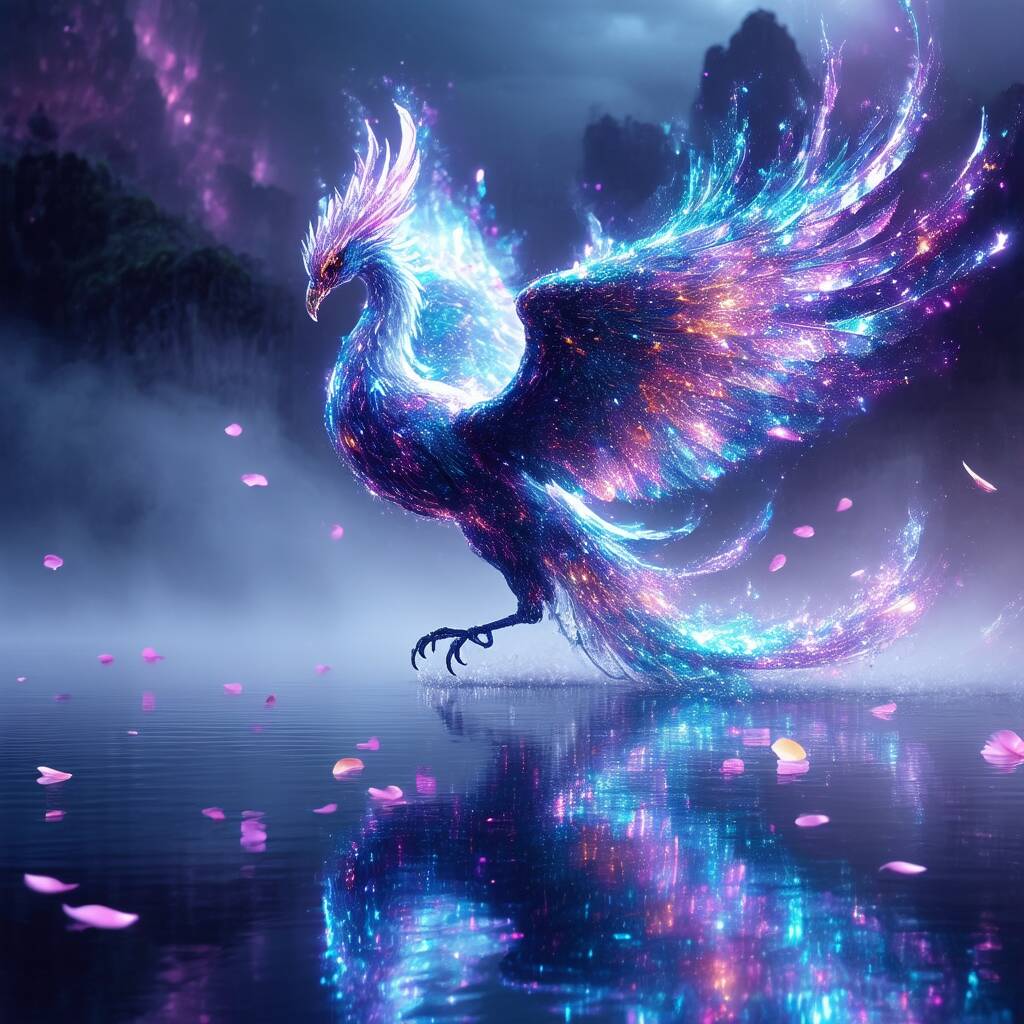} &
\includegraphics[width=0.22\textwidth]{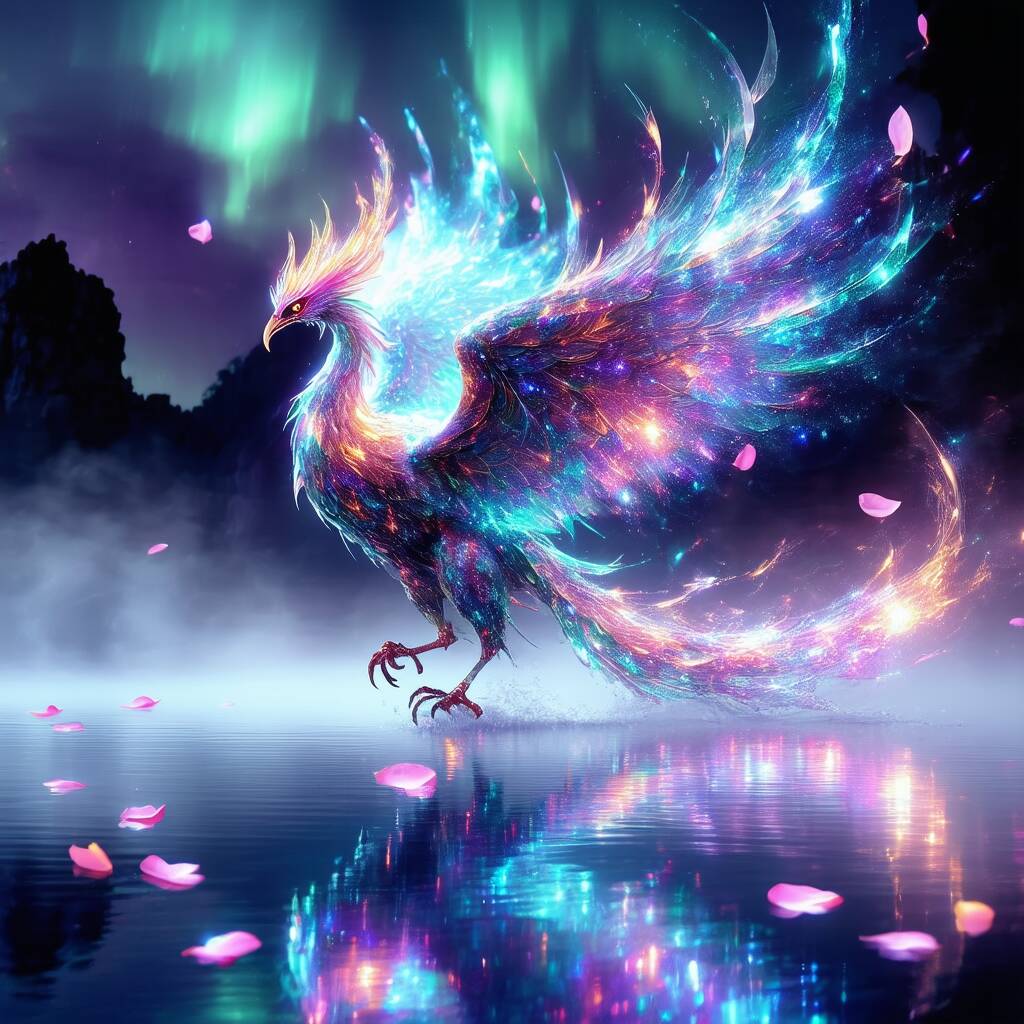} \\
\multicolumn{2}{c}{\parbox{0.43\textwidth}{\centering \textbf{Prompt:} \emph{A majestic phoenix made of shimmering crystal and flowing nebulae, soaring midair in a twilight sky filled with violet auroras...}
}}
\end{tabular}

\end{tabular}

\vspace{6pt}
\caption{
\textbf{Outcome of SD3.5~\cite{sd32024} when using CFG vs Rectified-CFG++ for a variety text prompts.}
}
\label{fig:appendix_sd35_comparison}
\end{figure}

\begin{figure}[!htbp]
\centering
\scriptsize
\renewcommand{\arraystretch}{0.5}
\setlength{\tabcolsep}{0pt}

\begin{tabular}{c@{\hspace{5pt}}c@{\hspace{5pt}}}

\begin{tabular}{cc}
\cellcolor{blue!15}\textbf{CFG} & \cellcolor{green!15}\textbf{Rectified-CFG++} \\
\includegraphics[width=0.22\textwidth]{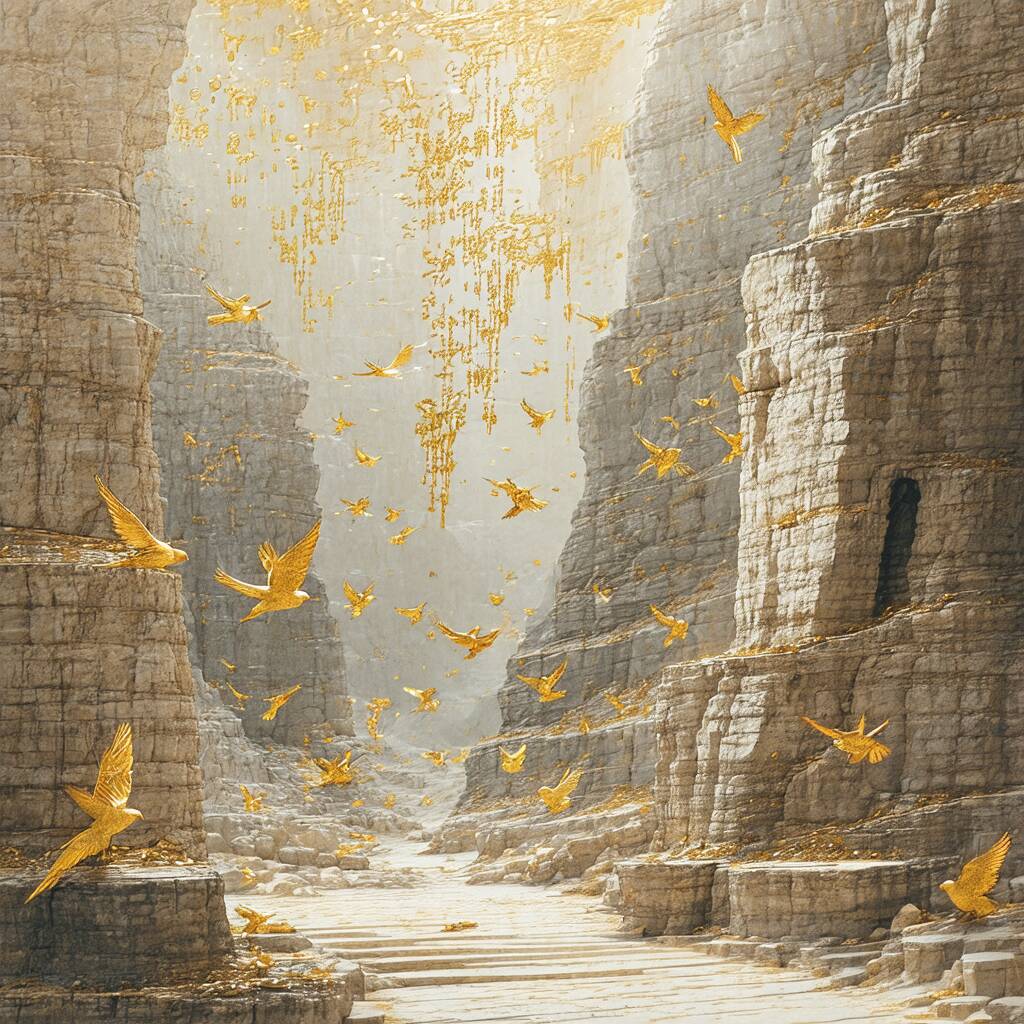} &
\includegraphics[width=0.22\textwidth]{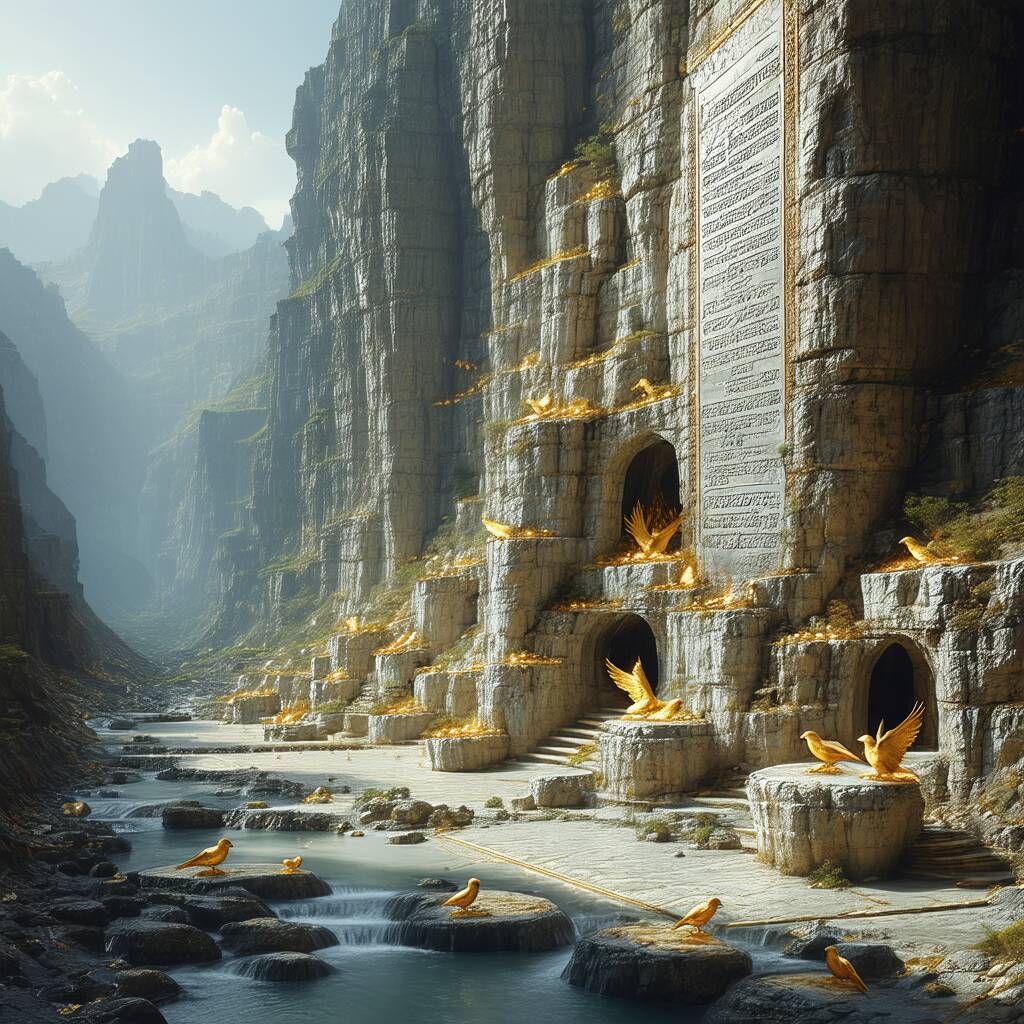} \\
\multicolumn{2}{c}{\parbox{0.43\textwidth}{\centering \textbf{Prompt:} \emph{An enchanted canyon with floating stones engraved with prophecy, and golden birds nesting in the wind-carved...}}}
\end{tabular}
&
\begin{tabular}{cc}
\cellcolor{blue!15}\textbf{CFG} & \cellcolor{green!15}\textbf{Rectified-CFG++} \\
\includegraphics[width=0.22\textwidth]{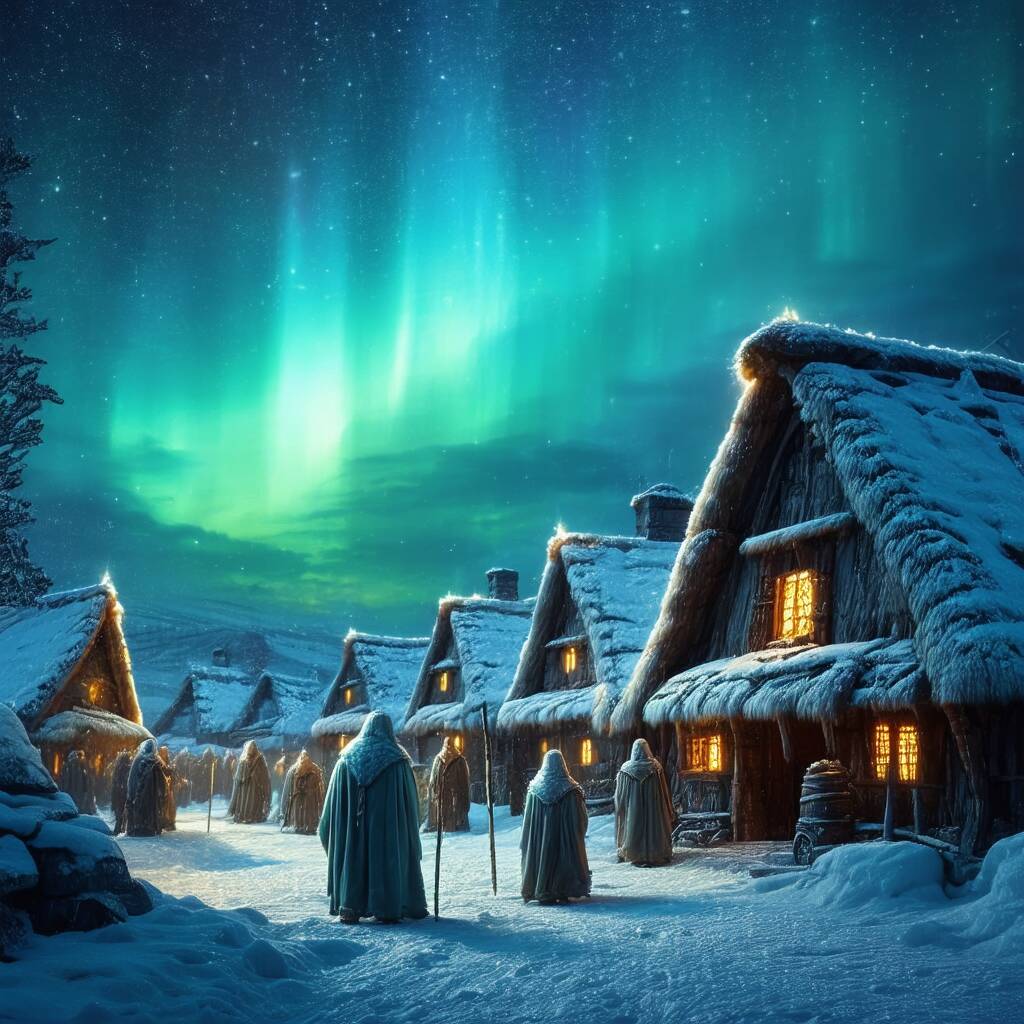} &
\includegraphics[width=0.22\textwidth]{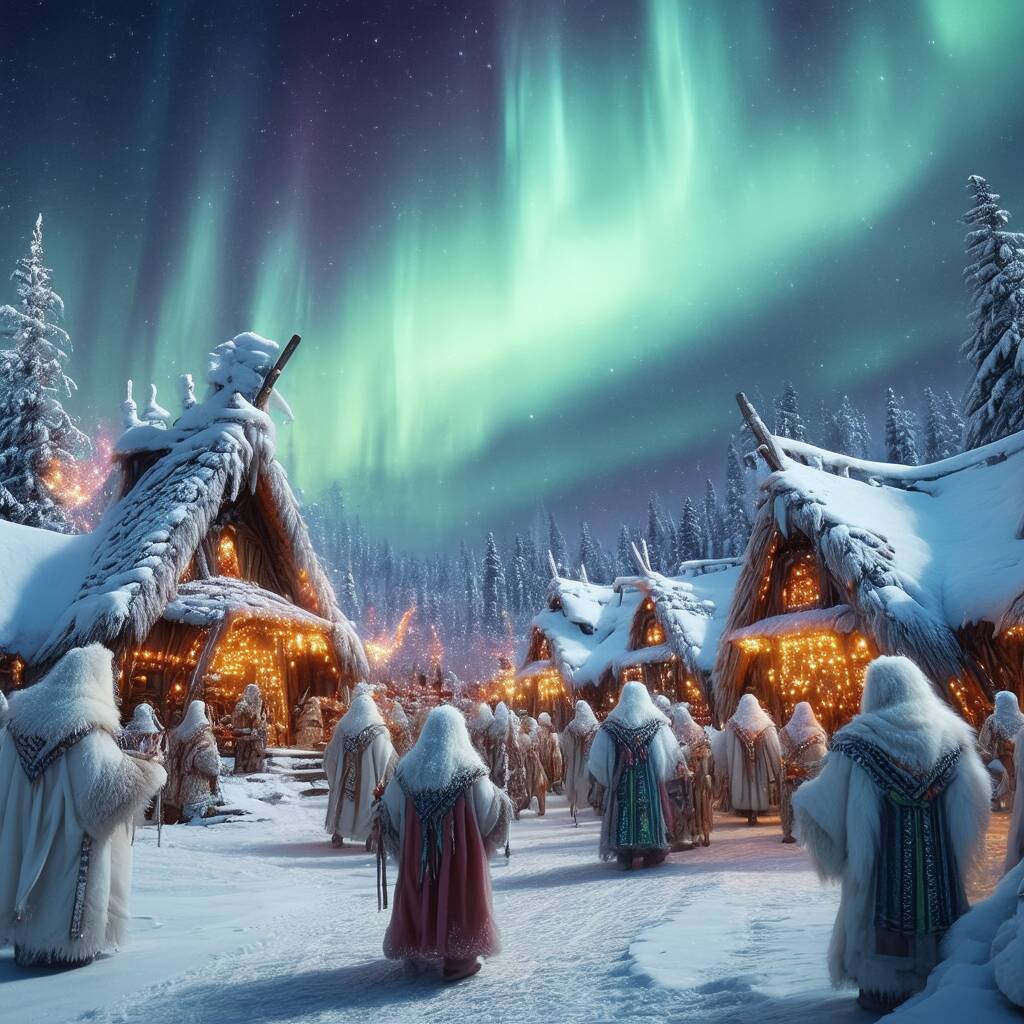} \\
\multicolumn{2}{c}{\parbox{0.43\textwidth}{\centering \textbf{Prompt:} \emph{A snowy village where aurora borealis is woven into tapestries by mythical weavers in glowing fur robes.}}}
\end{tabular}
\\[8pt]

\begin{tabular}{cc}
\cellcolor{blue!15}\textbf{CFG} & \cellcolor{green!15}\textbf{Rectified-CFG++} \\
\includegraphics[width=0.22\textwidth]{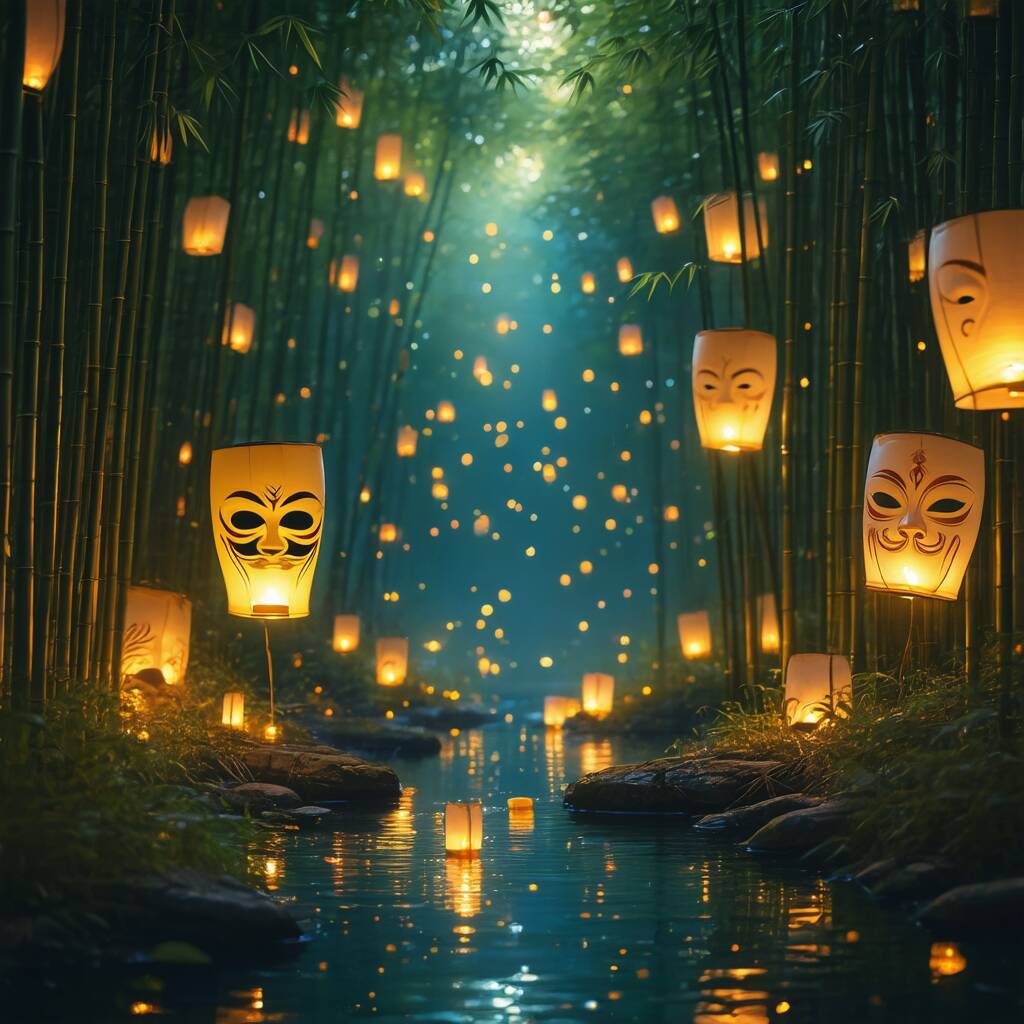} &
\includegraphics[width=0.22\textwidth]{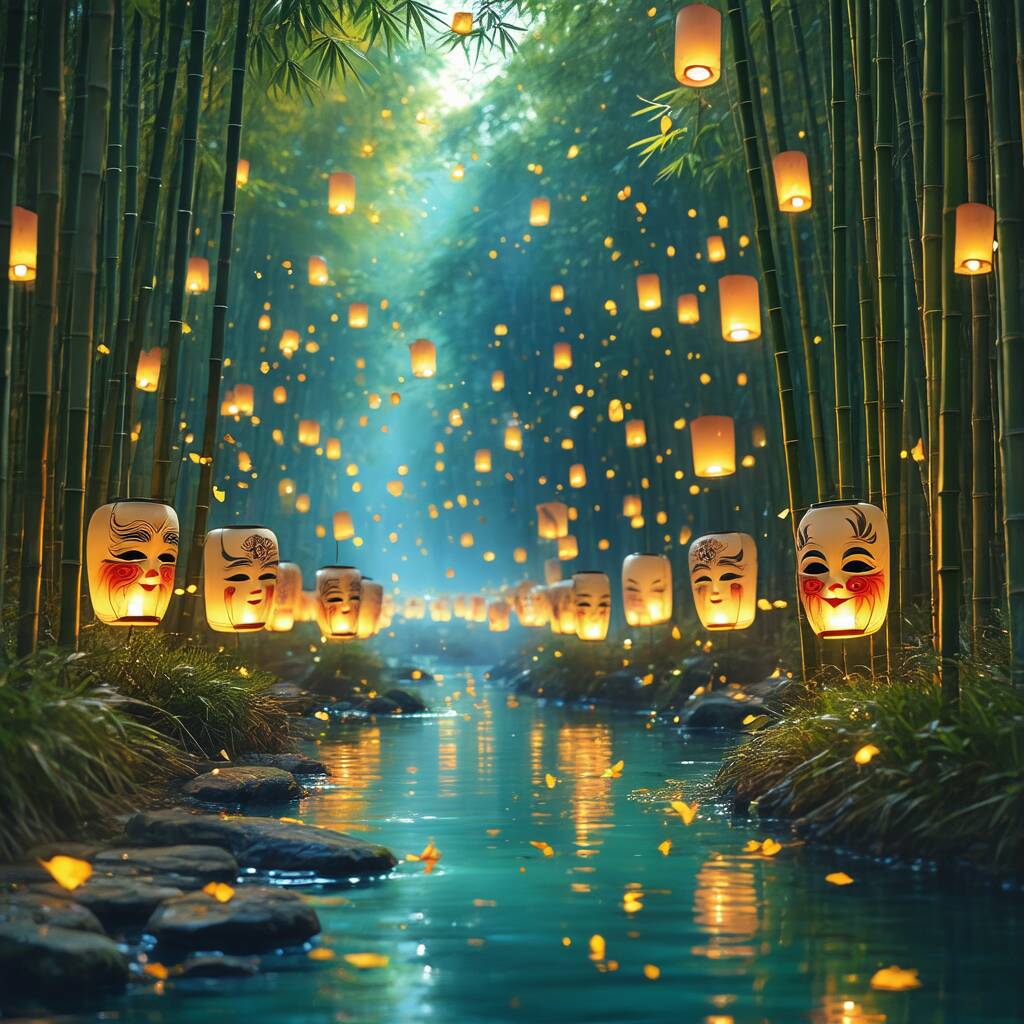} \\
\multicolumn{2}{c}{\parbox{0.43\textwidth}{\centering \textbf{Prompt:} \emph{A firefly festival deep in a bamboo forest, with floating lanterns and spirit masks glowing in warm twilight.}}}
\end{tabular}
&
\begin{tabular}{cc}
\cellcolor{blue!15}\textbf{CFG} & \cellcolor{green!15}\textbf{Rectified-CFG++} \\
\includegraphics[width=0.22\textwidth]{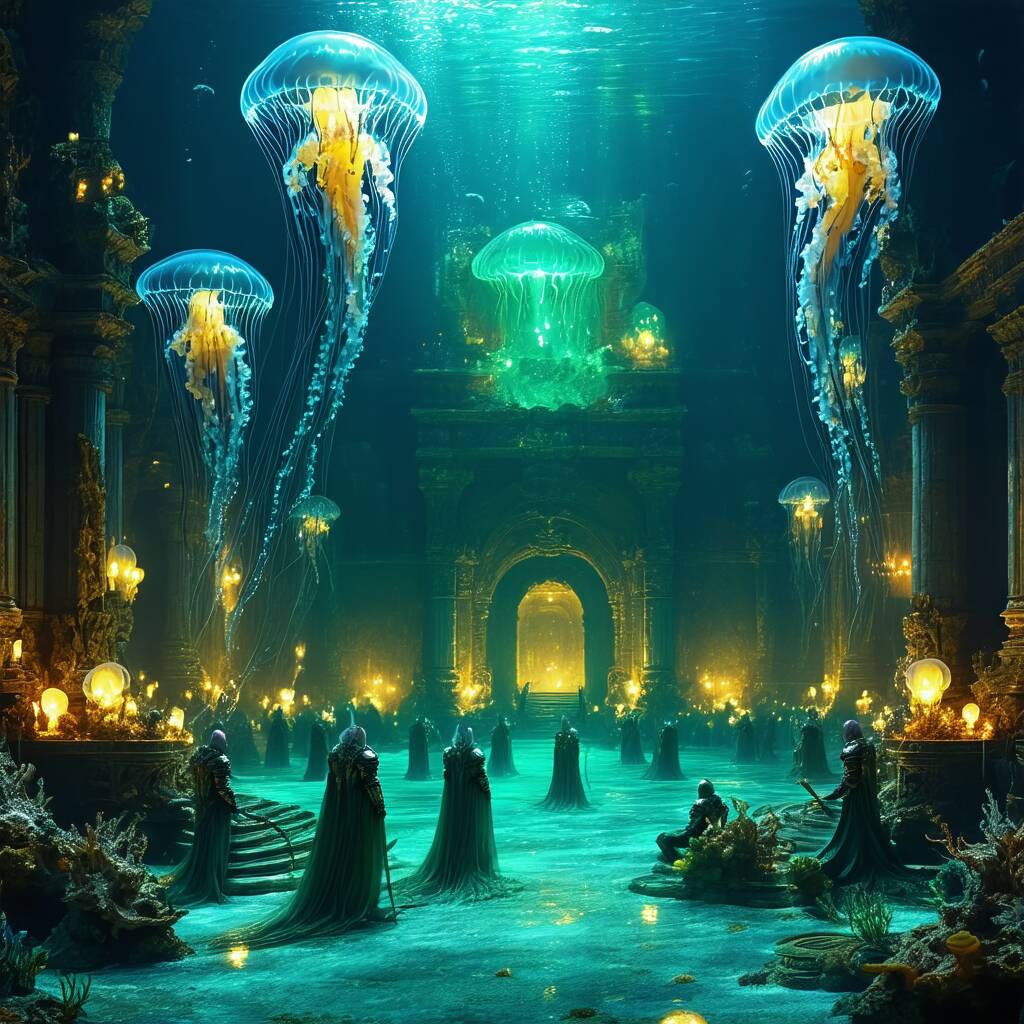} &
\includegraphics[width=0.22\textwidth]{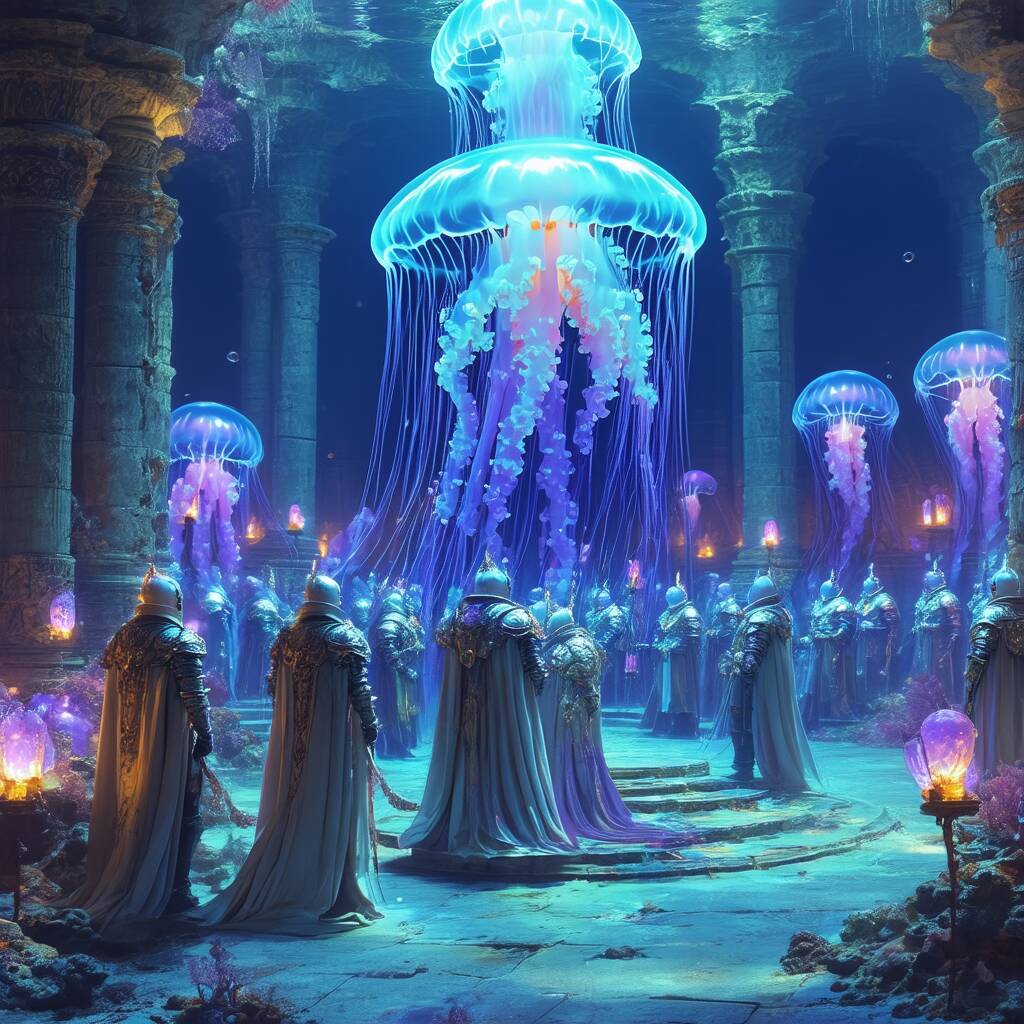} \\
\multicolumn{2}{c}{\parbox{0.43\textwidth}{\centering \textbf{Prompt:} \emph{An underwater palace lit by jellyfish lanterns, where merfolk in ornate armor hold council in coral thrones.}}}
\end{tabular}
\\[8pt]

\begin{tabular}{cc}
\cellcolor{blue!15}\textbf{CFG} & \cellcolor{green!15}\textbf{Rectified-CFG++} \\
\includegraphics[width=0.22\textwidth]{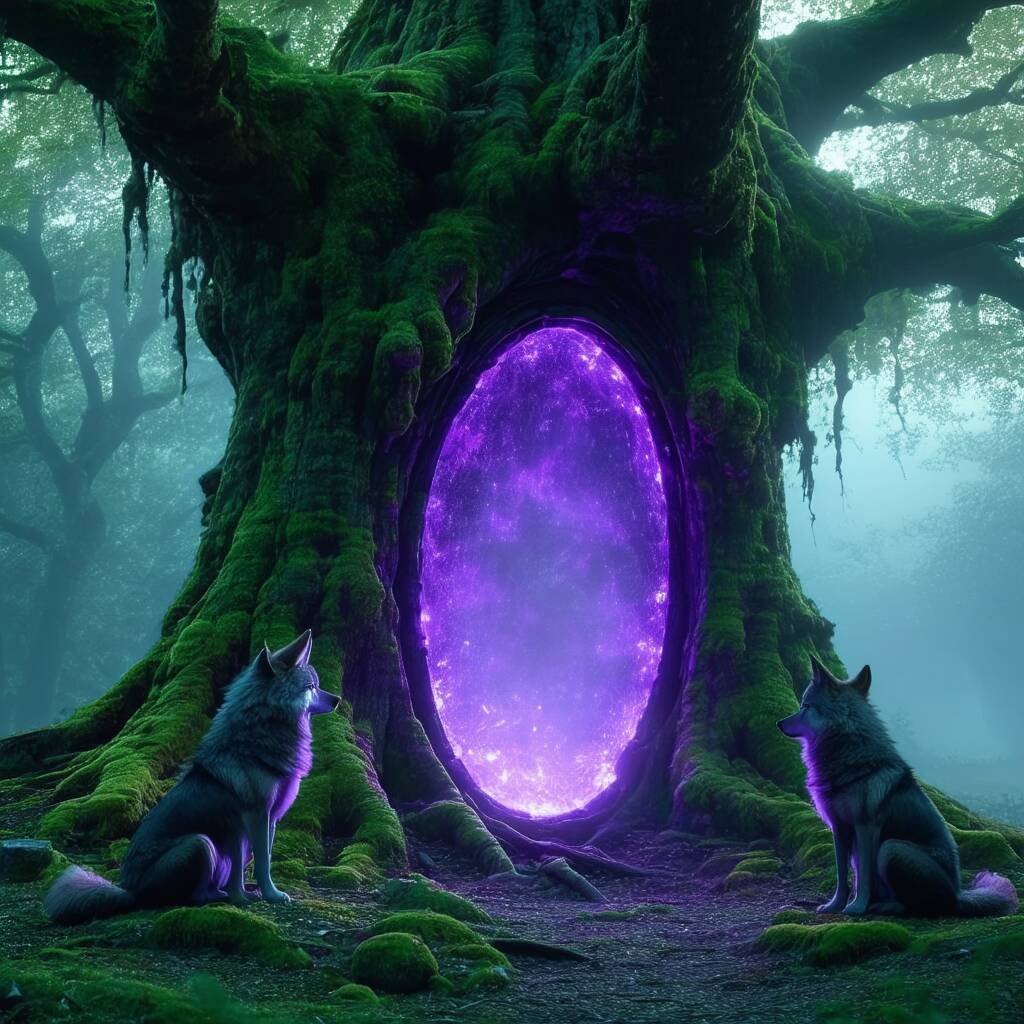} &
\includegraphics[width=0.22\textwidth]{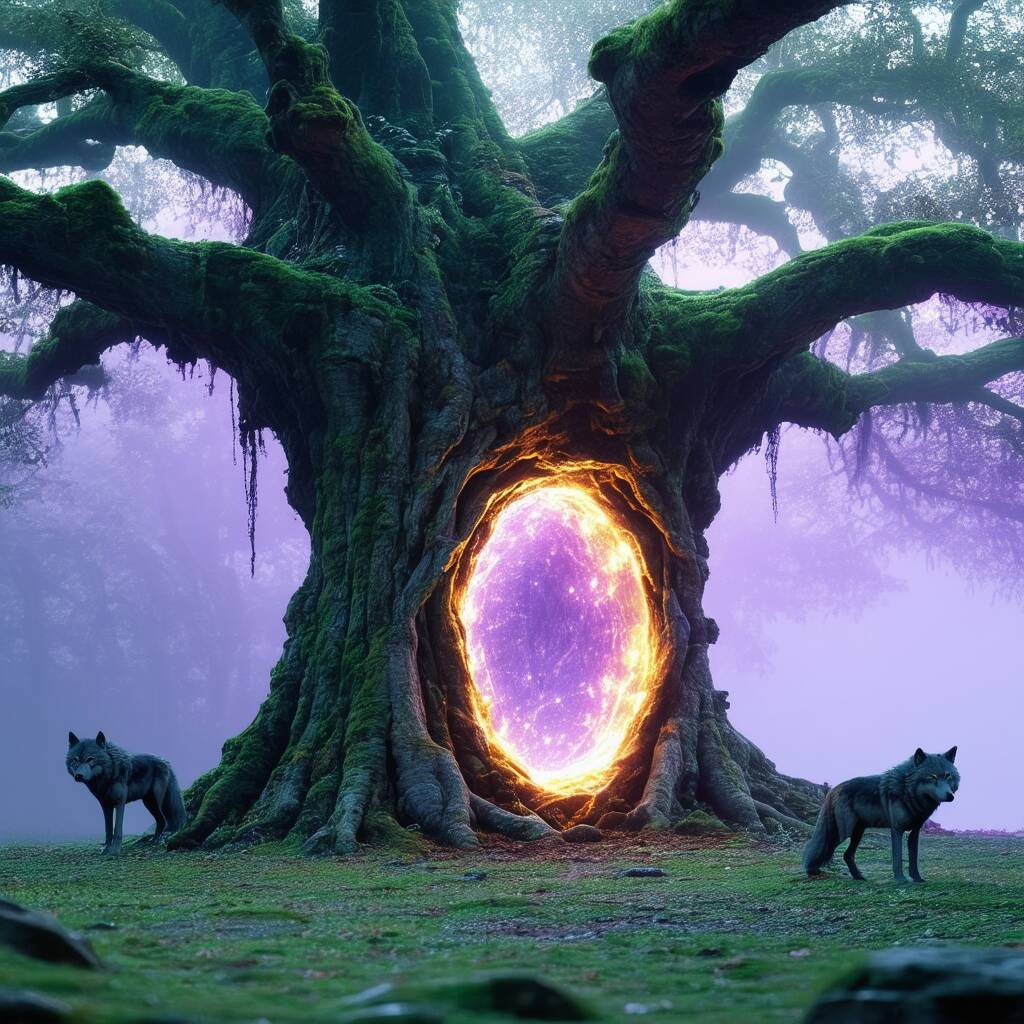} \\
\multicolumn{2}{c}{\parbox{0.43\textwidth}{\centering \textbf{Prompt:} \emph{A glowing portal in the center of an ancient oak tree, guarded by moss-covered stone wolves under a violet twilight.}
}}
\end{tabular}
&
\begin{tabular}{cc}
\cellcolor{blue!15}\textbf{CFG} & \cellcolor{green!15}\textbf{Rectified-CFG++} \\
\includegraphics[width=0.22\textwidth]{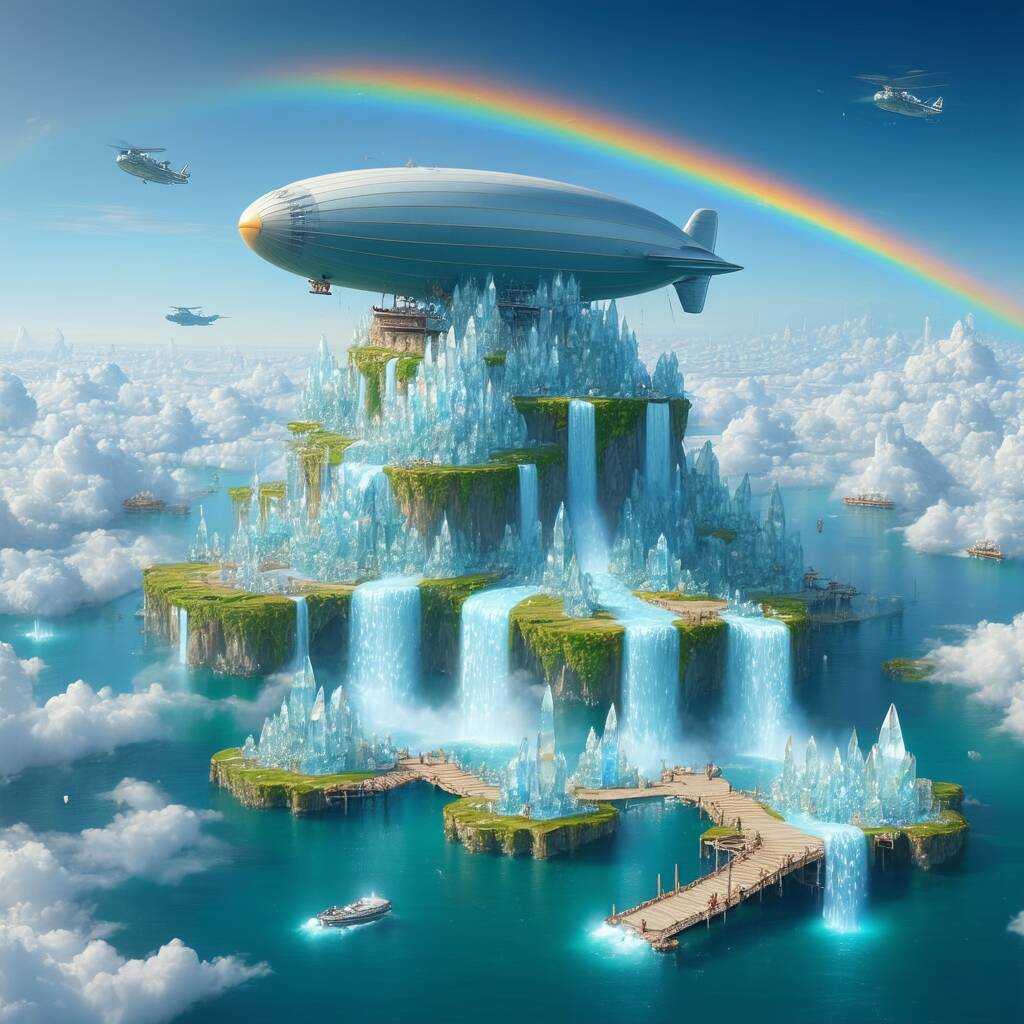} &
\includegraphics[width=0.22\textwidth]{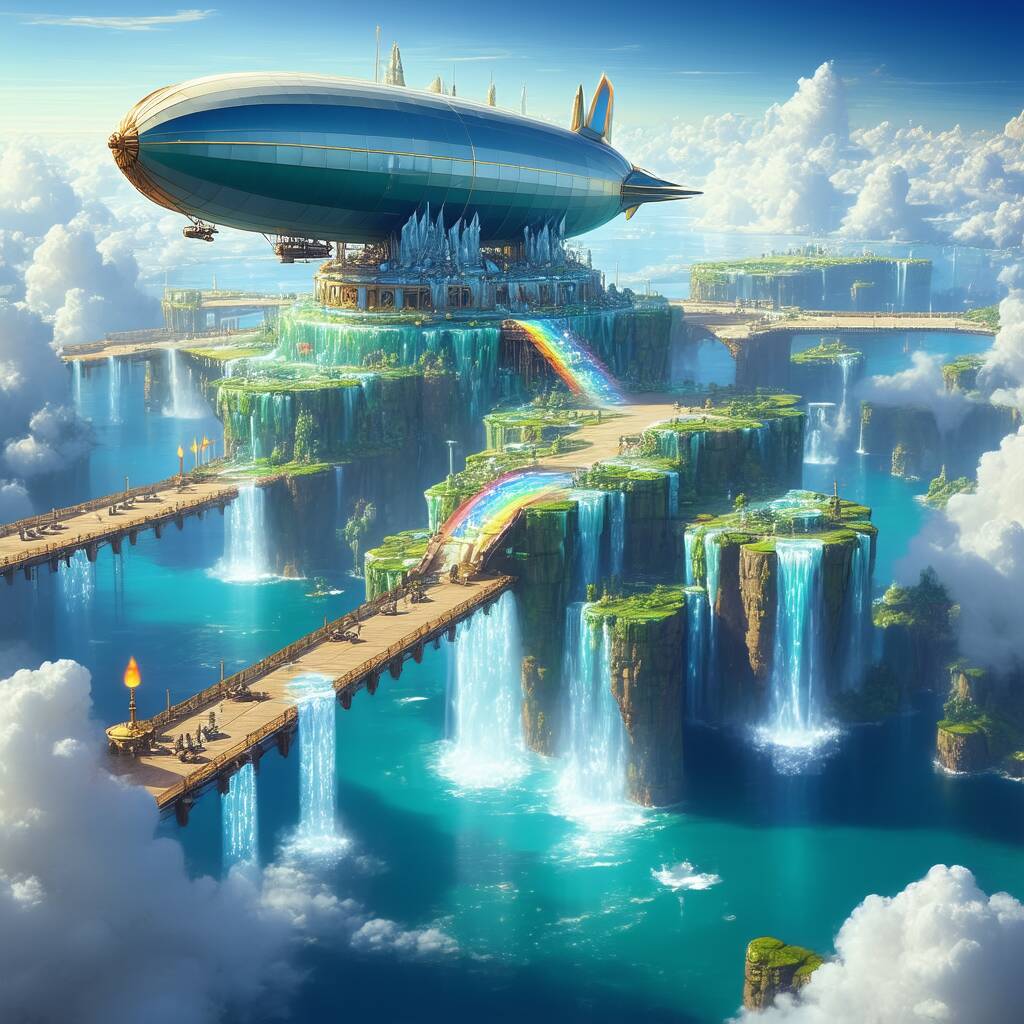} \\
\multicolumn{2}{c}{\parbox{0.43\textwidth}{\centering \textbf{Prompt:} \emph{A sky harbor where airships dock on floating islands made of crystal, with rainbow waterfalls cascading into the...}}}
\end{tabular}
\\[8pt]

\begin{tabular}{cc}
\cellcolor{blue!15}\textbf{CFG} & \cellcolor{green!15}\textbf{Rectified-CFG++} \\
\includegraphics[width=0.22\textwidth]{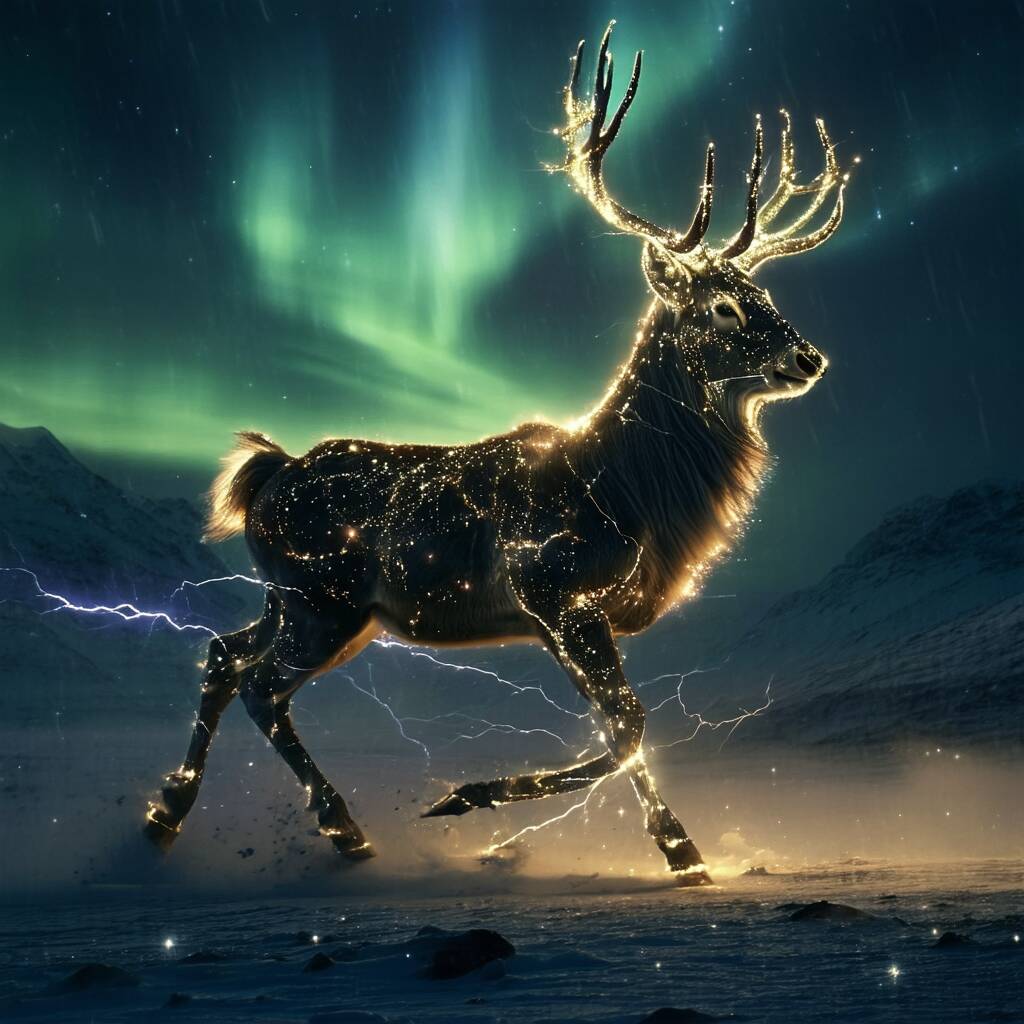} &
\includegraphics[width=0.22\textwidth]{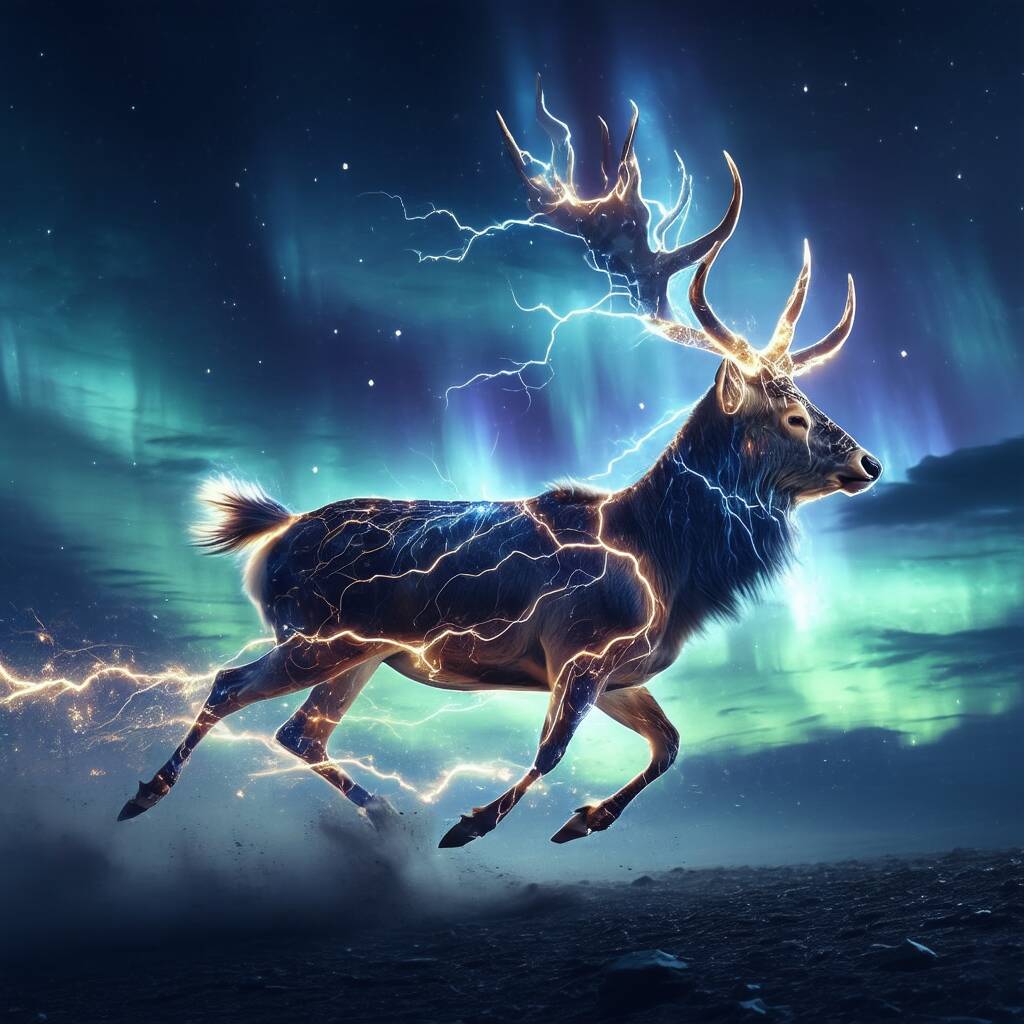} \\
\multicolumn{2}{c}{\parbox{0.43\textwidth}{\centering \textbf{Prompt:} \emph{A celestial stag made of lightning and auroras galloping across a stormy sky, leaving trails of stardust in its wake.}}}
\end{tabular}
&
\begin{tabular}{cc}
\cellcolor{blue!15}\textbf{CFG} & \cellcolor{green!15}\textbf{Rectified-CFG++} \\
\includegraphics[width=0.22\textwidth]{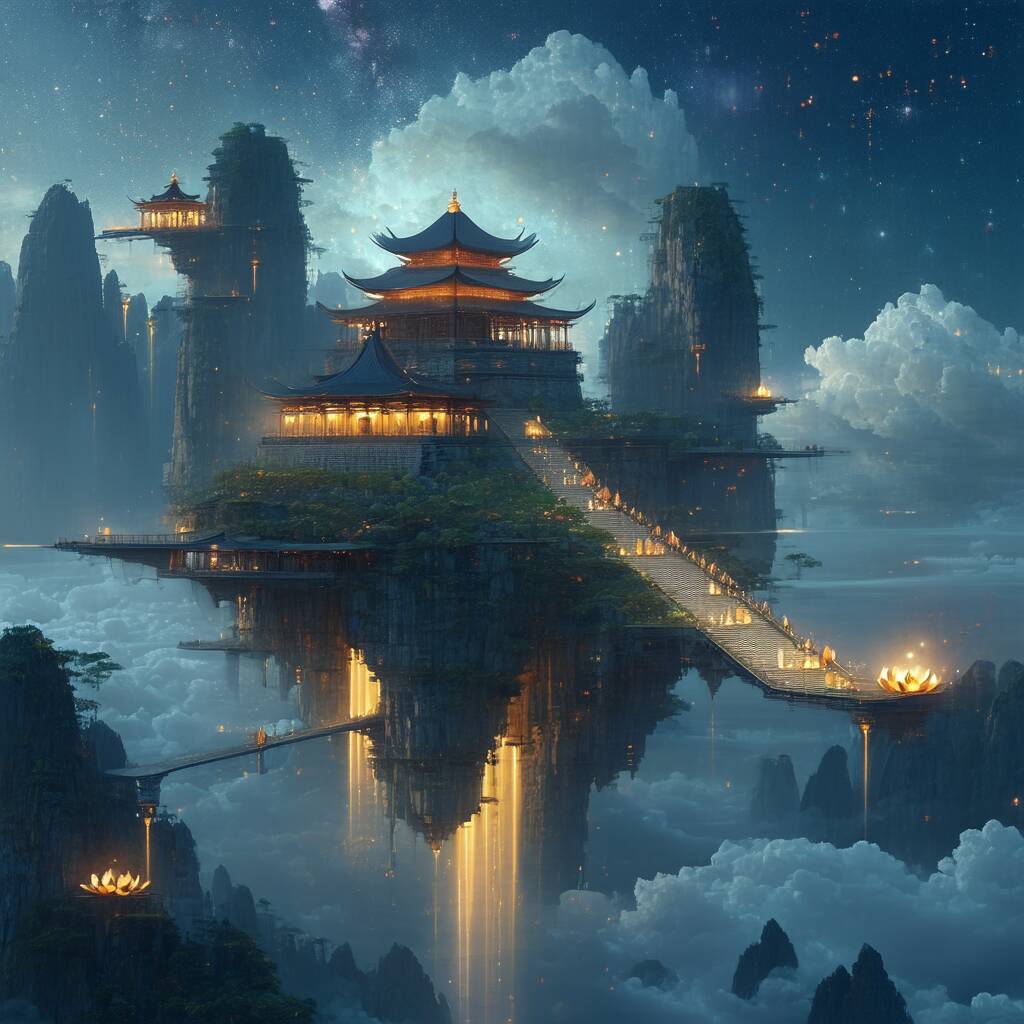} &
\includegraphics[width=0.22\textwidth]{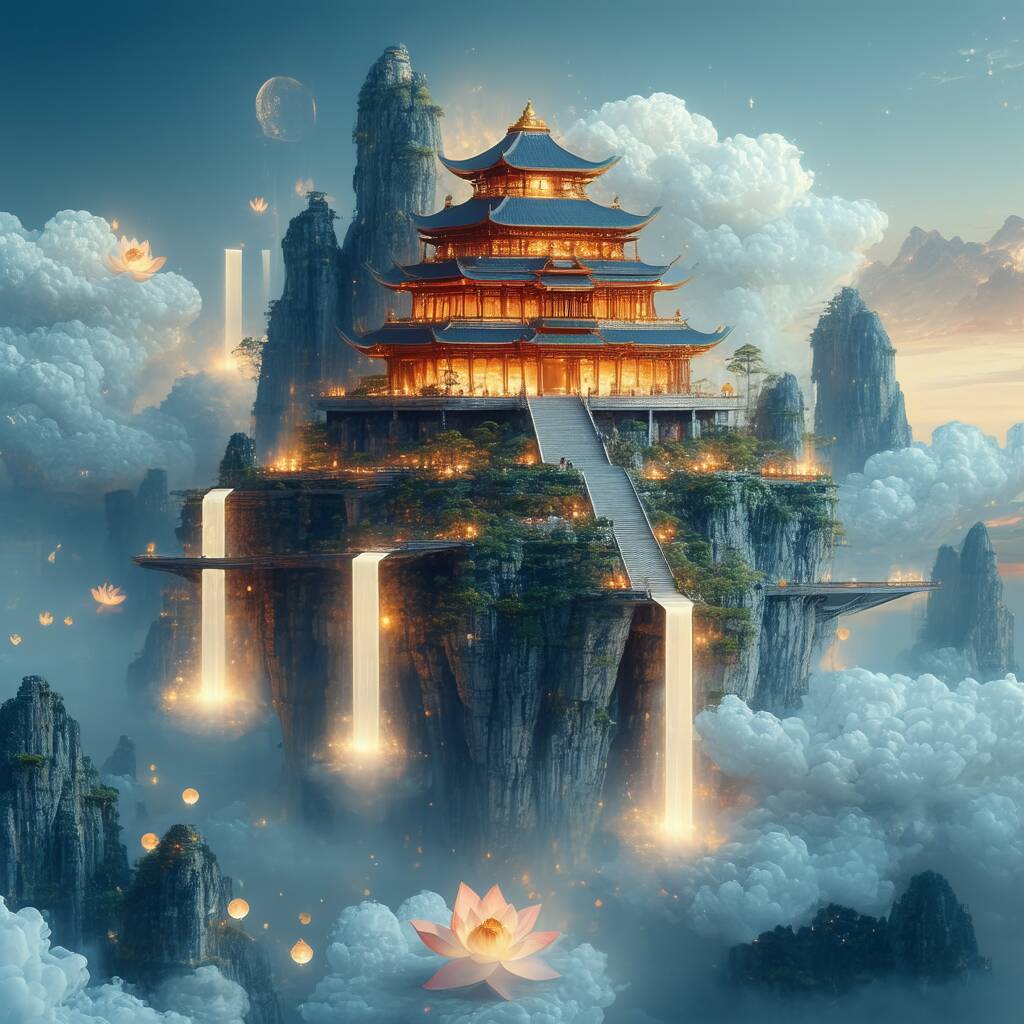} \\
\multicolumn{2}{c}{\parbox{0.43\textwidth}{\centering \textbf{Prompt:} \emph{A floating mountain temple where monks ride beams of light to meditate among the stars, surrounded by glowing lotus clouds.}}}
\end{tabular}

\end{tabular}

\vspace{6pt}
\caption{
\textbf{More examples for SD3.5~\cite{sd32024} with CFG vs Rectified-CFG++ for a variety of text prompts.}
}
\label{fig:appendix_sd35_comparison}
\end{figure}

\begin{figure}[!htbp]
\centering
\scriptsize
\renewcommand{\arraystretch}{0.5}
\setlength{\tabcolsep}{0pt}

\begin{tabular}{c@{\hspace{5pt}}c@{\hspace{5pt}}}

\begin{tabular}{cc}
\cellcolor{blue!15}\textbf{CFG} & \cellcolor{green!15}\textbf{Rectified-CFG++} \\
\includegraphics[width=0.22\textwidth]{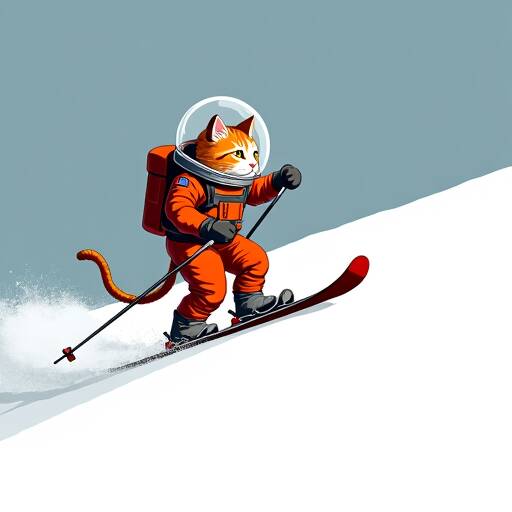} &
\includegraphics[width=0.22\textwidth]{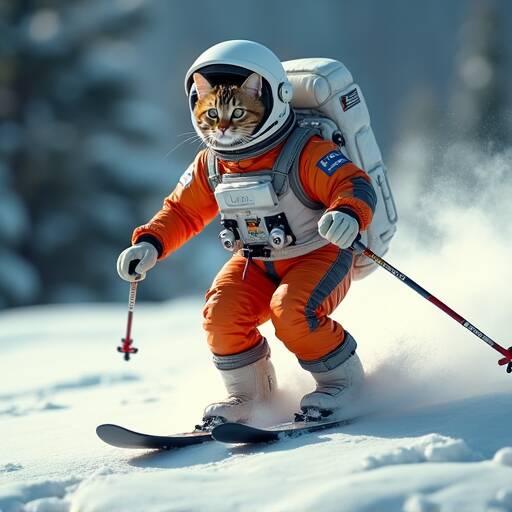} \\
\multicolumn{2}{c}{\parbox{0.43\textwidth}{\centering \textbf{Prompt:} \emph{a cat in a space suit skiing.}}}
\end{tabular}
&
\begin{tabular}{cc}
\cellcolor{blue!15}\textbf{CFG} & \cellcolor{green!15}\textbf{Rectified-CFG++} \\
\includegraphics[width=0.22\textwidth]{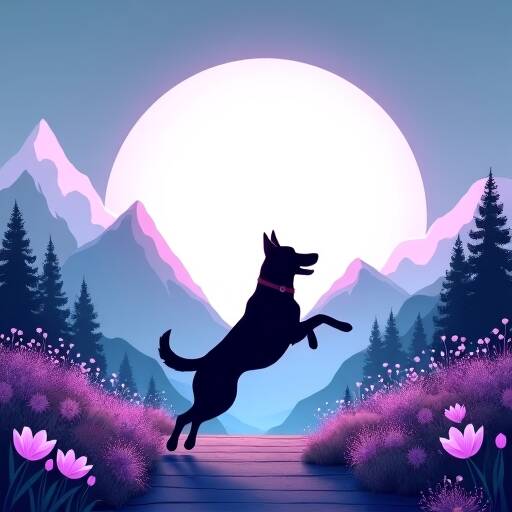} &
\includegraphics[width=0.22\textwidth]{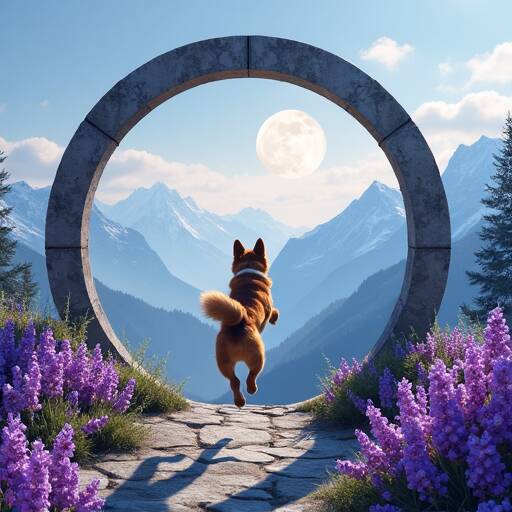} \\
\multicolumn{2}{c}{\parbox{0.43\textwidth}{\centering \textbf{Prompt:} \emph{A dog jumping in front of moon gate, purple...}}}
\end{tabular}
\\[8pt]

\begin{tabular}{cc}
\cellcolor{blue!15}\textbf{CFG} & \cellcolor{green!15}\textbf{Rectified-CFG++} \\
\includegraphics[width=0.22\textwidth]{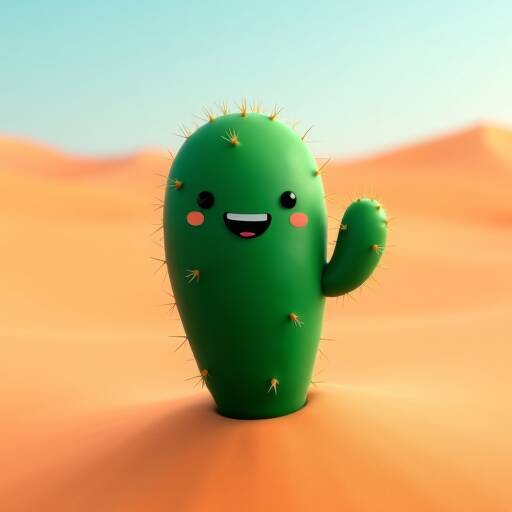} &
\includegraphics[width=0.22\textwidth]{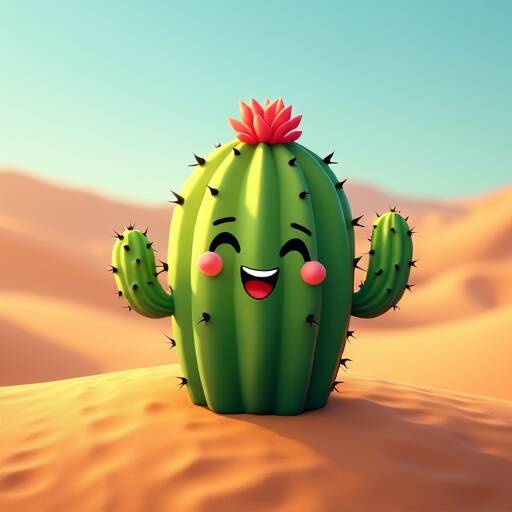} \\
\multicolumn{2}{c}{\parbox{0.43\textwidth}{\centering \textbf{Prompt:} \emph{A small cactus with a happy face in the sahara...}}}
\end{tabular}
&
\begin{tabular}{cc}
\cellcolor{blue!15}\textbf{CFG} & \cellcolor{green!15}\textbf{Rectified-CFG++} \\
\includegraphics[width=0.22\textwidth]{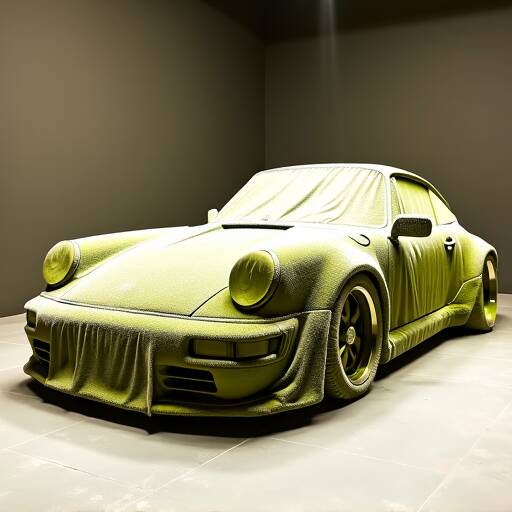} &
\includegraphics[width=0.22\textwidth]{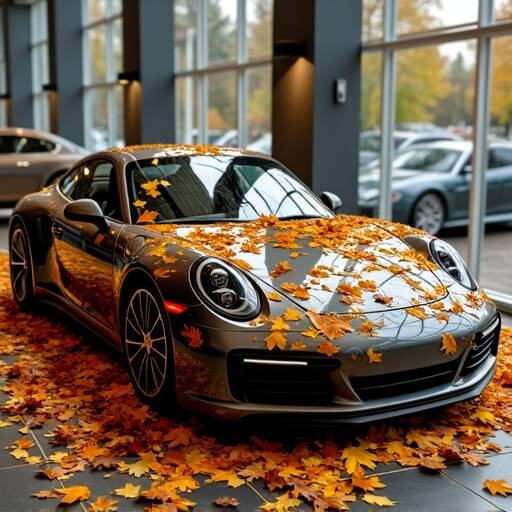} \\
\multicolumn{2}{c}{\parbox{0.43\textwidth}{\centering \textbf{Prompt:} \emph{A Porsche 911 covered with leaves in a showroom.}}}
\end{tabular}
\\[8pt]

\begin{tabular}{cc}
\cellcolor{blue!15}\textbf{CFG} & \cellcolor{green!15}\textbf{Rectified-CFG++} \\
\includegraphics[width=0.22\textwidth]{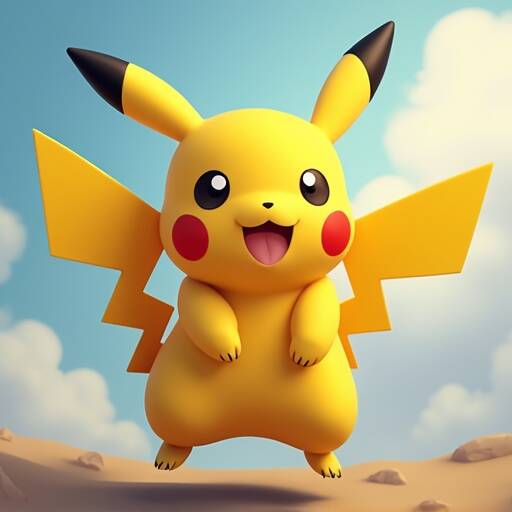} &
\includegraphics[width=0.22\textwidth]{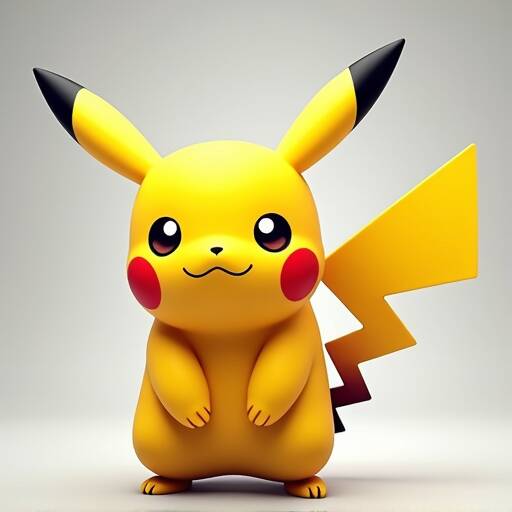} \\
\multicolumn{2}{c}{\parbox{0.43\textwidth}{\centering \textbf{Prompt:} \emph{A super cute Pikachu.}}}
\end{tabular}
&
\begin{tabular}{cc}
\cellcolor{blue!15}\textbf{CFG} & \cellcolor{green!15}\textbf{Rectified-CFG++} \\
\includegraphics[width=0.22\textwidth]{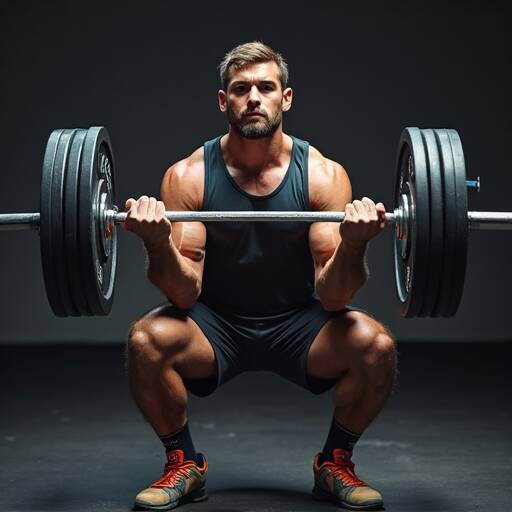} &
\includegraphics[width=0.22\textwidth]{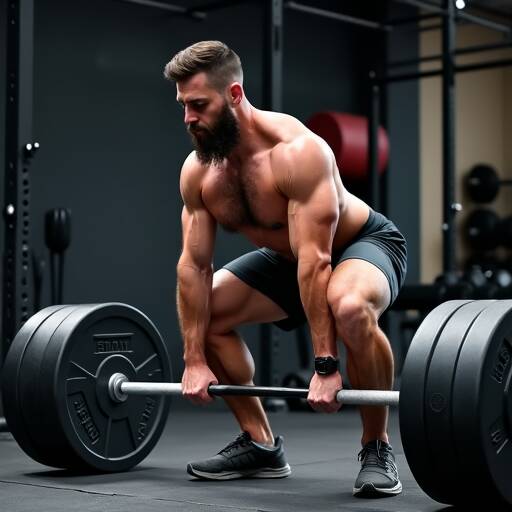} \\
\multicolumn{2}{c}{\parbox{0.43\textwidth}{\centering \textbf{Prompt:} \emph{A man deadlifting heavy weights.}}}
\end{tabular}

\\[8pt]

\begin{tabular}{cc}
\cellcolor{blue!15}\textbf{CFG} & \cellcolor{green!15}\textbf{Rectified-CFG++} \\
\includegraphics[width=0.22\textwidth]{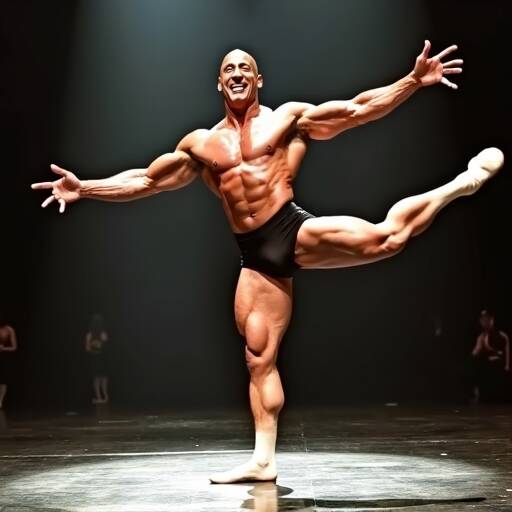} &
\includegraphics[width=0.22\textwidth]{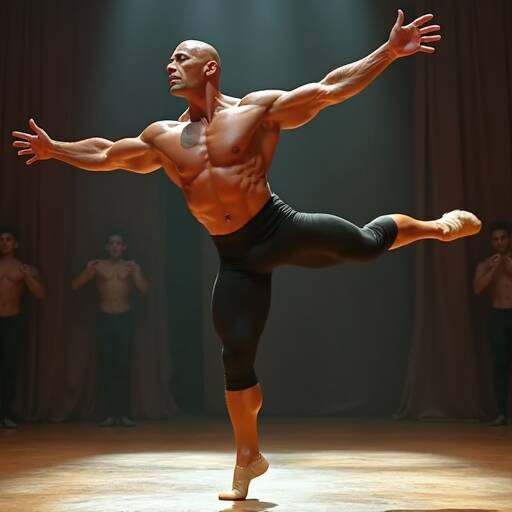} \\
\multicolumn{2}{c}{\parbox{0.43\textwidth}{\centering \textbf{Prompt:} \emph{the rock dwayne johnson doing ballet.}}}
\end{tabular}
&
\begin{tabular}{cc}
\cellcolor{blue!15}\textbf{CFG} & \cellcolor{green!15}\textbf{Rectified-CFG++} \\
\includegraphics[width=0.22\textwidth]{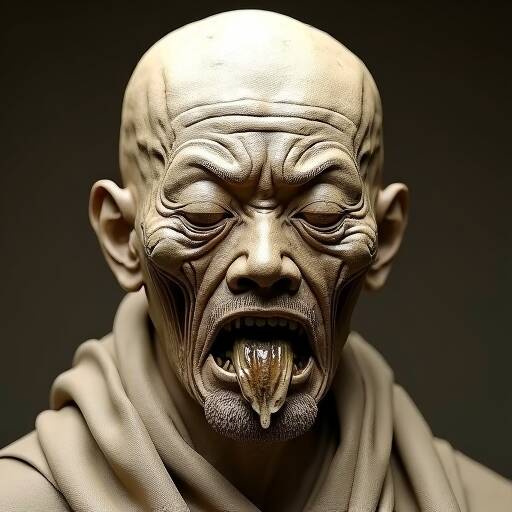} &
\includegraphics[width=0.22\textwidth]{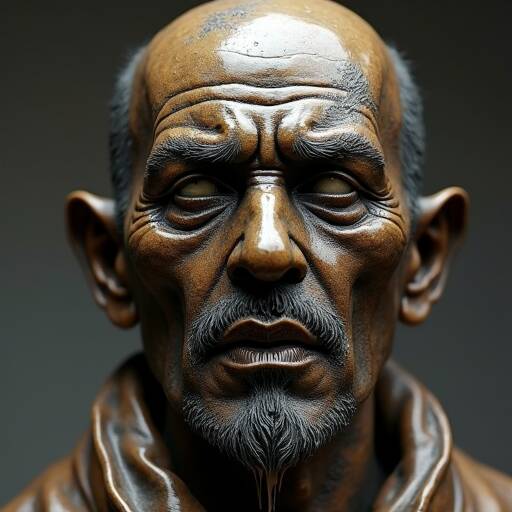} \\
\multicolumn{2}{c}{\parbox{0.43\textwidth}{\centering \textbf{Prompt:} \emph{Photorealistic fossilised bronze sculpture face...}}} 
\end{tabular}
\end{tabular}
\vspace{6pt}
\caption{
\textbf{Outcome of Flux~\cite{flux2024} with CFG vs Rectified-CFG++ for a variety of text prompts.}}
\label{fig:appendix_flux_comparison}
\end{figure}

\begin{figure}[!htbp]
\centering
\scriptsize
\renewcommand{\arraystretch}{0.5}
\setlength{\tabcolsep}{0pt}

\begin{tabular}{c@{\hspace{5pt}}c@{\hspace{5pt}}}

\begin{tabular}{cc}
\cellcolor{blue!15}\textbf{CFG} & \cellcolor{green!15}\textbf{Rectified-CFG++} \\
\includegraphics[width=0.22\textwidth]{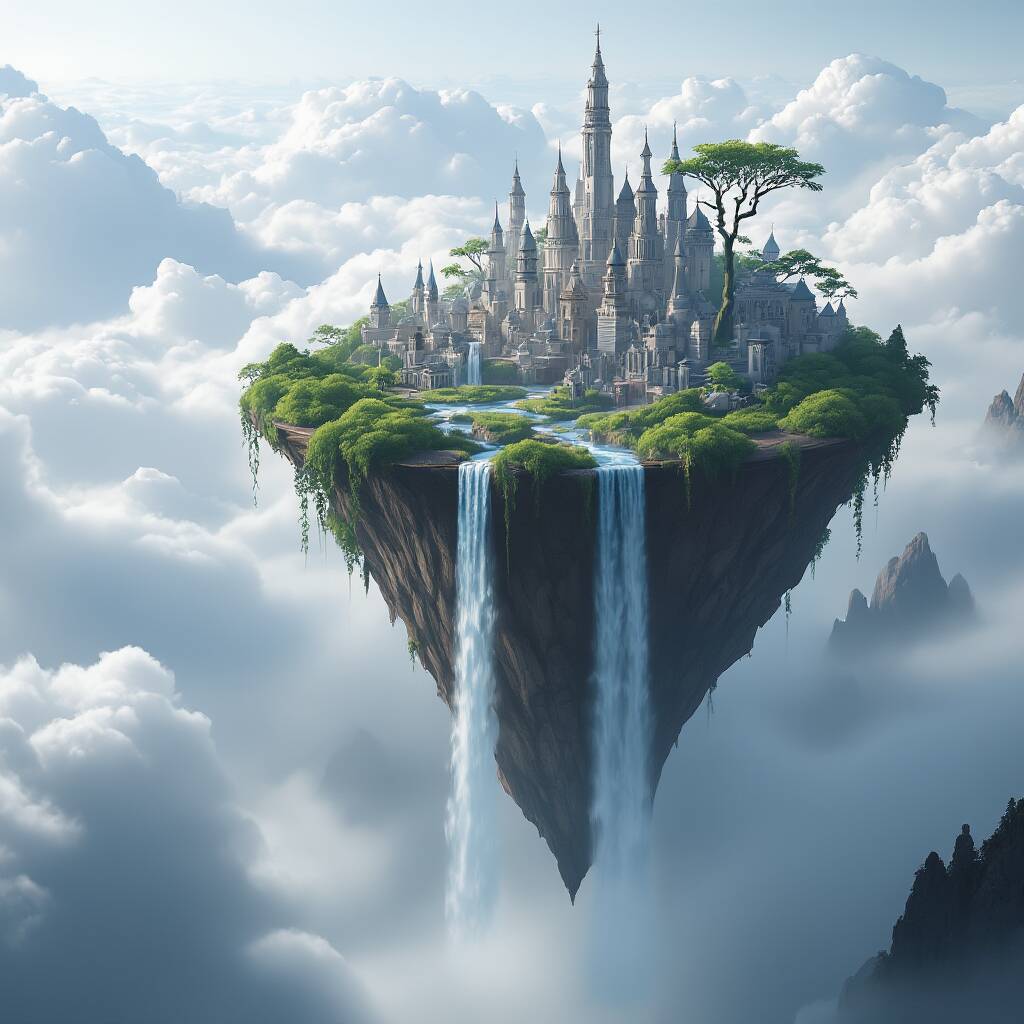} &
\includegraphics[width=0.22\textwidth]{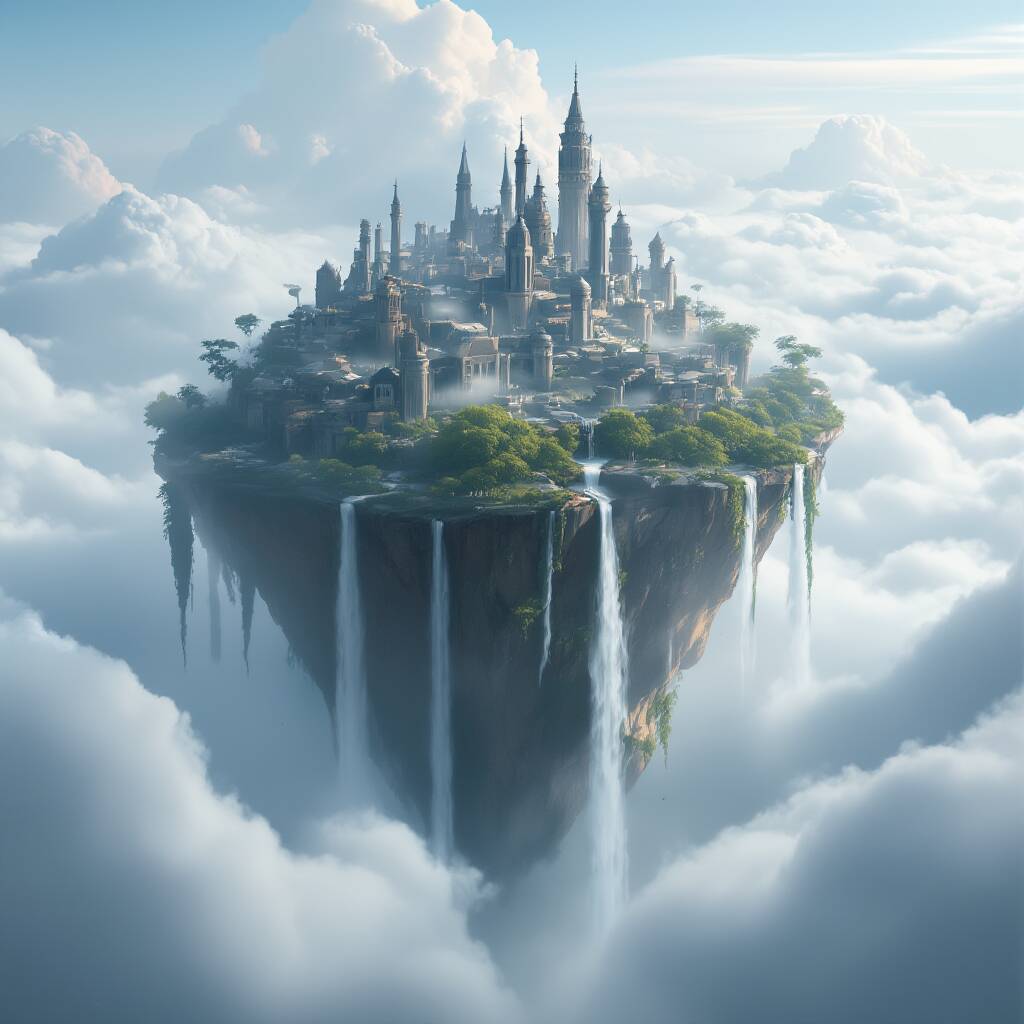} \\
\multicolumn{2}{c}{\parbox{0.43\textwidth}{\centering \textbf{Prompt:} \emph{A floating island city above a sea of clouds with waterfalls cascading into the mist.}}}
\end{tabular}
&
\begin{tabular}{cc}
\cellcolor{blue!15}\textbf{CFG} & \cellcolor{green!15}\textbf{Rectified-CFG++} \\
\includegraphics[width=0.22\textwidth]{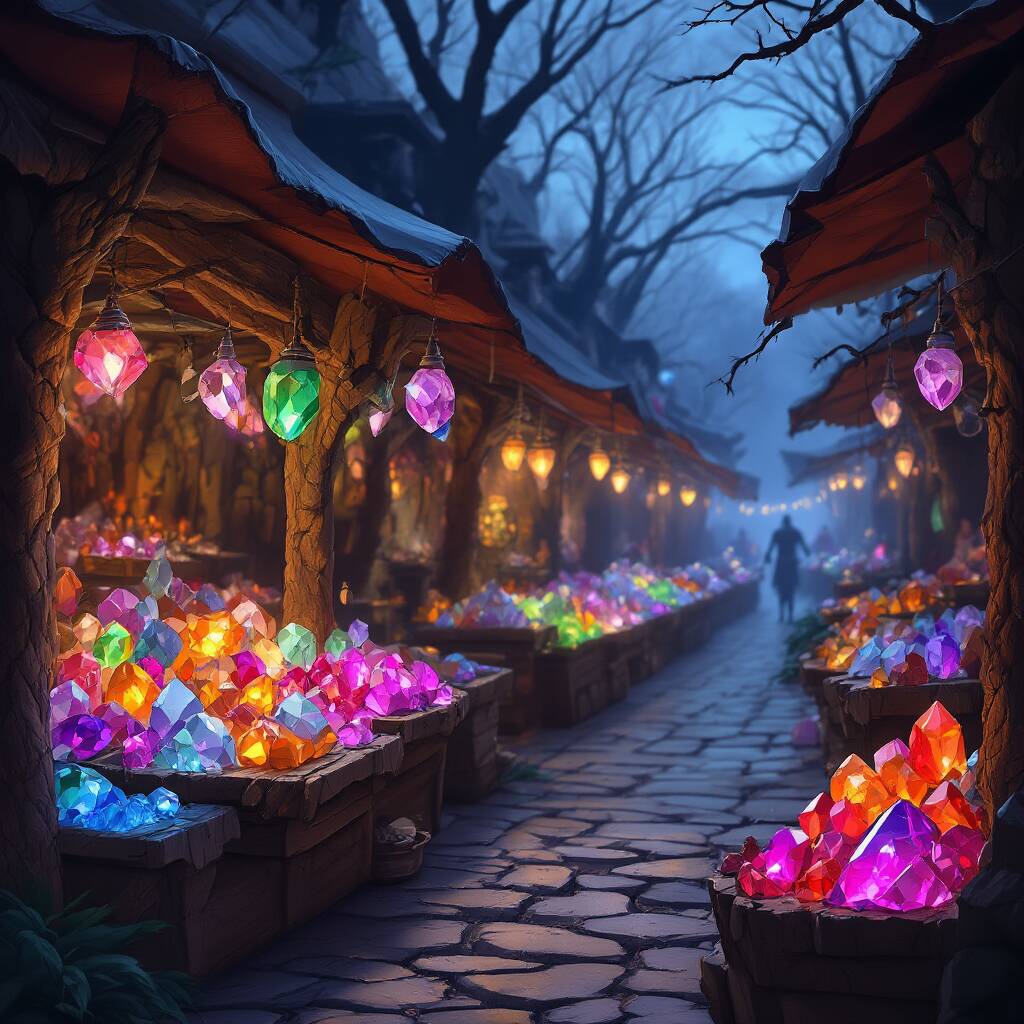} &
\includegraphics[width=0.22\textwidth]{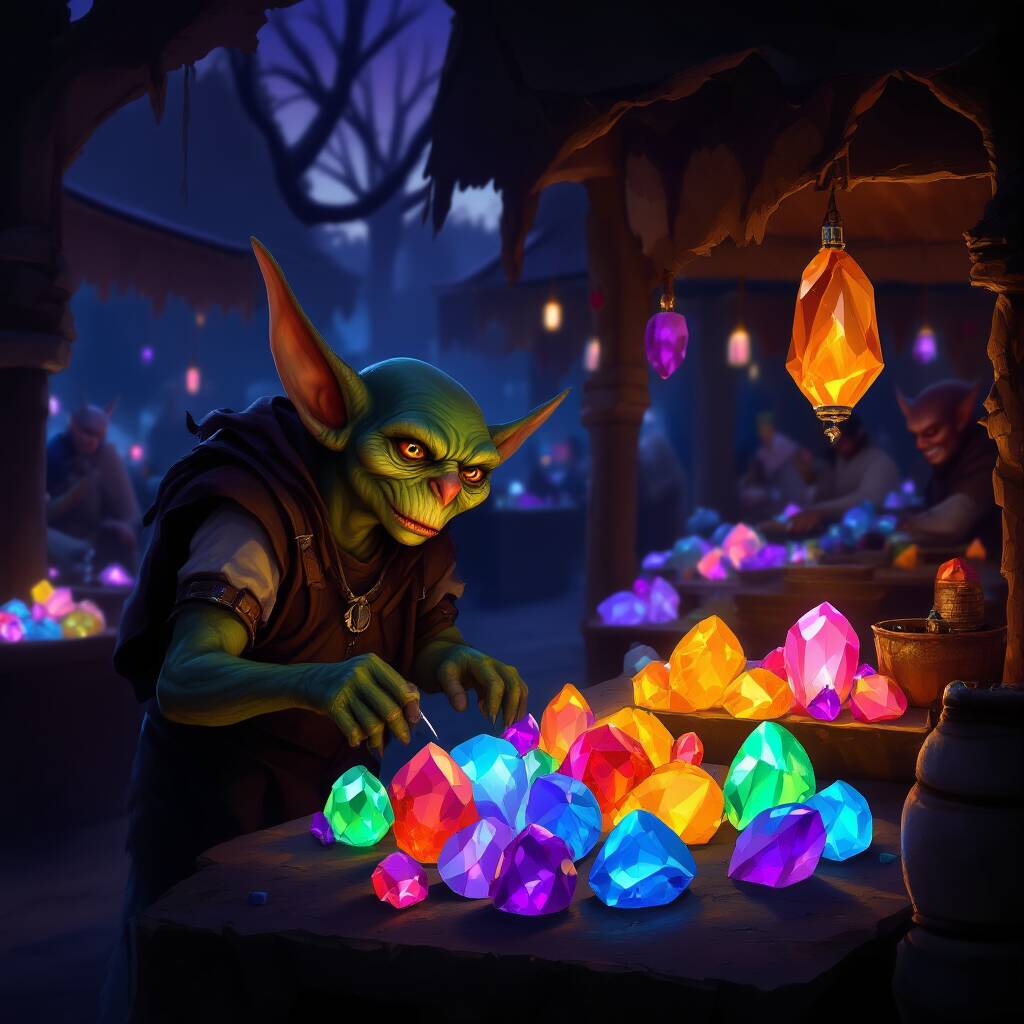} \\
\multicolumn{2}{c}{\parbox{0.43\textwidth}{\centering \textbf{Prompt:} \emph{A twilight marketplace staffed by goblins selling glowing gemstones.}}}
\end{tabular}
\\[8pt]

\begin{tabular}{cc}
\cellcolor{blue!15}\textbf{CFG} & \cellcolor{green!15}\textbf{Rectified-CFG++} \\
\includegraphics[width=0.22\textwidth]{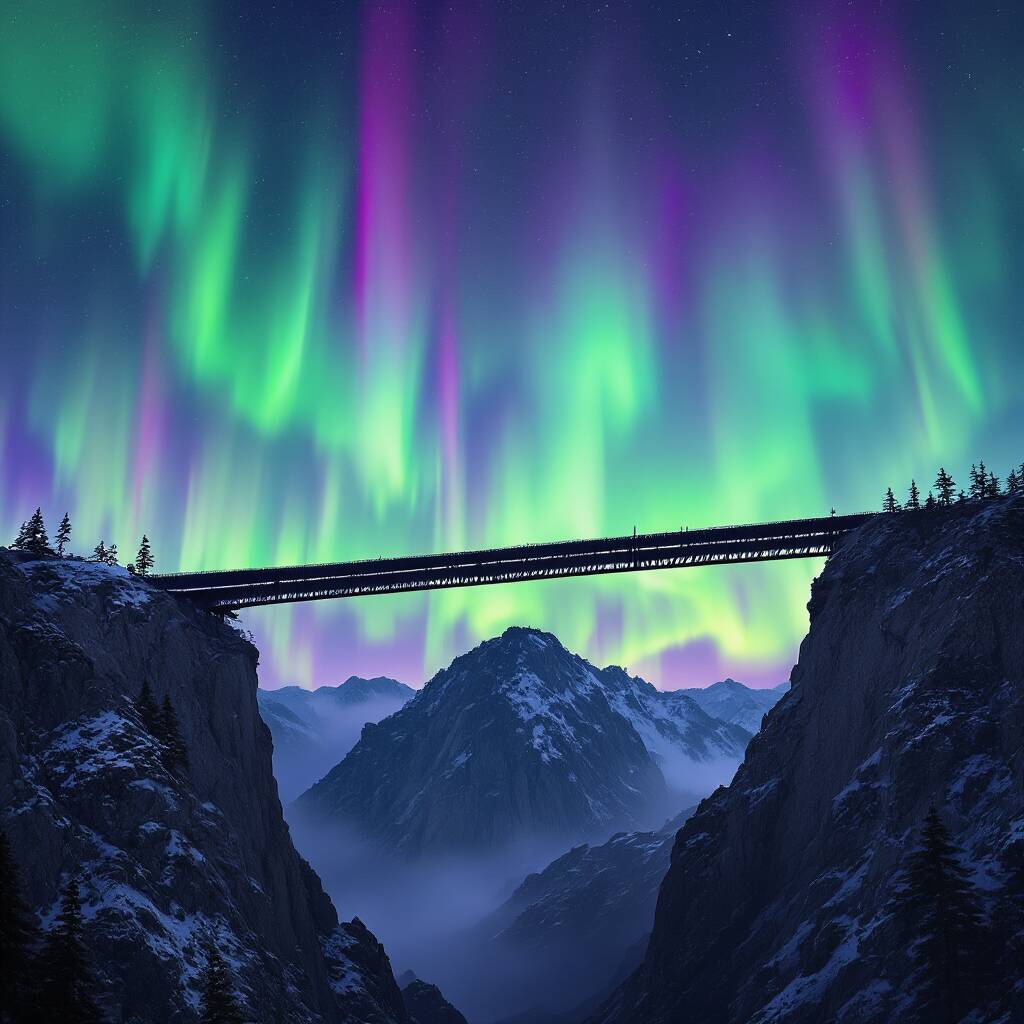} &
\includegraphics[width=0.22\textwidth]{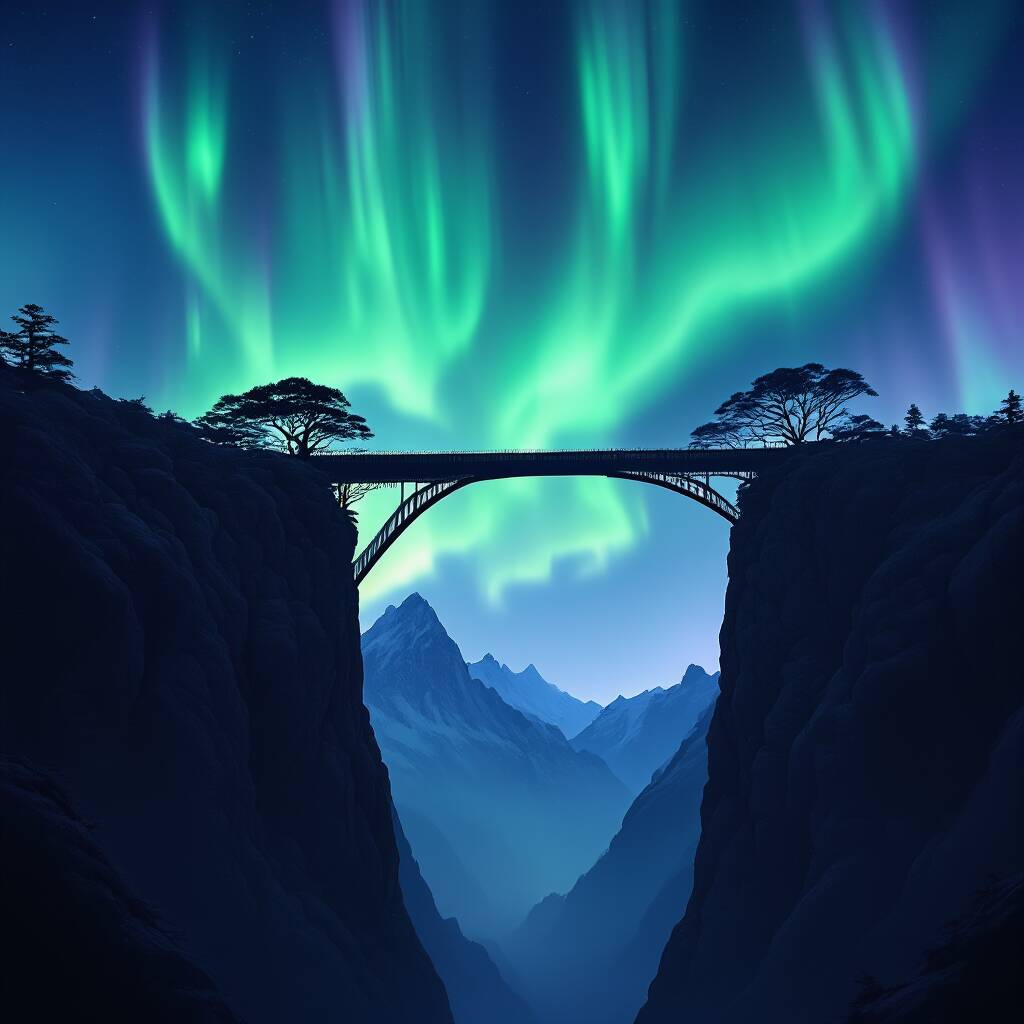} \\
\multicolumn{2}{c}{\parbox{0.43\textwidth}{\centering \textbf{Prompt:} \emph{A colossal treebridge spanning two mountain peaks under the aurora borealis.}}}
\end{tabular}
&
\begin{tabular}{cc}
\cellcolor{blue!15}\textbf{CFG} & \cellcolor{green!15}\textbf{Rectified-CFG++} \\
\includegraphics[width=0.22\textwidth]{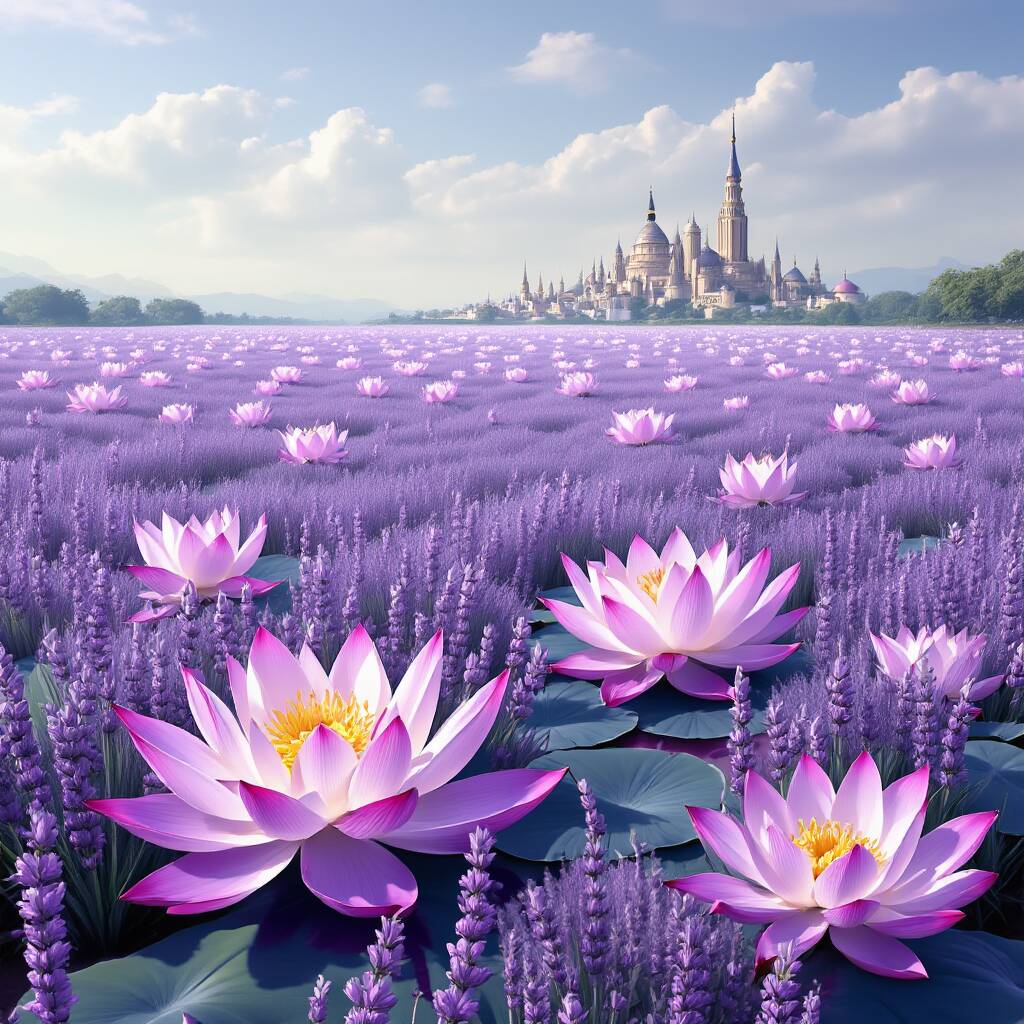} &
\includegraphics[width=0.22\textwidth]{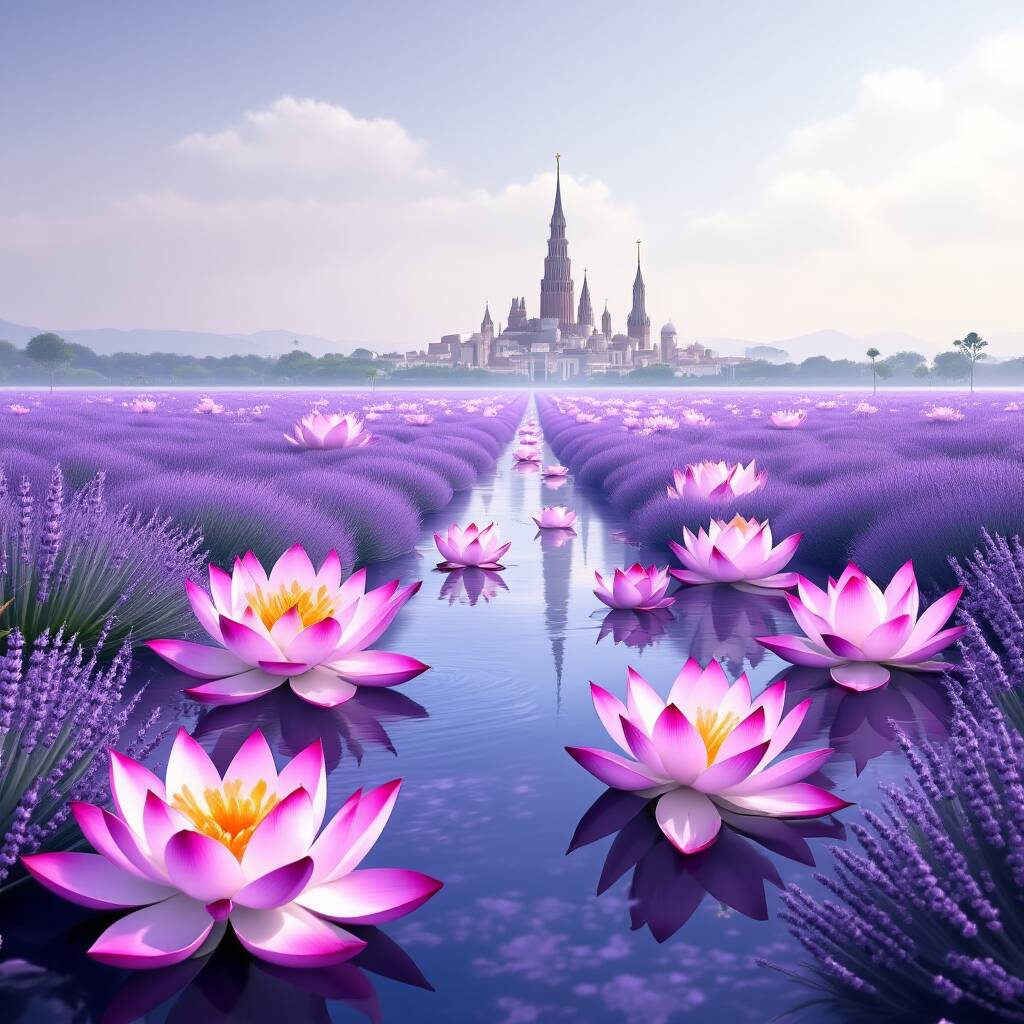} \\
\multicolumn{2}{c}{\parbox{0.43\textwidth}{\centering \textbf{Prompt:} \emph{A sea of lavender with giant lotus flowers drifting toward a distant spired city.}}}
\end{tabular}
\\[8pt]

\begin{tabular}{cc}
\cellcolor{blue!15}\textbf{CFG} & \cellcolor{green!15}\textbf{Rectified-CFG++} \\
\includegraphics[width=0.22\textwidth]{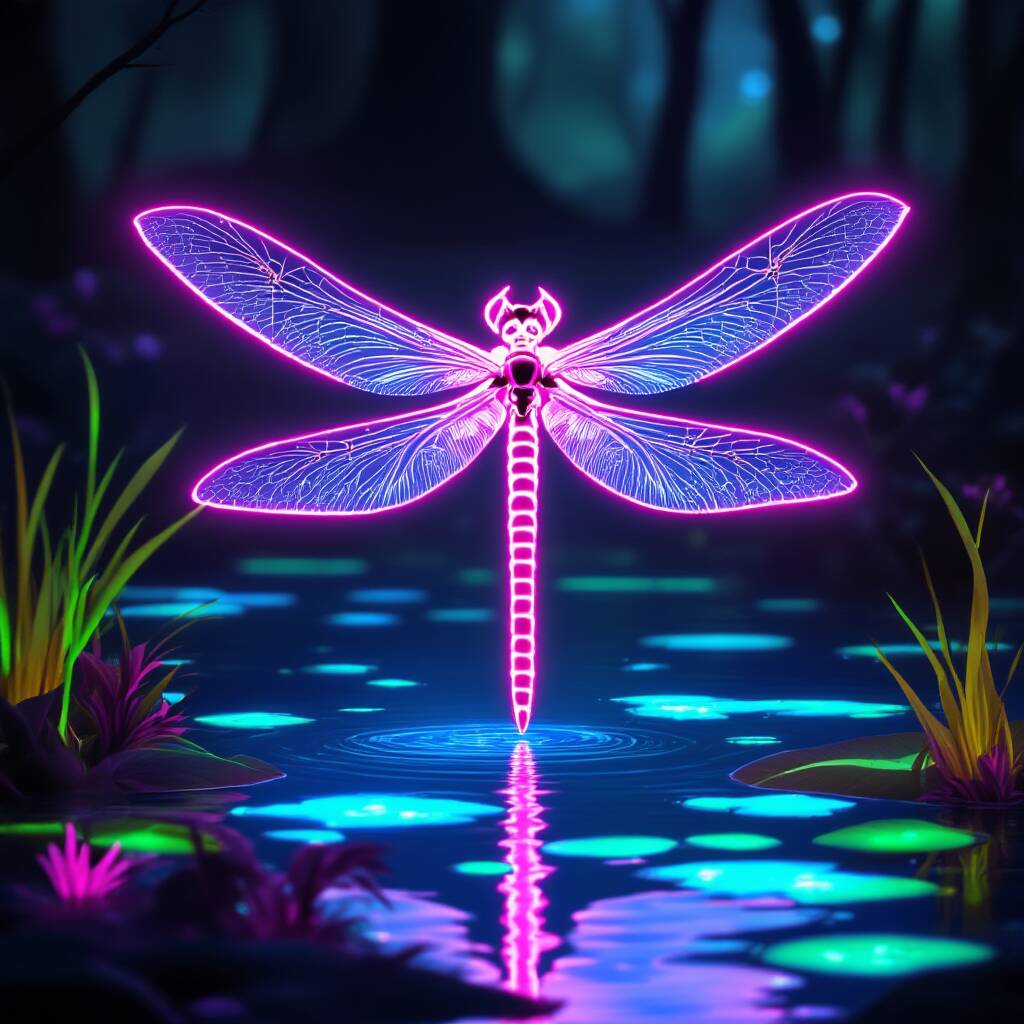} &
\includegraphics[width=0.22\textwidth]{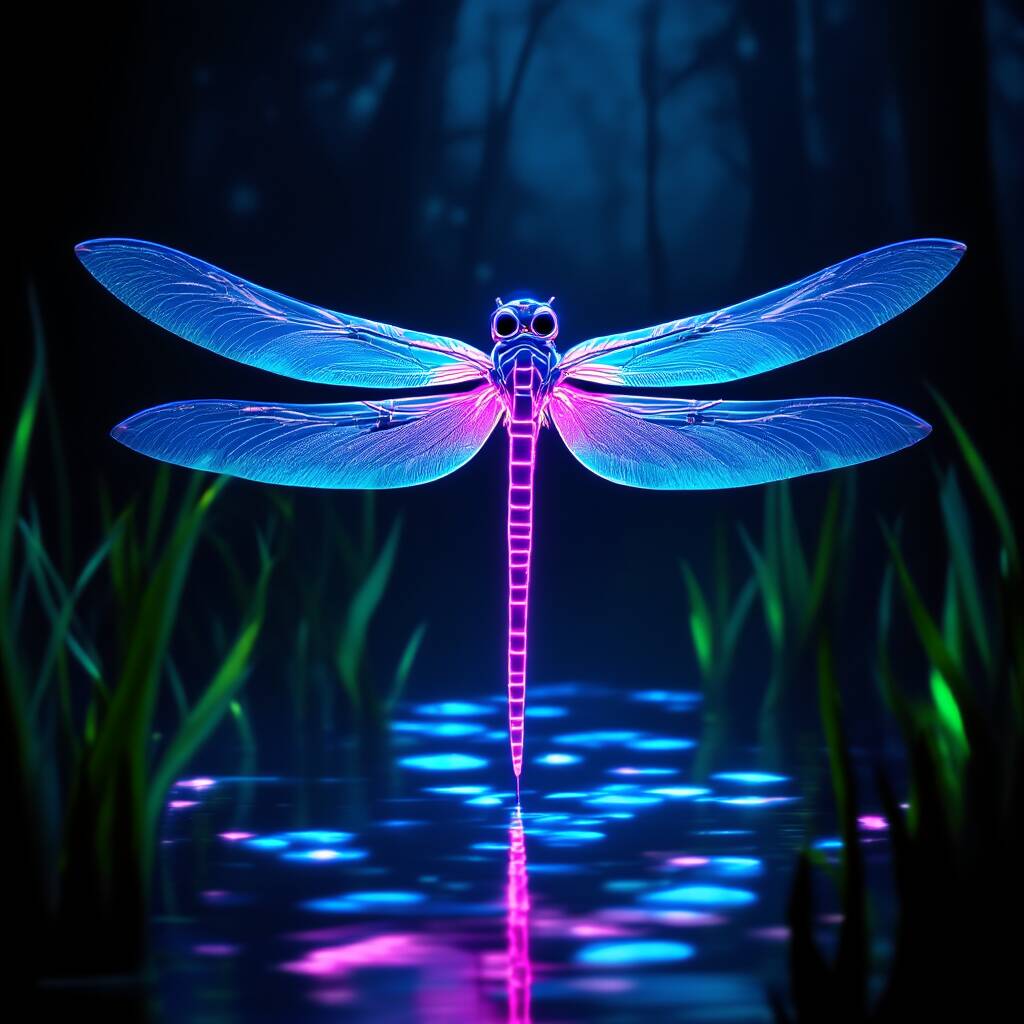} \\
\multicolumn{2}{c}{\parbox{0.43\textwidth}{\centering \textbf{Prompt:} \emph{A neon-lit dragonfly queen presiding over a phosphorescent swamp.}}}
\end{tabular}
&
\begin{tabular}{cc}
\cellcolor{blue!15}\textbf{CFG} & \cellcolor{green!15}\textbf{Rectified-CFG++} \\
\includegraphics[width=0.22\textwidth]{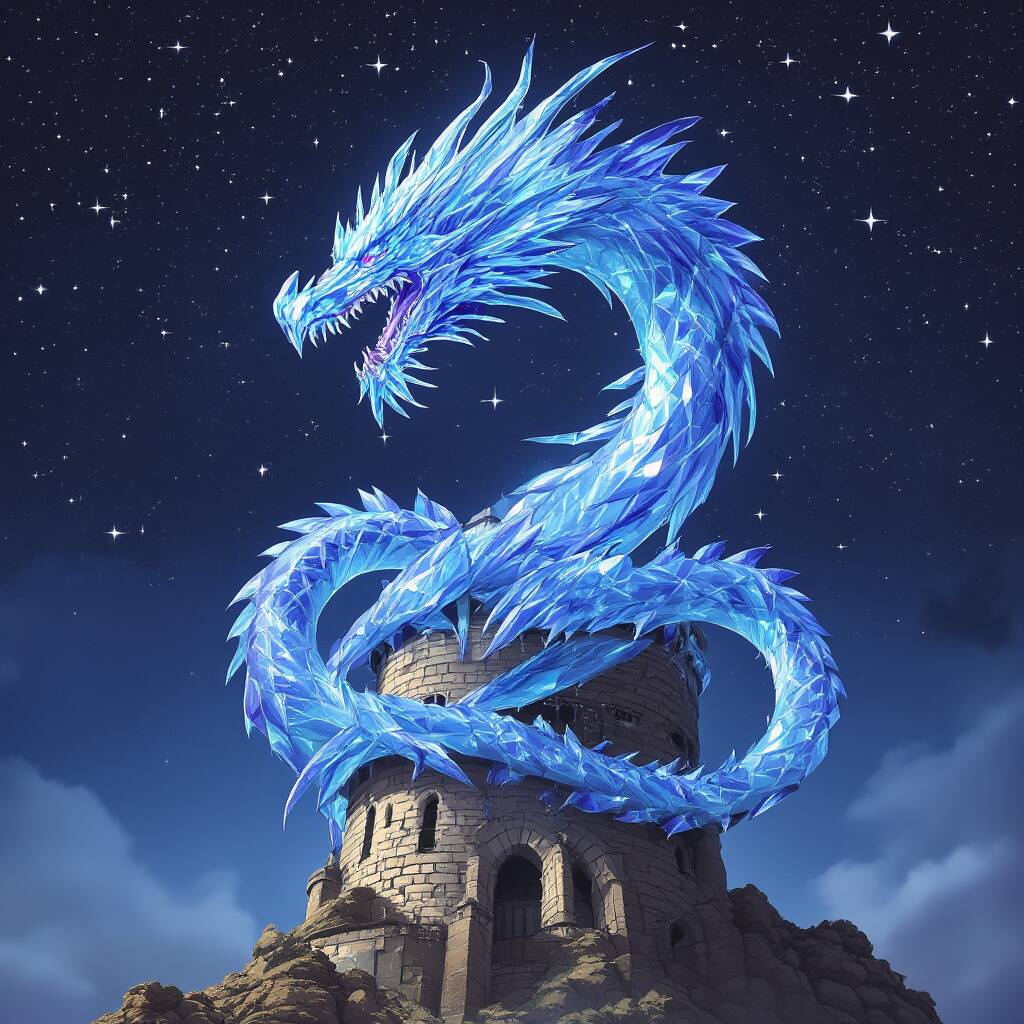} &
\includegraphics[width=0.22\textwidth]{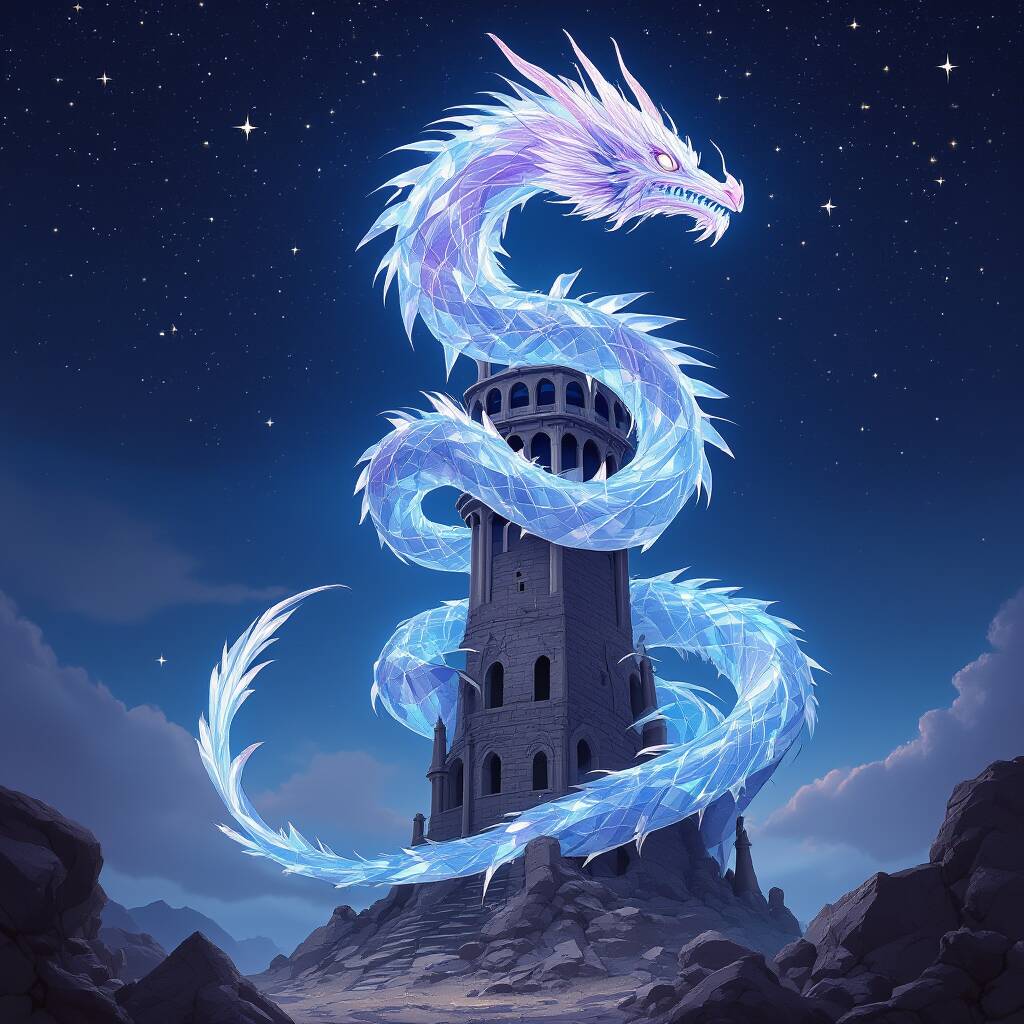} \\
\multicolumn{2}{c}{\parbox{0.43\textwidth}{\centering \textbf{Prompt:} \emph{A crystal dragon coiled around an ancient tower under a tapestry of stars.}}}
\end{tabular}

\\[8pt]

\begin{tabular}{cc}
\cellcolor{blue!15}\textbf{CFG} & \cellcolor{green!15}\textbf{Rectified-CFG++} \\
\includegraphics[width=0.22\textwidth]{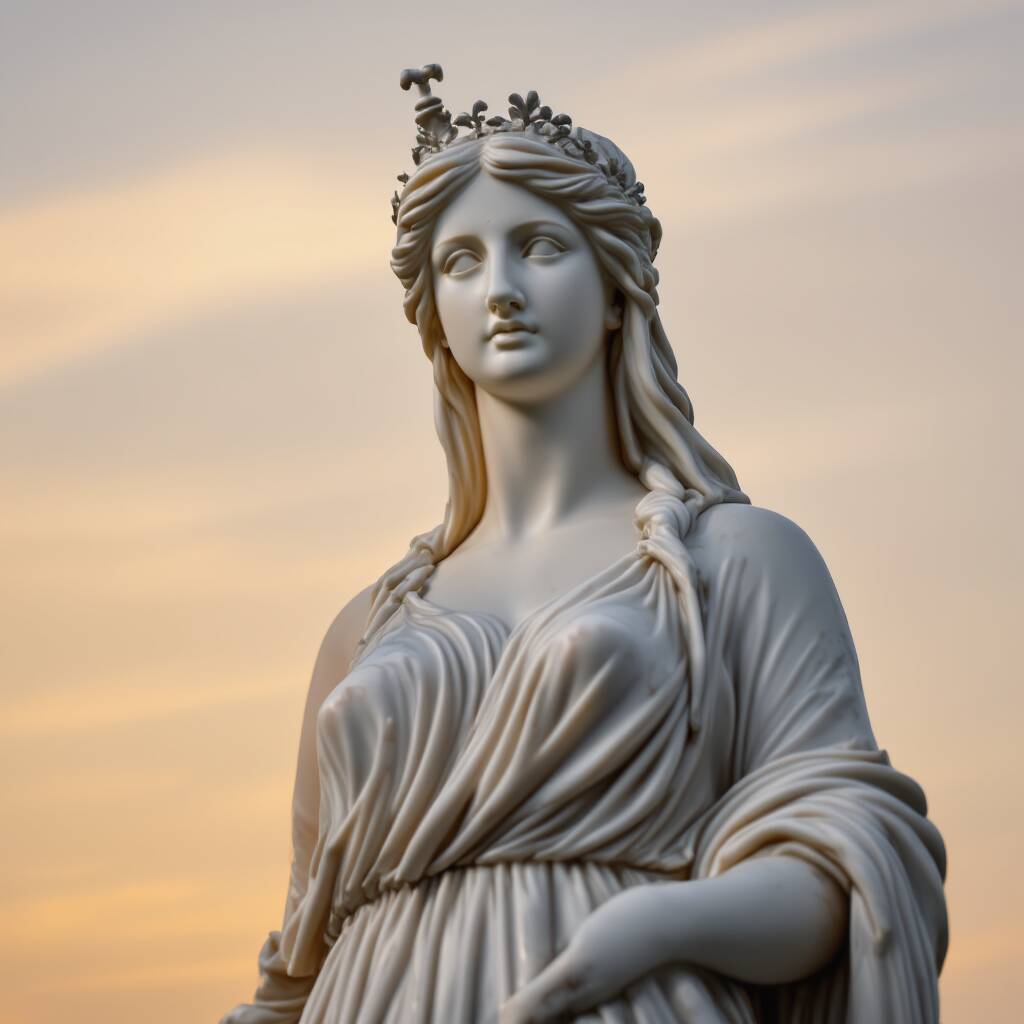} &
\includegraphics[width=0.22\textwidth]{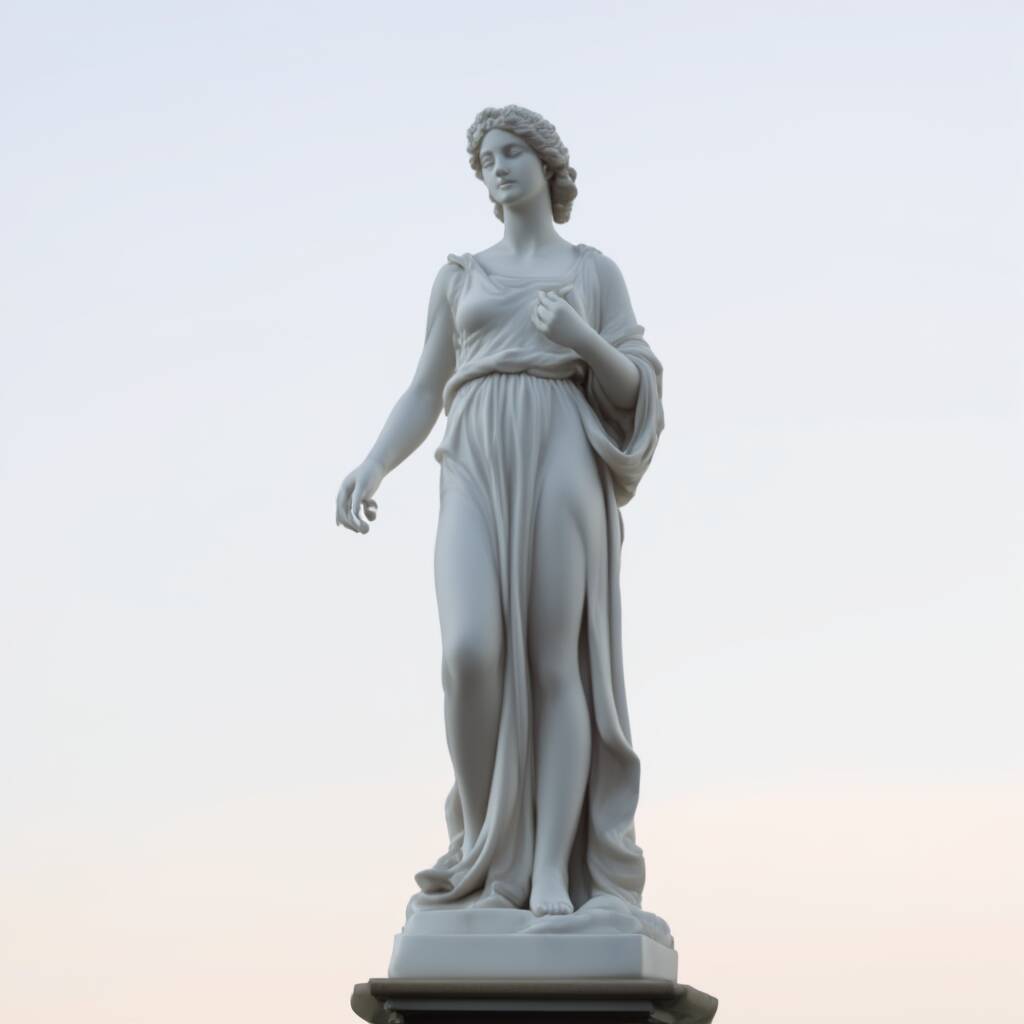} \\
\multicolumn{2}{c}{\parbox{0.43\textwidth}{\centering \textbf{Prompt:} \emph{A marble statue of a goddess that comes to life at dawn’s first light.}}}
\end{tabular}
&
\begin{tabular}{cc}
\cellcolor{blue!15}\textbf{CFG} & \cellcolor{green!15}\textbf{Rectified-CFG++} \\
\includegraphics[width=0.22\textwidth]{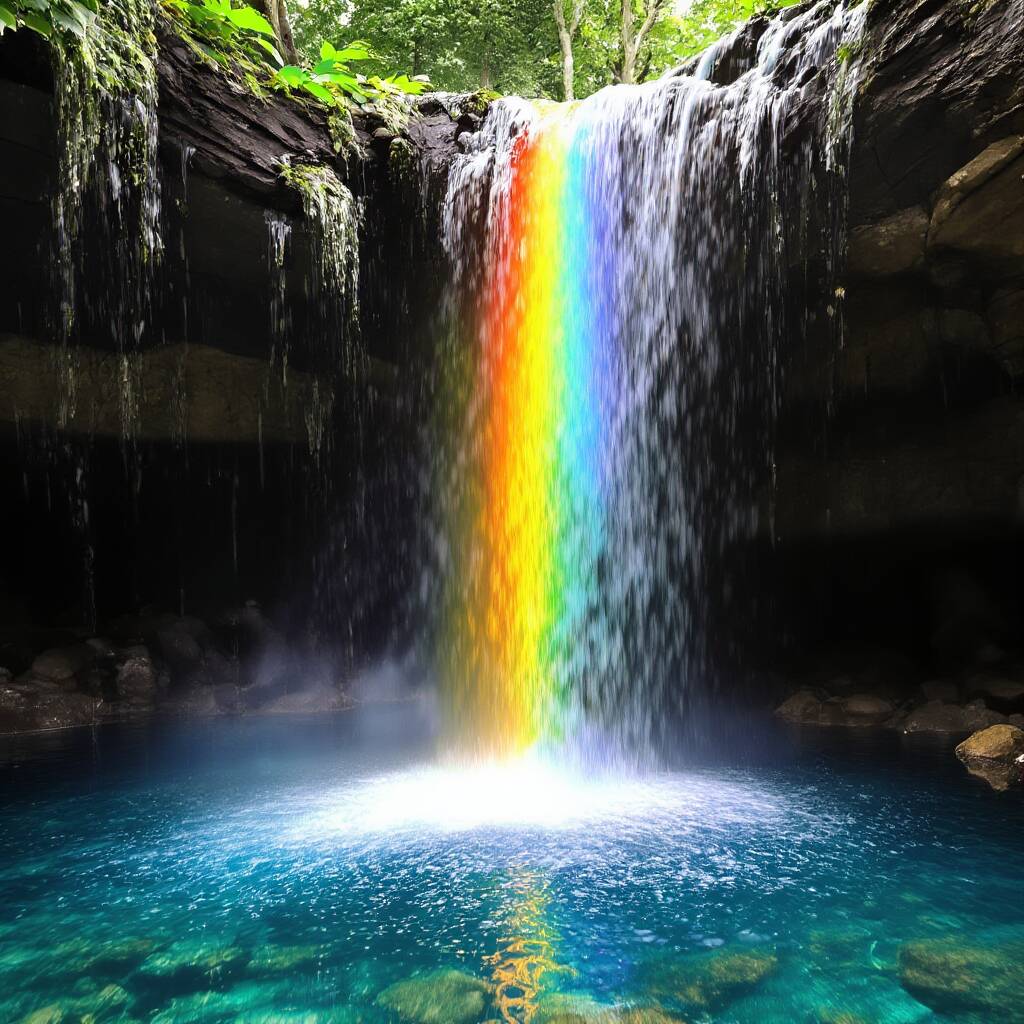} &
\includegraphics[width=0.22\textwidth]{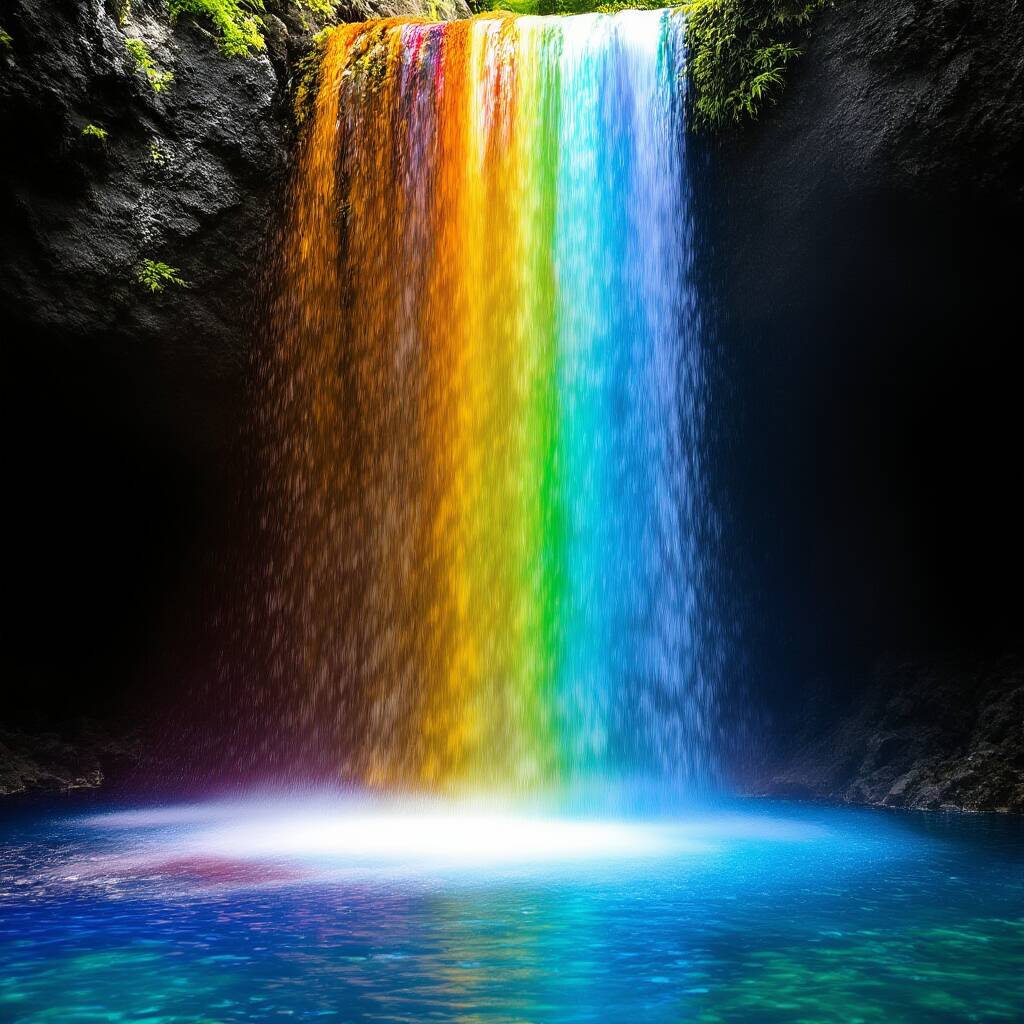} \\
\multicolumn{2}{c}{\parbox{0.43\textwidth}{\centering \textbf{Prompt:} \emph{A hidden waterfall that pours rainbow mist into a crystal pool below.}}}
\end{tabular}

\end{tabular}

\vspace{6pt}
\caption{
\textbf{Outcome of Lumina~\cite{lumina-2} with CFG vs Rectified-CFG++ for a variety of text prompts.} Rectified-CFG++ improves compositional clarity, color balance, and prompt adherence under fantastical and artistic conditions.
}
\label{fig:appendix_lumina_comparison}
\end{figure}

\begin{figure}[!htbp]
\centering
\scriptsize
\renewcommand{\arraystretch}{1}
\setlength{\tabcolsep}{0pt}

\begin{tabular}{cc}
\cellcolor{blue!15}\textbf{CFG} & \cellcolor{green!15}\textbf{Rectified-CFG++} \\
\includegraphics[width=0.35\textwidth]{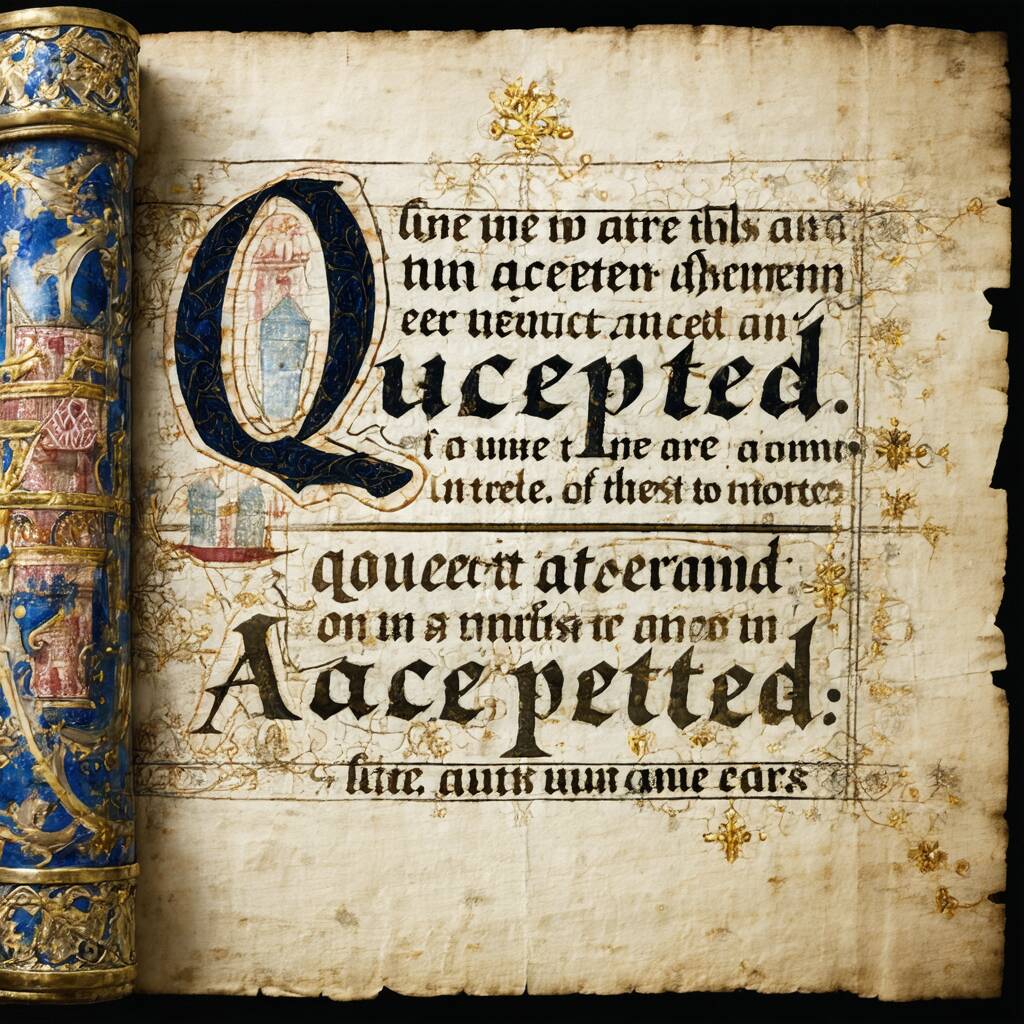} &
\includegraphics[width=0.35\textwidth]{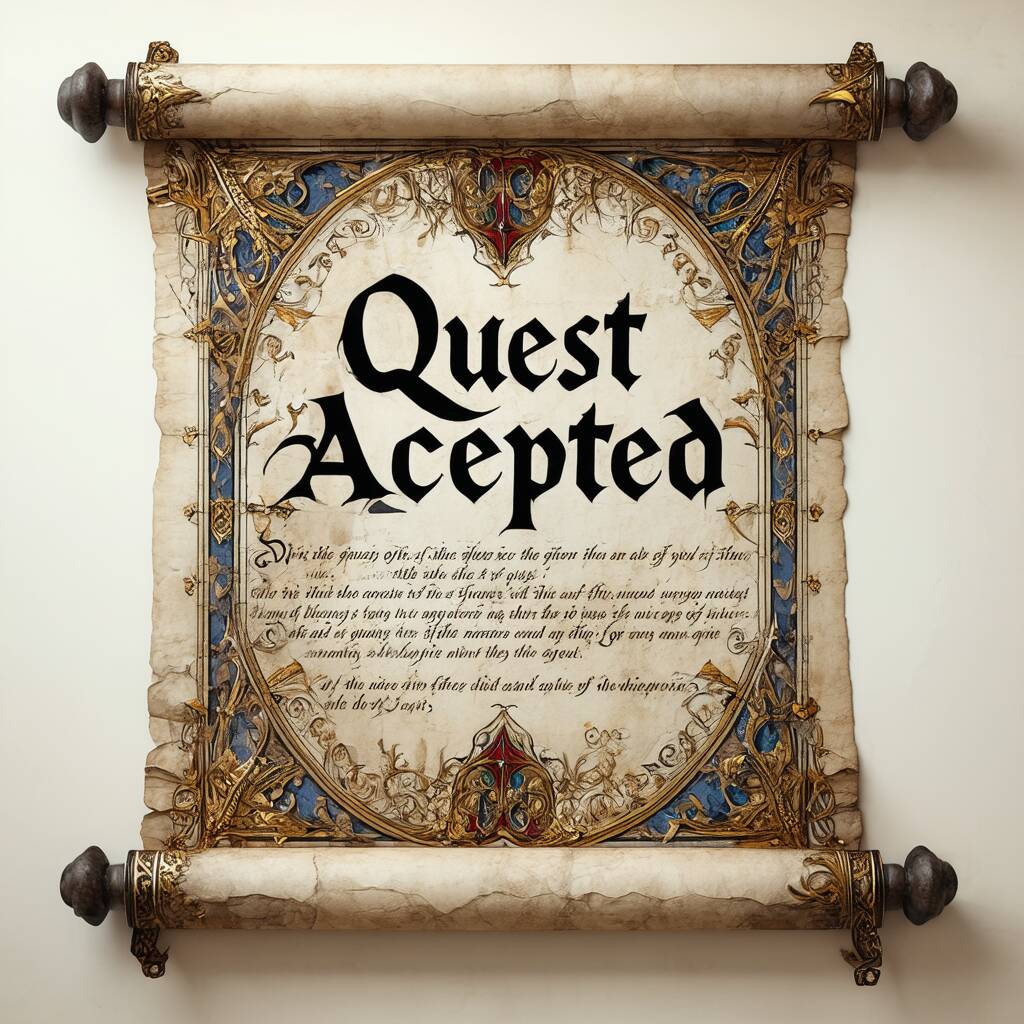}
\end{tabular}
\\ 
\textbf{Prompt:} \emph{A medieval scroll displaying the phrase “Quest Accepted” in ornate gothic script.} 
\\[5pt]

\begin{tabular}{cc}
\cellcolor{blue!15}\textbf{CFG} & \cellcolor{green!15}\textbf{Rectified-CFG++} \\
\includegraphics[width=0.35\textwidth]{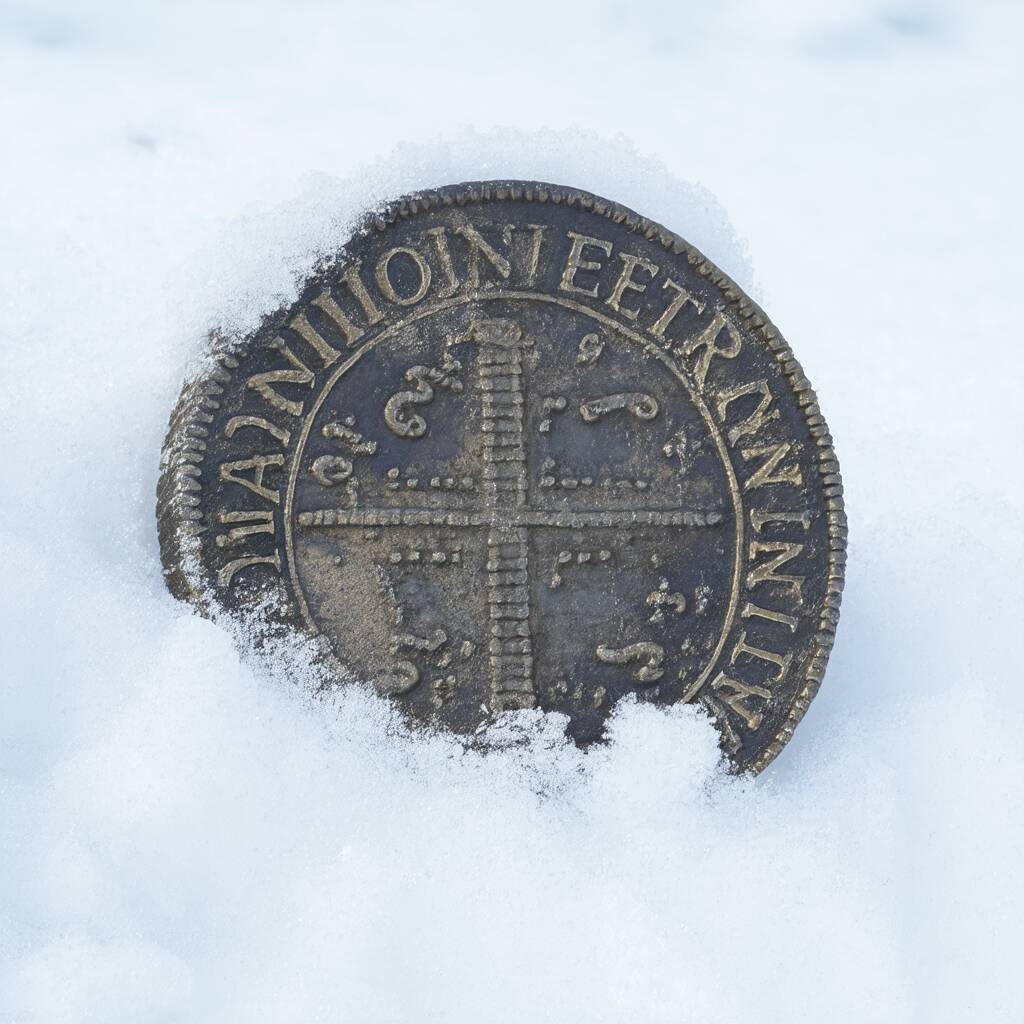} &
\includegraphics[width=0.35\textwidth]{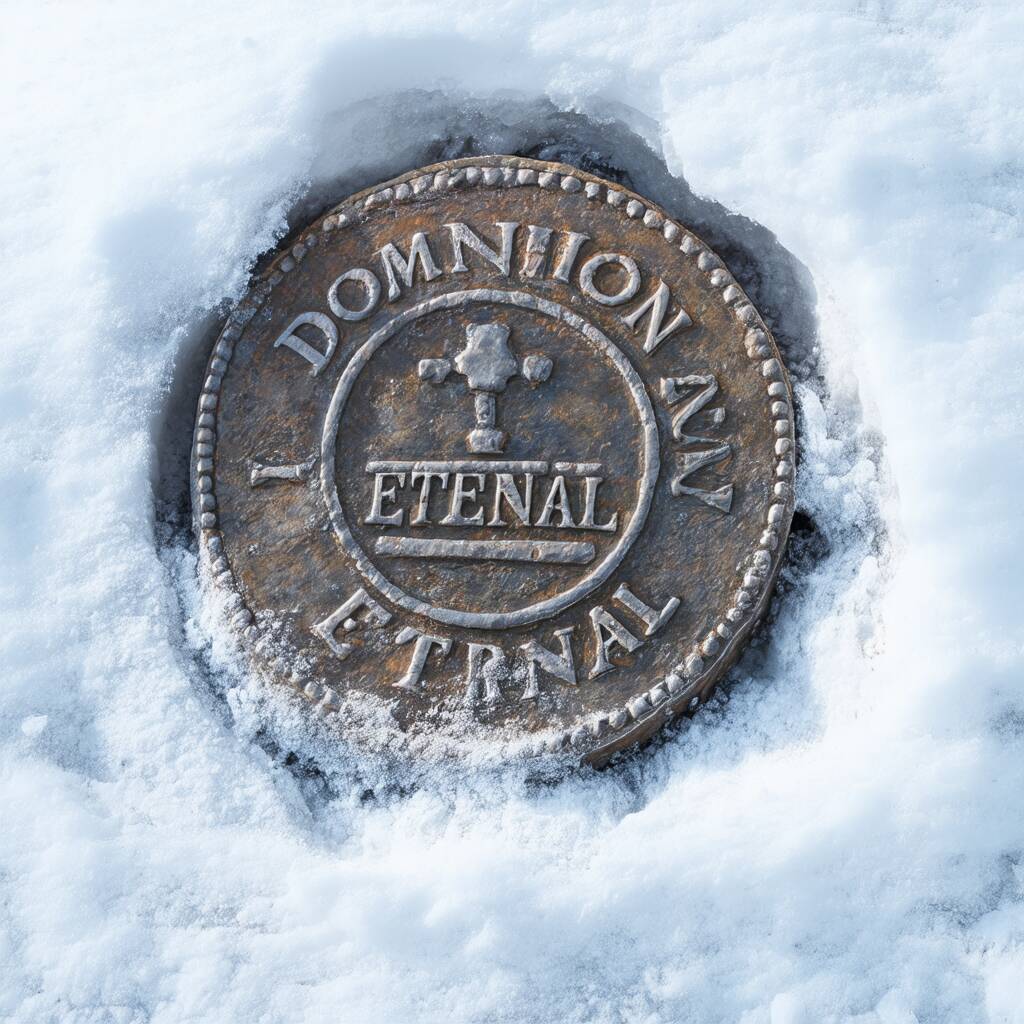}
\end{tabular}
\\ 
\textbf{Prompt:} \emph{A giant, ancient coin partially buried in snow, engraved with ‘DOMINION ETERNAL’ around its rim.}
\\[5pt]

\begin{tabular}{cc}
\cellcolor{blue!15}\textbf{CFG} & \cellcolor{green!15}\textbf{Rectified-CFG++} \\
\includegraphics[width=0.35\textwidth]{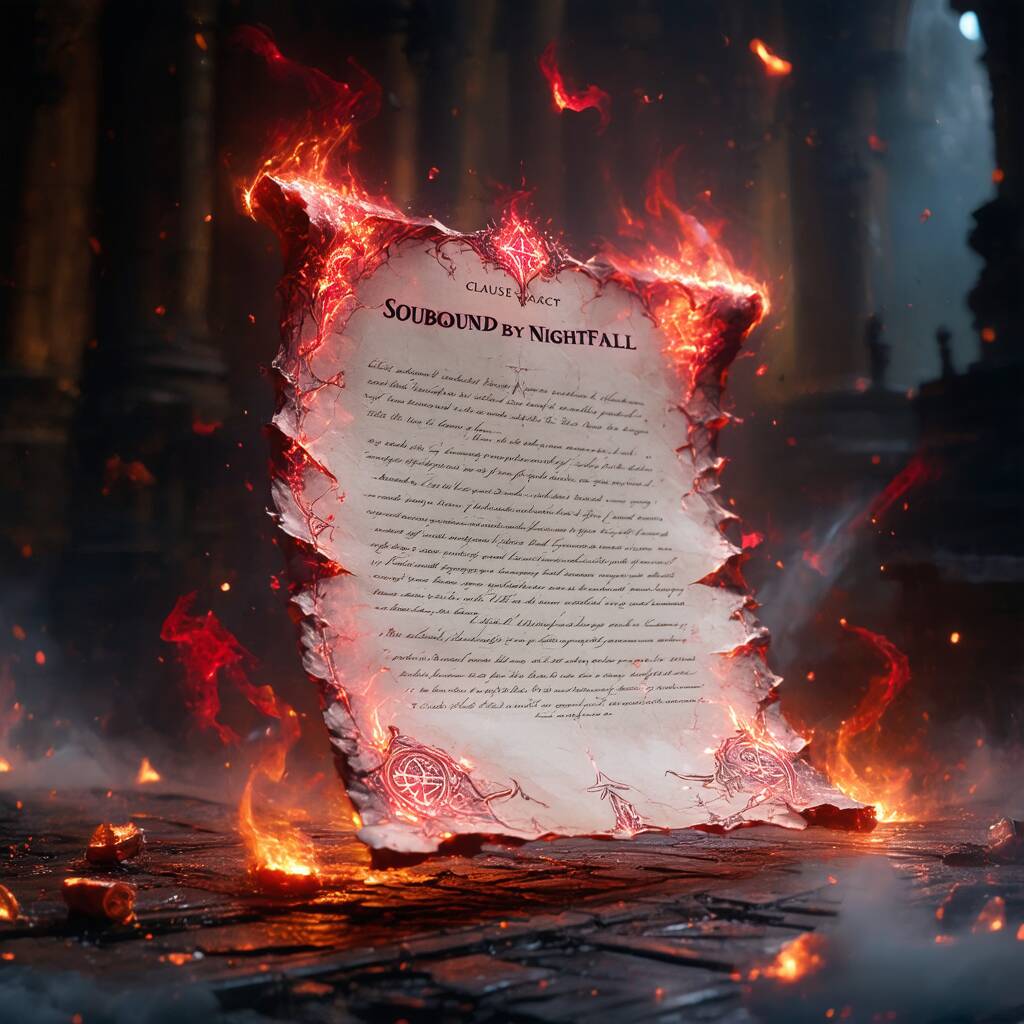} &
\includegraphics[width=0.35\textwidth]{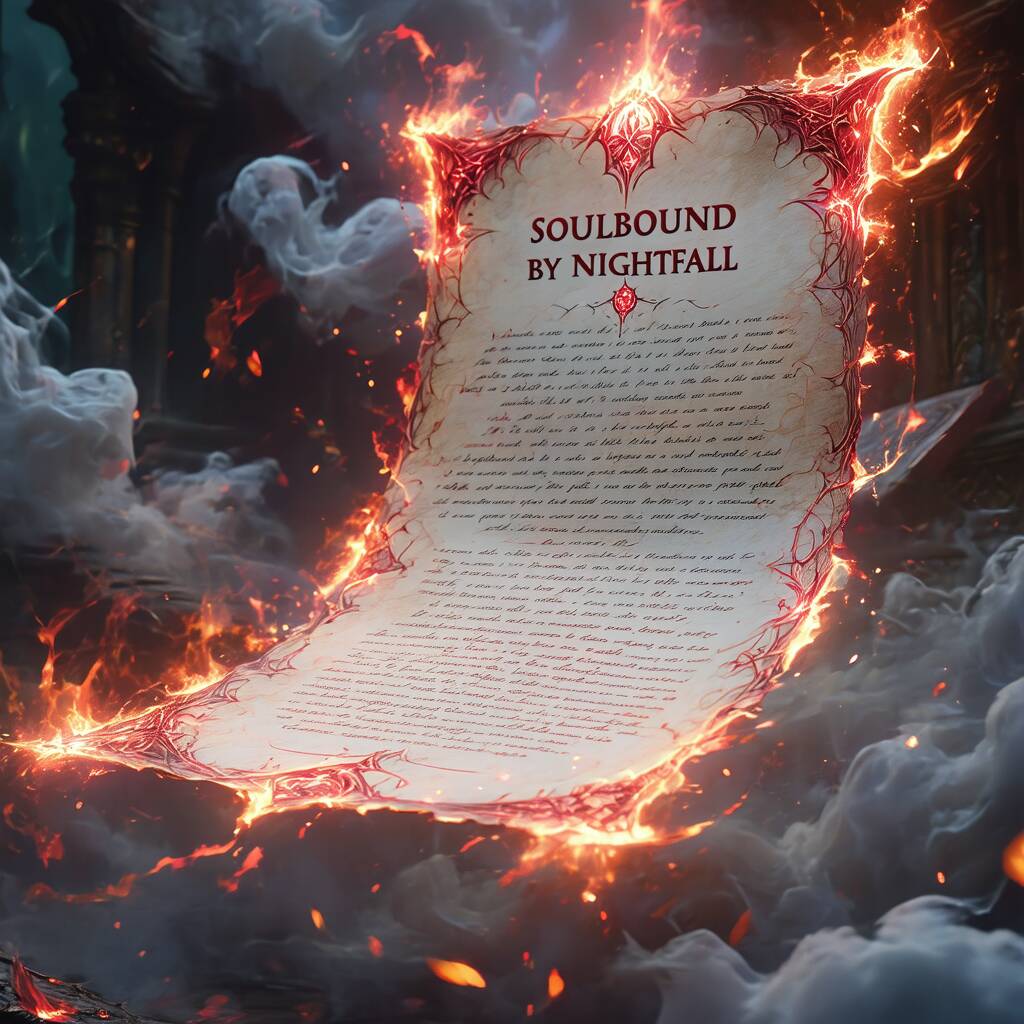}
\end{tabular}
\\ 
\textbf{Prompt:} \emph{A magical contract floating in midair, the clause ‘SOULBOUND BY NIGHTFALL’ glowing in arcane script.} 

\caption{
\textbf{Comparison of text generation using CFG against Rectified-CFG++ in the SD3.5~\cite{sd32024} (Part 1).} Rectified-CFG++ improves legibility and semantic preservation, especially in stylized or aged contexts.
}
\label{fig:text_quality_page1}
\end{figure}

\begin{figure}[!htbp]
\centering
\scriptsize
\renewcommand{\arraystretch}{1}
\setlength{\tabcolsep}{0pt}

\begin{tabular}{cc}
\cellcolor{blue!15}\textbf{CFG} & \cellcolor{green!15}\textbf{Rectified-CFG++} \\
\includegraphics[width=0.35\textwidth]{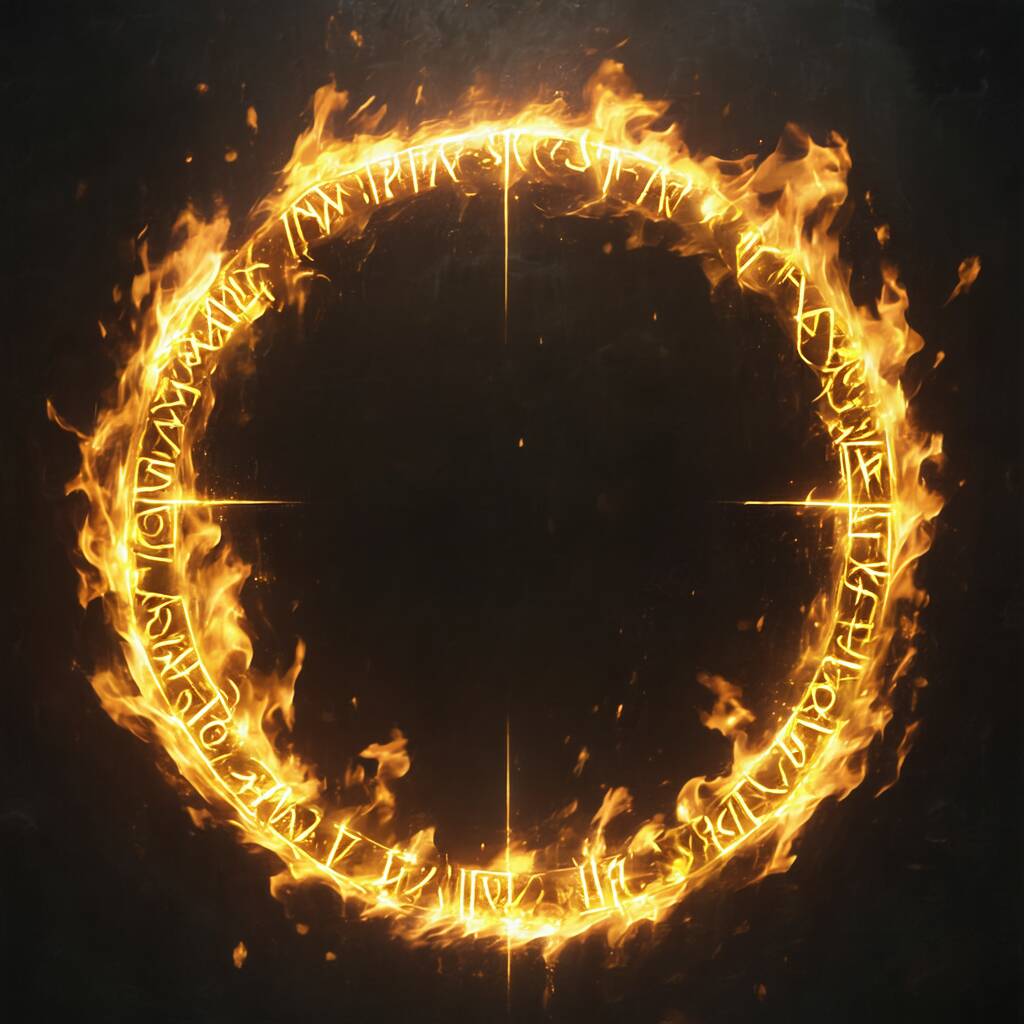} &
\includegraphics[width=0.35\textwidth]{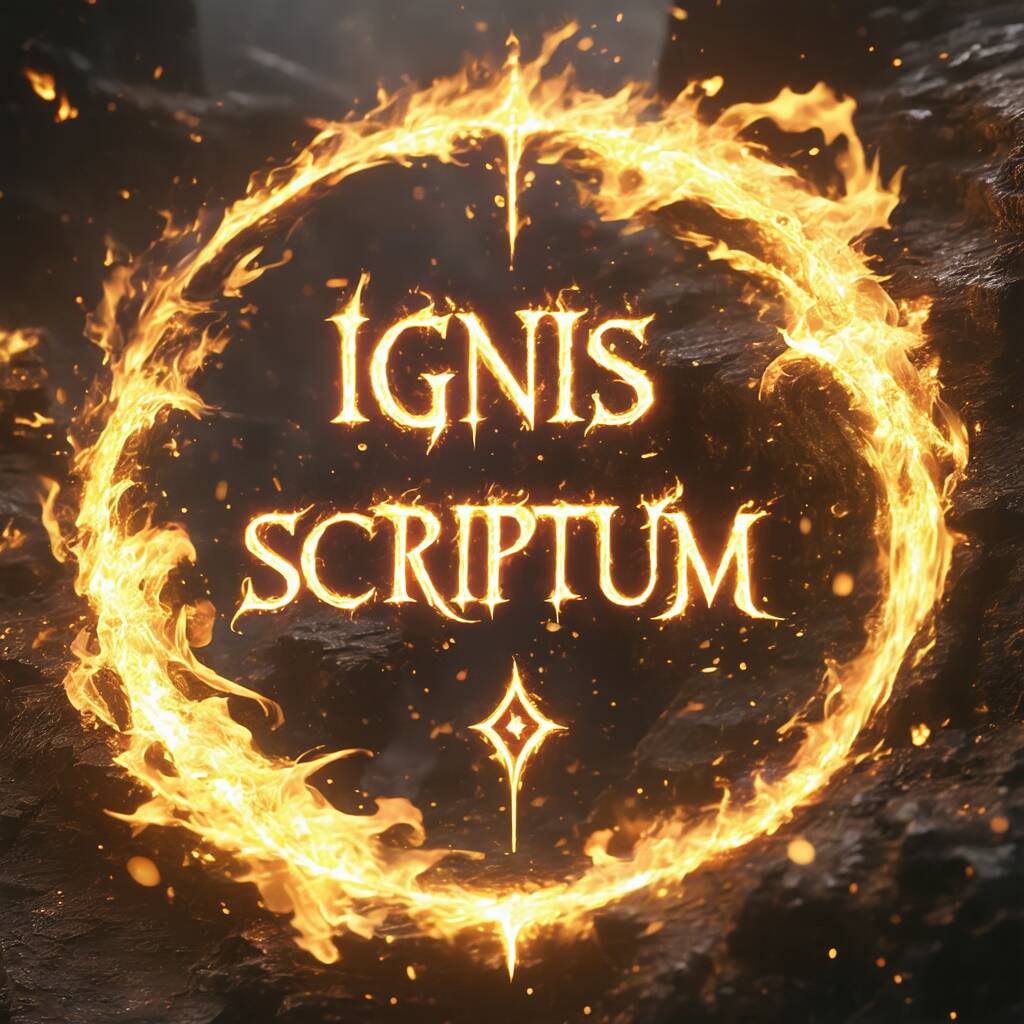}
\end{tabular}
\\
\textbf{Prompt:} \emph{A golden spellcircle inscribed with the phrase ‘IGNIS SCRIPTUM’ in liquid fire-gold runes, hovering midair.} 
\\[5pt]

\begin{tabular}{cc}
\cellcolor{blue!15}\textbf{CFG} & \cellcolor{green!15}\textbf{Rectified-CFG++} \\
\includegraphics[width=0.35\textwidth]{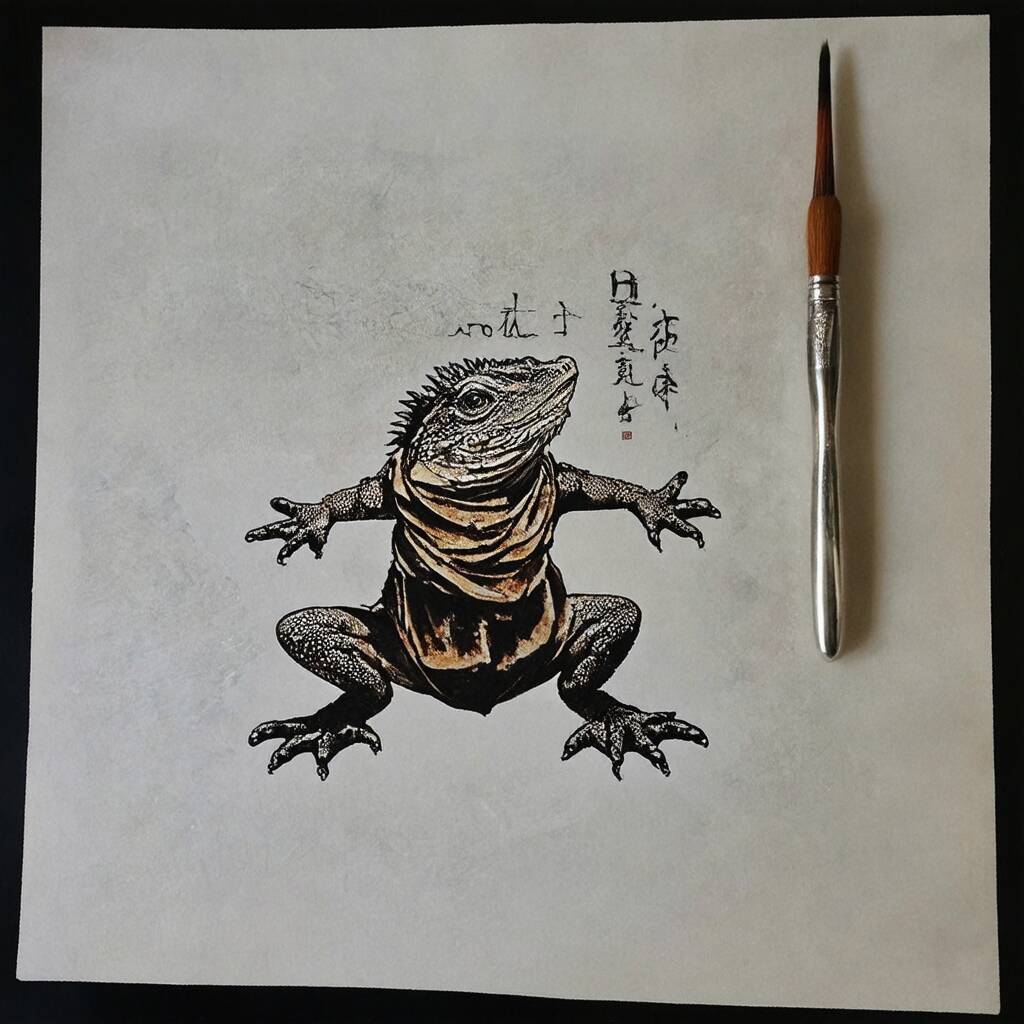} &
\includegraphics[width=0.35\textwidth]{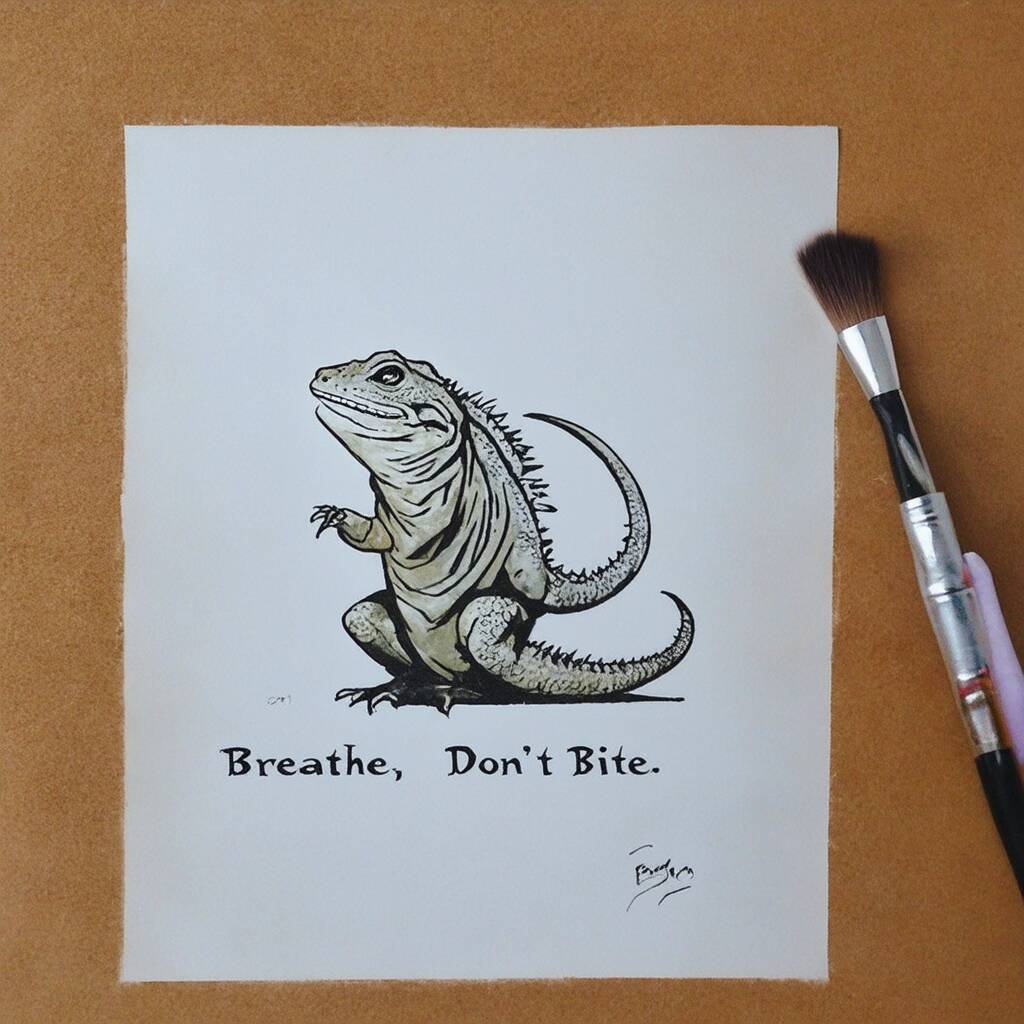}
\end{tabular}
\\
\textbf{Prompt:} \emph{A lizard monk painting ‘Breathe, Don’t Bite’ in perfect cursive on rice paper with a brush.} 
\\[5pt]

\begin{tabular}{cc}
\cellcolor{blue!15}\textbf{CFG} & \cellcolor{green!15}\textbf{Rectified-CFG++} \\
\includegraphics[width=0.35\textwidth]{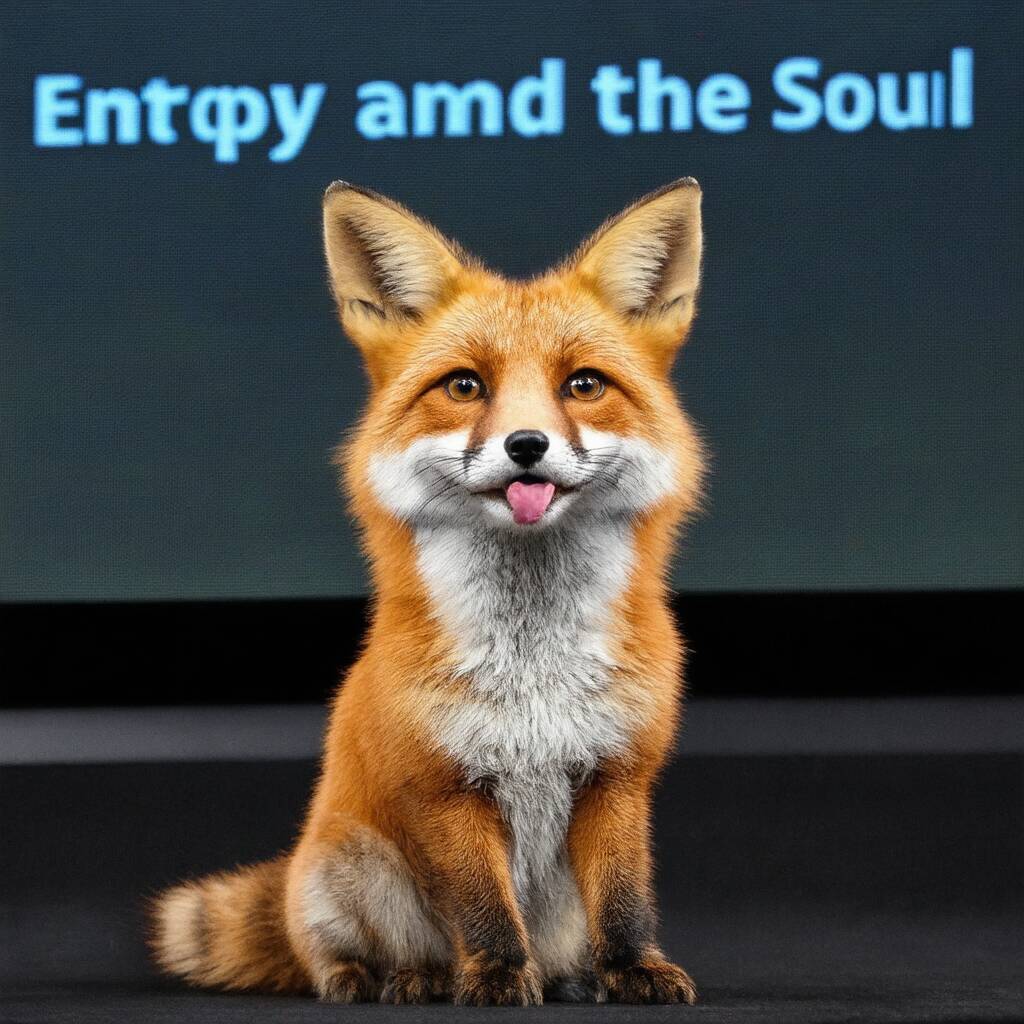} &
\includegraphics[width=0.35\textwidth]{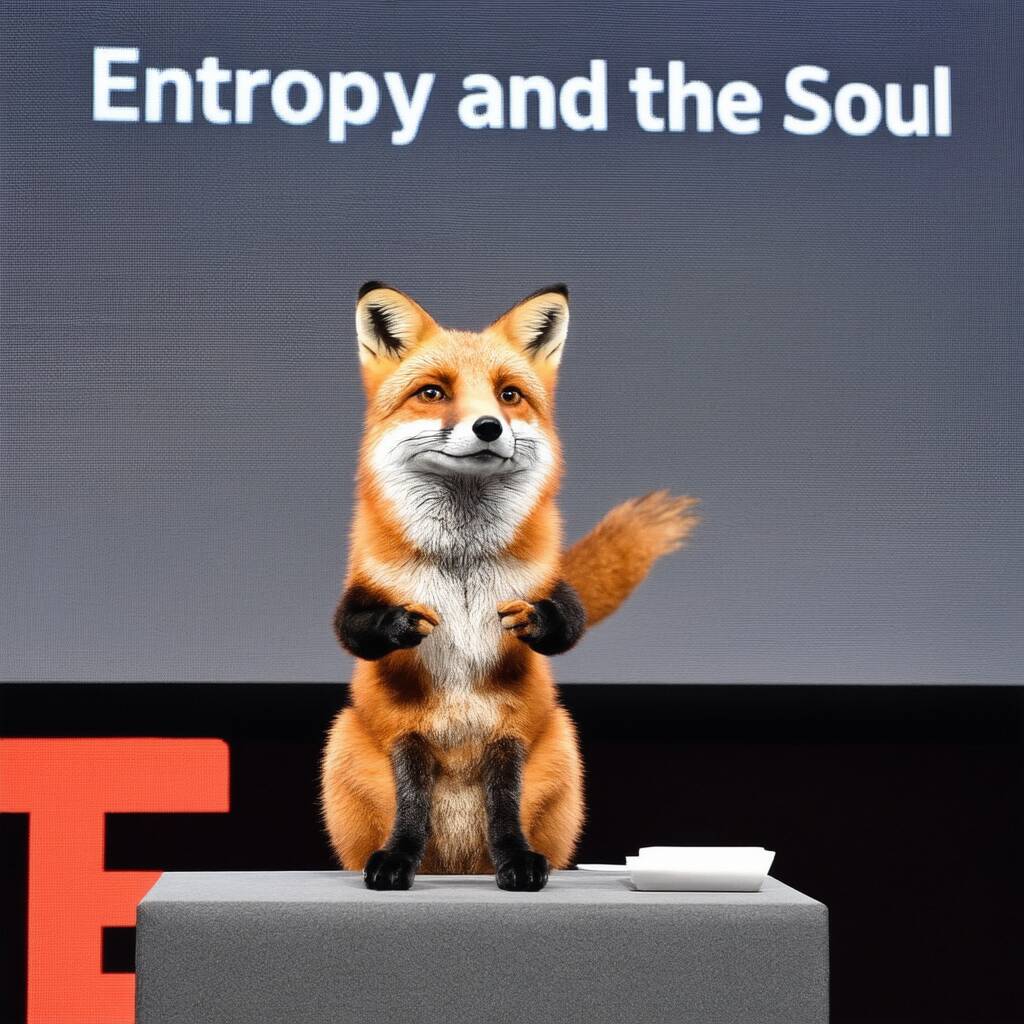}
\end{tabular}
\\ 
\textbf{Prompt:} \emph{A fox giving a TED talk titled ‘Entropy and the Soul’ written on a digital board behind.}

\caption{
\textbf{Comparison of text generation using CFG against Rectified-CFG++ in the SD3~\cite{sd32024} (Part 2).} Even in highly decorative or weathered lettering styles, Rectified-CFG++ retains better visual clarity and accurate text composition.
}
\label{fig:text_quality_page2}
\end{figure}

\begin{figure}[!htbp]
\centering
\scriptsize
\renewcommand{\arraystretch}{0.5}
\setlength{\tabcolsep}{2pt}

\textbf{Prompt:} \emph{A glamorous woman smiling in a glowing cosmic background} \\
\vspace{1pt}
\begin{tabular}{cc}
\cellcolor{blue!15}\rotatebox{90}{\parbox{4cm}{\centering\textbf{CFG}}} &
\includegraphics[width=\textwidth]{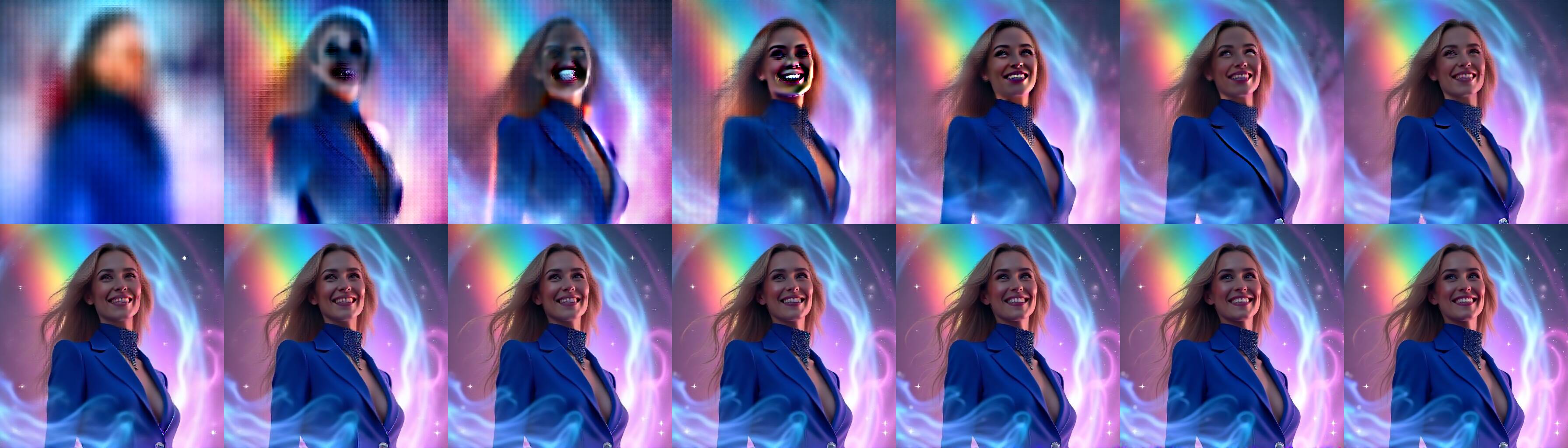} \\
\cellcolor{green!15}\rotatebox{90}{\parbox{4cm}{\centering\textbf{Rectified-CFG++}}} &
\includegraphics[width=\textwidth]{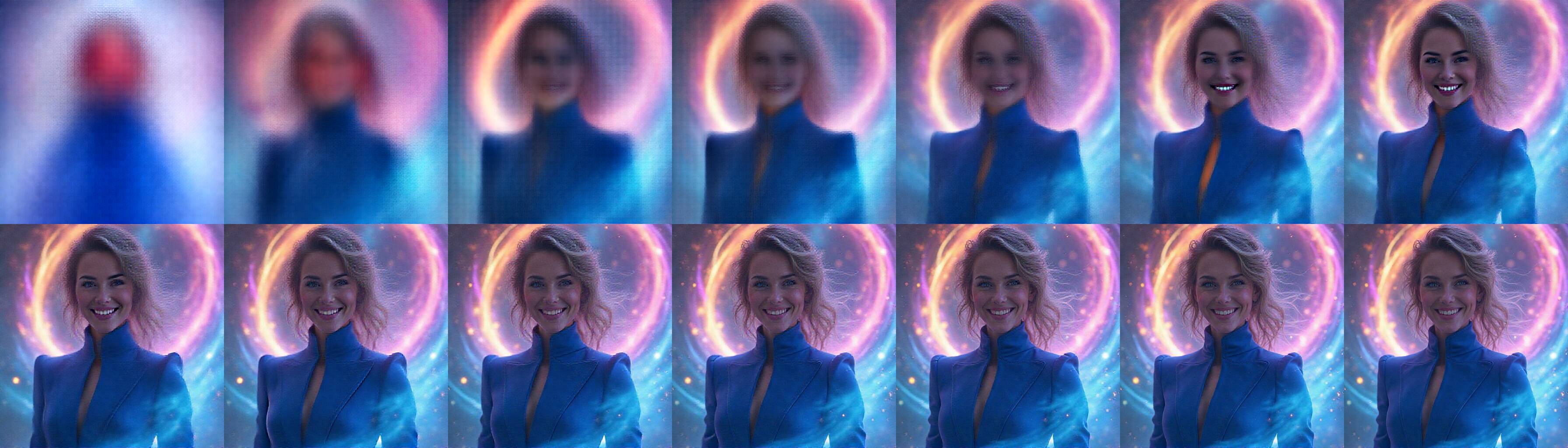} \\
\end{tabular}

\vspace{5pt}
\textbf{Prompt:} \emph{A hyper-detailed dragon eye, glowing orange iris, reptilian skin} \\
\vspace{1pt}
\begin{tabular}{cc}
\cellcolor{blue!15}\rotatebox{90}{\parbox{4cm}{\centering\textbf{CFG}}} &
\includegraphics[width=\linewidth]{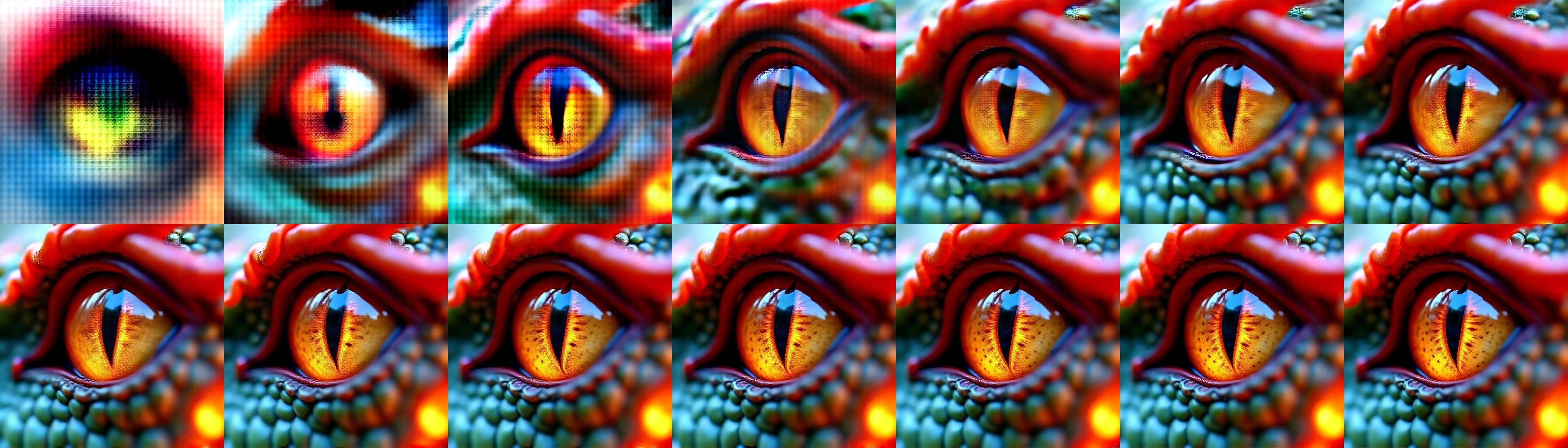} \\
\cellcolor{green!15}\rotatebox{90}{\parbox{4cm}{\centering\textbf{Rectified-CFG++}}} &
\includegraphics[width=\linewidth]{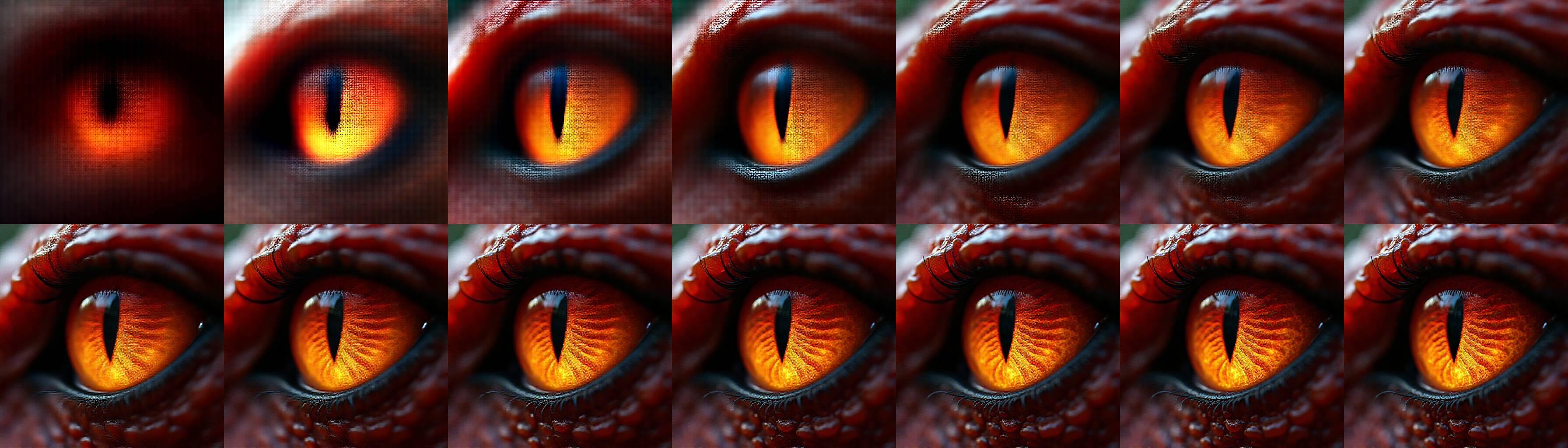} \\
\end{tabular}

\vspace{4pt}
\caption{
\textbf{Intermediate timestep visualizations of CFG and Rectified-CFG++.}
Progressive decoding of denoised latents across intermediate timesteps using CFG (top row) and Rectified-CFG++ (bottom row). For each prompt, we used total of 14 sampling steps, progressing from \( t=1000 \) (top left) to \( t=0 \) (bottom right). While CFG suffers from unstable off-manifold transitions early on, resulting in oversaturated colors and incoherent forms, Rectified-CFG++ maintained consistent, semantically grounded updates throughout. This enables significantly improved anatomical realism, color harmony, and overall fidelity under a reduced lesser computational budget.
}

\label{fig:intermediate_denoising_compare}
\end{figure}

\begin{figure}[!htbp]
\centering
\scriptsize
\renewcommand{\arraystretch}{0.5}
\setlength{\tabcolsep}{2pt}

\textbf{Prompt:} \emph{A highly detailed sculpture of a dog made entirely of reflective molten gold, mid-jump, with fluid metallic textures and dynamic lighting.} \\
\vspace{1pt}
\begin{tabular}{cc}
\cellcolor{blue!15}\raisebox{1.3\height}{\rotatebox{90}{{\centering\textbf{CFG}}}} &
\includegraphics[width=\textwidth]{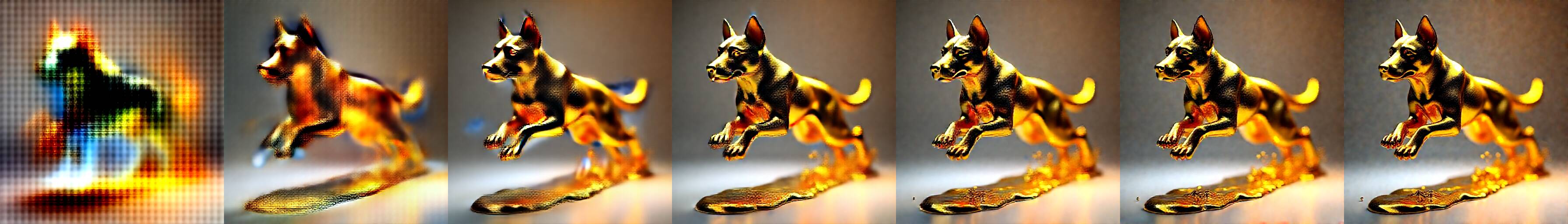} \\
\cellcolor{green!15}\raisebox{0.04\height}{\rotatebox{90}{{\centering\textbf{Rectified-CFG++}}}}    `` &
\includegraphics[width=\textwidth]{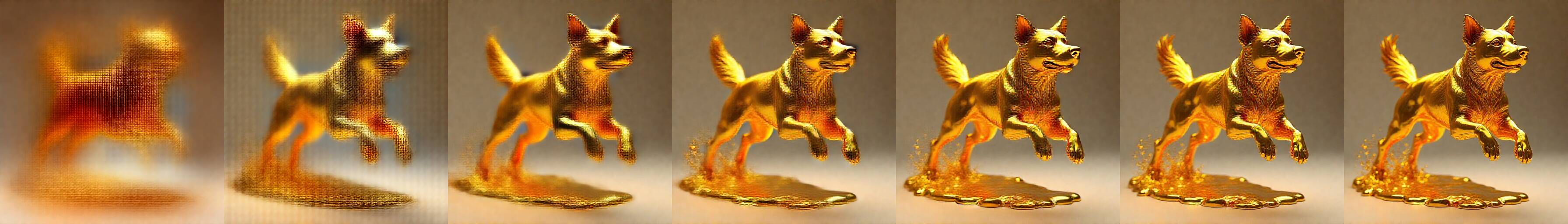} \\
\end{tabular}

\vspace{6pt}
\textbf{Prompt:} \emph{A celestial lion composed of stardust and translucent sapphire, resting atop a glowing moonrock pedestal under a swirling galaxy sky...}\\
\vspace{1pt}
\begin{tabular}{cc}
\cellcolor{blue!15}\raisebox{1.3\height}{\rotatebox{90}{{\centering\textbf{CFG}}}} &
\includegraphics[width=\textwidth]{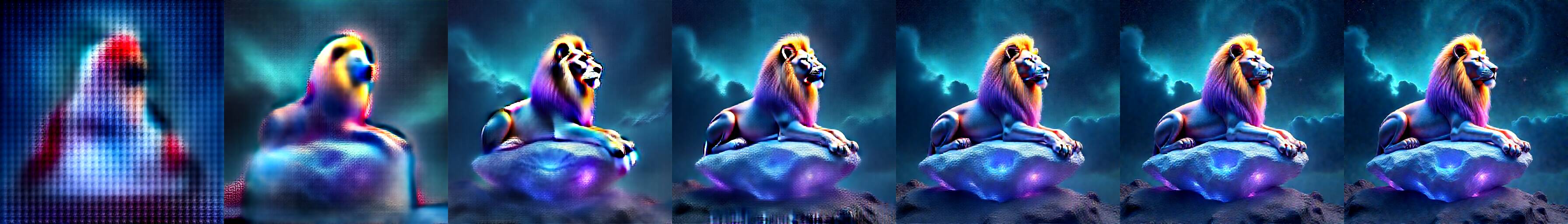} \\
\cellcolor{green!15}\raisebox{0.04\height}{\rotatebox{90}{{\centering\textbf{Rectified-CFG++}}}} &
\includegraphics[width=\textwidth]{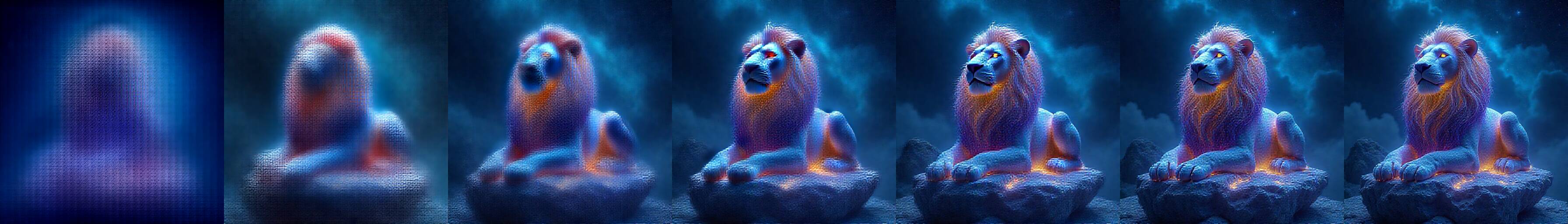} \\
\end{tabular}

\vspace{6pt}
\textbf{Prompt:} \emph{At the center of a glowing sakura forest drenched in moonlight, a mystical fox-like humanoid with radiant orange fur and nine shimmering tails stands guarding...}\\ 
\vspace{1pt}
\begin{tabular}{cc}
\cellcolor{blue!15}\raisebox{1.3\height}{\rotatebox{90}{{\centering\textbf{CFG}}}} &
\includegraphics[width=\textwidth]{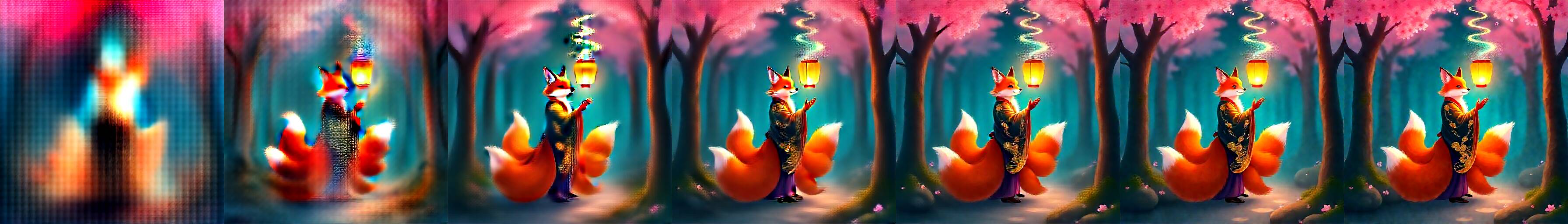} \\
\cellcolor{green!15}\raisebox{0.04\height}{\rotatebox{90}{{\centering\textbf{Rectified-CFG++}}}} &
\includegraphics[width=\textwidth]{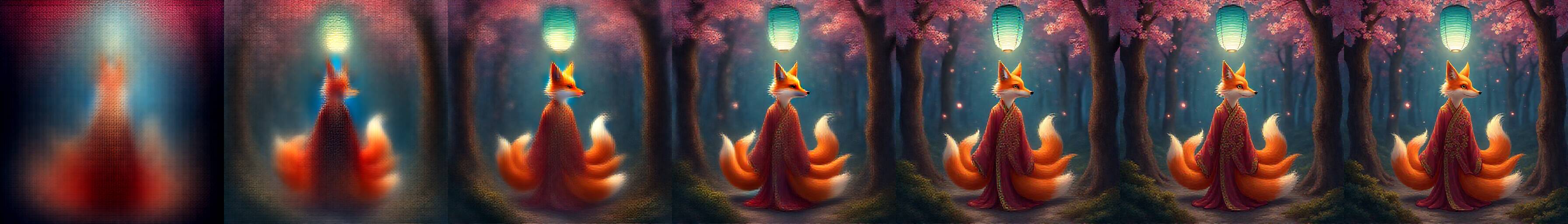} \\
\end{tabular}

\vspace{6pt}
\textbf{Prompt:} \emph{Inside a ruined cathedral overtaken by vines and time, a mechanical artisan — half-human, half-clockwork — adjusts a floating, glowing time orb...}\\ 
\vspace{1pt}
\begin{tabular}{cc}
\cellcolor{blue!15}\raisebox{1.3\height}{\rotatebox{90}{{\centering\textbf{CFG}}}} &
\includegraphics[width=\textwidth]{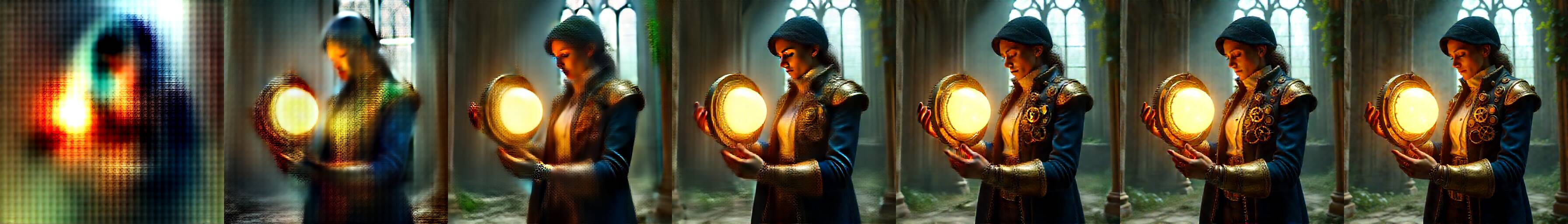} \\
\cellcolor{green!15}\raisebox{0.04\height}{\rotatebox{90}{{\centering\textbf{Rectified-CFG++}}}} &
\includegraphics[width=\textwidth]{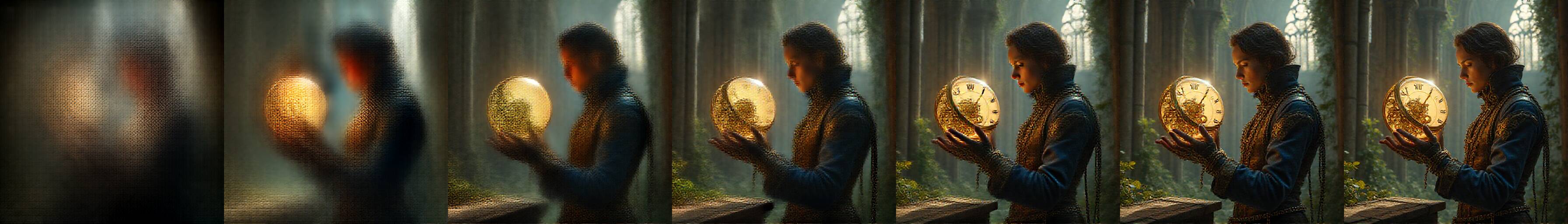} \\
\end{tabular}

\vspace{4pt}
\caption{
\textbf{7-step sampling comparison between CFG and Rectified-CFG++.}
Each pair of rows shows intermediate denoised and decoded latents for all 7 sampling steps. Rectified-CFG++ consistently delivered better generated outcomes even in the early time steps while keeping the overall generation process on-manifold.
}
\label{fig:intermediate_compare_cfg_vs_rectcfg}
\end{figure}


\begin{table}[!t]
\centering
\scriptsize
\caption{Comprehensive list of prompts used across figures, experiments, and qualitative evaluations in the paper.}
\label{tab:appendix_prompts}
\begin{tabular}{cl}
\toprule
\textbf{ID} & \textbf{Prompt} \\
\midrule
1  & A majestic phoenix made of shimmering crystal and flowing nebulae, soaring midair in a twilight sky filled with violet auroras and floating petals. \\
2  & A celestial lion with a translucent sapphire mane leaps through swirling galaxy clouds under a violet night sky, glowing stars trailing its paws. \\
3  & A lone anthropomorphic fox in crystalline samurai armor, standing still in a bamboo grove made of glass, glowing runes etched into each plate. \\
4  & A majestic griffin standing atop a wind-blown cliff at twilight, wings unfurled with feathers dripping golden light, oil-painting style. \\
5  & A mystical fox-like humanoid with nine shimmering tails, guarding a floating paper lantern in a glowing sakura forest. \\
6  & A half-human, half-clockwork artisan adjusting a glowing time orb inside a cathedral overgrown with vines. \\
7  & A sculpture of a dog made entirely of reflective molten gold, mid-jump, fluid metallic texture, studio lighting. \\
8  & A sci-fi poster with the bold title 'ASTRONOVA PRIME' in metallic lettering and lens flares. \\
9  & A billboard on a desert highway showing 'Welcome to Dustvale', retro 70s font, sun flare. \\
10 & A fairy marketplace hidden inside a giant blooming flower, stalls glowing with enchanted trinkets, fruit that floats midair. \\
11 & A ruined celestial observatory at the edge of the world, cracked dome revealing a rotating sky-map of shifting stars. \\
12 & A hyper-detailed dragon eye, glowing orange iris, reptilian skin. \\
13 & A glamorous woman smiling in a glowing cosmic background. \\
14 & A golden spellcircle inscribed with the phrase ‘IGNIS SCRIPTUM’ in liquid fire-gold runes, hovering midair. \\
15 & A massive sand-carved monument showing ‘CITY OF WHISPERS’ in eroded stone calligraphy. \\
16 & A magical contract floating in midair, the clause ‘SOULBOUND BY NIGHTFALL’ glowing in arcane script. \\
17 & A medieval scroll displaying the phrase “Quest Accepted” in ornate gothic script. \\
18 & A giant, ancient coin partially buried in snow, engraved with ‘DOMINION ETERNAL’ around its rim. \\
19 & A lizard monk painting ‘Breathe, Don’t Bite’ in perfect cursive on rice paper with a brush. \\
20 & A fox giving a TED talk titled ‘Entropy and the Soul’ written on a digital board behind. \\
21 & A cat in a space suit skiing. \\
22 & A small cactus with a happy face in the Sahara desert. \\
23 & A dog jumping in front of moon gate, purple flowers, snowy mountains. \\
24 & A Porsche 911 covered with leaves in a showroom. \\
25 & A super cute Pikachu. \\
26 & A man deadlifting heavy weights. \\
27 & The rock Dwayne Johnson doing ballet. \\
28 & Photorealistic fossilised bronze sculpture face portrait. \\
29 & A magical deer made of stars standing at the edge of a glowing river under an aurora. \\
30 & A knight made of ice stepping through a shattered stained-glass portal. \\
31 & A cloaked traveler entering a glowing cavern of crystal pillars. \\
32 & A mysterious violinist in a foggy alley playing notes that glow in the mist. \\
33 & A royal skyship emerging from clouds at sunset, wings made of gold leaf and wind. \\
34 & A fantasy tree sprouting glowing fruits under a swirling aurora sky. \\
35 & A painting of a desert nomad holding a glowing hourglass, as time swirls around their robes. \\
36 & Halloween scares up 918 mn at global box office earns second highest horror opening of all time in North America. \\
37 & SpaceX ready to send first all-civilian crew on orbit of Earth. \\
38 & Waterfall surrounded by nice early summer flowers. Heather Foothills, Mt. Hood Meadows ski area, Oregon. \\
39 & The One And Only by Tim Walker for Vogue May 2014. \\
40 & HD Wallpaper. \\
41 & Veiled Within. \\
42 & Stevens. \\
43 & Man stands near a waterfall. \\
44 & A deer made of shimmering starlight grazing beside a silver river under a purple sky. \\
45 & A crystal-winged butterfly landing on a dewdrop-covered spiderweb in a moonlit garden. \\
46 & Milky way at the lake. \\
47 & A phoenix rising from an ancient garden fountain, wings made of blooming petals and embers. \\
48 & Fred Lyon - San Francisco | The Gallery at Leica Store San Francisco. \\
49 & A graffiti mural on a city wall saying 'ART LIVES' in colorful spray-painted letters. \\
50 & Newborn Baby Blanket Photography, Super Soft Photo, Basket Filler Basket Stuffer Prop. \\
51 & A neon street sign that says 'CyberCore Café', glowing in magenta and blue. \\
52 & An airship sail mid-tear in a storm, revealing the phrase 'WINDWRAITH CREST' half-blown away. \\
53 & A magical sword embedded in stone, with the name 'SOLARFANG' etched along its blade. \\
54 & A crow detective reading a paper titled 'Feathered Conspiracies', headline in bold gothic script. \\
55 & A stop sign with 'ALL WAY' written below it. \\
56 & A mechanical butterfly landing on a scroll that reads 'Silken Prophecy Delivered'. \\
57 & An otter with a laser gun. \\
58 & The bustling streets of Tokyo, crossroads, a beautiful girl in a sailor suit riding on the back of an Asian elephant. \\
59 & 8k resolution, realistic digital painting of a colossal dragon creature. \\
60 & A dog swimming in space. \\
61 & Whale Tail in water, award winning photo. \\
62 & Inside a steampunk workshop, a young cute redhead inventor, wearing blue overalls and a glowing blue tattoo on her shoulder. \\
63 & Kayak in the water, optical color, aerial view, rainbow. \\
64 & A floating mountain temple where monks ride beams of light to meditate among the stars, surrounded by glowing lotus clouds. \\
65 & A celestial stag made of lightning and auroras galloping across a stormy sky, leaving trails of stardust in its wake. \\

\bottomrule
\end{tabular}
\end{table}

\begin{table}[!htbp]
\centering
\scriptsize
\caption{Comprehensive list of prompts used across figures, experiments, and qualitative evaluations in the paper.}
\label{tab:appendix_prompts}
\begin{tabular}{cl}
\toprule
\textbf{ID} & \textbf{Prompt} \\
\midrule
66 & A sky harbor where airships dock on floating islands made of crystal, with rainbow waterfalls cascading into the clouds. \\
67 & A glowing portal in the center of an ancient oak tree, guarded by moss-covered stone wolves under a violet twilight. \\ 
68 & An underwater palace lit by jellyfish lanterns, where merfolk in ornate armor hold council in coral thrones. \\ 
69 & A firefly festival deep in a bamboo forest, with floating lanterns and spirit masks glowing in warm twilight. \\ 
70 & A dragon curled around a moonlit lighthouse, its scales reflecting stars while waves crash below in silver mist. \\
71 & A library suspended in time, with floating books, glowing runes, and staircases that shift with every page turned. \\
72 & A snowy village where aurora borealis is woven into tapestries by mythical weavers in glowing fur robes. \\
73 & An enchanted canyon with floating stones engraved with prophecy, and golden birds nesting in the wind-carved cliffs. \\
74 & A mermaid on a rocky shore, her tail shimmering with bioluminescent scales. \\
75 & A warrior princess brandishing a crystal sword in the heart of a glowing battlefield. \\ 
76 & A guardian golem carved from emerald stone standing vigil in ancient ruins. \\ 
77 & A moonlit castle built atop a waterfall that glows with bioluminescent algae. \\ 
78 & A floating island city above a sea of clouds with waterfalls cascading into the mist. \\
79 & A twilight marketplace staffed by goblins selling glowing gemstones. \\
80 & A colossal treebridge spanning two mountain peaks under the aurora borealis. \\
81 & A sea of lavender with giant lotus flowers drifting toward a distant spired city. \\
82 & A neon-lit dragonfly queen presiding over a phosphorescent swamp. \\
83 & A crystal dragon coiled around an ancient tower under a tapestry of stars. \\
84 & A marble statue of a goddess that comes to life at dawn’s first light. \\
85 & A hidden waterfall that pours rainbow mist into a crystal pool below. \\

\bottomrule
\end{tabular}
\end{table}

\newpage


\end{document}